\crefname{equation}{}{}
\Crefname{equation}{}{}
\crefname{definition}{\textbf{definition}}{definitions}
\Crefname{definition}{Definition}{Definitions}
\crefname{assumption}{\textbf{assumption}}{assumptions}
\Crefname{assumption}{Assumption}{Assumptions}
\definecolor{maroon}{RGB}{192,80,77}
\definecolor{mypink3}{cmyk}{0, 0.7808, 0.4429, 0.1412}
\newtheorem{theorem}{Theorem}[section]
\newtheorem{lemma}[theorem]{Lemma}
\newtheorem{corollary}[theorem]{Corollary}
\newtheorem{definition}[theorem]{Definition}
\newtheorem{example}[theorem]{Example}
\newtheorem{remark}[theorem]{Remark}
\newtheorem{assumption}[theorem]{Assumption}
\newcommand\norm[1]{\left\lVert#1\right\rVert}
\newcommand{\argmin}{\mathop{\mathrm{argmin}}}
\newcommand{\argmax}{\mathop{\mathrm{argmax}}}
\def\E{\mathbb{E}}
\def\P{\mathbb{P}}
\def\Var{\mathrm{Var}}
\def\R{\mathbb{R}}
\def\u{\boldsymbol{u}}
\def\v{\boldsymbol{v}}
\def\P{\mathbb{P}}
\begin{document}

\title{\textbf{Offline Reinforcement Learning with Differentiable Function Approximation is Provably Efficient}}

\author[1,2]{Ming Yin}
\author[3]{Mengdi Wang}
\author[1]{Yu-Xiang Wang}
\affil[1]{Department of Computer Science, UC Santa Barbara}
\affil[2]{Department of Statistics and Applied Probability, UC Santa Barbara}
\affil[3]{Department of Electrical and Computer Engineering, Princeton}
\affil[ ]{\texttt{ming\_yin@ucsb.edu}   \quad \texttt{mengdiw@princeton.edu}\quad
	\texttt{yuxiangw@cs.ucsb.edu}}

\date{}
\maketitle
\begin{abstract}

\emph{Offline reinforcement learning}, which aims at optimizing sequential decision-making strategies with historical data, has been extensively applied in real-life applications. \emph{State-Of-The-Art} algorithms usually leverage powerful function approximators (\emph{e.g.} neural networks) to alleviate the sample complexity hurdle for better empirical performances. Despite the successes, a more systematic understanding of the statistical complexity for function approximation remains lacking. Towards bridging the gap, we take a step by considering offline reinforcement learning with \emph{differentiable function class approximation} (DFA). This function class naturally incorporates a wide range of models with nonlinear/nonconvex structures. Most importantly, we show offline RL with differentiable function approximation is provably efficient by analyzing the \emph{pessimistic fitted Q-learning} (PFQL) algorithm, and our results provide the theoretical basis for understanding a variety of practical heuristics that rely on Fitted Q-Iteration style design. In addition, we further improve our guarantee with a tighter instance-dependent characterization. We hope our work could draw interest in studying reinforcement learning with differentiable function approximation beyond the scope of current research.


\end{abstract}


\section{Introduction}\label{sec:introduction}

\emph{Offline reinforcement learning} \citep{lange2012batch,levine2020offline} refers to the paradigm of learning a policy in the sequential decision making problems, where only the logged data are available and were collected from an unknown environment (\emph{Markov Decision Process} / MDP). Inspired by the success of scalable supervised learning methods, modern reinforcement learning algorithms (\emph{e.g.} \cite{silver2017mastering}) incorporate high-capacity function approximators to acquire generalization across large state-action spaces and have achieved excellent performances along a wide range of domains. For instance, there are a huge body of deep RL-based algorithms that tackle challenging problems such as the game of Go and chess \citep{silver2017mastering,schrittwieser2020mastering}, Robotics \citep{gu2017deep,levine2018learning}, energy control \citep{degrave2022magnetic} and Biology \citep{mahmud2018applications,popova2018deep}. Nevertheless, practitioners also noticed that algorithms with general function approximators can be quite data/sample inefficient, especially for deep neural networks where the models may require million of steps for tuning the large number of parameters they contain.\footnote{Check \cite{arulkumaran2017deep} and the references therein for an overview.}

On the other hand, statistical analysis has been actively conducted to understand the sample/statistical efficiency for reinforcement learning with function approximation, and fruitful results have been achieved under the respective model representations \citep{munos2003error,chen2019information,yang2019sample,du2019provably,sun2019model,modi2020sample,jin2020provably,ayoub2020model,zanette2020learning,jin2021bellman,du2021bilinear,jin2021pessimism,zhou2021nearly,xie2021bellman,min2021variance,nguyen2021finite,li2021sample,zanette2021provable,yin2022linear,uehara2021representation,cai2022reinforcement}. However, most works consider \emph{linear} model approximators (\emph{e.g.} linear (mixture) MDPs) or its variants. While the explicit linear structures make the analysis trackable (linear problems are easier to analyze), they are unable to reveal the sample/statistical complexity behaviors of practical algorithms that apply powerful function approximations (which might have complex structures).

In addition, there is an excellent line of works tackling provably efficient offline RL with general function approximation (\emph{e.g.} \citep{chen2019information,xie2021bellman,zhan2022offline}). Due to the generic function approximation class considered, those complexity bounds are usually expressed in the standard worst-case fashion {\small$O(V_{\max}^2\sqrt{\frac{1}{n}})$} which lack the characterizations of individual instance behaviors. However, as mentioned in \cite{zanette2019tighter}, practical reinforcement learning algorithms often perform far better than what these problem-independent bounds would suggest.

These observations motivate us to consider function approximation schemes that can help address the existing limitations. In particular, in this work we consider offline reinforcement learning with \emph{differentiable function class} approximations. Its definition is given in below.

\begin{definition}[Parametric Differentiable Function Class]\label{def:function_class}
	Let $\mathcal{S},\mathcal{A}$ be arbitrary state, action spaces and a feature map $\phi(\cdot,\cdot):\mathcal{S}\times\mathcal{A}\rightarrow \Psi\subset\mathbb{R}^m$. The parameter space $\Theta\in\mathbb{R}^d$. Both $\Theta$ and $\Psi$ are compact spaces. Then the parametric function class (for a model $f:\R^d\times \R^m\rightarrow \R$) is defined as
	\[
	\mathcal{F}:=\{f(\theta,\phi(\cdot,\cdot)):\mathcal{S}\times\mathcal{A}\rightarrow\mathbb{R},\theta\in\Theta\}
	\]
	that satisfies differentiability/smoothness condition: 1. for any $\phi\in \R^m$, $f(\theta,\phi)$ is third-time differentiable with respect to $\theta$; 2. $f,\partial_\theta f,\partial^2_{\theta,\theta} f,\partial^3_{\theta,\theta,\theta} f$ are jointly continuous for $(\theta,\phi)$. 
\end{definition}

\begin{remark}\label{remark:fc}
Differentiable Function Class was recently proposed for studying Off-Policy Evaluation (OPE) Problem \citep{zhang2022off} and we adopt it here for the policy learning task. Note by the compactness of $\Theta$, $\Psi$ and continuity, there exists constants $C_\Theta,B_{\mathcal{F}},\kappa_1,\kappa_2,\kappa_3>0$ that bounds: $\norm{\theta}_2\leq C_\Theta,|f(\theta,\phi(s,a))|\leq B_{\mathcal{F}},\norm{\nabla_\theta f(\theta,\phi(s,a))}_2\leq \kappa_1,\norm{\nabla^2_{\theta\theta} f(\theta,\phi(s,a))}_2\leq \kappa_2,\norm{\nabla^3_{\theta\theta\theta} f(\theta,\phi(s,a))}_2\leq \kappa_3$ for all $\theta\in\Theta,\; s,a\in\mathcal{S}\times\mathcal{A}$.\footnote{Here $\norm{\nabla^3_{\theta\theta\theta} f(\theta,\phi(s,a))}_2$ is defined as the $2$-norm for 3-$d$ tensor and in the finite horizon setting we simply instantiate $\mathcal{B}_{\mathcal{F}}=H$.}
\end{remark}

\textbf{Why consider differentiable function class (Definition~\ref{def:function_class})?} There are two main reasons why differentiable function class is worth studying for reinforcement learning.
\begin{itemize}
	\item Due to the limitation of statistical tools, existing analysis in reinforcement learning usually favor basic settings such as \emph{tabular MDPs} (where the state space and action space are finite \citep{azar2013minimax,azar2017minimax,sidford2018near,jin2018q,cui2020plug,agarwal2020model,yin2021near,yin2021nearoptimal,li2020sample,ren2021nearly,xie2021policy,li2022settling, zhang2022horizon,qiao2022sample,cui2022offline}) or linear MDPs \citep{yang2020reinforcement,jin2020provably,wang2020reward,jin2021pessimism,ding2021provably,wang2021statistical,min2021variance} / linear Mixture MDPs \citep{modi2020sample, cai2020provably,zhang2021reward,zhou2021provably,zhou2021nearly} (where the transition dynamic admits linear structures) so that well-established techniques (\emph{e.g.} from linear regression) can be applied. In addition, subsequent extensions are often based on linear models (\emph{e.g.} Linear Bellman Complete models \citep{zanette2020learning} and Eluder dimension \citep{russo2013eluder,jin2021bellman}). Differentiable function class strictly generalizes over the previous popular choices, \emph{i.e.} by choosing $f(\theta,\phi)=\langle \theta,\phi\rangle$ or specifying $\phi$ to be one-hot representations, and is far more expressive as it encompasses nonlinear approximators.
	\item Practically speaking, the flexibility of selecting model $f$ provides the possibility for handling a variety of tasks. For instance, when $f$ is specified to be neural networks, $\theta$ corresponds to the weights of each network layers and $\phi(\cdot,\cdot)$ corresponds to the state-action representations (which is induced by the network architecture). When facing with easier tasks, we can deploy simpler model $f$ such as polynomials. Yet, our statistical guarantee is not affected by the specific choices as we can plug the concrete form of model $f$ into Theorem~\ref{thm:PFQL} to obtain the respective bounds (we do not need separate analysis for different tasks).
\end{itemize}

\subsection{Related works}
\textbf{Reinforcement learning with function approximation.} RL with function approximation has a long history that can date back to \cite{bradtke1996linear,tsitsiklis1996analysis}. Later, it draws significant interest for the finite sample analysis \citep{jin2020provably,yang2019sample}. Since then, people put tremendous efforts towards generalizing over linear function approximations and examples include Linear Bellman complete models \citep{zanette2020learning}, Eluder dimension \citep{russo2013eluder,jin2021bellman}, linear deterministic $Q^\star$ \citep{wen2013efficient} or Bilinear class \citep{du2021bilinear}. While those extensions are valuable, the structure conditions assumed usually make the classes hard to track beyond the linear case. For example, the practical instances of Eluder Dimension are based on the linear-in-feature (or its transformation) representations (Section~4.1 of \cite{wen2013efficient}). As a comparison, differentiable function class contains a range of functions that are widely used in practical algorithms \citep{riedmiller2005neural}.

\textbf{Offline RL with general function approximation (GFA).} Another interesting thread of work considered offline RL with general function approximation \citep{ernst2005tree,chen2019information,liu2020provably,xie2021bellman} which only imposes realizability and completeness/concentrability assumptions. The major benefit is that the function hypothesis can be arbitrary with no structural assumptions and it has been shown that offline RL with GFA is provably efficient. However, the generic form of functions in GFA makes it hard to go beyond worst-case analysis and obtain fine-grained instance-dependent learning bounds similar to those under linear cases. On the contrary, our results with DFA can be more problem adaptive by leveraging gradients and higher order information.


In addition to the above, there are more connected works. \cite{zhang2022off} first considers the differentiable function approximation (DFA) for the off-policy evaluation (OPE) task and builds the asymptotic theory, \cite{fan2020theoretical} analyzes the \emph{deep Q-learning} with the specific ReLu activations, and \cite{kallus2020double} considers semi-parametric / nonparametric methods for offline RL (as opposed to our parametric DFA in \ref{def:function_class}). These are nice complementary studies to our work.

 \begin{table*}[]
 	\centering
 	\def\arraystretch{1}%
 	\resizebox{\textwidth}{!}{
 		\begin{tabular}{|c|c|c|c|}
 			\hline
 			Algorithm & Assumption & Suboptimality Gap $v^\star-v^{\widehat{\pi}}$\\
 			\hline 
 			\hline 
 			VFQL, Theorem~\ref{thm:VFQL} & Concentrability \ref{assume:con_co}  & $\sqrt{C_{\mathrm{eff}}}H\cdot\sqrt{\frac{H^2d+\lambda C^2_\Theta}{K}}+\sqrt[\frac{1}{4}]{\frac{H^3d\epsilon_{\mathcal{F}}}{K}} +\sqrt{C_\mathrm{eff} H^3 \epsilon_{\mathcal{F}}}+H\epsilon_{\mathcal{F}}$ \\
 			\hline
 			PFQL, Theorem~\ref{thm:PFQL} & Uniform Coverage \ref{assum:cover} &  $\sum_{h=1}^H 16 dH\cdot \E_{\pi^\star}\left[\sqrt{\nabla^\top_\theta f({\theta}^\star_h,\phi(s_h,a_h))\Sigma^{\star-1}_h\nabla_\theta f({\theta}^\star_h,\phi(s_h,a_h))}\right]$\\
 			\hline 
 			VAFQL, Theorem~\ref{thm:VAFQL} & Uniform Coverage \ref{assum:cover}&  $ 16 d\cdot\sum_{h=1}^H \E_{\pi^\star}\left[\sqrt{\nabla^\top_\theta f({\theta}^\star_h,\phi(s_h,a_h))\Lambda^{\star-1}_h\nabla_\theta f({\theta}^\star_h,\phi(s_h,a_h))}\right]$\\
 			\hline 
 		\end{tabular}
 	}
 	\caption{Suboptimality gaps for different algorithms with differentiable function class \ref{def:function_class}. Here we omit the higher order term for clear comparison. With Concentrability, we can only achieve the worst case bound that does not explicit depend on the function model $f$. With the stronger uniform coverage \ref{assum:cover}, better instance-dependent characterizations become available. Here $C_\mathrm{eff}$ is in \ref{assume:con_co}, $\Sigma^\star$ in \ref{thm:PFQL}, $\Lambda^\star$ in \ref{thm:VAFQL} and $\epsilon_{\mathcal{F}}$ in \ref{assum:R+BC}.} 
 	\label{tab: result for comparison} 
 \end{table*}

\subsection{Our contribution}\label{sec:contribution}


We provide the first Instance-dependent offline learning bound under non-linear function approximation. Informally, we show that (up to a lower order term) the natural complexity measure is proportional to $\sum_{h=1}^H\E_{\pi^\star,h}[\sqrt{ g_\theta(s,a)^\top \Sigma^{-1}_h g_\theta(s,a) }]$ where $g_\theta(s,a):=\nabla f(\theta,\phi(s,a))$ is the gradient \emph{w.r.t.} the parameter $\theta^\star$ at feature $\phi$ and $\Sigma_h = \sum_i g(s_{i,h},a_{i,h}) g(s_{i,h},a_{i,h})^\top$ is the Fisher information matrix of the observed data at $\widehat{\theta}$. This is achieved by analyzing the \emph{pessimistic fitted Q-learning} (PFQL) algorithm (Theorem~\ref{thm:PFQL}). In addition, we further analyze its variance-reweighting variant, which recovers the variance-dependent structure and can yield faster sample convergence rate. Last but not least, existing offline RL studies with tabular models, linear models and GLM models can be directly indicated by the appropriate choice of our model $\mathcal{F}$.


\section{Preliminaries }\label{sec:formulation}

\textbf{Episodic Markov decision process.} Let $M=(\mathcal{S}, \mathcal{A}, P, r, H, d_1)$ to denote a finite-horizon \emph{Markov Decision Process} (MDP), where $\mathcal{S}$ is the arbitrary state space and $\mathcal{A}$ is the arbitrary action space which can be infinite or continuous. The transition kernel $P_h:\mathcal{S}\times\mathcal{A} \mapsto \Delta^{\mathcal{S}}$ ($\Delta^\mathcal{S}$ represents a distribution over states) maps each state action$(s_h,a_h)$ to a probability distribution $P_h(\cdot|s_h,a_h)$ and $P_h$ can be different for different $h$ (time-inhomogeneous). $H$ is the planning horizon and $d_1$ is the initial state distribution. Besides, $r : \mathcal{S} \times{A} \mapsto \mathbb{R}$ is the mean reward function satisfying $0\leq r\leq1$. A policy $\pi=(\pi_1,\ldots,\pi_H)$ assigns each state $s_h \in \mathcal{S}$ a probability distribution over actions by mapping $s_h\mapsto \pi_h(\cdot|s_h)$ $\forall h\in[H]$ and induces a random trajectory $ s_1, a_1, r_1, \ldots, s_H,a_H,r_H,s_{H+1}$ with $s_1 \sim d_1, a_h \sim \pi(\cdot|s_h), s_{h+1} \sim P_h (\cdot|s_h, a_h), \forall h \in [H]$.

Given a policy $\pi$, the $V$-value functions and state-action value function (Q-functions) $Q^\pi_h(\cdot,\cdot)\in \R^{S\times A}$ are defined as:
{\small$V^\pi_h(s)=\E_\pi[\sum_{t=h}^H r_{t}|s_h=s] ,\;\;Q^\pi_h(s,a)=\E_\pi[\sum_{t=h}^H  r_{t}|s_h,a_h=s,a],\;\forall s,a,h\in\mathcal{S},\mathcal{A},[H].$} The Bellman (optimality) equations follow $\forall h\in[H],s,a\in\mathcal{S}\times\mathcal{A}$:{\small	\begin{align*}
	&Q^\pi_h(s,a)=r_h(s,a)+\int_\mathcal{S}V^\pi_{h+1}(\cdot)dP_h(\cdot|s,a),\;\;V^\pi_h(s)=\E_{a\sim\pi_h(s)}[Q^\pi_h(s,a)], \\
	&Q^\star_h(s,a)=r_h(s,a)+\int_\mathcal{S}V^\star_{h+1}(\cdot)dP_h(\cdot|s,a),\; \;V^\star_h(s)=\max_a Q^\star_h(s,a).
	\end{align*}}We define Bellman operator $\mathcal{P}_h$ for any function $V\in\R^\mathcal{S}$ as $\mathcal{P}_h(V)=r_h+\int_\mathcal{S}V dP_h$, then $\mathcal{P}_h(V^\pi_{h+1})= Q^\pi_h$ and $\mathcal{P}_h(V^\star_{h+1})= Q^\star_h$. The performance measure is {\small$v^\pi:=\E_{d_1}\left[V^\pi_1\right]=\E_{\pi,d_1}\left[\sum_{t=1}^H  r_t\right]$}. 
Lastly, the induced state-action marginal occupancy measure for any $h\in[H]$ is defined to be: for any $E\subseteq \mathcal{S}\times\mathcal{A}$, $d^\pi_h(E):= \E[(s_h,a_h)\in E| s_1\sim d_1,a_i\sim \pi(\cdot|s_i),s_{i}\sim P_{i-1}(\cdot|s_{i-1},a_{i-1}),1\leq i\leq h]$ and $\E_{\pi,h}[f(s,a)]:=\int_{\mathcal{S}\times\mathcal{A}}f(s,a)d^\pi_h(s,a)dsda$.

\textbf{Offline Reinforcement Learning.} The goal of Offline RL is to learn the policy $\pi^\star:=\argmax_{\pi}v^\pi$ using only the historical data {\small$\mathcal{D}=\left\{\left(s_{h}^{\tau}, a_{h}^{\tau}, r_{h}^{\tau}, s_{h+1}^{\tau}\right)\right\}_{\tau\in[K]}^{h\in[H]}$}. The data generating behavior policy is denoted as $\mu$. In the offline regime, we have neither the knowledge about $\mu$ nor the access to further exploration for a different policy. The agent is asked to find a policy $\widehat{\pi}$ such that $v^\star-v^{\widehat{\pi}}\leq\epsilon$ for the given batch data $\mathcal{D}$ and a specified accuracy $\epsilon>0$.

\subsection{Assumptions}

Function approximation in offline RL requires sufficient expressiveness of $\mathcal{F}$. In fact, even under the \emph{realizability} and \emph{concentrability} conditions, sample efficient offline RL might not be achievable \citep{foster2021offline}. Therefore, under the differentiable function setting (Definition~\ref{def:function_class}), we make the following assumptions.

\begin{assumption}[Realizability+Bellman Completeness]\label{assum:R+BC} The differentiable function class $\mathcal{F}$ in Definition~\ref{def:function_class} satisfies:
	\begin{itemize}
		\item Realizability: for optimal $Q^\star_h$, there exists $\theta^\star_h\in\Theta$ such that $Q^\star_h(\cdot,\cdot)=f(\theta^\star_h,\phi(\cdot))$ $\forall h$;
		\item Bellman Completeness: Let $\mathcal{G}:= \{V(\cdot)\in\R^\mathcal{S}: such\;that\;\norm{V}_\infty \leq H\}$. Then in this case $\sup_{V\in\mathcal{G}}\inf_{f\in\mathcal{F}}\norm{f-\mathcal{P}_h(V)}_\infty\leq \epsilon_\mathcal{F}$ for some $\epsilon_\mathcal{F}\geq0$.
	\end{itemize}
\end{assumption} 
\emph{Realizability} and \emph{Bellman Completeness} are widely adopted in the offline RL analysis with general function approximations \citep{chen2019information,xie2021bellman} and Assumption~\ref{assum:R+BC} states its differentiable function approximation version. There are other forms of completeness, \emph{e.g.} optimistic closure defined in \cite{wang2019optimism}.


\textbf{Data coverage assumption.} Furthermore, in the offline regime, it is known that function approximation cannot be sample efficient for learning a $\epsilon$-optimal policy
without data-coverage assumptions when $\epsilon$ is small (\emph{i.e.} high accuracy) \citep{wang2021statistical}. In particular, we consider two types of coverage assumptions and provide guarantees for them separately. 

\begin{assumption}[Concentrability Coverage]\label{assume:con_co}
	For any fixed policy $\pi$, define the marginal state-action occupancy ratio as $d^\pi_h(s,a)/d^\mu_h(s,a)$ $\forall s,a$. Then the concentrability coefficient is defined as $C_{\text{eff}}:=\sup_\pi \sup_{h\in[H]}\norm{d^\pi_h/d^\mu_h}_{2,d^\mu_h}^2$, where $\norm{g(\cdot,\cdot)}_{2,d^\mu}:=\sqrt{\E_{d^\mu}[g(\cdot,\cdot)^2]}$ and $C_{\text{eff}}<\infty$.
\end{assumption}
This is the standard coverage assumption that has has been widely assumed in \citep{ernst2005tree,szepesvari2005finite,chen2019information,xie2020batch}. In the above, it requires the occupancy ratio to be finitely bounded for all the policies. In the recent work \cite{xie2021bellman}, they prove offline learning with GFA is efficient with only single policy concentrability, we believe similar results can be derived for DFA by modifying their main algorithm (3.2). However, chances are it will end up with a computational intractable algorithm. We leave this as the future work.

Assumption \ref{assume:con_co} is fully characterized by the MDPs. In addition, we can make an alternative assumption \ref{assum:cover} that depends on both the MDPs and the function approximation class $\mathcal{F}$.\footnote{{Generally speaking, \ref{assume:con_co} and \ref{assum:cover} are not directly comparable. However, for the specific function class $f=\langle \theta,\phi\rangle$ with $\phi=\mathbf{1}(s,a)$ and tabular MDPs, it is easy to check \ref{assum:cover} is strong than \ref{assume:con_co}.}} It assumes a curvature condition for $\mathcal{F}$.

\begin{assumption}[Uniform Coverage]\label{assum:cover} 
	We have $\forall h\in[H]$, there exists $\kappa>0$,
	\begin{itemize}
	\item\label{assume:eqn1}
	$\E_{\mu,h}\left[\left(f(\theta_1,\phi(\cdot,\cdot))-f(\theta_2,\phi(\cdot,\cdot))\right)^2\right]
	\geq \kappa\norm{\theta_1-\theta_2}^2_2, \;\;\forall \theta_1,\theta_2\in\Theta;
	$ $(\star)$
	\item $
	\E_{\mu,h}\left[\nabla f(\theta,\phi(s,a))\cdot\nabla f(\theta,\phi(s,a))^\top\right]\succ \kappa I, \forall \theta\in\Theta.
	$ $(\star\star)$
	\end{itemize}
\end{assumption}

In the linear function approximation regime, Assumption~\ref{assum:cover} reduces to \ref{example:cover_linear} since $(\star)$ and $(\star\star)$ are identical assumptions. Concretely, if $f(\theta,\phi)=\langle \theta,\phi\rangle$, then $(\star)$ {\small$\E_{\mu,h}[\left(f(\theta_1,\phi(\cdot,\cdot))-f(\theta_2,\phi(\cdot,\cdot))\right)^2]=(\theta_1-\theta_2)^\top\E_{\mu,h}[\phi(\cdot,\cdot)\phi(\cdot,\cdot)^\top](\theta_1-\theta_2)\geq \kappa\norm{\theta_1-\theta_2}^2_2$} {\small$\forall \theta_1,\theta_2\in\Theta \Leftrightarrow$} \ref{example:cover_linear} {\small$\Leftrightarrow (\star\star) \E_{\mu,h}\left[\nabla f(\theta,\phi(s,a))\cdot\nabla f(\theta,\phi(s,a))^\top\right]$} $\succ \kappa I$. Therefore, \ref{assum:cover} can be considered as a natural extension of \ref{example:cover_linear}  for differentiable class. We do point that \ref{assum:cover} can be violated for function class $\mathcal{F}$ that is ``not identifiable'' by the data distribution $\mu$ (\emph{i.e.}, there exists $f(\theta_1),f(\theta_2)\in\mathcal{F},\theta_1\neq \theta_2$ s.t. $\E_{\mu,h}[\left(f(\theta_1,\phi(\cdot,\cdot))-f(\theta_2,\phi(\cdot,\cdot))\right)^2]=0$). Nevertheless, there are representative non-linear differentiable classes (\emph{e.g.} generalized linear model (GLM)) satisfying \ref{assum:cover}.

\begin{example}[Linear function coverage assumption \citep{wang2021statistical,min2021variance,yin2022linear,xiong2022nearly}]\label{example:cover_linear} 
	$\Sigma^{\mathrm{feature}}_h:=\E_{\mu,h}\left[\phi(s,a)\phi(s,a)^\top\right]\succ\kappa I$ $\forall h\in[H]$ with some $\kappa>0$. 
\end{example}

\begin{example}[offline generalized linear model \citep{li2017provably,wang2019optimism}]\label{example:GLM}
	For a known feature map $\phi:\mathcal{S}\times\mathcal{A}\rightarrow \mathcal{B}_d$ and link function $f:[-1,1]\mapsto [-1,1]$ the class of GLM is $\mathcal{F}_{\mathrm{GLM}}:=\{(s,a)\mapsto f(\langle \phi(s,a),\theta\rangle):\theta\in\Theta\}$ satisfying $\E_{\mu,h}\left[\phi(s,a)\phi(s,a)^\top\right]\succ\kappa I$. Furthermore, $f(\cdot)$ is either monotonically increasing or decreasing and $0<\kappa\leq |f'(z)|\leq K<\infty,|f^{\prime\prime}(z)|\leq M<\infty$ for all $|z|\leq 1$ and some $\kappa,K,M$. Then $\mathcal{F}_{\mathrm{GLM}}$ satisfies \ref{assum:cover}, see Appendix~\ref{sec:GLM}.
\end{example}

\section{Differentiable Function Approximation is Provably Efficient}\label{sec:proof_sketch}
In this section, we present our solution for offline reinforcement learning with differentiable function approximation. As a warm-up, we first analyze the \emph{vanilla fitted Q-learning} (VFQL, Algorithm~\ref{alg:VFQL}), which only requires the concentrability Assumption~\ref{assume:con_co}. The algorithm is presented in Appendix~\ref{sec:vfql}.

\begin{theorem}\label{thm:VFQL}
	Choose $0<\lambda\leq 1/2C_\Theta^2$ in Algorithm~\ref{alg:VFQL}. Suppose Assumption~\ref{assum:R+BC},\ref{assume:con_co}. Then if $K\geq\max\left\{512\frac{\kappa_1^4}{\kappa^2}\left(\log(\frac{2Hd}{\delta})+d\log(1+\frac{4\kappa_1^3\kappa_2C_\Theta K^3}{\lambda^2})\right),\frac{4\lambda}{\kappa}\right\}$, with probability $1-\delta$, the output $\widehat{\pi}$ of VFQL guarantees
	\[
	v^\star - v^{\widehat{\pi}}\leq \sqrt{C_{\mathrm{eff}}}H\cdot\widetilde{O}\left(\sqrt{\frac{H^2d+\lambda C^2_\Theta}{K}}+\sqrt[\frac{1}{4}]{\frac{H^3d\epsilon_{\mathcal{F}}}{K}} \right)+O(\sqrt{C_\mathrm{eff} H^3 \epsilon_{\mathcal{F}}}+H\epsilon_{\mathcal{F}})
	\] 
\end{theorem}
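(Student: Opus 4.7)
The plan is to combine a standard performance-decomposition argument with a least-squares analysis over the differentiable class, converting in-distribution regression error into off-policy suboptimality via the concentrability coefficient $C_{\mathrm{eff}}$. Concretely, I would first write $v^\star - v^{\widehat{\pi}} \leq \sum_{h=1}^H \E_{\pi^\star}[(\mathcal{P}_h \widehat{V}_{h+1} - \widehat{Q}_h)(s_h,a_h)] + \sum_{h=1}^H \E_{\widehat{\pi}}[(\widehat{Q}_h - \mathcal{P}_h \widehat{V}_{h+1})(s_h,a_h)]$ by a telescoping/extended value difference argument (using that $\widehat{\pi}$ is greedy with respect to $\widehat{Q}$ and $\pi^\star$ is greedy with respect to $Q^\star$). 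Then I would Cauchy--Schwarz each summand and use Assumption~\ref{assume:con_co} to bound it by $\sqrt{C_{\mathrm{eff}}} \cdot \|\widehat{Q}_h - \mathcal{P}_h \widehat{V}_{h+1}\|_{2,d^\mu_h}$, reducing the task to controlling the $L_2(d^\mu_h)$ regression error at every step $h$.

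For the per-step regression analysis, recall that VFQL computes $\widehat\theta_h$ as the (regularized) empirical risk minimizer of $\frac{1}{K}\sum_\tau (f(\theta,\phi(s^\tau_h,a^\tau_h)) - r^\tau_h - \widehat{V}_{h+1}(s^\tau_{h+1}))^2 + \lambda\|\theta\|_2^2$. By Assumption~\ref{assum:R+BC}, there exists $\bar\theta_h\in\Theta$ with $\|f(\bar\theta_h,\phi(\cdot,\cdot))-\mathcal{P}_h \widehat{V}_{h+1}\|_\infty \le \epsilon_{\mathcal{F}}$. A standard misspecified least-squares decomposition (basic inequality $\widehat{\mathcal{L}}(\widehat\theta_h)+\lambda\|\widehat\theta_h\|^2 \le \widehat{\mathcal{L}}(\bar\theta_h)+\lambda\|\bar\theta_h\|^2$) yields, after rearrangement,
\begin{equation*}
\|f(\widehat\theta_h,\phi)-\mathcal{P}_h\widehat{V}_{h+1}\|^2_{2,d^\mu_h} \;\lesssim\; \underbrace{\sup_{\theta\in\Theta} |(\widehat{\mathbb{E}}-\E_\mu)[\Delta_\theta(s,a,s')]|}_{\text{empirical process}} + \lambda C_\Theta^2/K + \epsilon_{\mathcal{F}}^2,
\end{equation*}
where $\Delta_\theta$ is the quadratic loss difference between $\theta$ and $\bar\theta_h$.

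To control the empirical-process term uniformly, I would build an $\varepsilon$-cover $\mathcal{N}_\varepsilon$ of $\Theta$ in $\|\cdot\|_2$ and, using $\|\nabla_\theta f\|_2 \le \kappa_1$ from Remark~\ref{remark:fc}, transfer it to a sup-norm cover of $\mathcal{F}$ of size $(1+2C_\Theta\kappa_1/\varepsilon)^d$. A parallel cover must be built over the possible targets $\widehat{V}_{h+1}\in\mathcal{G}$, which are themselves generated by clipped maxima of functions in $\mathcal{F}$, so Lipschitzness of $\max_a$ lets us reuse the same covering-number scaling. Bernstein's inequality on each cover element plus a union bound, followed by the standard discretization error accounting, delivers the rate $\widetilde O(\sqrt{(H^2 d+\lambda C_\Theta^2)/K})$ for the empirical-process term, which is the source of the $d$ in the bound. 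The key condition $K \gtrsim \kappa_1^4/\kappa^2 \cdot(d\log(\cdots)+\log(H/\delta))$ in the theorem is exactly what ensures this concentration kicks in uniformly.

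The hard part, and the reason the $\sqrt[4]{\cdot}$ term appears, is the interplay between misspecification and variance. The cross term in the empirical-process expansion has the form $\epsilon_{\mathcal{F}} \cdot \|f(\widehat\theta_h,\phi)-f(\bar\theta_h,\phi)\|_{2,d^\mu_h}/\sqrt{K}$, and bounding it by $\tfrac{1}{2}\|f(\widehat\theta_h,\phi)-f(\bar\theta_h,\phi)\|^2_{2,d^\mu_h} + O(\epsilon_{\mathcal{F}}/K)$ only partially absorbs it; after taking square roots one picks up an unavoidable $(H^3 d\epsilon_{\mathcal{F}}/K)^{1/4}$ contribution. Summing over $h\in[H]$ with the $\sqrt{C_{\mathrm{eff}}}$ change-of-measure factor, and letting the $H\epsilon_{\mathcal{F}}$ and $\sqrt{C_{\mathrm{eff}} H^3\epsilon_{\mathcal{F}}}$ terms account respectively for the per-step Bellman-completeness bias and its propagation through the horizon via $\|\widehat{V}_{h+1}-V^\star_{h+1}\|$ recursions, assembles the stated bound. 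The main obstacle to watch for is maintaining a single high-probability event across all $H$ regression steps despite the data-dependent targets $\widehat{V}_{h+1}$, which is why a joint cover of $\Theta \times \mathcal{G}$ (rather than a fresh cover at each $h$) is essential.
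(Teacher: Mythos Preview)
Your proposal is correct and matches the paper's proof closely: the paper uses the same performance decomposition (Lemma~\ref{lem:sub_decomp}), the same change of measure through $C_{\mathrm{eff}}$ and Cauchy--Schwarz, and the same covering-based regression analysis over $\Theta$ together with the data-dependent value class to handle the targets $\widehat V_{h+1}$. The only cosmetic difference is how the $\sqrt[4]{\cdot}$ term arises: rather than isolating a cross term and absorbing it by AM--GM, the paper (following \cite{chen2019information}) bounds $\mathrm{Var}[X]\le 4H^2\,\E_\mu[X]+8H^3\epsilon_{\mathcal F}$, applies Bernstein, solves the resulting quadratic in $\E_\mu[X]$ to get $\E_\mu[X]\lesssim H^2d/K+\sqrt{H^3d\epsilon_{\mathcal F}/K}+H\epsilon_{\mathcal F}$, and then takes a square root---the fourth root falls out at that last step.
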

If the model approximation capacity is insufficient, \ref{thm:VFQL} will induce extra error due to the large $\epsilon_{\mathcal{F}}$. If $\epsilon_{\mathcal{F}}\rightarrow 0$, the standard statistical rate $\frac{1}{\sqrt{K}}$ can be recovered and similar results are derived with general function approximation (GFA) \citep{chen2019information,xie2020batch}. However, using concentrability coefficient conceals the problem-dependent structure and omits the specific information of differentiable functions in the complexity measure. Owing to this, we switch to the stronger ``uniform'' coverage \ref{assum:cover} and analyze the \emph{pessimistic fitted Q-learning} (PFQL, Algorithm~\ref{alg:PFQL}) to arrive at the conclusion that offline RL with differentiable function approximation is provably efficient. 

\textbf{Motivation of PFQL.} The PFQL algorithm mingles the two celebrated algorithmic choices: Fitted Q-Iteration (FQI) and Pessimism. However, before going into the technical details, we provide some interesting insights that motivate our analysis.

First of all, the square error loss used in FQI \citep{gordon1999approximate,ernst2005tree} naturally couples with the differentiable function class as the resulting optimization objective is more computationally tractable (since \emph{stochastic gradient descent} (SGD) can be readily applied) comparing to other information-theoretical algorithms derived with general function approximation (\emph{e.g.} the \emph{maxmin} objective in \cite{xie2021bellman}, eqn (3.2)).\footnote{We mention \cite{xie2021bellman} has a nice practical version PSPI, but the convergence is slower (the rate $O(n^{-\frac{1}{3}})$).} In particular, FQI with differentiable function approximation resembles the theoretical prototype of neural FQI algorithm \citep{riedmiller2005neural} and DQN algorithm \citep{mnih2015human,fan2020theoretical} when instantiating the model $f$ to be deep neural networks. Furthermore, plenty of practical algorithms leverage fitted-Q subroutines for updating the \emph{critic} step (\emph{e.g.} \citep{schulman2017equivalence,haarnoja2018soft}) with different differentiable function choices.

In addition, we also incorporate pessimism for the design. Indeed, one of the fundamental challenges in offline RL comes from the \emph{distributional shift}. When such a mismatch occurs, the estimated/optimized $Q$-function (using batch data $\mathcal{D}$) may witness severe overestimation error due to the extrapolation of model $f$ \citep{levine2020offline}. Pessimism is the scheme to mitigate the error / overestimation bias via penalizing the Q-functions at state-action locations that have high uncertainties (as opposed to the \emph{optimism} used in the online case), and has been widely adopted (\emph{e.g.} \citep{buckman2020importance,kidambi2020morel,jin2021pessimism}).

\textbf{Algorithm~\ref{alg:PFQL} description.} Inside the backward iteration of PFQL, Fitted Q-update is performed to optimize the parameter (Line~4). $\widehat{\theta}_h$ is the root of the first-order stationarity equation $\sum_{k=1}^K\left( f(\theta,\phi_{h,k})-r_{h,k}-\widehat{V}_{h+1}(s^k_{h+1})\right)\cdot \nabla^\top_\theta f(\theta,\phi_{h,k})+\lambda \theta=0$ and $\Sigma_h$ is the Gram matrix with respect to $\nabla_\theta f|_{\theta=\widehat{\theta}_h}$. Note for any $s,a\in\mathcal{S}\times\mathcal{A}$, $m(s,a):=(\nabla_{\theta} f(\widehat{\theta}_h,\phi(s,a))^{\top} \Sigma_{h}^{-1} \nabla_{\theta} f(\widehat{\theta}_h,\phi(s,a)))^{-1}$ measures the effective sample size that explored $s,a$ along the gradient direction $\nabla_\theta f|_{\theta=\widehat{\theta}_h}$, and $\beta/\sqrt{m(s,a)}$ is the estimated uncertainty at $(s,a)$. However, the quantity $m(s,a)$ depends on $\widehat{\theta}_h$, and $\widehat{\theta}_h$ needs to be close to the true ${\theta}_h^\star$ (\emph{i.e.} $\widehat{Q}_h\approx f(\widehat{\theta}_h,\phi)$ needs to be close to $Q^\star_h$) for the uncertainty estimation $\Gamma_h$ to be valid, since putting a random $\theta$ into $m(s,a)$ can cause an arbitrary $\Gamma_h$ that is useless (or might even deteriorate the algorithm). Such an ``implicit'' constraint over $\widehat{\theta}_h$ imposes the extra difficulty for the theoretical analysis due to that general differentiable functions encode nonlinear structures. As a direct comparison, in the simpler linear MDP case, the uncertainty measure $\Gamma_h:=\sqrt{\phi(\cdot,\cdot)^\top(\Sigma^{\mathrm{linear}}_h)^{-1}\phi(\cdot,\cdot)}$ is always valid since it does not depend on the least-square regression weight $\widehat{w}_h$ \citep{jin2021pessimism}.\footnote{Here $\Sigma^{\mathrm{linear}}_h:=\sum_{k=1}^K\phi_{h,k}\phi_{h,k}^\top+\lambda I_d$.} Besides, the choice of $\beta$ is set to be $\widetilde{O}(dH)$ in Theorem~\ref{thm:PFQL} and the extra higher order term $\widetilde{O}(\frac{1}{K})$ in $\Gamma_h$ is for theoretical reason only.

\begin{algorithm}[H]
	\caption{Pessimistic Fitted Q-Learning (PFQL)}
	\label{alg:PFQL}
	\begin{algorithmic}[1]
		\STATE {\bfseries Input:}  Offline Dataset $\mathcal{D}=\left\{\left(s_{h}^{k}, a_{h}^{k}, r_{h}^{k},s_{h+1}^k\right)\right\}_{k, h=1}^{K, H}$. Require $\beta$. Denote $\phi_{h,k}:=\phi(s_h^k,a_h^k)$.
		\STATE {\bfseries Initialization:} Set $\widehat{V}_{H+1}(\cdot) \leftarrow 0$ and $\lambda>0$.
		\FOR{$h=H,H-1,\ldots,1$}
		\STATE Set $\widehat{\theta}_h\leftarrow \argmin_{\theta\in\Theta}\left\{\sum_{k=1}^{K}\left[f\left(\theta, \phi_{h,k}\right)-r_{h,k}-\widehat{V}_{h+1}(s_{h+1}^k)\right]^{2}+\lambda \cdot\norm{\theta}_2^2\right\}$
		\STATE Set $\Sigma_h\leftarrow\sum_{k=1}^K \nabla_{\theta}f(\widehat{\theta}_h,\phi_{h,k})\nabla^\top_\theta f(\widehat{\theta}_h,\phi_{h,k})+\lambda I_d$.
		\STATE Set $\Gamma_{h}(\cdot, \cdot) \leftarrow \beta\sqrt{\nabla_{\theta} f(\widehat{\theta}_h,\phi(\cdot,\cdot))^{\top} \Sigma_{h}^{-1} \nabla_{\theta} f(\widehat{\theta}_h,\phi(\cdot,\cdot))}\left(+\widetilde{O}(\frac{1}{K})\right)$
		\STATE  Set  $\bar{Q}_{h}(\cdot, \cdot) \leftarrow f(\widehat{\theta}_h,\phi(\cdot,\cdot))-\Gamma_{h}(\cdot, \cdot)$
		\STATE Set  $\widehat{Q}_{h}(\cdot, \cdot) \leftarrow \min \left\{\bar{Q}_{h}(\cdot, \cdot), H-h+1\right\}^{+}$
		\STATE  Set  $\widehat{\pi}_{h}(\cdot \mid \cdot) \leftarrow \arg \max _{\pi_{h}}\big\langle\widehat{Q}_{h}(\cdot, \cdot), \pi_{h}(\cdot \mid \cdot)\big\rangle_{\mathcal{A}}$, $\widehat{V}_{h}(\cdot) \leftarrow\max _{\pi_{h}}\big\langle\widehat{Q}_{h}(\cdot, \cdot), \pi_{h}(\cdot \mid \cdot)\big\rangle_{\mathcal{A}}$
		\ENDFOR
		\STATE {\bfseries Output:} $\left\{\widehat{\pi}_{h}\right\}_{h=1}^{H}$.
	\end{algorithmic}
\end{algorithm}

\textbf{Model-Based vs. Model-Free.} PFQL can be viewed as the strict generalization over the previous value iteration based algorithms, \emph{e.g.} PEVI algorithm (\cite{jin2021pessimism}, linear MDPs) and the VPVI algorithm (\cite{yin2021towards}, tabular MDPs). On one hand, \emph{approximate value iteration} (AVI) algorithms \citep{munos2005error} are usually model-based algorithms (for instance the tabular algorithm VPVI uses empirical model $\widehat{P}$ for planning). On the other hand, FQI has the form of batch Q-learning update (\emph{i.e.} Q-learning is a special case with batch size equals to one), therefore is more of model-free flavor. Since FQI is a concrete instantiation of the abstract AVI procedure \citep{munos2007performance}, PFQL draws a unified view of model-based and model-free learning.

Now we are ready to state our main result for PFQL and the full proof can be found in Appendix~\ref{sec:preparation},\ref{sec:analze_PFQL},\ref{sec:pf_PFQL}.

\begin{theorem}\label{thm:PFQL}
	Let $\beta=8dH\iota$ and choose $0<\lambda\leq 1/2C_\Theta^2$ in Algorithm~\ref{alg:PFQL}. Suppose Assumption~\ref{assum:R+BC},\ref{assum:cover} with $\epsilon_{\mathcal{F}}=0$.\footnote{Here we assume model capacity is sufficient to make the presentation concise. If $\epsilon_{\mathcal{F}}>0$, the complexity bound will include the term $\epsilon_{\mathcal{F}}$. We include more discussion in Appendix~\ref{sec:epf}.} Then if $K\geq\max\left\{512\frac{\kappa_1^4}{\kappa^2}\left(\log(\frac{2Hd}{\delta})+d\log(1+\frac{4\kappa_1^3\kappa_2C_\Theta K^3}{\lambda^2})\right),\frac{4\lambda}{\kappa}\right\}$, with probability $1-\delta$, for all policy $\pi$ simultaneously, the output of PFQL guarantees 
	\[
	v^\pi-v^{\widehat{\pi}}\leq  \sum_{h=1}^H 8 dH\cdot \E_\pi\left[\sqrt{\nabla^\top_\theta f(\widehat{\theta}_h,\phi(s_h,a_h))\Sigma^{-1}_h\nabla_\theta f(\widehat{\theta}_h,\phi(s_h,a_h))}\right]\cdot\iota+\widetilde{O}(\frac{C_{\mathrm{hot}}}{K}),
	\]
	where $\iota$ is a Polylog term and the expectation of $\pi$ is taken over $s_h,a_h$. In particular, if further $K\geq \max\{\widetilde{O}(\frac{(\kappa_1^2+\lambda)^2\kappa_2^2\kappa_1^2H^4d^2}{\kappa^6}),\frac{128\kappa_1^4\log(2d/\delta)}{\kappa^2}\}$ we have 
	\[
	0\leq v^{\pi^\star}-v^{\widehat{\pi}}\leq  \sum_{h=1}^H 16 dH\cdot \E_{\pi^\star}\left[\sqrt{\nabla^\top_\theta f({\theta}^\star_h,\phi(s_h,a_h))\Sigma^{\star-1}_h\nabla_\theta f({\theta}^\star_h,\phi(s_h,a_h))}\right]\cdot\iota+\widetilde{O}(\frac{C'_{\mathrm{hot}}}{K}).
	\]
	Here $\Sigma^{\star}_h=\sum_{k=1}^K \nabla_\theta f({\theta}^\star_h,\phi(s^k_h,a^k_h))\nabla^\top_\theta f({\theta}^\star_h,\phi(s^k_h,a^k_h))+\lambda I_d$ and the definition of higher order parameter $C_{\mathrm{hot}}$, $C'_{\mathrm{hot}}$ can be found in List~\ref{sec:notation}.
\end{theorem}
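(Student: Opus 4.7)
The plan is to combine the standard pessimism framework with a Taylor-expansion analysis of the implicit estimator $\widehat{\theta}_h$. First I would invoke the extended value difference decomposition: if $\Gamma_h$ pointwise dominates the Bellman backup error $\xi_h(s,a) := f(\widehat{\theta}_h,\phi(s,a))-(\mathcal{P}_h\widehat{V}_{h+1})(s,a)$, then one can show $\widehat{Q}_h(s,a)\leq Q^\pi_h(s,a)$ along the entire trajectory for any comparator $\pi$, yielding the pessimism identity $v^\pi-v^{\widehat{\pi}}\leq 2\sum_{h=1}^H \mathbb{E}_\pi[\Gamma_h(s_h,a_h)]$. The entire problem is thus reduced to validating $\Gamma_h$ as a uniform uncertainty quantifier with the stated $\beta=8dH\iota$ scaling.

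To validate $\Gamma_h$ I would use the first-order stationarity condition for $\widehat{\theta}_h$, namely $\sum_k(f(\widehat{\theta}_h,\phi_{h,k})-r_{h,k}-\widehat{V}_{h+1}(s^k_{h+1}))\nabla_\theta f(\widehat{\theta}_h,\phi_{h,k})+\lambda\widehat{\theta}_h=0$. Invoking Assumption~\ref{assum:R+BC} (with $\epsilon_{\mathcal{F}}=0$) pick $\bar{\theta}_h$ with $f(\bar{\theta}_h,\phi)\equiv \mathcal{P}_h\widehat{V}_{h+1}$, write $\widehat{V}_{h+1}(s^k_{h+1})=\mathcal{P}_h\widehat{V}_{h+1}(s^k_h,a^k_h)+\eta_k$ with $\eta_k$ a martingale difference, and Taylor-expand $f(\bar{\theta}_h,\phi_{h,k})$ around $\widehat{\theta}_h$ to second order. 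After substitution this rearranges to
\[
\Sigma_h(\widehat{\theta}_h-\bar{\theta}_h)=\sum_k \eta_k\,\nabla_\theta f(\widehat{\theta}_h,\phi_{h,k})+\sum_k R_k\,\nabla_\theta f(\widehat{\theta}_h,\phi_{h,k})+\lambda\bar{\theta}_h,
\]
where $|R_k|=O(\kappa_2\|\widehat{\theta}_h-\bar{\theta}_h\|_2^2)$. A second Taylor expansion of $f(\widehat{\theta}_h,\phi(s,a))-f(\bar{\theta}_h,\phi(s,a))$ around $\widehat{\theta}_h$ combined with Cauchy--Schwarz in the $\Sigma_h$-norm then gives the leading bound $|\xi_h(s,a)|\leq \sqrt{\nabla_\theta f^\top \Sigma_h^{-1}\nabla_\theta f}\cdot\|\widehat{\theta}_h-\bar{\theta}_h\|_{\Sigma_h}+O(\kappa_2\|\widehat{\theta}_h-\bar{\theta}_h\|_2^2)$, so the task reduces to controlling $\|\widehat{\theta}_h-\bar{\theta}_h\|_{\Sigma_h}$.

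The main obstacle is the self-normalized concentration of $\sum_k \eta_k\nabla_\theta f(\widehat{\theta}_h,\phi_{h,k})$ in $\Sigma_h^{-1}$-norm: unlike the linear MDP case where $\nabla_\theta f\equiv\phi$ and $\Sigma_h$ is $\theta$-free, here both quantities depend on the data-dependent $\widehat{\theta}_h$ and on the downstream estimate $\widehat{V}_{h+1}$. I would handle this with a double covering argument -- an $\epsilon$-net over $\Theta$ (whose $\log$-cardinality is $d\log(1/\epsilon)$ by the $\kappa_2$-Lipschitzness of $\nabla_\theta f$ together with $C_\Theta$-compactness) combined with a covering of the realizable $\widehat{V}_{h+1}$ class -- and then invoke Abbasi-Yadkori's self-normalized concentration uniformly on the net, paying the $\log$ factor absorbed into $\iota$. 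The $\kappa_3$-control on third derivatives is what keeps the net-discretization error of order $\widetilde O(1/K)$ rather than leading-order. Combining, $\|\widehat{\theta}_h-\bar{\theta}_h\|_{\Sigma_h}\leq \widetilde O(H\sqrt{d})$, which delivers $\beta=8dH\iota$, while the Taylor remainder and regularization bias $\lambda\bar{\theta}_h$ contribute only the $\widetilde O(C_{\mathrm{hot}}/K)$ higher-order term.

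Finally, for the stronger $\theta^\star_h$-version of the bound I would upgrade using Assumption~\ref{assum:cover}. Condition $(\star)$ together with the empirical concentration of $\frac{1}{K}\Sigma_h$ around its population analogue (valid once $K\gtrsim \kappa_1^4\log(d/\delta)/\kappa^2$) yields $\|\widehat{\theta}_h-\theta^\star_h\|_2\leq\widetilde O(H\sqrt{d/(K\kappa)})$. Joint continuity plus the $\kappa_2$ bound then gives $\nabla_\theta f(\widehat{\theta}_h,\phi)=\nabla_\theta f(\theta^\star_h,\phi)+O(\kappa_2\|\widehat{\theta}_h-\theta^\star_h\|_2)$ and, by a perturbation argument, $\Sigma_h$ and $\Sigma^\star_h$ are spectrally within a factor of $2$ once $K$ exceeds the second threshold in the theorem. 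Substituting in the uncertainty bound converts $\widehat{\theta}_h\mapsto\theta^\star_h$ at the price of an additional factor absorbed into $C'_{\mathrm{hot}}/K$, giving the final instance-dependent guarantee with $\Sigma^\star_h$ and $\theta^\star_h$.
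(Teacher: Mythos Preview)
Your high-level structure---pessimism reduction to a pointwise Bellman-error bound, Taylor expansion around $\widehat{\theta}_h$, and self-normalized concentration with a covering over $\Theta$ and over the $\widehat{V}_{h+1}$-class---matches the paper's. Two places, however, require ideas you have not supplied.

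\textbf{The a priori $\|\widehat{\theta}_h-\bar{\theta}_h\|_2$ bound.} Your displayed identity is correct, but to make the remainder term $\sum_k R_k \nabla_\theta f(\widehat{\theta}_h,\phi_{h,k})$ higher-order you need $\|\widehat{\theta}_h-\bar{\theta}_h\|_2=\widetilde{O}(1/\sqrt{K})$ \emph{before} you can solve for $\widehat{\theta}_h-\bar{\theta}_h$ from that identity. A bootstrap from the identity itself leads to a quadratic inequality $x\le A + cK^{-1/2}x^2$ whose large root cannot be excluded by the trivial bound $x\le 2C_\Theta\sqrt{\lambda_{\max}(\Sigma_h)}=O(\sqrt{K})$, so the argument is circular. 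The paper breaks this loop by a completely separate argument (Section~G): it bounds the excess regression risk $\mathbb{E}_\mu[\ell_h(\widehat{\theta}_h)]-\mathbb{E}_\mu[\ell_h(\theta_{\mathbb{T}\widehat{V}_{h+1}})]$ via a Bernstein inequality on $X_k=(f(\widehat{\theta}_h,\cdot)-r-\widehat{V}_{h+1})^2-(f(\bar{\theta}_h,\cdot)-r-\widehat{V}_{h+1})^2$, using the key identity $\mathrm{Var}[X]\le 4H^2\,\mathbb{E}_\mu[X]$ and then solving the resulting self-bounding quadratic; Assumption~\ref{assum:cover}$(\star)$ converts the excess risk to $\kappa\|\widehat{\theta}_h-\bar{\theta}_h\|_2^2$. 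This regression-analysis step (a reduction to the general function approximation machinery of \cite{chen2019information}) is the missing ingredient in your plan.

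\textbf{The $I_1/I_2$ split.} In your scheme $\eta_k=f(\bar{\theta}_h,\phi_{h,k})-r_{h,k}-\widehat{V}_{h+1}(s^k_{h+1})$ depends on $\widehat{V}_{h+1}$, so the union bound over the covering of the $\widehat{V}$-class (whose log-cardinality is $\widetilde{O}(d^2)$ because of the matrix parameter $A$ in $\Gamma_h$) enters the self-normalized bound, yielding $\widetilde{O}(H\sqrt{d\cdot d^2})=\widetilde{O}(Hd^{3/2})$, not $\beta=8dH$. The paper avoids this by decomposing the noise as $\eta_k=\eta_k^\star+(\eta_k-\eta_k^\star)$ where $\eta_k^\star=f(\theta^\star_h,\phi_{h,k})-r_{h,k}-V^\star_{h+1}(s^k_{h+1})$ is data-independent; the $I_1$-term involving $\eta_k^\star$ only needs a $\theta$-cover ($\log\mathcal{N}=\widetilde{O}(d)$), giving the $Hd$ scaling, while the $I_2$-term has $|\eta_k-\eta_k^\star|\le \widetilde{O}(\sqrt{d^2/(\kappa K)})$ by the provable-efficiency bound above, so the $d^2$ covering cost there is absorbed into the higher-order $\widetilde{O}(1/K)$ residual. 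Without this split your bound on $\beta$ would be off by $\sqrt{d}$.
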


\begin{corollary}[Offline Generalized Linear Models (GLM)]
	Consider the GLM function class defined in \ref{example:GLM}. Suppose $\beta,\lambda,K$ are defined the same as Theorem~\ref{thm:PFQL}. $\epsilon_{\mathcal{F}}=0$. Then with probability $1-\delta$, for all policy $\pi$ simultaneously, PFQL guarantees
	\[
		v^\pi-v^{\widehat{\pi}}\leq  \sum_{h=1}^H 8 dH\cdot \E_\pi\left[\sqrt{f'(\langle\widehat{\theta}_h,\phi(s_h,a_h)\rangle)^2\cdot\phi^\top(s_h,a_h)\Sigma^{-1}_h\phi(s_h,a_h) }\right]\cdot\iota+\widetilde{O}(\frac{C_{\mathrm{hot}}}{K}).
	\]
\end{corollary}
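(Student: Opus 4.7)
The plan is to observe that this corollary is a direct specialization of Theorem~\ref{thm:PFQL} to the GLM subclass $\mathcal{F}_{\mathrm{GLM}}$, so the work reduces to (i) checking that the GLM class satisfies the hypotheses of Theorem~\ref{thm:PFQL} and (ii) explicitly computing the gradient term $\nabla_\theta f(\theta,\phi(s,a))$ and substituting it into the generic bound.

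First, I would verify that $\mathcal{F}_{\mathrm{GLM}}$ falls into Definition~\ref{def:function_class}: the link $f:[-1,1]\to[-1,1]$ is assumed to have bounded derivatives $|f'|\le K$ and $|f''|\le M$, so $(s,a)\mapsto f(\langle \phi(s,a),\theta\rangle)$ is smooth in $\theta$; together with the compactness of $\Theta$ and of $\phi(\mathcal{S}\times\mathcal{A})\subset \mathcal{B}_d$ this yields the required $C_\Theta,B_{\mathcal{F}},\kappa_1,\kappa_2,\kappa_3$ in Remark~\ref{remark:fc}. Realizability and Bellman Completeness with $\epsilon_{\mathcal{F}}=0$ are directly imposed as hypotheses inherited from Theorem~\ref{thm:PFQL}. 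The uniform coverage Assumption~\ref{assum:cover} is the content of Example~\ref{example:GLM}: the condition $\Sigma^{\mathrm{feature}}_h\succ \kappa I$ together with $|f'|\ge \kappa$ gives both $(\star)$ and $(\star\star)$ (this is the appendix reference the authors already cite), so no new argument is needed here.

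Next, I would apply Theorem~\ref{thm:PFQL} (the ``for all policy $\pi$ simultaneously'' form involving $\widehat{\theta}_h$ and $\Sigma_h$), which yields
\[
v^\pi - v^{\widehat{\pi}} \le \sum_{h=1}^H 8dH\cdot \E_\pi\!\left[\sqrt{\nabla^\top_\theta f(\widehat{\theta}_h,\phi(s_h,a_h))\,\Sigma_h^{-1}\,\nabla_\theta f(\widehat{\theta}_h,\phi(s_h,a_h))}\right]\cdot \iota + \widetilde{O}\!\left(\tfrac{C_{\mathrm{hot}}}{K}\right).
\]
The only GLM-specific step is the chain rule: since $f(\theta,\phi(s,a)) = f(\langle \phi(s,a),\theta\rangle)$,
\[
\nabla_\theta f(\theta,\phi(s,a)) \;=\; f'\!\left(\langle \phi(s,a),\theta\rangle\right)\cdot \phi(s,a).
\]
Substituting this identity with $\theta=\widehat{\theta}_h$ into the quadratic form gives
\[
\nabla_\theta^\top f(\widehat{\theta}_h,\phi)\,\Sigma_h^{-1}\,\nabla_\theta f(\widehat{\theta}_h,\phi) \;=\; f'\!\left(\langle \widehat{\theta}_h,\phi(s,a)\rangle\right)^{2}\cdot \phi^\top(s,a)\,\Sigma_h^{-1}\,\phi(s,a),
\]
and plugging this back into the Theorem~\ref{thm:PFQL} bound yields exactly the claimed expression. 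Note that under this substitution $\Sigma_h = \sum_k f'(\langle \widehat{\theta}_h,\phi_{h,k}\rangle)^2 \phi_{h,k}\phi_{h,k}^\top + \lambda I_d$, which is the natural Fisher information for GLM regression and matches existing GLM-bandit analyses.

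The ``hard parts'' are all already done upstream: the main technical obstacles (handling the nonlinear optimization error of $\widehat{\theta}_h$, building the gradient-based uncertainty quantifier, and establishing validity of $\Gamma_h$ despite its dependence on $\widehat{\theta}_h$) are bundled inside Theorem~\ref{thm:PFQL}. At the corollary level, the only mildly delicate point is confirming that $\kappa_1,\kappa_2,\kappa_3,\kappa$ can be chosen to depend only on $K,M$ and the feature bound so that the sample-size requirement $K \ge \max\{\dots\}$ in Theorem~\ref{thm:PFQL} instantiates to the same $K$ in the corollary; this is immediate from Example~\ref{example:GLM}. Thus the entire corollary follows by direct specialization and one chain-rule computation.
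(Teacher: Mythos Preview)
Your proposal is correct and matches the paper's treatment: the corollary is stated without a separate proof because it is an immediate specialization of Theorem~\ref{thm:PFQL} to $\mathcal{F}_{\mathrm{GLM}}$, and your argument—verify the assumptions via Example~\ref{example:GLM} and then substitute the chain-rule identity $\nabla_\theta f(\theta,\phi)=f'(\langle\theta,\phi\rangle)\phi$ into the generic bound—is exactly that specialization.
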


\textbf{PFQL is provably efficient.} Theorem~\ref{thm:PFQL} verifies PFQL is statistically efficient. In particular, by Lemma~\ref{lem:sqrt_K_reduction} we have $\norm{\nabla_\theta f({\theta}^\star_h,\phi)}_{\Sigma_h^{-1}}\lesssim \frac{2\kappa_1}{\sqrt{\kappa K}}$, resulting the main term to be bounded by $\frac{32dH^2\kappa_1}{\sqrt{\kappa K}}$ that recovers the standard statistical learning convergence rate $\frac{1}{\sqrt{K}}$.

\textbf{Comparing to \cite{jin2021pessimism}.} Theorem~\ref{thm:PFQL} strictly subsumes the linear MDP learning bound in \cite{jin2021pessimism}. In fact, in the linear case $\nabla_\theta f(\theta,\phi)=\nabla_\theta\langle \theta,\phi\rangle=\phi$ and \ref{thm:PFQL} reduces to $O(dH\sum_{h=1}^H \E_{\pi^\star}[\sqrt{\phi(s_h,a_h)^\top(\Sigma^{\mathrm{linear}}_h)^{-1}\phi(s_h,a_h)}])$.

\textbf{Instance-dependent learning.} Previous studies for offline RL with general function approximation (GFA) \citep{chen2019information,xie2020q} are more of worst-case flavors as they usually rely on the \emph{concentrability} coefficient $C$. The resulting learning bounds are expressed in the form\footnote{Here $n$ is the number of samples used in the infinite horizon discounted setting and is similar to $K$ in the episodic setting.} $O(V_{\max}^2\sqrt{\frac{C}{n}})$ that is unable to depict the behavior of individual instances. In contrast, the guarantee with differentiable function approximation is more adaptive due to the instance-dependent structure $\sum_{h=1}^H \E_{\pi^\star}\left[\sqrt{\nabla^\top_\theta f({\theta}^\star_h,\phi)\Sigma^{\star-1}_h\nabla_\theta f({\theta}^\star_h,\phi)}\right]$. This Fisher-Information style quantity characterizes the learning hardness of separate problems explicitly as for different MDP instances $M_1$, $M_2$, the coupled $\theta^\star_{h,M_1},\theta^\star_{h,M_2}$ will generate different performances via the measure $\sum_{h=1}^H \E_{\pi^\star}\left[\sqrt{\nabla^\top_\theta f({\theta}^\star_{h,M_i},\phi)\Sigma^{\star-1}_h\nabla_\theta f({\theta}^\star_{h,M_i},\phi)}\right]$ ($i=1,2$). Standard worst-case bounds (\emph{e.g.} from GFA approximation) cannot explicitly differentiate between problem instances.

\textbf{Feature representation vs. Parameters.} One interesting observation from Theorem~\ref{thm:PFQL} is that the learning complexity does not depend on the feature representation dimension $m$ but only on parameter dimension $d$ as long as function class $\mathcal{F}$ satisfies differentiability definition~\ref{def:function_class} (not even in the higher order term). This seems to suggest, when changing the model $f$ with more complex representations, the learning hardness will not grow as long as the number of parameters need to be learned does not increase. Note in the linear MDP analysis this phenomenon is not captured since the two dimensions are coupled ($d=m$). Therefore, this heuristic might help people rethink about what is the more essential element (feature representation vs. parameter space) in the \emph{representation learning RL} regime (\emph{e.g.} low rank MDPs \citep{uehara2021representation}). We leave the concrete understanding the connection between features and parameters as the future work.

\textbf{Technical challenges with differentiable function approximation (DFA).} Informally, one key step for the analysis is to bound $|f(\widehat{\theta}_h,\phi)-f(\theta_h^\star,\phi)|$. This can be estimated by the first order approximation $\nabla f(\widehat{\theta}_h,\phi)^\top\cdot(\widehat{\theta}_h-\theta^\star_h)$. However, different from the least-square value iteration (LSVI) objective \citep{jin2020provably,jin2021pessimism}, the \emph{fitted Q-update} (Line 4, Algorithm~\ref{alg:PFQL}) no longer admits a closed-form solution for $\widehat{\theta}_h$. Instead, we can only leverage $\widehat{\theta}_h$ is a stationary point of $
Z_h(\theta):=\sum_{k=1}^K \left[f\left(\theta, \phi_{h,k}\right)-r_{h,k}-\widehat{V}_{h+1}\left(s^k_{h+1}\right)\right]\nabla f(\theta,\phi_{h,k})+\lambda\cdot \theta $ (since $Z_h(\widehat{\theta}_h)=0$). To measure the difference $\widehat{\theta}_h-\theta^\star_h$, for any $\theta\in\Theta$, we do the \emph{Vector Taylor expansion} $
Z_h(\theta)-Z_h(\widehat{\theta}_h)=\Sigma^s_h({\theta}-\widehat{\theta}_{h})+R_K(\theta)
$ (where $R_K(\theta)$ is the higher-order residuals) at the point $\widehat{\theta}_h$ with {\small
\begin{equation}\label{eqn:covariance_main}
\begin{aligned}
&\Sigma^s_h:=\left.\frac{\partial}{\partial \theta}Z_h(\theta)\right|_{\theta=\widehat{\theta}_h}= \frac{\partial}{\partial \theta}\left(\sum_{k=1}^K \left[f\left(\theta, \phi_{h,k}\right)-r_{h,k}-\widehat{V}_{h+1}(s_{h+1}^k)\right]\nabla f(\theta,\phi_{h,k})+\lambda\cdot \theta\right)_{\theta=\widehat{\theta}_h}\\
&=\underbrace{\sum_{k=1}^K\left( f(\widehat{\theta}_h,\phi_{h,k})-r_{h,k}-\widehat{V}_{h+1}(s_{h+1}^k)\right)\cdot \nabla^2_{\theta\theta} f(\widehat{\theta}_h,\phi_{h,k})}_{:=\Delta_{\Sigma^s_h}}
+\underbrace{\sum_{k=1}^K\nabla_{\theta} f(\widehat{\theta}_h,\phi_{h,k})\nabla_{\theta}^\top f(\widehat{\theta}_{h,k},\phi_{h,k})+\lambda I_d}_{:=\Sigma_h}.
\end{aligned}
\end{equation}}The perturbation term $\Delta_{\Sigma^s_h}$ encodes one key challenge for solving $\widehat{\theta}_h-\theta^\star_h$ since it breaks the positive definiteness of $\Sigma^s_h$, and, as a result, we cannot invert the $\Sigma^s_h$ in the Taylor expansion of $Z_h$. This is due to DFA (Definition~\ref{def:function_class}) is a rich class that incorporates \emph{nonlinear} curvatures. In the linear function approximation regime, this hurdle will not show up since $\nabla^2_{\theta\theta}f\equiv 0$ and $\Delta_{\Sigma^s_h}$ is always invertible as long as $\lambda>0$. Moreover, for the \emph{off-policy evaluation} (OPE) task, one can overcome this issue by expanding over the population counterpart of $Z_h$ at underlying true parameter of the given behavior target policy \citep{zhang2022off}.\footnote{\emph{i.e.} expanding over $Z^p_h(\theta):=\E_{s,a,s'}[(f\left(\theta, \phi(s,a)\right)-r-V^\pi_{h+1}(s'))\nabla f(\theta,\phi(s,a))]$, and the corresponding $\Delta_{\Sigma^s_h}$ in $\frac{\partial}{\partial \theta}Z_h(\theta)|_{\theta=\theta^\pi_h}$ is zero by Bellman equation.} However, for the policy learning task, we cannot use either population quantity or the true parameter $\theta^\star_h$ since we need a computable/data-based pessimism $\Gamma_h$ to make the algorithm practical. Check the following section for more discussions of the analysis.

\subsection{Sketch of the PFQL Analysis}

Due to the space constraint, here we only overview the key components of the analysis. To begin with, by following the result of general MDP in \cite{jin2021pessimism}, the suboptimality gap can be bounded by  (Appendix~\ref{sec:preparation}) $\sum_{h=1}^H 2 \E_\pi\left[\Gamma_h(s_h,a_h)\right]$ if $|(\mathcal{P}_h\widehat{V}_{h+1}-f(\widehat{\theta}_h,\phi))(s,a)|\leq \Gamma_h(s,a)$. To deal with $\mathcal{P}_h\widehat{V}_{h+1}$, by Assumption~\ref{assum:R+BC} we can leverage the \emph{parameter Bellman operator} $\mathbb{T}$ (Definition~\ref{def:pBo}) so that $\mathcal{P}_h\widehat{V}_{h+1}=f(\theta_{\mathbb{T}\widehat{V}_{h+1}},\phi)$. Next, we apply the second-order approximation to obtain $\mathcal{P}_h\widehat{V}_{h+1}-f(\widehat{\theta}_h,\phi)\approx \nabla f(\widehat{\theta}_h,\phi)^\top (\theta_{\mathbb{T}\widehat{V}_{h+1}}-\widehat{\theta}_h)+\frac{1}{2}(\theta_{\mathbb{T}\widehat{V}_{h+1}}-\widehat{\theta}_h)^\top\nabla^2_{\theta\theta} f(\widehat{\theta}_h,\phi)(\theta_{\mathbb{T}\widehat{V}_{h+1}}-\widehat{\theta}_h)$. Later, we use (\ref{eqn:covariance_main}) to represent 
\[
Z_h(\theta_{\mathbb{T}\widehat{V}_{h+1}})-Z_h(\widehat{\theta}_h)=\Sigma^s_h(\theta_{\mathbb{T}\widehat{V}_{h+1}}-\widehat{\theta}_{h})+R_K(\theta_{\mathbb{T}\widehat{V}_{h+1}})=\Sigma_h(\theta_{\mathbb{T}\widehat{V}_{h+1}}-\widehat{\theta}_{h})+\widetilde{R}_K(\theta_{\mathbb{T}\widehat{V}_{h+1}})
\]
by denoting $\widetilde{R}_K(\theta_{\mathbb{T}\widehat{V}_{h+1}})=\Delta_{\Sigma^s_h}(\widehat{\theta}_h-\theta_{\mathbb{T}\widehat{V}_{h+1}})+{R}_K(\theta_{\mathbb{T}\widehat{V}_{h+1}})$. Now that $\Sigma_h^{-1}$ is invertible thus provides the estimation (note $Z_h(\widehat{\theta}_h)=0$)
\begin{align*}
\theta_{\mathbb{T}\widehat{V}_{h+1}}-\widehat{\theta}_h=\Sigma^{-1}_h\cdot Z_h(\theta_{\mathbb{T}\widehat{V}_{h+1}})-\Sigma^{-1}_h\widetilde{R}_K(\theta_{\mathbb{T}\widehat{V}_{h+1}}).
\end{align*}
However, to handle the higher order terms, we need the explicit finite sample bound for {\small$\|\theta_{\mathbb{T}\widehat{V}_{h+1}}-\widehat{\theta}_h\|_2$} (or {\small$\|\theta_h^\star-\widehat{\theta}_h\|_2$}). In the OPE literature, \cite{zhang2022off} uses \emph{asymptotic theory} (Prohorov’s Theorem) to show the existence of $B(\delta)$ such that $\|\widehat{\theta}_h-\theta^\star_h\|\leq \frac{B(\delta)}{\sqrt{K}}$. However, this is insufficient for \emph{finite sample/non-asymptotic} guarantees since the abstraction of $B(\delta)$ might prevent the result from being sample efficient. For example, if $B(\delta)$ has the form $e^H\log(\frac{1}{\delta})$, then $\frac{e^H\log(\frac{1}{\delta})}{\sqrt{K}}$ is an inefficient bound since $K$ needs to be $e^H/\epsilon^2$ large to guarantee $\epsilon$ accuracy.

To address this technicality, we use a novel reduction to \emph{general function approximation} (GFA) learning proposed in \cite{chen2019information}. Concretely, we first bound the loss objective $\E_\mu[\ell_h(\widehat{\theta}_{h})]-\E_\mu[\ell_h(\theta_{\mathbb{T}\widehat{V}_{h+1}})]$ via a ``orthogonal'' decomposition and by solving a quadratic equation. The resulting bound can be directly used to further bound {\small$\|\theta_{\mathbb{T}\widehat{V}_{h+1}}-\widehat{\theta}_h\|_2$} for obtaining efficient guarantee $\widetilde{O}(\frac{dH}{\sqrt{\kappa K}})$. During the course, the covering technique is applied to extend the finite function hypothesis in \cite{chen2019information} to all the differentiable functions in Definition~\ref{def:function_class}. See Appendix~\ref{sec:reduction_GFA} for the complete proofs. The full proof can be found in Appendix~\ref{sec:preparation},\ref{sec:analze_PFQL},\ref{sec:pf_PFQL}.

\section{Improved Learning via Variance Awareness}

In addition to knowing the provable efficiency for differentiable function approximation (DFA), it is of great interest to understand what is the statistical limit with DFA, or equivalently, what is the ``optimal'' sample/statistical complexity can be achieved in DFA (measured by minimaxity criteria)? Towards this goal, we further incorporate \emph{variance awareness} to improve our learning guarantee. Variance awareness is first designed for linear Mixture MDPs \citep{talebi2018variance,zhou2021nearly} to achieve the near-minimax sample complexity and it uses estimated conditional variances $\Var_{P(\cdot|s,a)} (V^\star_{h+1})$ to reweight each training sample in the LSVI objective.\footnote{We mention \cite{zhang2021improved} uses variance-aware confidence sets in a slightly different way.} Later, such a technique is leveraged by \cite{min2021variance,yin2022linear} to obtained the instance-dependent results. Intuitively, conditional variances $\sigma^2(s,a):=\Var_{P(\cdot|s,a)} (V^\star_{h+1})$ serves as the uncertainty measure of the sample $(s,a,r,s')$ that comes from the distribution $P(\cdot|s,a)$. If $\sigma^2(s,a)$ is large, then the distribution $P(\cdot|s,a)$ has high variance and we should put less weights in a single sample $(s,a,r,s')$ rather than weighting all the samples equally. In the differentiable function approximation regime, the update is modified to 

{\small
\[
\widehat{\theta}_h\leftarrow \argmin_{\theta\in\Theta}\bigg\{\sum_{k=1}^{K}\frac{\left[f\left(\theta, \phi_{h,k}\right)-r_{h,k}-\widehat{V}_{h+1}(s_{h+1}^k)\right]^{2}}{{\sigma}^2_h(s_h^k,a_h^k)}+\lambda \cdot\norm{\theta}_2^2\bigg\}
\]}with ${\sigma}^2_h(\cdot,\cdot)$ estimated by the offline data. Notably, empirical algorithms have also shown uncertainty reweighting can improve the performances for both online RL \citep{mai2022sample} and offline RL \citep{wu2021uncertainty}. These motivates our \emph{variance-aware fitted Q-learning} (VAFQL) algorithm~\ref{alg:VAFQL}. 

\begin{theorem}\label{thm:VAFQL}
	Suppose Assumption~\ref{assum:R+BC},\ref{assum:cover} with $\epsilon_{\mathcal{F}}=0$. Let $\beta=8d\iota$ and choose $0<\lambda\leq 1/2C_\Theta^2$ in Algorithm~\ref{alg:VAFQL}. Then if $K\geq K_0$ and $\sqrt{d}\geq \widetilde{O}(\zeta)$, with probability $1-\delta$, for all policy $\pi$ simultaneously, the output of VAFQL guarantees 
	\[
	v^\pi-v^{\widehat{\pi}}\leq  \sum_{h=1}^H 8 d\cdot \E_\pi\left[\sqrt{\nabla^\top_\theta f(\widehat{\theta}_h,\phi(s_h,a_h)){\Lambda}^{-1}_h\nabla_\theta f(\widehat{\theta}_h,\phi(s_h,a_h))}\right]\cdot\iota+\widetilde{O}(\frac{\bar{C}_{\mathrm{hot}}}{K}),
	\]
	where $\iota$ is a Polylog term and the expectation of $\pi$ is taken over $s_h,a_h$. In particular, we have 
	\[
	0\leq v^{\pi^\star}-v^{\widehat{\pi}}\leq   16 d\cdot\sum_{h=1}^H \E_{\pi^\star}\left[\sqrt{\nabla^\top_\theta f({\theta}^\star_h,\phi(s_h,a_h))\Lambda^{\star-1}_h\nabla_\theta f({\theta}^\star_h,\phi(s_h,a_h))}\right]\cdot\iota+\widetilde{O}(\frac{\bar{C}'_{\mathrm{hot}}}{K}).
	\]
	Here $\Lambda^{\star}_h=\sum_{k=1}^K \nabla_\theta f({\theta}^\star_h,\phi_{h,k})\nabla^\top_\theta f({\theta}^\star_h,\phi_{h,k})/\sigma^\star_h(s^k_h,a^k_h)^2+\lambda I_d$ and the $\sigma^\star_h(\cdot,\cdot)^2:=\max\{1,\Var_{P_h}V^\star_{h+1}(\cdot,\cdot)\}$. The definition of $K_0,\bar{C}_{\mathrm{hot}}$, $\bar{C}'_{\mathrm{hot}},\zeta$ can be found in List~\ref{sec:notation}.

\end{theorem}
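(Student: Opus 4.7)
The plan is to mirror the PFQL analysis from Theorem~\ref{thm:PFQL} while substituting variance-reweighted counterparts throughout, and then to exploit a Bernstein-type (rather than Hoeffding-type) self-normalized concentration to shave the extra $\sqrt{H}$ factor. First, I would invoke the general pessimism decomposition (Lemma used in Appendix~\ref{sec:preparation}, following \cite{jin2021pessimism}) to reduce $v^\pi - v^{\widehat{\pi}}$ to $\sum_{h=1}^H 2\mathbb{E}_\pi[\bar{\Gamma}_h(s_h,a_h)]$, provided that the event $\mathcal{E} = \{|(\mathcal{P}_h \widehat{V}_{h+1} - f(\widehat{\theta}_h,\phi))(s,a)| \leq \bar{\Gamma}_h(s,a)\ \forall s,a,h\}$ holds. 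Here $\bar{\Gamma}_h$ is the new pessimism bonus built from $\Lambda_h$. Then using Bellman completeness (Assumption~\ref{assum:R+BC}) and the parameter Bellman operator, $\mathcal{P}_h \widehat{V}_{h+1} = f(\theta_{\mathbb{T}\widehat{V}_{h+1}},\phi)$, so we need to control $f(\widehat{\theta}_h,\phi) - f(\theta_{\mathbb{T}\widehat{V}_{h+1}},\phi)$.

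Next I would perform the same second-order Taylor expansion at $\widehat{\theta}_h$ and use that $\widehat{\theta}_h$ is a stationary point of the variance-weighted loss, i.e.,
\[
\sum_{k=1}^K \frac{1}{\widehat{\sigma}_h^2(s^k_h,a^k_h)}\bigl(f(\widehat{\theta}_h,\phi_{h,k})-r_{h,k}-\widehat{V}_{h+1}(s^k_{h+1})\bigr)\nabla_\theta f(\widehat{\theta}_h,\phi_{h,k})+\lambda\widehat{\theta}_h = 0.
\]
Carrying out the vector Taylor expansion of the reweighted stationary map $Z^w_h(\theta)$ around $\widehat{\theta}_h$ exactly as in \eqref{eqn:covariance_main}, but with each summand scaled by $1/\widehat{\sigma}_h^2$, produces the Gram matrix $\Lambda_h$. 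Inverting $\Lambda_h$ (up to the perturbation $\Delta_{\Sigma^s_h}$, which goes into the higher-order term via the reduction to GFA of Appendix~\ref{sec:reduction_GFA}) yields a leading term of the form $\nabla_\theta f(\widehat{\theta}_h,\phi)^\top \Lambda_h^{-1} \sum_k \nabla_\theta f(\widehat{\theta}_h,\phi_{h,k})\epsilon_{h,k}/\widehat{\sigma}_h^2$, where $\epsilon_{h,k} = r_{h,k}+\widehat{V}_{h+1}(s^k_{h+1}) - \mathcal{P}_h \widehat{V}_{h+1}(s^k_h,a^k_h)$ is a martingale difference with conditional variance $\sigma^{\star 2}_h$ (on the optimal value function side).

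The key statistical step, and the main obstacle, is to show that this self-normalized sum is $O(\sqrt{d\iota})$ rather than $O(\sqrt{dH^2\iota})$. I would deploy a Bernstein-type self-normalized concentration (as used in \cite{zhou2021nearly,min2021variance,yin2022linear}) applied through a uniform covering argument over $\widehat{V}_{h+1}$ in the function class $\mathcal{G}$ (to handle the fact that $\widehat{V}_{h+1}$ is data-dependent), combined with a covering over the parameter space $\Theta$ to convert pointwise $\widehat{\theta}_h$ bounds into uniform ones. The variance-reweighting cancels the per-sample variance, giving a $d$ (not $dH$) dependence. A subsidiary obstacle is that $\widehat{\sigma}_h^2$ is estimated, not oracle: one must show $|\widehat{\sigma}_h^2 - \sigma^{\star 2}_h| = \widetilde{O}(K^{-1/4})$ or similar (using a fitted-Q subroutine on $(V^\star_{h+1})^2$ and $V^\star_{h+1}$, exactly as in Algorithm~\ref{alg:VAFQL}), so that replacing $\widehat{\sigma}_h^2$ by $\sigma^{\star 2}_h$ in the Gram matrix only affects the higher-order term $\bar{C}_{\mathrm{hot}}/K$. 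The condition $\sqrt{d} \geq \widetilde{O}(\zeta)$ and $K \geq K_0$ in the theorem statement precisely quantifies this requirement.

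Finally, I would transfer the leading bound from $\widehat{\theta}_h,\Lambda_h$ to $\theta^\star_h,\Lambda^\star_h$ for the $\pi^\star$ version. This uses the finite-sample bound $\|\widehat{\theta}_h - \theta^\star_h\|_2 = \widetilde{O}(d/\sqrt{\kappa K})$ coming from the reduction-to-GFA argument of Appendix~\ref{sec:reduction_GFA} applied to the reweighted loss (the weights $1/\widehat{\sigma}_h^2 \in [1/H^2, 1]$ are bounded, so the same quadratic-equation decomposition works mutatis mutandis). By Lipschitzness of $\nabla_\theta f$ (via $\kappa_2$) and matrix perturbation for $\Lambda^\star_h$ versus $\Lambda_h$, the swap costs at most a factor of $2$ in the leading constant (yielding $16d$ instead of $8d$) and is otherwise absorbed into $\bar{C}'_{\mathrm{hot}}/K$. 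The nonnegativity $v^{\pi^\star}-v^{\widehat{\pi}}\geq 0$ is automatic from the definition of $v^{\pi^\star}$. Combining these steps and taking a union bound over $h\in[H]$, over the covering nets, and over the variance-estimation event yields the stated guarantee.
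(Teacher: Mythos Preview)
Your proposal is correct and follows essentially the same route as the paper: the same pessimism decomposition, the same Taylor expansion of the variance-weighted stationary map $Z_h(\theta|V,\sigma^2)$ around $\widehat{\theta}_h$ yielding $\Lambda_h$, the same split into a dominant term plus $I_2,I_3,\mathrm{Hot}_2$ (handled via the GFA reduction of Appendix~\ref{sec:reduction_GFA} adapted to weighted losses), the Bernstein self-normalized bound (Lemma~\ref{lem:Bern_mart}) with covering over $(\theta,u,v)$ for the main term, a separate fitted-regression argument for $|\widehat{\sigma}_h^2-\sigma_h^{\star 2}|$ via the operator $\mathbb{J}$, and the final $\widehat{\theta}_h\to\theta^\star_h$, $\Lambda_h\to\Lambda_h^\star$ transfer via matrix perturbation. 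One small correction: the condition $\sqrt{d}\geq\widetilde{O}(\zeta)$ is not there to control the variance-estimation error but to ensure the additive $4\zeta\log(\cdot)$ term in the Bernstein bound is dominated by the $\sqrt{d}$ term; and the paper in fact obtains $|\widehat{\sigma}_h^2-\sigma_h^{\star 2}|=\widetilde{O}(K^{-1/2})$ (Lemma~\ref{lem:bound_sigma}), which is what you need to guarantee $\widehat{\sigma}_h^2\geq\tfrac{1}{2}\sigma_h^{\star 2}$ so that the normalized noise has conditional variance $\leq 2$.
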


In particular, to bound the error for $\u_h,\v_h$ and $\widehat{\sigma}^2_h$, we need to define an operator $\mathbb{J}$ that is similar to the \emph{parameter Bellman operator} \ref{def:pBo}. The Full proof of Theorem~\ref{thm:VAFQL} can be found in Appendix~\ref{sec:VAFQL}. Comparing to Theorem~\ref{thm:PFQL}, VAFQL enjoys a net improvement of the horizon dependence since $\Var_{P}(V^\star_h)\leq H^2$. Moreover, VAFQL provides better instance-dependent characterizations as the main term is fully depicted by the system quantities except the feature dimension $d$. For instance, when the system is fully deterministic (transition $P_h$'s are deterministic), $\sigma^\star_h\approx \Var_{P_h}V^\star_{h+1}(\cdot,\cdot)\equiv 0$ (if ignore the truncation) and $\Lambda^{\star -1}\rightarrow 0$. This yields a faster convergence with rate $O(\frac{1}{K})$. Lastly, when reduced to linear MDPs, \ref{thm:VAFQL} recovers the results of \cite{yin2022linear} except an extra factor of $\sqrt{d}$.

\textbf{On the statistical limits.} To complement the study, we incorporate a minimax lower bound via a reduction to \cite{zanette2021provable,yin2022linear}. The following theorem reveals we cannot improve over Theorem~\ref{thm:VAFQL} by more than a factor of $\sqrt{d}$ in the most general cases. The full discussion is deterred to Appendix~\ref{sec:lower}.

\begin{theorem}[Minimax lower bound]\label{thm:lower}
	Specifying the model to have linear representation $f=\langle \theta,\phi\rangle$. There exist a pair of universal constants $c, c' > 0$ such that given dimension $d$, horizon $H$ and sample size $K > c' d^3$, one can always find a family of MDP instances such that for any algorithm $\widehat{\pi}$
	{\small
	\begin{align} 
	\inf_{\widehat{\pi}} \sup_{M \in \mathcal{M}} \E_M \big[v^{\star} - v^{\widehat{\pi}} \big] \geq c \, \sqrt{d} \cdot \sum_{h=1}^H \E_{\pi^{\star}} \bigg[ \sqrt{\nabla^\top_\theta f({\theta}^\star_h,\phi(\cdot,\cdot))(\Lambda_h^{\star, p})^{-1} \nabla_\theta f({\theta}^\star_h,\phi(\cdot,\cdot))} \bigg],
	\end{align}
}where {\small$\Lambda_h^{\star,p} = \E\Big[ \sum_{k=1}^K \frac{\nabla_\theta f({\theta}^\star_h,\phi(s_h^k,a_h^k)) \cdot \nabla_\theta f({\theta}^\star_h,\phi(s_h^k,a_h^k))^\top}{\Var_h(V_{h+1}^{\star})(s_h^k, a_h^k)} \Big]$}.
\end{theorem}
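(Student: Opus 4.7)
The plan is to carry out a reduction to the linear MDP minimax lower bound that already exists in \cite{zanette2021provable,yin2022linear}. The crucial observation is that once we restrict to the linear representation $f(\theta,\phi)=\langle\theta,\phi\rangle$, we have $\nabla_\theta f(\theta^\star_h,\phi(s,a)) = \phi(s,a)$ identically, so the right-hand side of the claimed bound collapses to
\[
c\,\sqrt{d}\sum_{h=1}^H \E_{\pi^\star}\Bigl[\sqrt{\phi(s_h,a_h)^\top (\Lambda^{\star,p}_h)^{-1}\phi(s_h,a_h)}\Bigr],\qquad \Lambda^{\star,p}_h=\E\Bigl[\sum_{k=1}^K \tfrac{\phi(s_h^k,a_h^k)\phi(s_h^k,a_h^k)^\top}{\Var_h(V^\star_{h+1})(s_h^k,a_h^k)}\Bigr],
\]
which is exactly the instance-dependent, variance-weighted lower bound proved for linear MDPs in \cite{yin2022linear} (building on the construction of \cite{zanette2021provable}). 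So the task reduces to importing that lower bound and verifying that its hard instance class is a legitimate member of our DFA framework.

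First, I would recall the hard-instance family from \cite{zanette2021provable,yin2022linear}: a family of $H$-horizon linear MDPs over a state-action space with $d$-dimensional features, parametrized by a sign vector / perturbation that switches which action is optimal at a ``hard'' state. Each member $M\in\mathcal{M}$ has transition kernel linear in $\phi$ with parameter in a bounded set, reward bounded in $[0,1]$, and the associated behavior policy $\mu$ induces a data distribution whose feature second moment satisfies a minimum eigenvalue condition (so that Assumption~\ref{assum:cover} holds with $f(\theta,\phi)=\langle\theta,\phi\rangle$, since in the linear case $(\star)$ and $(\star\star)$ both reduce to $\Sigma^{\mathrm{feature}}_h\succ \kappa I$). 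Realizability and Bellman completeness of Assumption~\ref{assum:R+BC} hold automatically with $\epsilon_{\mathcal{F}}=0$ because linear MDPs are Bellman-closed under linear $Q$-functions.

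Second, I would instantiate the reduction cleanly: since $\nabla_\theta f \equiv \phi$ is parameter-free, the matrix $\Lambda^{\star,p}_h$ in our theorem coincides exactly with the variance-weighted population covariance used in the linear MDP lower bound. The algorithm class on the left-hand side is unrestricted $\widehat{\pi}$, which is exactly the setting of the existing lower bound; the supremum is over a family of MDPs which the authors of \cite{yin2022linear} construct with the required condition $K > c' d^3$ (this large-sample regime is needed so that the $\sqrt{d}$-factor in Assouad-style / Fano-style arguments does not get swallowed by lower-order terms). Plugging in their result then gives the desired inequality with constant $c$, verbatim.

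The only real technical point, rather than a genuine obstacle, is bookkeeping: the existing linear-MDP lower bound is typically stated with a specific bounded reward, specific $B_{\mathcal{F}}$, and a specific coverage constant, and one must check that rescaling these to match our Definition~\ref{def:function_class} (with the constants $C_\Theta,B_{\mathcal{F}},\kappa_i$) absorbs only into the universal constant $c$. I would record this as a short lemma-level verification: the construction of \cite{zanette2021provable,yin2022linear} already satisfies $\|\theta\|_2\leq C_\Theta$ and $|\langle\theta,\phi\rangle|\leq B_{\mathcal{F}}=H$, and its higher-derivative bounds $\kappa_2=\kappa_3=0$ are trivially met because the model is linear. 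With these checks in place, invoking the cited minimax bound completes the proof. No new probabilistic machinery is needed beyond what is in \cite{zanette2021provable,yin2022linear}; the contribution here is purely conceptual, namely observing that our DFA-style upper bound in Theorem~\ref{thm:VAFQL} matches this linear-case lower bound up to the stated $\sqrt{d}$ gap.
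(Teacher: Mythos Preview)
Your proposal is correct and matches the paper's approach almost exactly: the paper also reduces to the linear MDP lower bound of \cite{yin2022linear} (building on \cite{zanette2021provable}), using the observation that for $f=\langle\theta,\phi\rangle$ one has $\nabla_\theta f\equiv\phi$. The only detail the paper adds beyond what you wrote is a one-line adjustment in the cited proof---replacing the Jensen-type term $\sqrt{\E_{\pi^\star}[\phi]^\top(\Lambda_h^{\star,p})^{-1}\E_{\pi^\star}[\phi]}$ by $\E_{\pi^\star}[\sqrt{\phi^\top(\Lambda_h^{\star,p})^{-1}\phi}]$, which is immediate on the hard instance since $\pi^\star$ places mass $\tfrac{1}{2}$ on each of two state-action pairs---so your plan is complete as stated.
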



\section{Conclusion, Limitation and Future Directions}\label{sec:conclusion}

In this work, we study offline reinforcement learning with differentiable function approximation and show the sample efficiency for differentiable function learning. We further improve the sample complexity with respect to the horizon dependence via a variance aware variant. However, the dependence of the parameter space still scales with $d$ (whereas for linear function approximation this dependence is $\sqrt{d}$), and this is due to applying covering argument for the rich class of differentiable functions. For large deep models, the dimension of the parameter can be huge, therefore it would be interesting to know if certain algorithms can further improve the parameter dependence, or whether this $d$ is essential. 

Also, how to relax uniform coverage assumption~\ref{assum:cover} is unknown under the current analysis. In addition, due to the technical reason, we require the third-order smoothness in Definition~\ref{def:function_class}. If only the second-order or the first-order derivative information is provided, whether learning efficiency can be achieved remains an interesting question. In addition, understanding the connections between the differentiable function approximation and overparameterized neural networks approximation \cite{thanh2022,xu2022provably} is important. We leave these open problems as future work.

Lastly, the differentiable function approximation setting provides a general framework that is not confined to offline RL. Understanding the sample complexity behaviors of online reinforcement learning \citep{jin2020provably,wang2019optimism}, reward-free learning \citep{jin2020reward,wang2020reward} and representation learning \citep{uehara2021representation} might provide new and unified views over the existing studies.

\subsubsection*{Acknowledgments}

Ming Yin would like to thank Chi Jin for the helpful suggestions regarding the assumption for differentiable function class and Andrea Zanette, Xuezhou Zhang for the friendly discussion. Mengdi Wang gratefully acknowledges funding from Office of Naval Research (ONR) N00014-21-1-2288, Air Force Office of Scientific Research (AFOSR) FA9550-19-1-0203, and NSF 19-589, CMMI-1653435. Ming Yin and Yu-Xiang Wang are gratefully supported by National Science Foundation (NSF) Awards \#2007117 and \#2003257.

\bibliographystyle{plainnat}
\bibliography{sections/stat_rl}

\appendix
\newpage
\begin{center}
	 {\LARGE \textbf{Appendix}}
\end{center}


\section{Notation List}\label{sec:notation}

\bgroup
\def\arraystretch{1.5}
\begin{tabular}{p{1.25in}p{3.25in}}
	$\Sigma^{p}_h(\theta)$&$\E_{\mu,h}\left[\nabla f(\theta,\phi(s,a))\cdot\nabla f(\theta,\phi(s,a))^\top\right]$\\
	$\kappa$ & $\min_{h,\theta} \lambda_{\mathrm{min}}(\Sigma^p_h(\theta))$\\
	$\sigma_{V}^2(s,a)$ & $\max\{1,\Var_{P_h}(V)(s,a)\}$ for any $V$\\
	$\delta$ & Failure probability\\
	$K_0$ & $\max\left\{512\frac{\kappa_1^4}{\kappa^2}\left(\log(\frac{2Hd}{\delta})+d\log(1+\frac{4\kappa_1^3\kappa_2C_\Theta K^3}{\lambda^2})\right),\frac{4\lambda}{\kappa}\right\}$\\
	$\zeta$ & $2\max_{s'\sim P(\cdot|s,a),h\in[H]}\frac{(\mathcal{P}_h{V}^\star_{h+1})(s,a)-r-{V}^\star_{h+1}(s')}{\sigma^{\star}_h(s,a)}$ \\
	$C_{\mathrm{hot}}=\bar{C}_{\mathrm{hot}}$ &$\frac{\kappa_1H}{\sqrt{\kappa}}+\frac{\kappa_1^2H^3d^2}{\kappa }+\sqrt{\frac{d^3H^4\kappa_2^2\kappa_1^2}{\kappa^3}}+\kappa_2\max(\frac{\kappa_1}{\kappa},\frac{1}{\sqrt{\kappa}})d^2H^3+\frac{d^2H^4\kappa_3+\lambda\kappa_1C_\Theta}{\kappa}+\frac{H^3\kappa_2d^2}{\kappa}$\\
	$C'_{\mathrm{hot}}=\bar{C}'_{\mathrm{hot}}$ & $C_{\mathrm{hot}}+\frac{\kappa_1\kappa_2 H^4d^2}{\kappa^{3/2}}$
\end{tabular}
\egroup
\vspace{0.25cm}


\section{Further Illustration that Generalized Linear Model Example satisfies \ref{assum:cover}}\label{sec:GLM}
Recall the definition in \ref{example:GLM}, then:

For $(\star\star)$, 
\begin{align*}
&\E_{\mu,h}\left[\nabla f(\theta,\phi(s,a))\cdot\nabla f(\theta,\phi(s,a))^\top\right]=\E_{\mu,h}\left[f'(\langle \theta,\phi(s,a) \rangle)^2\phi(\cdot,\cdot)\cdot\phi(\cdot,\cdot)^\top\right]\\
&\succ \kappa^2\E_{\mu,h}\left[\phi(\cdot,\cdot)\cdot\phi(\cdot,\cdot)^\top\right] \succ\kappa^3 I, \quad\forall \theta\in\Theta
\end{align*} 
For $(\star)$, by Taylor's Theorem, 
\begin{align*}
&\E_{\mu,h}\left[\left(f(\theta_1,\phi(\cdot,\cdot))-f(\theta_2,\phi(\cdot,\cdot))\right)^2\right]=\E_{\mu,h}[f'(\theta_{s,a},\phi(\cdot,\cdot))^2(\theta_1-\theta_2)^\top\phi(\cdot,\cdot)\phi(\cdot,\cdot)^\top(\theta_1-\theta_2)]\\
&\geq \kappa^2 \E_{\mu,h}[(\theta_1-\theta_2)^\top\phi(\cdot,\cdot)\phi(\cdot,\cdot)^\top(\theta_1-\theta_2)]=\kappa^2 (\theta_1-\theta_2)^\top\E_{\mu,h}[\phi(\cdot,\cdot)\phi(\cdot,\cdot)^\top](\theta_1-\theta_2)\geq \kappa^3\norm{\theta_1-\theta_2}^2_2
\end{align*}
and choose $\kappa^3$ as $\kappa$ in \ref{assum:cover}.




\section{On the computational complexity}

For storage of Pessimistic Fitted Q-learning, at each time step $h\in[H]$ in Algorithm~\ref{alg:PFQL}, we need to store $\widehat{\theta}_h$, $\Sigma_h$ and $\nabla f(\widehat{\theta}_h,\phi_{h,k})$. Therefore, the total space complexity is $O(dH+d^2H+dKH)$. For computation, assuming $\widehat{\theta}_h$ is solved via SGD and let $M$ denote the number of gradient steps, then the complexity is dominated by computing $\widehat{\theta}_h,\Sigma_h$ and $\Sigma^{-1}_h$, which results in $O(MH+KdH+d^3H)$ complexity (where $H$ comes from $h=H,\ldots,1$). 

The space complexity and computational complexity for VAFQL has the same order as PFQL except that the constant factors are larger.

\section{Some basic constructions}\label{sec:preparation}

First of all, Recall in the first-order condition, we have 
\[
\left.\nabla_{\theta}\left\{ \sum_{k=1}^{K}\left[f\left(\theta, \phi_{h,k}\right)-r_{h,k}-\widehat{V}_{h+1}\left(s^k_{h+1}\right)\right]^{2}+\lambda \cdot\norm{\theta}_2^2\right\}\right|_{\theta=\widehat{\theta}_h}=0,\;\;\forall h\in[H].
\]
Therefore, if we define the quantity $Z_h(\cdot,\cdot)\in\R^d$ as
\[
Z_h(\theta|V)=\sum_{k=1}^K \left[f\left(\theta, \phi_{h,k}\right)-r_{h,k}-{V}\left(s^k_{h+1}\right)\right]\nabla f(\theta,\phi_{h,k})+\lambda\cdot \theta,\;\;\forall \theta\in\Theta, \norm{V}_2\leq H,
\] 
then we have (recall $\widehat{\theta}_h\in\mathrm{Int}(\Theta)$)
\[
Z_h(\widehat{\theta}_h|\widehat{V}_{h+1})=0.
\]

In addition, according to Bellman completeness Assumption~\ref{assum:R+BC}, for any bounded $V(\cdot)\in\R^\mathcal{S}$ with $\norm{V}_\infty \leq H$, $\inf_{f\in\mathcal{F}}\norm{f-\mathcal{P}_h(V)}_\infty\leq \epsilon_\mathcal{F}$, $\forall h$ (recall $\mathcal{P}_h(V)=r_h+\int_\mathcal{S}V dP_h$). Therefore, we can define the \emph{parameter Bellman operator} $\mathbb{T}$ as follows.

\begin{definition}[parameter Bellman operator]\label{def:pBo}
	By the Bellman completeness Assumption~\ref{assum:R+BC}, for any $\norm{V}_\infty\leq H$, we can define the \emph{parameter Bellman operator} $\mathbb{T}:V\rightarrow \theta_{\mathbb{T} V}\in\Theta$ such that
	\[
	\theta_{\mathbb{T} V}=\argmin_{\theta\in\Theta}\norm{f(\theta,\phi)-\mathcal{P}_h(V)}_\infty
	\]
	Denote $\delta_V:=f(\theta_{\mathbb{T} V},\phi)-\mathcal{P}_h(V)$, then we have $\norm{f(\theta_{\mathbb{T} V},\phi)-\mathcal{P}_h(V)}_\infty=\norm{\delta_V}_\infty\leq \epsilon_\mathcal{F}$. In particular, by realizability Assumption~\ref{assum:R+BC} it holds $\theta_{\mathbb{T} V^\star_{h+1}}=\theta^\star_h$ and this is due to $f(\theta_{\mathbb{T} V^\star_{h+1}},\phi)=\mathcal{P}_h(V^\star_{h+1})=V^\star_h=f(\theta^\star_h,\phi)$.\footnote{Here without loss of generality we assume $Q^\star_h$ can be uniquely identified, \emph{i.e.} there is a unique $\theta^\star$ such that $f(\theta^\star_h,\phi)=Q^\star_h$.} 
\end{definition}

\subsection{Suboptimality decomposition}

Denote $\iota_h(s,a) := \mathcal{P}_h \widehat{V}_{h+1}(s,a)-\widehat{Q}_h(s,a)$, by \cite{jin2021pessimism} we have the following decomposition.

\begin{lemma}[Lemma~3.1 of \cite{jin2021pessimism}]\label{lem:sub_decomp}
	Let $\widehat{\pi}=\{\widehat{\pi}_h\}_{h=1}^H$ a policy and $\widehat{Q}_h$ be any estimates with $\widehat{V}_h=\langle\widehat{Q}_{h}(s, \cdot), \widehat{\pi}_{h}(\cdot \mid s)\rangle_{\mathcal{A}}$. Then for any policy $\pi$, we have 
\begin{align*}
v^{\pi}-v^{\widehat{\pi}}=&-\sum_{h=1}^H E_{\widehat{\pi}}[\iota_h(s_h,a_h)]+\sum_{h=1}^H E_{\pi}[\iota_h(s_h,a_h)]
+\sum_{h=1}^H E_{\pi}[\langle\widehat{Q}_{h}\left(s_{h}, \cdot\right), \pi_{h}\left(\cdot \mid s_{h}\right)-\widehat{\pi}_{h}\left(\cdot \mid s_{h}\right)\rangle_{\mathcal{A}}].
\end{align*}
In particular, if we choose $\widehat{\pi}_h(\cdot|s):=\argmax_{\pi}\langle\widehat{Q}_{h}(s, \cdot), {\pi}(\cdot \mid s)\rangle_{\mathcal{A}}$, then 
\begin{align*}
v^{\pi}-v^{\widehat{\pi}}=&-\sum_{h=1}^H E_{\widehat{\pi}}[\iota_h(s_h,a_h)]+\sum_{h=1}^H E_{\pi}[\iota_h(s_h,a_h)].
\end{align*}
\end{lemma}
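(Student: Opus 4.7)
The plan is to establish the identity by the standard ``telescoping value-difference'' argument, writing
\[
v^{\pi}-v^{\widehat{\pi}} \;=\; \bigl(v^{\pi}-\widehat{v}\bigr)+\bigl(\widehat{v}-v^{\widehat{\pi}}\bigr),\qquad \widehat{v}:=\E_{d_1}[\widehat{V}_1(s_1)],
\]
and expanding each piece separately using the surrogate recursion $\widehat{Q}_h = \mathcal{P}_h\widehat{V}_{h+1}-\iota_h$ together with the Bellman equations $Q^{\pi}_h = r_h+\mathcal{P}_h V^{\pi}_{h+1}$ and $Q^{\widehat{\pi}}_h = r_h+\mathcal{P}_h V^{\widehat{\pi}}_{h+1}$.

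First I would handle $\widehat{v}-v^{\widehat{\pi}}$. By construction $\widehat{V}_h(s)=\langle \widehat{Q}_h(s,\cdot),\widehat{\pi}_h(\cdot\mid s)\rangle_{\mathcal{A}}$ and $V^{\widehat{\pi}}_h(s)=\langle Q^{\widehat{\pi}}_h(s,\cdot),\widehat{\pi}_h(\cdot\mid s)\rangle_{\mathcal{A}}$, so $\widehat{V}_h-V^{\widehat{\pi}}_h=\langle \widehat{Q}_h-Q^{\widehat{\pi}}_h,\widehat{\pi}_h\rangle_{\mathcal{A}}$. Substituting $\widehat{Q}_h-Q^{\widehat{\pi}}_h=-\iota_h+\mathcal{P}_h(\widehat{V}_{h+1}-V^{\widehat{\pi}}_{h+1})$ and iterating the recursion from $h=1$ to $H$ (using $\widehat{V}_{H+1}\equiv V^{\widehat{\pi}}_{H+1}\equiv 0$), while taking expectations with respect to the trajectory law induced by $\widehat{\pi}$ on the true $P_h$'s, telescopes to
\[
\widehat{v}-v^{\widehat{\pi}} \;=\; -\sum_{h=1}^H \E_{\widehat{\pi}}\bigl[\iota_h(s_h,a_h)\bigr].
\]

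Next I would handle $v^{\pi}-\widehat{v}$. The key difference is that the trajectory measure now comes from $\pi$ while $\widehat{V}_h$ is still assembled with $\widehat{\pi}_h$, which produces an extra policy-mismatch term: writing $V^{\pi}_h-\widehat{V}_h=\langle Q^{\pi}_h,\pi_h\rangle-\langle \widehat{Q}_h,\widehat{\pi}_h\rangle$, I would split this as $\langle Q^{\pi}_h-\widehat{Q}_h,\pi_h\rangle_{\mathcal{A}}+\langle \widehat{Q}_h,\pi_h-\widehat{\pi}_h\rangle_{\mathcal{A}}$, and then use $Q^{\pi}_h-\widehat{Q}_h=\iota_h+\mathcal{P}_h(V^{\pi}_{h+1}-\widehat{V}_{h+1})$ inside the first piece. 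Telescoping this recursion under the $\pi$-trajectory law yields
\[
v^{\pi}-\widehat{v} \;=\; \sum_{h=1}^H \E_{\pi}\bigl[\iota_h(s_h,a_h)\bigr]+\sum_{h=1}^H \E_{\pi}\bigl[\langle \widehat{Q}_h(s_h,\cdot),\pi_h(\cdot\mid s_h)-\widehat{\pi}_h(\cdot\mid s_h)\rangle_{\mathcal{A}}\bigr].
\]
Adding the two telescoped identities delivers the stated three-term decomposition. For the ``in particular'' claim, when $\widehat{\pi}_h(\cdot\mid s)=\argmax_{\pi_h}\langle \widehat{Q}_h(s,\cdot),\pi_h(\cdot\mid s)\rangle_{\mathcal{A}}$, the inner product $\langle \widehat{Q}_h(s,\cdot),\pi_h(\cdot\mid s)-\widehat{\pi}_h(\cdot\mid s)\rangle_{\mathcal{A}}$ is pointwise nonpositive, so the mismatch term can be dropped (yielding an upper bound, which is the form actually used later for the suboptimality analysis).

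There is no conceptual obstacle here; the difficulty is purely bookkeeping. One must keep careful track of which trajectory law the expectation is taken under at each unrolling step---the expectations of $\iota_h$ that emerge from the $\widehat{v}-v^{\widehat{\pi}}$ telescope live under $d^{\widehat{\pi}}_h$, while those from the $v^{\pi}-\widehat{v}$ telescope live under $d^{\pi}_h$---and one must invoke the tower property so that each $\mathcal{P}_h(\cdot)$ applied to a next-step quantity turns into an outer expectation over $s_{h+1}\sim P_h(\cdot\mid s_h,a_h)$ against the correct marginal. Once this is set up consistently, the identity follows from a single line of algebra at each level of the recursion.
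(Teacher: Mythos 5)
Your proof is correct and is exactly the standard extended value-difference telescoping argument that underlies Lemma~3.1 of \cite{jin2021pessimism}, which the paper simply cites without reproving. One small point worth noting: you correctly observe that with $\widehat{\pi}_h$ greedy the mismatch term is only nonpositive rather than zero, so the second display should read ``$\leq$'' rather than ``$=$''; this is the form actually used downstream and your treatment is the right one.
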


\begin{lemma}\label{lem:bound_by_bouns}
	Let $\widehat{\mathcal{P}}_h$ be the general estimated Bellman operator.
	Suppose with probability $1-\delta$, it holds for all $h,s,a\in[H]\times\mathcal{S}\times{A}$ that $|(\mathcal{P}_h\widehat{V}_{h+1}-\widehat{\mathcal{P}}_h\widehat{V}_{h+1})(s,a)|\leq \Gamma_h(s,a)$, then it implies $\forall s,a,h\in\mathcal{S}\times\mathcal{A}\times[H]$, $0\leq \zeta_h(s,a)\leq 2\Gamma_h(s,a)$. Furthermore, it holds for any policy $\pi$ simultaneously, with probability $1-\delta$,
	\[
	V_1^{\pi}(s)-V_1^{\widehat{\pi}}(s)\leq \sum_{h=1}^H 2\cdot \E_\pi\left[\Gamma_h(s_h,a_h)\mid s_1=s\right].
	\]
\end{lemma}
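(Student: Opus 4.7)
The plan is to leverage the suboptimality decomposition in Lemma~\ref{lem:sub_decomp} and just pin down the pointwise behavior of $\iota_h(s,a) = \mathcal{P}_h\widehat{V}_{h+1}(s,a) - \widehat{Q}_h(s,a)$ under the good concentration event. The key observation is that $\widehat{Q}_h$ is built from $\bar{Q}_h = \widehat{\mathcal{P}}_h\widehat{V}_{h+1} - \Gamma_h$ by applying $\min\{\cdot, H-h+1\}^+$, so I just need to trace through what the two truncations do.

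First I would show that the upper truncation is vacuous on the good event. Since $\widehat{V}_{h+1} \in [0, H-h]$ (by construction in Algorithm~\ref{alg:PFQL}) and $r_h \in [0,1]$, we have $\mathcal{P}_h\widehat{V}_{h+1}(s,a) \in [0, H-h+1]$. On the event $|(\mathcal{P}_h - \widehat{\mathcal{P}}_h)\widehat{V}_{h+1}|\le \Gamma_h$ we get $\widehat{\mathcal{P}}_h\widehat{V}_{h+1} \le \mathcal{P}_h\widehat{V}_{h+1} + \Gamma_h \le H-h+1 + \Gamma_h$, hence $\bar{Q}_h = \widehat{\mathcal{P}}_h\widehat{V}_{h+1} - \Gamma_h \le H-h+1$. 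So $\widehat{Q}_h = \max\{\bar{Q}_h, 0\}$.

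Next I would verify $0 \le \iota_h \le 2\Gamma_h$ in the two remaining cases. If $\bar{Q}_h(s,a) \ge 0$, then $\widehat{Q}_h = \bar{Q}_h = \widehat{\mathcal{P}}_h\widehat{V}_{h+1} - \Gamma_h$ and
\[
\iota_h = (\mathcal{P}_h - \widehat{\mathcal{P}}_h)\widehat{V}_{h+1} + \Gamma_h \in [0,\, 2\Gamma_h]
\]
by the concentration assumption. If $\bar{Q}_h(s,a) < 0$, then $\widehat{Q}_h = 0$ and $\iota_h = \mathcal{P}_h\widehat{V}_{h+1} \ge 0$; moreover $\mathcal{P}_h\widehat{V}_{h+1} \le \widehat{\mathcal{P}}_h\widehat{V}_{h+1} + \Gamma_h = \bar{Q}_h + 2\Gamma_h < 2\Gamma_h$, giving the upper bound. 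This establishes the pointwise claim $0 \le \iota_h(s,a) \le 2\Gamma_h(s,a)$ for all $(h,s,a)$.

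Finally I would plug into Lemma~\ref{lem:sub_decomp}. Since $\widehat{\pi}_h$ is the greedy policy with respect to $\widehat{Q}_h$, the policy-comparison term vanishes and
\[
V_1^\pi(s) - V_1^{\widehat{\pi}}(s) = -\sum_{h=1}^H \E_{\widehat{\pi}}[\iota_h(s_h,a_h)\mid s_1=s] + \sum_{h=1}^H \E_\pi[\iota_h(s_h,a_h)\mid s_1=s].
\]
Nonnegativity of $\iota_h$ lets me drop the first sum, and the upper bound $\iota_h \le 2\Gamma_h$ handles the second, giving the advertised bound. Since the concentration event is a single event of probability at least $1-\delta$ that holds uniformly in $(h,s,a)$, the resulting inequality is simultaneous in $\pi$. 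I do not expect a real obstacle here; the only subtlety is making sure the truncation does not break the two-sided bound on $\iota_h$, and that is exactly what the first bullet above takes care of.
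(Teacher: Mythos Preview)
Your proposal is correct and is exactly the standard argument the paper has in mind: the paper's own proof consists solely of a citation to Theorem~4.2 of \cite{jin2021pessimism} (and its restatement in Lemma~C.1 of \cite{yin2022linear}), and what you have written is precisely that argument spelled out---show the upper truncation is inactive on the good event, do the two-case analysis on the lower truncation to get $0\le\iota_h\le 2\Gamma_h$, then feed into Lemma~\ref{lem:sub_decomp}. Nothing is missing.
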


\begin{proof}[Proof of Lemma~\ref{lem:bound_by_bouns}]
	This is a generic result that holds true for the general MDPs and was first raised by Theorem 4.2 of \cite{jin2021pessimism}. Later, it is summarized in Lemma~C.1 of \cite{yin2022linear}.
\end{proof}
With Lemma~\ref{lem:bound_by_bouns}, we need to bound the term $|\mathcal{P}_h \widehat{V}_{h+1}(s,a)-\widehat{\mathcal{P}}_h\widehat{V}_{h+1}(s,a)|$.

\section{Analyzing $|\mathcal{P}_h \widehat{V}_{h+1}(s,a)-\widehat{\mathcal{P}}_h\widehat{V}_{h+1}(s,a)|$ for PFQL.}\label{sec:analze_PFQL}

Throughout this section, we suppose $\epsilon_{\mathcal{F}}=0$, \emph{i.e.} $f(\theta_{\mathbb{T} V},\phi)=\mathcal{P}_h(V)$.
According to the regression oracle (Line 4 of Algorithm~\ref{alg:PFQL}), the estimated Bellman operator $\widehat{\mathcal{P}}_h$ maps $\widehat{V}_{h+1}$ to $\widehat{\theta}_h$, \emph{i.e.} $\widehat{\mathcal{P}}_h\widehat{V}_{h+1}=f(\widehat{\theta}_h,\phi)$. Therefore (recall Definition~\ref{def:pBo})
\begin{equation}\label{eqn:decomp_iota}
\begin{aligned}
&\mathcal{P}_h \widehat{V}_{h+1}(s,a)-\widehat{\mathcal{P}}_h\widehat{V}_{h+1}(s,a)=\mathcal{P}_h \widehat{V}_{h+1}(s,a)-f(\widehat{\theta}_{h},\phi(s,a))\\
=&f(\theta_{\mathbb{T}\widehat{V}_{h+1}},\phi(s,a))-f(\widehat{\theta}_{h},\phi(s,a))\\
=&\nabla f(\widehat{\theta}_{h},\phi(s,a))\left(\theta_{\mathbb{T}\widehat{V}_{h+1}}-\widehat{\theta}_{h}\right)+\mathrm{Hot}_{h,1},
\end{aligned}
\end{equation}
where we apply the first-order Taylor expansion for the differentiable function $f$ at point $\widehat{\theta}_h$ and $\mathrm{Hot}_{h,1}$ is a higher-order term. Indeed, the following Lemma~\ref{lem:hot1} bounds the $\mathrm{Hot}_{h,1}$ term with $\widetilde{O}(\frac{1}{K})$.

\begin{lemma}\label{lem:hot1}
	Recall the definition (from the above decomposition) $\mathrm{Hot}_{h,1}:=f(\theta_{\mathbb{T}\widehat{V}_{h+1}},\phi(s,a))-f(\widehat{\theta}_{h},\phi(s,a))-\nabla f(\widehat{\theta}_{h},\phi(s,a))\left(\theta_{\mathbb{T}\widehat{V}_{h+1}}-\widehat{\theta}_{h}\right)$, then with probability $1-\delta$,
	\[
	\left|\mathrm{Hot}_{h,1}\right|\leq \frac{18H^2\kappa_2(\log(H/\delta)+C_{d,\log K})+\kappa_2\lambda C_\Theta^2}{\kappa K},\;\;\forall h\in[H].
	\]
\end{lemma}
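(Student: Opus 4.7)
The plan is to apply Taylor's theorem with second-order remainder and then reduce the parameter distance $\|\theta_{\mathbb{T}\widehat{V}_{h+1}}-\widehat{\theta}_h\|_2$ to an excess-loss bound via general function approximation (GFA), followed by the curvature condition $(\star)$ in Assumption~\ref{assum:cover}.

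\textbf{Step 1 (Taylor remainder).} By the integral form of Taylor's theorem applied to $\theta \mapsto f(\theta,\phi(s,a))$ expanded at $\widehat{\theta}_h$,
\begin{align*}
|\mathrm{Hot}_{h,1}| &\leq \tfrac{1}{2}\sup_{\theta\in\Theta}\bigl\|\nabla^2_{\theta\theta}f(\theta,\phi(s,a))\bigr\|_2 \cdot \bigl\|\theta_{\mathbb{T}\widehat{V}_{h+1}}-\widehat{\theta}_h\bigr\|_2^2 \leq \tfrac{\kappa_2}{2}\,\bigl\|\theta_{\mathbb{T}\widehat{V}_{h+1}}-\widehat{\theta}_h\bigr\|_2^2,
\end{align*}
using the uniform bound on $\nabla^2_{\theta\theta}f$ from Remark~\ref{remark:fc}. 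So it suffices to show $\|\theta_{\mathbb{T}\widehat{V}_{h+1}}-\widehat{\theta}_h\|_2^2 \lesssim H^2(\log(H/\delta)+C_{d,\log K})/(\kappa K) + \lambda C_\Theta^2/(\kappa K)$.

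\textbf{Step 2 (Bound the expected excess loss).} Let $\ell_h(\theta) := (f(\theta,\phi_{h,k})-r_{h,k}-\widehat{V}_{h+1}(s^k_{h+1}))^2$. Since $\widehat{\theta}_h$ minimizes the empirical regularized loss while $\theta_{\mathbb{T}\widehat{V}_{h+1}}$ is the population Bellman backup parameter (so that $f(\theta_{\mathbb{T}\widehat{V}_{h+1}},\phi) = \mathcal{P}_h\widehat{V}_{h+1}$ under $\epsilon_{\mathcal{F}}=0$), the standard orthogonality for square loss gives
\begin{align*}
\mathbb{E}_\mu[\ell_h(\widehat{\theta}_h)] - \mathbb{E}_\mu[\ell_h(\theta_{\mathbb{T}\widehat{V}_{h+1}})] = \mathbb{E}_{\mu,h}\bigl[(f(\widehat{\theta}_h,\phi)-f(\theta_{\mathbb{T}\widehat{V}_{h+1}},\phi))^2\bigr].
\end{align*}
Now I invoke the GFA reduction (the reduction in Appendix~\ref{sec:reduction_GFA} of the paper, following \cite{chen2019information}): using a uniform concentration argument for empirical-vs-population square losses over the class $\mathcal{F}$, combined with a covering-number bound for the differentiable class (the covering is with respect to $\theta\in\Theta$ at resolution $1/K$, producing the $d\log K$ factor inside $C_{d,\log K}$), one obtains with probability $1-\delta$
\begin{align*}
\mathbb{E}_{\mu,h}\bigl[(f(\widehat{\theta}_h,\phi)-f(\theta_{\mathbb{T}\widehat{V}_{h+1}},\phi))^2\bigr] \;\lesssim\; \frac{H^2(\log(H/\delta)+C_{d,\log K})+\lambda C_\Theta^2}{K},
\end{align*}
with the $\lambda C_\Theta^2/K$ summand coming from the regularizer $\lambda\|\theta\|^2$ evaluated at the comparator (bounded by $\lambda C_\Theta^2$) and the $H^2$ from $|r+\widehat{V}_{h+1}|\leq H$.

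\textbf{Step 3 (Apply curvature / coverage).} By the coverage condition $(\star)$ of Assumption~\ref{assum:cover},
\begin{align*}
\mathbb{E}_{\mu,h}\bigl[(f(\widehat{\theta}_h,\phi)-f(\theta_{\mathbb{T}\widehat{V}_{h+1}},\phi))^2\bigr] \;\geq\; \kappa\,\|\widehat{\theta}_h-\theta_{\mathbb{T}\widehat{V}_{h+1}}\|_2^2 .
\end{align*}
Combining with Steps~1--2 and taking a union bound over $h\in[H]$ yields the stated bound (the constant $18$ is absorbed from tracking the concentration and covering constants carefully). The sample size requirement $K\geq K_0$ ensures the quadratic-in-$\|\widehat{\theta}_h-\theta_{\mathbb{T}\widehat{V}_{h+1}}\|$ inequality can be solved cleanly, as the paper's reduction to GFA requires.

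\textbf{Main obstacle.} The delicate part is Step~2: the parameter $\widehat{\theta}_h$ is not a standard ERM because the regression targets use $\widehat{V}_{h+1}$, which itself depends on the data via the backward recursion, so one cannot directly quote an i.i.d.~excess-risk bound. The fix is exactly the covering-plus-union-bound reduction to GFA used in \cite{chen2019information}, adapted so that the covering is over the parameter space $\Theta$ (not over the function class directly), which is what enables a clean $d\log K$ dependence inside $C_{d,\log K}$ and eventually the $1/K$ rate (rather than $1/\sqrt{K}$) once the curvature from $(\star)$ is exploited.
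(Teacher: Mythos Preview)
Your proof is correct and follows essentially the same route as the paper: apply the second-order Taylor remainder to reduce to $\|\theta_{\mathbb{T}\widehat{V}_{h+1}}-\widehat{\theta}_h\|_2^2$, then invoke the GFA excess-risk bound plus the curvature condition $(\star)$ --- the paper simply packages your Steps~2--3 as a standalone result (Theorem~\ref{thm:theta_diff}) and cites it. One minor imprecision: the covering in Step~2 is not only over $\Theta$ but also over the value-function class $\mathcal{V}$ (since $\widehat{V}_{h+1}$ contains the data-dependent bonus $\Gamma_{h+1}$, which depends on the $d\times d$ matrix $\Sigma_{h+1}^{-1}$), so $C_{d,\log K}$ carries a $d^2\log K$ term rather than $d\log K$; this does not affect the argument's validity.
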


\begin{proof}[Proof of Lemma~\ref{lem:hot1}]
	By second-order Taylor's Theorem, there exists a point $\xi$ (lies in the line segment of $\widehat{\theta}_{h}$ and $\theta_{\mathbb{T}\widehat{V}_{h+1}}$) such that
	{\small 
		\[
		f(\theta_{\mathbb{T}\widehat{V}_{h+1}},\phi(s,a))-f(\widehat{\theta}_{h},\phi(s,a))=\nabla f(\widehat{\theta}_{h},\phi(s,a))^\top\left(\theta_{\mathbb{T}\widehat{V}_{h+1}}-\widehat{\theta}_{h}\right)+\frac{1}{2}\left(\theta_{\mathbb{T}\widehat{V}_{h+1}}-\widehat{\theta}_{h}\right)^\top \nabla^2_{\theta\theta}f(\xi,\phi(s,a))\left(\theta_{\mathbb{T}\widehat{V}_{h+1}}-\widehat{\theta}_{h}\right)
		\]
	}Therefore, by directly applying Theorem~\ref{thm:theta_diff}, with probability $1-\delta$, for all $h\in[H]$,
	\begin{align*}
	\left|\mathrm{Hot}_{h,1}\right|=&\frac{1}{2}\left|\left(\theta_{\mathbb{T}\widehat{V}_{h+1}}-\widehat{\theta}_{h}\right)^\top \nabla^2_{\theta\theta}f(\xi,\phi(s,a))\left(\theta_{\mathbb{T}\widehat{V}_{h+1}}-\widehat{\theta}_{h}\right)\right|\\
	\leq &\frac{1}{2}\kappa_2\cdot\norm{\theta_{\mathbb{T}\widehat{V}_{h+1}}-\widehat{\theta}_{h}}_2^2
	\leq\frac{18H^2\kappa_2(\log(H/\delta)+C_{d,\log K})+\kappa_2\lambda C_\Theta^2}{\kappa K}
	\end{align*}
\end{proof}

\subsection{Analyzing $\nabla f(\widehat{\theta}_{h},\phi(s,a))\left(\theta_{\mathbb{T}\widehat{V}_{h+1}}-\widehat{\theta}_{h}\right)$ via $Z_h$.}

From (\ref{eqn:decomp_iota}) and Lemma~\ref{lem:hot1}, the problem further reduces to bounding $\nabla f(\widehat{\theta}_{h},\phi(s,a))\left(\theta_{\mathbb{T}\widehat{V}_{h+1}}-\widehat{\theta}_{h}\right)$. To begin with, we first provide a characterization of $\theta_{\mathbb{T}\widehat{V}_{h+1}}-\widehat{\theta}_{h}$. Indeed, by first-order Vector Taylor expansion (Lemma~\ref{lem:Taylor}), we have (note $Z_h(\widehat{\theta}_h|\widehat{V}_{h+1})=0$) for any $\theta\in\Theta$,
\begin{equation}\label{eqn:z_first_order}
Z_h({\theta}|\widehat{V}_{h+1})-Z_h(\widehat{\theta}_h|\widehat{V}_{h+1})=\Sigma^s_h({\theta}-\widehat{\theta}_{h})+R_K(\theta),
\end{equation}
where $R_K(\theta)$ is the higher-order residuals and $\Sigma^s_h:=\left.\frac{\partial}{\partial \theta}Z_h(\theta|\widehat{\theta}_{h+1})\right|_{\theta=\widehat{\theta}_h}$ with
\begin{equation}\label{eqn:covariance}
\begin{aligned}
\Sigma^s_h:=&\left.\frac{\partial}{\partial \theta}Z_h(\theta|\widehat{V}_{h+1})\right|_{\theta=\widehat{\theta}_h}= \frac{\partial}{\partial \theta}\left(\sum_{k=1}^K \left[f\left(\theta, \phi_{h,k}\right)-r_{h,k}-\widehat{V}_{h+1}(s_{h+1}^k)\right]\nabla f(\theta,\phi_{h,k})+\lambda\cdot \theta\right)_{\theta=\widehat{\theta}_h}\\
=&\underbrace{\sum_{k=1}^K\left\{\left( f(\widehat{\theta}_h,\phi_{h,k})-r_{h,k}-\widehat{V}_{h+1}(s_{h+1}^k)\right)\cdot \nabla^2_{\theta\theta} f(\widehat{\theta}_h,\phi_{h,k})\right\}}_{:=\Delta_{\Sigma^s_h}}\\
+&\underbrace{\sum_{k=1}^K\nabla_{\theta} f(\widehat{\theta}_h,\phi_{h,k})\nabla_{\theta}^\top f(\widehat{\theta}_{h,k},\phi_{h,k})+\lambda I_d}_{:=\Sigma_h},
\end{aligned}
\end{equation}
here $\nabla^2=\nabla\bigotimes\nabla$ denotes outer product of gradients.

Note $\Delta_{\Sigma^s_h}$ is not desirable since it could prevent $\Sigma^s_h$ from being positive-definite (and it could cause $\Sigma^s_h$ to be singular). Therefore, we first deal with $\Delta_{\Sigma^s_h}$ in below.
\begin{lemma}\label{lem:delta_sigma}
	With probability $1-\delta$, for all $h\in[H]$,
	\begin{align*}
	&\frac{1}{K}\norm{\Delta_{\Sigma^s_h}}_2=\norm{\frac{1}{K}\sum_{k=1}^K\left( f(\widehat{\theta}_h,\phi_{h,k})-r_{h,k}-\widehat{V}_{h+1}(s_{h+1}^k)\right)\cdot \nabla^2_{\theta\theta} f(\widehat{\theta}_h,\phi_h)}_2\\
	\leq& 9\kappa_2 \max(\frac{\kappa_1}{\sqrt{\kappa}},1) \sqrt{\frac{dH^2(\log(2H/\delta)+d\log(1+2C_\Theta H\kappa_3K)+C_{d,\log K})}{K}}+\frac{1}{K}.
	\end{align*}
\end{lemma}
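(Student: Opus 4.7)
The plan is to bound the spectral norm of the data-averaged perturbation by splitting the Bellman residual $f(\widehat\theta_h,\phi_{h,k})-r_{h,k}-\widehat V_{h+1}(s^k_{h+1})$ into a ``bias'' part and a mean-zero ``noise'' part. Using $\epsilon_{\mathcal F}=0$ we have $\mathcal P_h \widehat V_{h+1}=f(\theta_{\mathbb T\widehat V_{h+1}},\phi)$, so I would write $A_k:=f(\widehat\theta_h,\phi_{h,k})-f(\theta_{\mathbb T\widehat V_{h+1}},\phi_{h,k})$ and $B_k:=\mathcal P_h\widehat V_{h+1}(s_h^k,a_h^k)-r_{h,k}-\widehat V_{h+1}(s^k_{h+1})$, so that the whole sum decomposes as $\frac{1}{K}\sum_k A_k\nabla^2_{\theta\theta}f(\widehat\theta_h,\phi_{h,k})+\frac{1}{K}\sum_k B_k\nabla^2_{\theta\theta}f(\widehat\theta_h,\phi_{h,k})$.

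For the bias term I would use $\|\nabla_\theta f\|_2\le \kappa_1$ together with the parameter deviation bound from Theorem~\ref{thm:theta_diff}, giving $|A_k|\le \kappa_1\|\widehat\theta_h-\theta_{\mathbb T\widehat V_{h+1}}\|_2=\widetilde O(\kappa_1 H/\sqrt{\kappa K})$ uniformly in $k$. Since $\|\nabla^2_{\theta\theta}f\|_2\le\kappa_2$, the bias contribution is at most $\kappa_2\cdot\max_k|A_k|\le\widetilde O(\kappa_2\kappa_1 H/\sqrt{\kappa K})$, which matches the $\kappa_1/\sqrt\kappa$ branch of the $\max$ in the stated bound.

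For the noise term, I would fix a deterministic pair $(\theta,V)$ with $\theta\in\Theta$ and $\|V\|_\infty\le H$ and consider the matrix
\[
M(\theta,V):=\frac{1}{K}\sum_{k=1}^K\bigl[(\mathcal P_h V)(s_h^k,a_h^k)-r_{h,k}-V(s^k_{h+1})\bigr]\nabla^2_{\theta\theta}f(\theta,\phi_{h,k}).
\]
Each summand is a bounded (spectral norm $\le H\kappa_2$), mean-zero martingale difference with respect to the data filtration, so matrix Azuma/Hoeffding yields $\|M(\theta,V)\|_2\le O(H\kappa_2\sqrt{d\log(d/\delta')/K})$ with probability $1-\delta'$. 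I would then union-bound over (i) an $\epsilon_1$-net of $\Theta$ of log-size $d\log(1+2C_\Theta/\epsilon_1)$ and (ii) an $\epsilon_2$-cover of the pessimistic value function class reachable by Algorithm~\ref{alg:PFQL}, whose log-covering number is precisely the paper's $C_{d,\log K}$ (it depends on $\widehat\theta_{h+1}$, on $\Sigma_{h+1}$, and on the extra $\widetilde O(1/K)$ padding inside $\Gamma_h$). A further union over $h\in[H]$ contributes $\log(2H/\delta)$.

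The discretization error is controlled by Lipschitz properties: $\nabla^2_{\theta\theta}f$ is $\kappa_3$-Lipschitz in $\theta$ and each bracketed scalar changes by at most $2\epsilon_2$ when $V$ moves by $\epsilon_2$ in sup-norm, so $\|M(\theta,V)-M(\theta',V')\|_2\le 2\kappa_2\epsilon_2+2H\kappa_3\epsilon_1$; choosing $\epsilon_1=1/(H\kappa_3 K)$ and $\epsilon_2=1/K$ absorbs the discretization into the additive $1/K$ in the final bound and produces the $d\log(1+2C_\Theta H\kappa_3 K)$ term. Combining the two pieces (bias contribution $\le \widetilde O(\kappa_2\kappa_1 H/\sqrt{\kappa K})$ and noise contribution $\le \widetilde O(\kappa_2 H\sqrt{d\cdot(\log(2H/\delta)+d\log(1+2C_\Theta H\kappa_3 K)+C_{d,\log K})/K})$) under a common $\kappa_2\max(\kappa_1/\sqrt\kappa,1)$ prefactor, with a generous constant $9$ absorbing all universal factors, gives the lemma. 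The main obstacle is (ii): the value function $\widehat V_{h+1}$ is a data-dependent nonlinear composition involving $\widehat\theta_{h+1}$ and the random Gram matrix $\Sigma_{h+1}$, so delicate care is needed so that a polynomial-in-$K$ cover radius suffices and the padding $\widetilde O(1/K)$ inside $\Gamma_{h+1}$ (introduced exactly for this purpose) keeps the cover log-number bounded by $C_{d,\log K}$.
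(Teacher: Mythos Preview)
Your proposal is correct and follows the same decomposition the paper uses: split the Bellman residual into the bias $A_k=f(\widehat\theta_h,\phi_{h,k})-f(\theta_{\mathbb T\widehat V_{h+1}},\phi_{h,k})$, controlled via Theorem~\ref{thm:theta_diff}, and the mean-zero noise $B_k$, handled by a matrix/vector concentration inequality plus a covering argument. The only notable difference is in the covering step. The paper freezes \emph{only} the Hessian location $\bar\theta$, applies Vector Hoeffding (Lemma~\ref{lem:Hoeffding_vector}) directly with $\widehat V_{h+1}$ left in place, and then in Step~2 covers over $\bar\theta$ alone via the $\kappa_3$-Lipschitzness of $\nabla^2_{\theta\theta}f$; the $C_{d,\log K}$ term in the stated bound enters through the bias piece via Theorem~\ref{thm:theta_diff}, not via any $V$-cover here. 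You instead propose covering over both $\theta$ and the pessimistic value-function class, which is arguably the more careful way to handle the data dependence of $\widehat V_{h+1}$, but it produces the same final bound since $C_{d,\log K}$ already sits inside the square root.
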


\begin{proof}[Proof of Lemma~\ref{lem:delta_sigma}]
	
	\textbf{Step1:} We prove for fixed $\bar{\theta}\in\Theta$, with probability $1-\delta$, for all $h\in[H]$,
	{\small
		\[
		\norm{\frac{1}{K}\sum_{k=1}^K\left( f(\widehat{\theta}_h,\phi_{h,k})-r_{h,k}-\widehat{V}_{h+1}(s_{h+1}^k)\right)\cdot \nabla^2_{\theta\theta} f(\bar{\theta},\phi_h)}_2\leq 
		9\kappa_2 \max(\frac{\kappa_1}{\sqrt{\kappa}},1) \sqrt{\frac{H^2(\log(2H/\delta)+C_{d,\log K})}{K}}.
		\]
	}Indeed, we have
	\begin{equation}\label{eqn:difference_one}
	\begin{aligned}
	&\norm{\frac{1}{K}\sum_{k=1}^K\left( f(\widehat{\theta}_h,\phi_{h,k})-r_{h,k}-\widehat{V}_{h+1}(s_{h+1}^k)\right)\cdot \nabla^2_{\theta\theta} f(\bar{\theta},\phi_h)}_2\\
	\leq & \norm{\frac{1}{K}\sum_{k=1}^K\left( f(\widehat{\theta}_h,\phi_{h,k})-f(\theta_{\mathbb{T}\widehat{V}_{h+1}},\phi_{h,k})\right)\cdot \nabla^2_{\theta\theta} f(\bar{\theta},\phi_h)}_2\\
	+&\norm{\frac{1}{K}\sum_{k=1}^K\left( f(\theta_{\mathbb{T}\widehat{V}_{h+1}},\phi_{h,k})-r_{h,k}-\widehat{V}_{h+1}(s_{h+1}^k)\right)\cdot \nabla^2_{\theta\theta} f(\bar{\theta},\phi_h)}_2.
	\end{aligned}
	\end{equation}
	On one hand, by Theorem~\ref{thm:theta_diff} with probability $1-\delta/2$ for all $h\in[H]$
	\begin{equation}\label{eqn:difference_two}
	\begin{aligned}
	&\norm{\frac{1}{K}\sum_{k=1}^K\left( f(\widehat{\theta}_h,\phi_{h,k})-f(\theta_{\mathbb{T}\widehat{V}_{h+1}},\phi_{h,k})\right)\cdot \nabla^2_{\theta\theta} f(\bar{\theta},\phi_h)}_2
	\leq \kappa_2 \cdot \max_{\theta,s,a}\norm{\nabla f(\theta,\phi(s,a))}_2\norm{\widehat{\theta}_h-\theta_{\mathbb{T}\widehat{V}_{h+1}}}_2\\
	&\leq\kappa_2 \kappa_1\norm{\widehat{\theta}_h-\theta_{\mathbb{T}\widehat{V}_{h+1}}}_2\leq \kappa_2\kappa_1\left( \sqrt{\frac{36H^2(\log(H/\delta)+C_{d,\log K})+2\lambda C_\Theta^2}{\kappa K}}+\sqrt{\frac{b_{d,K,\epsilon_{\mathcal{F}}}}{\kappa}}+\sqrt{\frac{2H\epsilon_{\mathcal{F}}}{\kappa}}\right).
	\end{aligned}
	\end{equation}
	On other hand, recall the definition of $\mathbb{T}$, we have 
	\begin{align*}
	&\E\left[( f(\theta_{\mathbb{T}\widehat{V}_{h+1}},\phi_{h,k})-r_{h,k}-\widehat{V}_{h+1}(s_{h+1}^k))\cdot \nabla^2_{\theta\theta} f(\bar{\theta},\phi_{h,k})\middle | s_h^k,a_h^k \right]\\
	=&\E\left[ f(\theta_{\mathbb{T}\widehat{V}_{h+1}},\phi_{h,k})-r_{h,k}-\widehat{V}_{h+1}(s_{h+1}^k)\middle | s_h^k,a_h^k \right]\cdot \nabla^2_{\theta\theta} f(\bar{\theta},\phi_{h,k})\\
	=&\left((\mathcal{P}_{h}\widehat{V}_{h+1})(s_h^k,a_h^k)-\E\left[ r_{h,k}+\widehat{V}_{h+1}(s_{h+1}^k)\middle | s_h^k,a_h^k \right]\right)\cdot \nabla^2_{\theta\theta} f(\bar{\theta},\phi_{h,k})\\
	=&\left((\mathcal{P}_{h}\widehat{V}_{h+1})(s_h^k,a_h^k)-(\mathcal{P}_{h}\widehat{V}_{h+1}(s_{h+1}^k))(s_h^k,a_h^k)\right)\cdot \nabla^2_{\theta\theta} f(\bar{\theta},\phi_{h,k})=0.
	\end{align*}
	Also, since $\norm{\left( f(\theta_{\mathbb{T}\widehat{V}_{h+1}},\phi_{h,k})-r_{h,k}-\widehat{V}_{h+1}(s_{h+1}^k))\right)\cdot \nabla^2_{\theta\theta} f(\bar{\theta},\phi_h)}_2\leq H\kappa_2$, denote $\sigma^2:=K\cdot H^2\kappa_2^2$, then by Vector Hoeffding's inequality (Lemma~\ref{lem:Hoeffding_vector}),
	\[
	\P\left(\norm{\frac{1}{K}\sum_{k=1}^K\left( f(\theta_{\mathbb{T}\widehat{V}_{h+1}},\phi_{h,k})-r_{h,k}-\widehat{V}_{h+1}(s_{h+1}^k)\right)\cdot \nabla^2_{\theta\theta} f(\bar{\theta},\phi_h)}_2\geq t/K\middle| \{s_h^k,a_h^k\}_{k=1}^K\right)\leq d\cdot e^{-t^2/8d KH^2\kappa_2^2}:=\delta
	\]
	which is equivalent to 
	\[
	\P\left(\norm{\frac{1}{K}\sum_{k=1}^K\left( f(\theta_{\mathbb{T}\widehat{V}_{h+1}},\phi_{h,k})-r_{h,k}-\widehat{V}_{h+1}(s_{h+1}^k)\right)\cdot \nabla^2_{\theta\theta} f(\bar{\theta},\phi_h)}_2\leq \sqrt{\frac{8dH^2\kappa_2^2\log(d/\delta)}{K}}\middle| \{s_h^k,a_h^k\}_{k=1}^K\right)\geq 1-\delta
	\]
	Define $A=\{\norm{\frac{1}{K}\sum_{k=1}^K\left( f(\theta_{\mathbb{T}\widehat{V}_{h+1}},\phi_{h,k})-r_{h,k}-\widehat{V}_{h+1}(s_{h+1}^k)\right)\cdot \nabla^2_{\theta\theta} f(\bar{\theta},\phi_h)}_2\leq \sqrt{\frac{8dH^2\kappa_2^2\log(d/\delta)}{K}}\}$, then by law of total expectation $\P(A)=\E[\mathbf{1}_A]=\E[\E[\mathbf{1}_A| \{s_h^k,a_h^k\}_{k=1}^K]]
	=\E[\P[A| \{s_h^k,a_h^k\}_{k=1}^K]]\geq \E[1-\delta]=1-\delta$, i.e. with probability at least $1-\delta/2$ (and a union bound),
	\[
	\norm{\frac{1}{K}\sum_{k=1}^K\left( f(\theta_{\mathbb{T}\widehat{V}_{h+1}},\phi_{h,k})-r_{h,k}-\widehat{V}_{h+1}(s_{h+1}^k)\right)\cdot \nabla^2_{\theta\theta} f(\bar{\theta},\phi_h)}_2\leq \sqrt{\frac{8dH^2\kappa_2^2\log(2Hd/\delta)}{K}},\;\forall h\in[H].
	\]
	Using above and \eqref{eqn:difference_one}, \eqref{eqn:difference_two} and a union bound, w.p. $1-\delta$, for all $h\in[H]$,
	{\small
		\begin{align*}
		\norm{\frac{1}{K}\sum_{k=1}^K\left( f(\widehat{\theta}_h,\phi_{h,k})-r_{h,k}-\widehat{V}_{h+1}(s_{h+1}^k)\right)\cdot \nabla^2_{\theta\theta} f(\bar{\theta},\phi_h)}_2&\leq 
		6\kappa_2\kappa_1 \sqrt{\frac{H^2(\log(2H/\delta)+C_{d,\log K})}{\kappa K}}+\sqrt{\frac{8dH^2\kappa_2^2\log(2Hd/\delta)}{K}}\\
		&\leq 9\kappa_2 \max(\frac{\kappa_1}{\sqrt{\kappa}},1) \sqrt{\frac{dH^2(\log(2H/\delta)+C_{d,\log K})}{K}}
		\end{align*}
	}\textbf{Step2:} we finish the proof of the lemma.
	
	Consider the function class $\left\{f(\bar{\theta}):=\norm{\frac{1}{K}\sum_{k=1}^K\left( f(\widehat{\theta}_h,\phi_{h,k})-r_{h,k}-\widehat{V}_{h+1}(s_{h+1}^k)\right)\cdot \nabla^2_{\theta\theta} f(\bar{\theta},\phi_h)}_2\middle|\bar{\theta}\in\Theta\right\}$, then by triangular inequality
	\begin{align*}
	|f(\bar{\theta}_1)-f(\bar{\theta}_2)|\leq& \norm{\frac{1}{K}\sum_{k=1}^K\left( f(\widehat{\theta}_h,\phi_{h,k})-r_{h,k}-\widehat{V}_{h+1}(s_{h+1}^k)\right)\cdot \left[\nabla^2_{\theta\theta} f(\bar{\theta}_1,\phi_h)-\nabla^2_{\theta\theta} f(\bar{\theta}_2,\phi_h)\right]}_2\\
	\leq& H\cdot \sup_{s,a}\norm{\nabla^2_{\theta\theta} f(\bar{\theta}_1,\phi_h)-\nabla^2_{\theta\theta} f(\bar{\theta}_2,\phi_h)}_2\leq H\kappa_3\norm{\bar{\theta}_1-\bar{\theta}_2}_2.
	\end{align*}
	By Lemma~\ref{lem:cov}, the covering number $\mathcal{C}$ of the $\epsilon$-net of the above function class satisfies
	$
	\log \mathcal{C}\leq d\log(1+\frac{2C_\Theta H\kappa_3}{\epsilon}).
	$ By choosing $\epsilon = 1/K$, by a union bound over $\mathcal{C}$ cases we obtain for all $h\in[H]$
	\begin{align*}
	&\norm{\frac{1}{K}\sum_{k=1}^K\left( f(\widehat{\theta}_h,\phi_{h,k})-r_{h,k}-\widehat{V}_{h+1}(s_{h+1}^k)\right)\cdot \nabla^2_{\theta\theta} f(\widehat{\theta}_h,\phi_h)}_2\\
	\leq& 9\kappa_2 \max(\frac{\kappa_1}{\sqrt{\kappa}},1) \sqrt{\frac{dH^2(\log(2H/\delta)+d\log(1+{2C_\Theta H\kappa_3K})+C_{d,\log K})}{K}}+\frac{1}{K}.
	\end{align*}
\end{proof}

Combing Lemma~\ref{lem:delta_sigma} and Theorem~\ref{thm:theta_diff} (and a union bound), we directly have 
\begin{corollary}\label{cor:Delta}
	With probability $1-\delta$, 
	\[
	\norm{\frac{1}{K}\Delta_{\Sigma^s_h}(\widehat{\theta}_h-\theta_{\mathbb{T}\widehat{V}_{h+1}})}_2\leq \norm{\frac{1}{K}\Delta_{\Sigma^s_h}}_2\norm{\widehat{\theta}_h-\theta_{\mathbb{T}\widehat{V}_{h+1}}}_2\leq \widetilde{O}(\frac{\kappa_2\max(\frac{\kappa_1}{\kappa},\frac{1}{\sqrt{\kappa}})d^2H^2}{K})
	\]
	Here $\widetilde{O}$ absorbs all the constants and Polylog terms.
\end{corollary}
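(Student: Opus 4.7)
The plan is to chain the operator inequality with the two quantitative bounds already established. First, the inequality $\norm{\frac{1}{K}\Delta_{\Sigma^s_h}(\widehat{\theta}_h-\theta_{\mathbb{T}\widehat{V}_{h+1}})}_2 \le \norm{\frac{1}{K}\Delta_{\Sigma^s_h}}_2 \cdot \norm{\widehat{\theta}_h-\theta_{\mathbb{T}\widehat{V}_{h+1}}}_2$ is just the standard matrix-vector operator norm inequality and requires no probabilistic argument, so that step is immediate.

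Next, I would invoke Lemma~\ref{lem:delta_sigma} to get $\norm{\frac{1}{K}\Delta_{\Sigma^s_h}}_2 = \widetilde{O}\bigl(\kappa_2 \max(\kappa_1/\sqrt{\kappa},1)\sqrt{dH^2/K}\bigr) + \widetilde{O}(1/K)$ uniformly over $h\in[H]$ on an event of probability at least $1-\delta/2$. In parallel, I would apply Theorem~\ref{thm:theta_diff} (used with $\epsilon_{\mathcal{F}}=0$, since the corollary sits in a section where this hypothesis is in force) to conclude that $\norm{\widehat{\theta}_h-\theta_{\mathbb{T}\widehat{V}_{h+1}}}_2 = \widetilde{O}(H/\sqrt{\kappa K})$ on an event of probability at least $1-\delta/2$. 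Here I am simply reading off the bound that already appears in the displayed calculation inside the proof of Lemma~\ref{lem:hot1}, so no additional work is needed.

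Finally, a union bound over the two events yields a single $1-\delta$ event on which both bounds hold simultaneously for every $h$, and multiplying them gives the target rate. The dominant contribution comes from the $\widetilde{O}(\sqrt{d/K})$ factor in $\frac{1}{K}\norm{\Delta_{\Sigma^s_h}}_2$ times the $\widetilde{O}(\sqrt{d^2}/\sqrt{K})$ factor in $\norm{\widehat{\theta}_h-\theta_{\mathbb{T}\widehat{V}_{h+1}}}_2$ (the $d^2$ inside the square root arising from the covering/log-covering contributions in $C_{d,\log K}$), producing the stated $\widetilde{O}\bigl(\kappa_2 \max(\kappa_1/\kappa,1/\sqrt{\kappa})\,d^2 H^2/K\bigr)$ after collecting constants and Polylog factors. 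The subdominant $1/K$ term from Lemma~\ref{lem:delta_sigma}, once multiplied by the $\widetilde{O}(H/\sqrt{\kappa K})$ factor, contributes only a $1/K^{3/2}$ term and can be absorbed into the $\widetilde{O}(\cdot)$. There is no real obstacle here — this corollary is essentially bookkeeping to package the two preceding estimates into a single statement that will be reused in the Taylor-expansion argument controlling $\nabla f(\widehat\theta_h,\phi)^\top(\theta_{\mathbb{T}\widehat V_{h+1}} - \widehat\theta_h)$; the only thing to watch is that the $\max(\kappa_1/\sqrt{\kappa},1)$ factor from Lemma~\ref{lem:delta_sigma} combined with the $1/\sqrt{\kappa}$ from $\norm{\widehat{\theta}_h-\theta_{\mathbb{T}\widehat V_{h+1}}}_2$ reassembles as $\max(\kappa_1/\kappa,1/\sqrt{\kappa})$ in the final rate, which is exactly the form stated.
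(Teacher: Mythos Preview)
Your proposal is correct and follows exactly the paper's approach: the paper's proof is a one-line remark that the corollary follows by combining Lemma~\ref{lem:delta_sigma} with Theorem~\ref{thm:theta_diff} via a union bound, which is precisely what you do. Your observation that $\max(\kappa_1/\sqrt{\kappa},1)\cdot 1/\sqrt{\kappa}=\max(\kappa_1/\kappa,1/\sqrt{\kappa})$ is the only bookkeeping point worth noting, and you handle it correctly.
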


Now we select $\theta = \theta_{\mathbb{T}\widehat{V}_{h+1}}$ in \eqref{eqn:z_first_order}, and denote $\widetilde{R}_K(\theta_{\mathbb{T}\widehat{V}_{h+1}})=\Delta_{\Sigma^s_h}(\widehat{\theta}_h-\theta_{\mathbb{T}\widehat{V}_{h+1}})+{R}_K(\theta_{\mathbb{T}\widehat{V}_{h+1}})$, then \eqref{eqn:z_first_order} is equivalent to 
\[
Z_h(\theta_{\mathbb{T}\widehat{V}_{h+1}}|\widehat{V}_{h+1})-Z_h(\widehat{\theta}_h|\widehat{V}_{h+1})=\Sigma^s_h(\theta_{\mathbb{T}\widehat{V}_{h+1}}-\widehat{\theta}_{h})+R_K(\theta_{\mathbb{T}\widehat{V}_{h+1}})=\Sigma_h(\theta_{\mathbb{T}\widehat{V}_{h+1}}-\widehat{\theta}_{h})+\widetilde{R}_K(\theta_{\mathbb{T}\widehat{V}_{h+1}})
\]
Note $\lambda>0$ implies $\Sigma_h$ is invertible, then we have (recall $Z_h(\widehat{\theta}_h|\widehat{\theta}_{h+1})=0$)
\begin{align*}
\theta_{\mathbb{T}\widehat{V}_{h+1}}-\widehat{\theta}_h=&\Sigma^{-1}_h[Z_h(\theta_{\mathbb{T}\widehat{V}_{h+1}}|\widehat{V}_{h+1})-Z_h(\widehat{\theta}_h|\widehat{V}_{h+1})]-\Sigma^{-1}_h\widetilde{R}_K(\theta_{\mathbb{T}\widehat{V}_{h+1}})\\
=&\Sigma^{-1}_h[Z_h(\theta_{\mathbb{T}\widehat{V}_{h+1}}|\widehat{V}_{h+1})]-\Sigma^{-1}_h\widetilde{R}_K(\theta_{\mathbb{T}\widehat{V}_{h+1}})\\
\end{align*}
Plug it back to \eqref{eqn:decomp_iota} to get
\begin{equation}\label{eqn:main_decomposition}
\begin{aligned}
&\nabla f(\widehat{\theta}_{h},\phi(s,a))\left(\theta_{\mathbb{T}\widehat{V}_{h+1}}-\widehat{\theta}_{h}\right)\\
=&\nabla f(\widehat{\theta}_{h},\phi(s,a))\Sigma^{-1}_h[Z_h(\theta_{\mathbb{T}\widehat{V}_{h+1}}|\widehat{V}_{h+1})]-\nabla f(\widehat{\theta}_h,\phi(s,a))\Sigma^{-1}_h \widetilde{R}_K(\theta_{\mathbb{T}\widehat{V}_{h+1}})\\
=&\nabla f(\widehat{\theta}_{h},\phi(s,a))\Sigma^{-1}_h[\sum_{k=1}^K \left( f(\theta_{\mathbb{T}\widehat{V}_{h+1}},\phi_{h,k})-r_{h,k}-\widehat{V}_{h+1}(s^k_{h+1})\right)\cdot \nabla^\top_\theta f(\theta_{\mathbb{T}\widehat{V}_{h+1}},\phi_{h,k})+\lambda \theta_{\mathbb{T}\widehat{V}_{h+1}}]\\
-&\nabla f(\widehat{\theta}_h,\phi(s,a))\Sigma^{-1}_h \widetilde{R}_K(\theta_{\mathbb{T}\widehat{V}_{h+1}})\\
=&\underbrace{\nabla f(\widehat{\theta}_{h},\phi(s,a))\Sigma_h^{-1}[\sum_{k=1}^K \left( f(\theta_{\mathbb{T}\widehat{V}_{h+1}},\phi_{h,k})-r_{h,k}-\widehat{V}_{h+1}(s^k_{h+1})\right)\cdot \nabla^\top_\theta f(\theta_{\mathbb{T}\widehat{V}_{h+1}},\phi_{h,k})]}_{:=I}\\
-&\underbrace{\nabla f(\widehat{\theta}_h,\phi(s,a))\Sigma^{-1}_h \left[\widetilde{R}_K(\theta_{\mathbb{T}\widehat{V}_{h+1}})+\lambda \theta_{\mathbb{T}\widehat{V}_{h+1}}\right]}_{:=\mathrm{Hot}_2}
\end{aligned}
\end{equation}
We will bound second term $\mathrm{Hot}_2$ to have higher order $O(\frac{1}{K})$ in Section~\ref{sec:hot2} and focus on the first term. By direct decomposition, 
{\small
	\begin{align*}
	I:=&\nabla f(\widehat{\theta}_{h},\phi(s,a))\Sigma_h^{-1}[\sum_{k=1}^K \left( f(\theta_{\mathbb{T}\widehat{V}_{h+1}},\phi_{h,k})-r_{h,k}-\widehat{V}_{h+1}(s^k_{h+1})\right)\cdot \nabla^\top_\theta f(\theta_{\mathbb{T}\widehat{V}_{h+1}},\phi_{h,k})]\\
	=&\underbrace{\nabla f(\widehat{\theta}_{h},\phi(s,a))\Sigma_h^{-1}[\sum_{k=1}^K \left( f(\theta_{\mathbb{T}{V}^\star_{h+1}},\phi_{h,k})-r_{h,k}-{V}^\star_{h+1}(s^k_{h+1})\right)\cdot \nabla^\top_\theta f(\widehat{\theta}_h,\phi_{h,k})]}_{:=I_1}\\
	+&\underbrace{\nabla f(\widehat{\theta}_{h},\phi(s,a))\Sigma_h^{-1}[\sum_{k=1}^K \left( f(\theta_{\mathbb{T}\widehat{V}_{h+1}},\phi_{h,k})-f(\theta_{\mathbb{T}{V}^\star_{h+1}},\phi_{h,k})-\widehat{V}_{h+1}(s^k_{h+1})+{V}^\star_{h+1}(s^k_{h+1})\right)\cdot \nabla^\top_\theta f(\widehat{\theta}_h,\phi_{h,k})]}_{:=I_2}\\
	+&\underbrace{\nabla f(\widehat{\theta}_{h},\phi(s,a))\Sigma_h^{-1}\left[\sum_{k=1}^K \left( f(\theta_{\mathbb{T}\widehat{V}_{h+1}},\phi_{h,k})-r_{h,k}-\widehat{V}_{h+1}(s^k_{h+1})\right)\cdot \left(\nabla^\top_\theta f(\theta_{\mathbb{T}\widehat{V}_{h+1}},\phi_{h,k})-\nabla^\top_\theta f(\widehat{\theta}_h,\phi_{h,k})\right)\right]}_{:=I_3}\\
	\end{align*}
}

\subsection{Bounding the term $I_3$}

We first bound the term $I_3$. We have the following Lemma. 

\begin{lemma}\label{lem:I_3_inter}
	For any fixed $V(\cdot)\in \R^\mathcal{S}$ with $\norm{V}_\infty\leq H$ and any fixed $\theta$ such that $\norm{\theta_{\mathbb{T}V}-\theta}_2\leq \sqrt{\frac{36H^2(\log(H/\delta)+C_{d,\log K})+2\lambda C_\Theta^2}{\kappa K}}$. Let 
	\[
	\widetilde{I}_3:=\nabla f(\widehat{\theta}_{h},\phi(s,a))^\top\Sigma_h^{-1}\left[\sum_{k=1}^K \left( f(\theta_{\mathbb{T}{V}},\phi_{h,k})-r_{h,k}-{V}(s^k_{h+1})\right)\cdot \left(\nabla_\theta f(\theta_{\mathbb{T}{V}},\phi_{h,k})-\nabla_\theta f({\theta},\phi_{h,k})\right)\right],
	\]
	and if $
	K\geq\max\left\{512\frac{\kappa_1^4}{\kappa^2}\left(\log(\frac{2d}{\delta})+d\log(1+\frac{4\kappa_1 D^2\kappa_2C_\Theta K^3}{\lambda^2})\right),\frac{4\lambda}{\kappa}\right\}$, then with probability $1-\delta$, (where $D=\max\{\kappa_1,\sqrt{\frac{(144dH^2\kappa_2^2\left(H^2\log(H/\delta)+C_{d,\log K}\right)+8dH^2\kappa_2^2\lambda C^2_\Theta)\log(d/\delta)}{\kappa}}\}$)

	\[
	|\widetilde{I}_3|\leq 4\kappa_1\sqrt{\frac{(144dH^2\kappa_2^2\left(H^2\log(H/\delta)+C_{d,\log K}\right)+8dH^2\kappa_2^2\lambda C^2_\Theta)\log(d/\delta)}{\kappa^3}}\frac{1}{K}+O(\frac{1}{K^{3/2}}).
	\]
\end{lemma}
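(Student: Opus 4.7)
The plan is to split $|\widetilde{I}_3|$ via Cauchy--Schwarz in the form
\[
|\widetilde{I}_3| \le \norm{\nabla f(\widehat{\theta}_h,\phi(s,a))}_2 \cdot \norm{\Sigma_h^{-1}}_2 \cdot \Bigl\|\sum_{k=1}^K Y_k\Bigr\|_2, \qquad Y_k := \bigl(f(\theta_{\mathbb{T}V},\phi_{h,k})-r_{h,k}-V(s_{h+1}^k)\bigr)\bigl(\nabla_\theta f(\theta_{\mathbb{T}V},\phi_{h,k})-\nabla_\theta f(\theta,\phi_{h,k})\bigr),
\]
and to control each of the three factors separately. The first is immediate from Remark~\ref{remark:fc}: $\norm{\nabla f(\widehat{\theta}_h,\phi(s,a))}_2 \le \kappa_1$.

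For the third factor, I would exploit the zero-mean structure of $Y_k$. Because $\epsilon_{\mathcal{F}}=0$ and $V$ is fixed, Definition~\ref{def:pBo} gives $f(\theta_{\mathbb{T}V},\phi_{h,k}) = (\mathcal{P}_h V)(s_h^k,a_h^k)$, so the residual factor satisfies $\E[\,\cdot\, \mid s_h^k,a_h^k]=0$; meanwhile, for fixed $\theta$, the gradient difference depends only on $\phi_{h,k}$ and is thus $(s_h^k,a_h^k)$-measurable. Hence $\E[Y_k \mid s_h^k,a_h^k] = 0$. A mean-value/integration argument on the Hessian together with the bound $\norm{\nabla^2_{\theta\theta} f}_2 \le \kappa_2$ gives
\[
\norm{\nabla_\theta f(\theta_{\mathbb{T}V},\phi_{h,k}) - \nabla_\theta f(\theta,\phi_{h,k})}_2 \le \kappa_2 \norm{\theta_{\mathbb{T}V}-\theta}_2,
\]
so $\norm{Y_k}_2 \le 2H\kappa_2\norm{\theta_{\mathbb{T}V}-\theta}_2$. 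Applying the vector Hoeffding inequality (Lemma~\ref{lem:Hoeffding_vector}) conditionally on $\{(s_h^k,a_h^k)\}_k$ and marginalizing (as already done in the proof of Lemma~\ref{lem:delta_sigma}) yields $\|\sum_k Y_k\|_2 \lesssim H\kappa_2 \norm{\theta_{\mathbb{T}V}-\theta}_2 \sqrt{Kd\log(d/\delta)}$ with probability at least $1-\delta$.

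For the second factor, the uniform coverage condition $(\star\star)$ gives $\Sigma^p_h(\widehat{\theta}_h) \succeq \kappa I$ for every $\widehat{\theta}_h\in\Theta$. A matrix concentration inequality applied to the empirical gram $\tfrac{1}{K}\sum_{k=1}^K \nabla_\theta f(\widehat{\theta}_h,\phi_{h,k})\nabla_\theta f(\widehat{\theta}_h,\phi_{h,k})^\top$ then shows it stays within $\kappa/2$ of its population counterpart, so $\Sigma_h \succeq (K\kappa/2) I$ and $\norm{\Sigma_h^{-1}}_2 \le 2/(K\kappa)$. Since $\widehat{\theta}_h$ is data-dependent, the statement must be made uniform over $\Theta$ by an $\varepsilon$-net at resolution $\sim 1/K^c$; the map $\theta \mapsto \nabla_\theta f(\theta,\phi)\nabla_\theta f(\theta,\phi)^\top$ is Lipschitz with constant $\lesssim \kappa_1\kappa_2$ by Remark~\ref{remark:fc}, which makes the covering/union bound go through and is exactly what forces the term $d\log(1+4\kappa_1 D^2\kappa_2 C_\Theta K^3/\lambda^2)$ in the required sample size.

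Substituting the hypothesis $\norm{\theta_{\mathbb{T}V}-\theta}_2 \le \sqrt{(36H^2(\log(H/\delta)+C_{d,\log K})+2\lambda C_\Theta^2)/(\kappa K)}$ and multiplying the three estimates gives
\[
|\widetilde{I}_3| \lesssim \kappa_1 \cdot \tfrac{2}{K\kappa} \cdot H\kappa_2 \sqrt{Kd\log(d/\delta)} \cdot \sqrt{\tfrac{36H^2(\log(H/\delta)+C_{d,\log K})+2\lambda C_\Theta^2}{\kappa K}},
\]
which, after collecting constants and logs, matches the displayed bound up to the advertised $O(K^{-3/2})$ slack absorbed by the net radius. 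The principal obstacle I anticipate is the second factor: upgrading the fixed-$\widehat{\theta}_h$ matrix concentration for $\Sigma_h$ to a statement that holds uniformly in $\widehat{\theta}_h\in\Theta$ without a cover exploding in the feature dimension $m$ or in $H$. This is what dictates both the sharp Lipschitz constant $\lesssim \kappa_1\kappa_2$ used above and the specific form of the $K$-requirement through $D$.
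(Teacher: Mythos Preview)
Your proposal is correct and follows essentially the same route as the paper. The paper applies Cauchy--Schwarz in the $\Sigma_h^{-1}$-weighted inner product to obtain $|\widetilde{I}_3|\le\norm{\nabla f(\widehat{\theta}_h,\phi)}_{\Sigma_h^{-1}}\cdot\norm{\sum_k Y_k}_{\Sigma_h^{-1}}$ and then invokes Lemma~\ref{lem:sqrt_K_reduction} twice (once for each factor) to pull out $2/\sqrt{\kappa K}$ from each norm; your factorization $\norm{\nabla f}_2\cdot\norm{\Sigma_h^{-1}}_2\cdot\norm{\sum_k Y_k}_2$ with a direct $\lambda_{\min}(\Sigma_h)\ge K\kappa/2$ bound is the same estimate unpacked, and your matrix-concentration-plus-covering step for the data-dependent $\widehat{\theta}_h$ is precisely what Lemma~\ref{lem:sqrt_K_reduction} does internally. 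The zero-mean argument for $Y_k$ conditional on $(s_h^k,a_h^k)$, the Hessian bound $\norm{\nabla f(\theta_{\mathbb{T}V},\cdot)-\nabla f(\theta,\cdot)}_2\le\kappa_2\norm{\theta_{\mathbb{T}V}-\theta}_2$, and the conditional-then-marginalize application of vector Hoeffding all match the paper exactly.
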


\begin{proof}[Proof of Lemma~\ref{lem:I_3_inter}]
	Indeed, with probability $1-\delta/2$, 
	{\small
		\begin{align*}
		|\widetilde{I}_3|=&\norm{\nabla f(\widehat{\theta}_{h},\phi(s,a))^\top\Sigma_h^{-1}\left[\sum_{k=1}^K \left( f(\theta_{\mathbb{T}{V}},\phi_{h,k})-r_{h,k}-{V}(s^k_{h+1})\right)\cdot \left(\nabla_\theta f(\theta_{\mathbb{T}{V}},\phi_{h,k})-\nabla_\theta f({\theta},\phi_{h,k})\right)\right]}\\
		\leq& \norm{\nabla f(\widehat{\theta}_{h},\phi(s,a))}_{\Sigma_h^{-1}}\norm{\sum_{k=1}^K \left( f(\theta_{\mathbb{T}{V}},\phi_{h,k})-r_{h,k}-{V}(s^k_{h+1})\right)\cdot \left(\nabla_\theta f(\theta_{\mathbb{T}{V}},\phi_{h,k})-\nabla_\theta f({\theta},\phi_{h,k})\right)}_{\Sigma_h^{-1}}\\
		\leq& \left(\frac{2\kappa_1}{\sqrt{\kappa K}}+O(\frac{1}{K})\right)\norm{\sum_{k=1}^K \left( f(\theta_{\mathbb{T}{V}},\phi_{h,k})-r_{h,k}-{V}(s^k_{h+1})\right)\cdot \left(\nabla_\theta f(\theta_{\mathbb{T}{V}},\phi_{h,k})-\nabla_\theta f({\theta},\phi_{h,k})\right)}_{\Sigma_h^{-1}}
		\end{align*}
	}where, under the condition $
	K\geq\max\left\{512\frac{\kappa_1^4}{\kappa^2}\left(\log(\frac{2d}{\delta})+d\log(1+\frac{4\kappa_1^3\kappa_2C_\Theta K^3}{\lambda^2})\right),\frac{4\lambda}{\kappa}\right\}$, we applied Lemma~\ref{lem:sqrt_K_reduction} .

	Next, on one hand, $\norm{\nabla_\theta f(\theta_{\mathbb{T}{V}},\phi_{h,k})-\nabla_\theta f({\theta},\phi_{h,k})}_2\leq \kappa_2\cdot \norm{\theta_{\mathbb{T}V}-\theta}_2\leq \kappa_2\sqrt{\frac{36H^2(\log(H/\delta)+C_{d,\log K})+2\lambda C_\Theta^2}{\kappa K}}$. On the other hand, 
	\begin{align*}
	&\E\left[\left( f(\theta_{\mathbb{T}{V}},\phi_{h,k})-r_{h,k}-{V}(s^k_{h+1})\right)\cdot \left(\nabla^\top_\theta f(\theta_{\mathbb{T}{V}},\phi_{h,k})-\nabla^\top_\theta f({\theta},\phi_{h,k})\right)\middle| s_h^k,a_h^k\right]\\
	=&\E\left[\left( f(\theta_{\mathbb{T}{V}},\phi_{h,k})-r_{h,k}-{V}(s^k_{h+1})\right)\middle| s_h^k,a_h^k\right]\cdot \left(\nabla^\top_\theta f(\theta_{\mathbb{T}{V}},\phi_{h,k})-\nabla^\top_\theta f({\theta},\phi_{h,k})\right)\\
	=&\left((\mathcal{P}_{h}V)(s_h^k,a_h^k)-(\mathcal{P}_hV)(s_h^k,a_h^k)\right)\cdot \left(\nabla^\top_\theta f(\theta_{\mathbb{T}{V}},\phi_{h,k})-\nabla^\top_\theta f({\theta},\phi_{h,k})\right)=0
	\end{align*}
	Therefore by Vector Hoeffding's inequality (Lemma~\ref{lem:Hoeffding_vector}) (also note the condition for boundedness $\norm{ \left( f(\theta_{\mathbb{T}{V}},\phi_{h,k})-r_{h,k}-{V}(s^k_{h+1})\right)\cdot \left(\nabla_\theta f(\theta_{\mathbb{T}{V}},\phi_{h,k})-\nabla_\theta f({\theta},\phi_{h,k})\right)}_2\leq H\kappa_2\cdot \norm{\theta_{\mathbb{T}V}-\theta}_2\leq H\kappa_2\sqrt{\frac{36H^2(\log(H/\delta)+C_{d,\log K})+2\lambda C_\Theta^2}{\kappa K}}$) with probability $1-\delta/2$,
	{\small
		\begin{align*}
		&\norm{\frac{1}{K}\sum_{k=1}^K \left( f(\theta_{\mathbb{T}{V}},\phi_{h,k})-r_{h,k}-{V}(s^k_{h+1})\right)\cdot \left(\nabla_\theta f(\theta_{\mathbb{T}{V}},\phi_{h,k})-\nabla_\theta f({\theta},\phi_{h,k})\right)}_2\\
		&\leq \sqrt{\frac{4d\left(H\kappa_2\sqrt{\frac{36H^2(\log(H/\delta)+C_{d,\log K})+2\lambda C_\Theta^2}{\kappa K}}\right)^2\log(\frac{d}{\delta})}{K}}\\
		&=\sqrt{\frac{(144dH^2\kappa_2^2\left(H^2\log(H/\delta)+C_{d,\log K}\right)+8dH^2\kappa_2^2\lambda C^2_\Theta)\log(d/\delta)}{\kappa}}\cdot\frac{1}{K}
		\end{align*}
	}and this implies with probability $1-\delta/2$,
	\begin{align*}
	&\norm{\sum_{k=1}^K \left( f(\theta_{\mathbb{T}{V}},\phi_{h,k})-r_{h,k}-{V}(s^k_{h+1})\right)\cdot \left(\nabla_\theta f(\theta_{\mathbb{T}{V}},\phi_{h,k})-\nabla_\theta f({\theta},\phi_{h,k})\right)}_2\\
	&\leq\sqrt{\frac{(144dH^2\kappa_2^2\left(H^2\log(H/\delta)+C_{d,\log K}\right)+8dH^2\kappa_2^2\lambda C^2_\Theta)\log(d/\delta)}{\kappa}}
	\end{align*}
	choose $u=\sum_{k=1}^K \left( f(\theta_{\mathbb{T}{V}},\phi_{h,k})-r_{h,k}-{V}(s^k_{h+1})\right)\cdot \left(\nabla_\theta f(\theta_{\mathbb{T}{V}},\phi_{h,k})-\nabla_\theta f({\theta},\phi_{h,k})\right)$ in Lemma~\ref{lem:sqrt_K_reduction}, by a union bound we obtain with probability $1-\delta$
	{\small
		\begin{align*}
		|\widetilde{I}_3|=&\norm{\nabla f(\widehat{\theta}_{h},\phi(s,a))^\top\Sigma_h^{-1}\left[\sum_{k=1}^K \left( f(\theta_{\mathbb{T}{V}},\phi_{h,k})-r_{h,k}-{V}(s^k_{h+1})\right)\cdot \left(\nabla_\theta f(\theta_{\mathbb{T}{V}},\phi_{h,k})-\nabla_\theta f({\theta},\phi_{h,k})\right)\right]}\\
		\leq& \left(\frac{2\kappa_1}{\sqrt{\kappa K}}+O(\frac{1}{K})\right)\norm{\sum_{k=1}^K \left( f(\theta_{\mathbb{T}{V}},\phi_{h,k})-r_{h,k}-{V}(s^k_{h+1})\right)\cdot \left(\nabla_\theta f(\theta_{\mathbb{T}{V}},\phi_{h,k})-\nabla_\theta f({\theta},\phi_{h,k})\right)}_{\Sigma_h^{-1}}\\
		\leq&\left(\frac{2\kappa_1}{\sqrt{\kappa K}}+O(\frac{1}{K})\right)\left(2\sqrt{\frac{(144dH^2\kappa_2^2\left(H^2\log(H/\delta)+C_{d,\log K}\right)+8dH^2\kappa_2^2\lambda C^2_\Theta)\log(d/\delta)}{\kappa^2K}}+O(\frac{1}{K})\right)\\
		=&4\kappa_1\sqrt{\frac{(144dH^2\kappa_2^2\left(H^2\log(H/\delta)+C_{d,\log K}\right)+8dH^2\kappa_2^2\lambda C^2_\Theta)\log(d/\delta)}{\kappa^3}}\frac{1}{K}+O(\frac{1}{K^{3/2}}).
		\end{align*}
	}

\end{proof}

\begin{lemma}\label{lem:I_3_final}
	Under the same condition as Lemma~\ref{lem:I_3_inter}. With probability $1-\delta$,
	{\small
		\[
		|I_3|\leq  4\kappa_1\sqrt{\frac{(144dH^2\kappa_2^2\left(H^2\log(H/\delta)+D_{d,\log K}+C_{d,\log K}\right)+8dH^2\kappa_2^2\lambda C^2_\Theta)(\log(d/\delta)+D_{d,\log K})}{\kappa^3}}\frac{1}{K}+O(\frac{1}{K^{3/2}}).
		\]}Here {\small$D_{d,\log K}:=d\cdot\log(1+{6C_\Theta(2\kappa_1^2+H\kappa_2)K})+ d\log(1+{6C_\Theta H\kappa_2K})+
		d\log\left(1+{288C_\Theta\kappa_1^2 (\kappa_1\sqrt{C_\Theta}+2\sqrt{B\kappa_1\kappa_2})^2K^2}\right)+d^2\log\left(1+{288\sqrt{d}B\kappa_1^4K^2}\right)=\widetilde{O}(d^2)$} with $\widetilde{O}$ absorbs Polylog terms.
\end{lemma}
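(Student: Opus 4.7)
\textbf{Proof proposal for Lemma~\ref{lem:I_3_final}.} The plan is to lift Lemma~\ref{lem:I_3_inter} (which handles a \emph{deterministic} pair $(V,\theta)$ with $\|\theta_{\mathbb{T}V}-\theta\|_2$ small) to the data-dependent pair $(\widehat{V}_{h+1},\widehat{\theta}_h)$ by a covering/union-bound argument. The key observation is that $\widehat{\theta}_h$ already satisfies the hypothesis of Lemma~\ref{lem:I_3_inter} via Theorem~\ref{thm:theta_diff} (giving $\|\theta_{\mathbb{T}\widehat V_{h+1}}-\widehat\theta_h\|_2 \lesssim \sqrt{1/K}$), so we only need uniform control over a \emph{finite} set of candidate $(V,\theta)$ that is rich enough to capture the random pair up to an $O(1/K)$ discretization error.

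First, I would enumerate the data-dependent objects that $\widetilde I_3$ depends on: the outer gradient anchor $\widehat\theta_h$, the inner linearization point (which for Lemma~\ref{lem:I_3_inter} is an arbitrary $\theta$ close to $\theta_{\mathbb T V}$), and the value function $\widehat V_{h+1}$. The latter is determined by $\widehat\theta_{h+1}$, by the Gram matrix $\Sigma_{h+1}$ (a $d\times d$ PSD object), and by the pessimism scalar $\beta$ (fixed). So I would build four $\epsilon$-nets:
\begin{itemize}
\item $\mathcal N_\theta$ over $\Theta\subset\R^d$ at resolution $\epsilon_1 = 1/K$ for the outer/inner parameter arguments (two copies, since the quantity depends on both $\widehat\theta_h$ and $\theta$);
\item $\mathcal N_V$ over $\widehat\theta_{h+1}$ at resolution $\epsilon_2=1/K$, propagating through $V(\cdot)=\max_a\min\{f(\cdot)-\Gamma(\cdot),H\}^+$;
\item $\mathcal N_\Sigma$ over $d\times d$ matrices with operator-norm bound $K\kappa_1^2+\lambda$ at resolution $\epsilon_3=1/K^2$ to control the bonus $\Gamma$ inside $\widehat V_{h+1}$.
\end{itemize}
The sizes $|\mathcal N_\theta|\leq (1+2C_\Theta K(2\kappa_1^2+H\kappa_2))^d$, $|\mathcal N_V|$ similar, and $|\mathcal N_\Sigma|$ of order $(1+cK^2\cdot\text{stuff})^{d^2}$ match exactly the four summands appearing in the $D_{d,\log K}$ expression, which is a useful sanity check.

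Next, I would run Lemma~\ref{lem:I_3_inter} simultaneously over $\mathcal N_\theta\times\mathcal N_\theta\times\mathcal N_V\times\mathcal N_\Sigma$ with failure probability $\delta/|\mathrm{net}|$, producing the $\log(d/\delta)+D_{d,\log K}$ term inside the square root (and also boosting the $C_{d,\log K}$ factor that originally came from Theorem~\ref{thm:theta_diff}, since the $\|\theta_{\mathbb T V}-\theta\|_2$ bound must also hold uniformly). Then for the actual random pair $(\widehat V_{h+1},\widehat\theta_h)$, I would pick its closest net element $(\tilde V,\tilde\theta,\tilde\Sigma)$ and bound
\[
|I_3|\;\leq\;|\widetilde I_3(\tilde V,\tilde\theta,\tilde\Sigma)|+|\,I_3-\widetilde I_3(\tilde V,\tilde\theta,\tilde\Sigma)\,|.
\]
The first piece is what the union bound gives. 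The second piece is a Lipschitz perturbation: each of the factors $\nabla f(\widehat\theta_h,\phi)^\top\Sigma_h^{-1}$, $f(\theta_{\mathbb T V},\phi_{h,k})-r-V(s')$, and $\nabla f(\theta_{\mathbb T V},\phi)-\nabla f(\theta,\phi)$ is Lipschitz in its arguments with constants controlled by $\kappa_1,\kappa_2$, $\lambda^{-1}$, and $H$; choosing the net resolutions as above ensures this perturbation is $O(1/K^{3/2})$ and is absorbed into the stated higher-order term.

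The main obstacle I expect is the Lipschitz analysis of the implicitly defined map $V\mapsto\theta_{\mathbb T V}$ inside $\widetilde I_3$ and the analogous map through $\Sigma_h^{-1}$ (where a naive bound gives $1/\lambda_{\min}^2$ dependence). To keep these Lipschitz constants polynomial in the problem parameters I would use Assumption~\ref{assum:cover}$(\star)$ to translate closeness of $V$ in sup-norm to closeness of $\theta_{\mathbb T V}$ in $\ell_2$ (via $\kappa$), and use the lower eigenvalue bound on $\Sigma_h/K$ from the proof of Lemma~\ref{lem:sqrt_K_reduction} to control $\|\Sigma_h^{-1}-\tilde\Sigma^{-1}\|$; the $K^2$ factor inside the $\Sigma$-net is precisely what kills the resulting $1/\kappa^2$-scale Lipschitz constant at cost of only a $\log K$ penalty. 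After this step, the statement of Lemma~\ref{lem:I_3_final} follows by substituting the enlarged confidence parameter into the bound of Lemma~\ref{lem:I_3_inter} and collecting $O(1/K^{3/2})$ residuals.
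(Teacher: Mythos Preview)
Your proposal is correct in spirit and follows the same high-level route as the paper: cover the data-dependent arguments, apply Lemma~\ref{lem:I_3_inter} uniformly over the net, and absorb the discretization error into the $O(1/K^{3/2})$ term. Two points where the paper is more economical than your plan are worth noting.

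First, you identify the implicit map $V\mapsto\theta_{\mathbb{T}V}$ as an obstacle and propose to control its Lipschitz constant via Assumption~\ref{assum:cover}$(\star)$. The paper avoids this entirely by \emph{decoupling}: it writes the inner sum as a function $h(V,\widetilde\theta,\theta)$ of three independent arguments and covers $\widetilde\theta\in\Theta$ directly, without ever tying $\widetilde\theta$ to $\theta_{\mathbb{T}V}$. Since $\theta_{\mathbb{T}\widehat V_{h+1}}\in\Theta$ automatically, a net point close to it exists, and the Lipschitz analysis of $h$ in $\widetilde\theta$ is elementary. This is the source of the first term $d\log(1+6C_\Theta(2\kappa_1^2+H\kappa_2)K)$ in $D_{d,\log K}$.

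Second, you propose to include $\widehat\theta_h$ and $\Sigma_h$ (the outer prefix $\nabla f(\widehat\theta_h,\phi)^\top\Sigma_h^{-1}$) in the external covering. The paper instead factors $|I_3-\widetilde I_3|\leq\|\nabla f(\widehat\theta_h,\phi)\|_{\Sigma_h^{-1}}\cdot\|h(\widehat V_{h+1},\theta_{\mathbb{T}\widehat V_{h+1}},\widehat\theta_h)-h(V,\theta_{\mathbb{T}V},\theta)\|_{\Sigma_h^{-1}}$ and bounds the first factor once via Lemma~\ref{lem:sqrt_K_reduction}, which already contains its own internal covering over $\theta$ and holds for all $u$ simultaneously. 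Only the second factor requires the external net over $(V,\widetilde\theta,\theta)$; the $d^2$ term in $D_{d,\log K}$ then comes from covering the matrix $A$ inside the parametric class $\mathcal V$ of Lemma~\ref{lem:cover_V} (which encodes $\Sigma_{h+1}^{-1}$ through the bonus in $\widehat V_{h+1}$), not from covering $\Sigma_h$ itself. Your route would also work but carries extra bookkeeping.
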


\begin{proof}[Proof of Lemma~\ref{lem:I_3_final}]
	Define 
	\[h(V,\widetilde{\theta},\theta)=\sum_{k=1}^K \left( f(\widetilde{\theta},\phi_{h,k})-r_{h,k}-{V}(s^k_{h+1})\right)\cdot \left(\nabla_\theta f(\widetilde{\theta},\phi_{h,k})-\nabla_\theta f({\theta},\phi_{h,k})\right),\] then 
	\begin{align*}
	&|h(V_1,\widetilde{\theta}_1,\theta_1)-h(V_2,\widetilde{\theta}_2,\theta_2)|\\
	\leq& \left|\sum_{k=1}^K \left([f(\widetilde{\theta}_1,\phi_{h,k})-{V}_1(s^k_{h+1})]-[f(\widetilde{\theta}_2,\phi_{h,k})-{V}_2(s^k_{h+1})]\right)\cdot \left(\nabla_\theta f(\widetilde{\theta}_1,\phi_{h,k})-\nabla_\theta f({\theta}_1,\phi_{h,k})\right)\right|\\
	+&\left|\sum_{k=1}^K \left(f(\widetilde{\theta}_2,\phi_{h,k})-r_{h,k}-{V}_2(s^k_{h+1})\right)\cdot \left([\nabla_\theta f(\widetilde{\theta}_1,\phi_{h,k})-\nabla_\theta f({\theta}_1,\phi_{h,k})]-[\nabla_\theta f(\widetilde{\theta}_2,\phi_{h,k})-\nabla_\theta f({\theta}_2,\phi_{h,k})]\right)\right|\\
	\leq &K\sup_{s,a,s'}\left|[f(\widetilde{\theta}_1,\phi(s,a))-f(\widetilde{\theta}_2,\phi(s,a))]-[{V}_1(s')-{V}_2(s')]\right|_2\cdot 2\kappa_1\\
	+&KH\cdot\sup_{s,a}\norm{[\nabla_\theta f(\widetilde{\theta}_1,\phi(s,a))-\nabla_\theta f({\theta}_1,\phi(s,a))]-[\nabla_\theta f(\widetilde{\theta}_2,\phi(s,a))-\nabla_\theta f({\theta}_2,\phi(s,a))]}_2\\
	\leq &K 2\kappa^2_1\norm{\widetilde{\theta}_1-\widetilde{\theta}_2}_2+2K\kappa_1\norm{V_1-V_2}_\infty
	+HK\kappa_2\norm{\widetilde{\theta}_1-\widetilde{\theta}_2}_2+HK\kappa_2\norm{{\theta}_1-{\theta}_2}_2\\
	=&(2\kappa^2_1+H\kappa_2)K\norm{\widetilde{\theta}_1-\widetilde{\theta}_2}_2+2\kappa_1K\norm{V_1-V_2}_\infty+HK\kappa_2\norm{{\theta}_1-{\theta}_2}_2.
	\end{align*}
	Let $\mathcal{C}_a$ be the $\frac{\epsilon/3}{(2\kappa_1^2+H\kappa_2)K}$-covering net of $\{\theta:\norm{\theta}_2\leq C_\Theta\}$, $\mathcal{C}_V$ be the $\frac{\epsilon}{6\kappa_1K}$-covering net of $\mathcal{V}$ defined in Lemma~\ref{lem:cover_V} and $\mathcal{C}_b$ be the $\frac{\epsilon}{3H\kappa_2K}$-covering net of  $\{\theta:\norm{\theta}_2\leq C_\Theta\}$, then by Lemma~\ref{lem:cov} and Lemma~\ref{lem:cover_V},
	\begin{align*}
	\log|\mathcal{C}_a|\leq d\cdot\log(1+\frac{6C_\Theta(2\kappa_1^2+H\kappa_2)K}{\epsilon}),\;\log|\mathcal{C}_b|\leq d\log(1+\frac{6C_\Theta H\kappa_2K}{\epsilon})\\
	\log \mathcal{C}_V\leq d\log\left(1+\frac{288C_\Theta\kappa_1^2 (\kappa_1\sqrt{C_\Theta}+2\sqrt{B\kappa_1\kappa_2})^2K^2}{\epsilon^2}\right)+d^2\log\left(1+\frac{288\sqrt{d}B\kappa_1^4K^2}{\epsilon^2}\right).
	\end{align*}
	Further notice with probability $1-\delta/2$ (by Lemma~\ref{lem:sqrt_K_reduction}),  for all fixed sets of parameters $\theta,V$ satisfies $\norm{\theta_{\mathbb{T}{V}}-{\theta}}_2\leq\sqrt{\frac{36H^2(\log(2H/\delta)+C_{d,\log K})+2\lambda C_\Theta^2}{\kappa K}}$ simultaneously,
	\begin{align*}
	|I_3-\widetilde{I}_3|\leq &\norm{\nabla f(\widehat{\theta}_h,\phi(s,a))}_{\Sigma_h^{-1}}\cdot \norm{h(\widehat{V}_{h+1},\theta_{\mathbb{T}\widehat{V}_{h+1}},\widehat{\theta}_h)-h(V,\theta_{\mathbb{T}{V}},{\theta})}_{\Sigma_h^{-1}}\\
	\leq & \left(\frac{2\kappa_1}{\sqrt{\kappa K}}+O(\frac{1}{K})\right)  \cdot \norm{h(\widehat{V}_{h+1},\theta_{\mathbb{T}\widehat{V}_{h+1}},\widehat{\theta}_h)-h(V,\theta_{\mathbb{T}{V}},{\theta})}_{\Sigma_h^{-1}}
	\end{align*}
	and $\norm{\theta_{\mathbb{T}\widehat{V}_{h+1}}-\widehat{\theta}_h}_2\leq\sqrt{\frac{36H^2(\log(2H/\delta)+C_{d,\log K})+2\lambda C_\Theta^2}{\kappa K}}$ with probability $1-\delta/2$ by Theorem~\ref{thm:theta_diff}. Now, choosing $\epsilon = O(1/K^2)$ and by Lemma~\ref{lem:I_3_inter} and union bound over covering instances, we obtain with probability $1-\delta$
	{\small
		\[
		|I_3|\leq  4\kappa_1\sqrt{\frac{(144dH^2\kappa_2^2\left(H^2\log(H/\delta)+D_{d,\log K}+C_{d,\log K}\right)+8dH^2\kappa_2^2\lambda C^2_\Theta)(\log(d/\delta)+D_{d,\log K})}{\kappa^3}}\frac{1}{K}+O(\frac{1}{K^{3/2}}).
		\]}
\end{proof}

\subsection{Bounding the second term $I_2$}

In this section, we bound the term
{\small
	\begin{align*}
	I_2:={\nabla f(\widehat{\theta}_{h},\phi(s,a))\Sigma_h^{-1}[\sum_{k=1}^K \left( f(\theta_{\mathbb{T}\widehat{V}_{h+1}},\phi_{h,k})-f(\theta_{\mathbb{T}{V}^\star_{h+1}},\phi_{h,k})-\widehat{V}_{h+1}(s^k_{h+1})+{V}^\star_{h+1}(s^k_{h+1})\right)\cdot \nabla^\top_\theta f(\widehat{\theta}_h,\phi_{h,k})]}.
	\end{align*}
}The following Lemma shows that $I_2$ is a higher-order error term with rate $\widetilde{O}(\frac{1}{K})$.

\begin{lemma}[Bounding $I_2$]\label{lem:I_2}
	If $K$ satisfies 
	$
	K\geq 512\frac{\kappa_1^4}{\kappa^2}\left(\log(\frac{2d}{\delta})+d\log(1+\frac{4\kappa_1^3\kappa_2C_\Theta K}{\lambda^2})\right),
	$ and $K\geq 4\lambda/\kappa$, then with probability $1-\delta$
	\[
	|I_2|\leq \widetilde{O}(\frac{\kappa_1^2H^2d^2}{\kappa K})+\widetilde{O}(\frac{1}{K^{3/2}}).
	\]
	Here $\widetilde{O}$ absorbs constants and Polylog terms.
\end{lemma}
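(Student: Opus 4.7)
The strategy parallels the treatment of $I_3$ in Lemma~\ref{lem:I_3_final} but exploits the martingale-difference structure that appears after invoking the parameter Bellman operator. The plan is the following.

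\textbf{Step 1 (rewrite using Bellman completeness).} Since $\epsilon_\mathcal{F}=0$, the parameter Bellman operator satisfies $f(\theta_{\mathbb{T}V},\phi(s,a))=(\mathcal{P}_h V)(s,a)$ for any bounded $V$. Letting $W_{h+1}:=\widehat{V}_{h+1}-V^\star_{h+1}$, the bracketed expression in $I_2$ simplifies to $(\mathcal{P}_h W_{h+1})(s_h^k,a_h^k)-W_{h+1}(s_{h+1}^k)$, which is a conditionally centered quantity: $\E[(\mathcal{P}_h W_{h+1})(s_h^k,a_h^k)-W_{h+1}(s_{h+1}^k)\mid s_h^k,a_h^k]=0$ whenever $W_{h+1}$ is fixed.

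\textbf{Step 2 (Cauchy--Schwarz in $\Sigma_h^{-1}$).} Apply Cauchy--Schwarz in the $\Sigma_h^{-1}$ norm to split $I_2$ into $\|\nabla f(\widehat{\theta}_h,\phi(s,a))\|_{\Sigma_h^{-1}}$ times $\bigl\|\sum_{k=1}^K[(\mathcal{P}_h W_{h+1})(s_h^k,a_h^k)-W_{h+1}(s_{h+1}^k)]\nabla_\theta f(\widehat{\theta}_h,\phi_{h,k})\bigr\|_{\Sigma_h^{-1}}$. The first factor is bounded by $\tfrac{2\kappa_1}{\sqrt{\kappa K}}+O(1/K)$ via Lemma~\ref{lem:sqrt_K_reduction} under the sample-size condition of the lemma.

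\textbf{Step 3 (concentration for fixed $(W,\bar\theta)$).} For a deterministic $V$ with $\|V-V^\star_{h+1}\|_\infty\le\eta$ and a deterministic $\bar\theta\in\Theta$, the summands $\bigl[(\mathcal{P}_h(V-V^\star_{h+1}))(s_h^k,a_h^k)-(V-V^\star_{h+1})(s_{h+1}^k)\bigr]\nabla_\theta f(\bar\theta,\phi_{h,k})$ form a vector-valued MDS with norm bounded by $2\eta\kappa_1$. A conditional Vector Hoeffding inequality (Lemma~\ref{lem:Hoeffding_vector}) gives, with probability $1-\delta$, $\|\tfrac1K\sum_k(\cdot)\|_2 \lesssim \eta\kappa_1\sqrt{d\log(d/\delta)/K}$. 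Converting the $\ell_2$-norm to the $\Sigma_h^{-1}$-norm using $\lambda_{\min}(\Sigma_h)\gtrsim \kappa K$ (which holds on the high-probability event guaranteed by Assumption~\ref{assum:cover} together with Lemma~\ref{lem:sqrt_K_reduction}) yields $\|\sum_k(\cdot)\|_{\Sigma_h^{-1}}\lesssim \eta\kappa_1\sqrt{d\log(d/\delta)/\kappa}$.

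\textbf{Step 4 (covering and self-bounding).} Extend Step~3 to random $\widehat{\theta}_h$ and random $\widehat{V}_{h+1}$ through a standard covering argument: $\widehat{\theta}_h$ ranges over the compact $\Theta$ (with the Lipschitz constant $\kappa_2$ for $\nabla_\theta f$), and $\widehat{V}_{h+1}$ belongs to the class $\mathcal{V}$ of Lemma~\ref{lem:cover_V}, whose log-covering number is $\widetilde{O}(d^2)$. Choosing the net resolution to be $1/\text{poly}(K)$ contributes only lower-order terms and produces the $\widetilde{O}(d^2)$ logarithmic factor that accounts for the $d^2$ in the target bound. The key quantitative input is $\eta:=\|\widehat{V}_{h+1}-V^\star_{h+1}\|_\infty$, which I will bootstrap next.

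\textbf{Step 5 (bootstrapping $\eta$).} I would show $\|\widehat{V}_{h+1}-V^\star_{h+1}\|_\infty \le \widetilde{O}(H\sqrt{d/(\kappa K)})$ by a backward induction on $h$: using the decomposition $\mathcal{P}_h\widehat{V}_{h+1}-f(\widehat{\theta}_h,\phi)=I_1+I_2+I_3+\mathrm{Hot}_{h,1}-\mathrm{Hot}_2$ established in \eqref{eqn:main_decomposition}, and controlling $I_1$ by a self-normalized martingale bound (main term $\widetilde{O}(dH/\sqrt{\kappa K})$) together with the already-proved $I_3$ bound. Plugging this back into Step~4 gives $|I_2|\le \frac{\kappa_1}{\sqrt{\kappa K}}\cdot \widetilde{O}\bigl(\frac{\kappa_1 H d \sqrt{d}}{\sqrt{\kappa}}\cdot \frac{1}{\sqrt{\kappa K}}\bigr)=\widetilde{O}\bigl(\tfrac{\kappa_1^2 H^2 d^2}{\kappa K}\bigr)$, with the residual $O(1/K^{3/2})$ coming from the lower-order $O(1/K)$ correction in the first factor of Step~2.

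\textbf{Main obstacle.} The delicate point is the circular dependence between the $I_2$ bound and the control of $\eta=\|\widehat{V}_{h+1}-V^\star_{h+1}\|_\infty$: any bound on $\widehat{V}_{h+1}-V^\star_{h+1}$ itself contains an $I_2$-type term at step $h+1$. Resolving this requires a careful backward induction in $h$, or equivalently a self-bounding argument that absorbs $I_2$ into a slightly larger $\eta$; one must check that the sample-size condition $K\gtrsim \kappa_1^4/\kappa^2$ (combined with the coverage lower bound $\lambda_{\min}(\Sigma_h)\gtrsim \kappa K$) is strong enough to make this bootstrap contractive. A secondary technical nuisance is that $\widehat{\theta}_h$ depends on the same samples appearing in $(s_h^k,a_h^k,s_{h+1}^k)$, so the covering net over $\Theta$ must be coupled with the net over $\mathcal{V}$ and handled by a single union bound; this is what produces the $\widetilde{O}(d^2)$ log factor rather than $\widetilde{O}(d)$.
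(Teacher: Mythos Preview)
Your Steps~1--4 match the paper's proof almost exactly: rewrite the bracket via the parameter Bellman operator so that it becomes a conditionally-centered term, apply Cauchy--Schwarz in the $\Sigma_h^{-1}$ norm, control the first factor by Lemma~\ref{lem:sqrt_K_reduction}, concentrate the second factor for a \emph{fixed} pair $(V,\theta)$, and then extend to the random $(\widehat V_{h+1},\widehat\theta_h)$ via a covering argument over $\Theta\times\mathcal V$ (this is what produces the $\widetilde O(d^2)$ log factor). One minor technical difference: the paper applies the self-normalized Hoeffding inequality (Lemma~\ref{lem:Hoeff_mart}) directly in the $G_h(\theta)^{-1}$ norm, rather than your route of Vector Hoeffding in $\ell_2$ followed by a $\lambda_{\min}(\Sigma_h)\gtrsim\kappa K$ conversion; both work and give the same rate.

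The real divergence is your Step~5 and the ``main obstacle'' you flag. The paper has no circularity here because the quantitative input $\eta=\|\widehat V_{h+1}-V^\star_{h+1}\|_\infty\le\widetilde O(\kappa_1 H^2 d/\sqrt{\kappa K})$ is \emph{not} obtained by bootstrapping through the $I_1+I_2+I_3$ decomposition. It is established \emph{beforehand} in Theorem~\ref{thm:theta_diff_star}, whose proof is a backward induction that uses only (i) the GFA-style parameter error bound $\|\widehat\theta_h-\theta_{\mathbb T\widehat V_{h+1}}\|_2$ from Theorem~\ref{thm:theta_diff}/Lemma~\ref{lem:provably_efficient_lemma} and (ii) the crude pessimism bound $\Gamma_h\le\widetilde O(dH\kappa_1/\sqrt{\kappa K})$ from Lemma~\ref{lem:sqrt_K_reduction}. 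Since you are already invoking Theorem~\ref{thm:theta_diff} implicitly (it underlies the $I_3$ bound and the higher-order terms you cite), you have all the ingredients to run the paper's clean induction in Theorem~\ref{thm:theta_diff_star} and then simply \emph{quote} $\eta$ in Step~4---no self-bounding or contractive bootstrap is needed. Your proposed bootstrap could be made to work, but it is strictly more delicate than necessary; the paper's modular route (GFA reduction $\Rightarrow$ value-function sup-norm bound $\Rightarrow$ plug into $I_2$) avoids the obstacle entirely.
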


\begin{proof}[Proof of Lemma~\ref{lem:I_2}]
	\textbf{Step1.} Define $\eta_k(V):= f(\theta_{\mathbb{T}{V}},\phi_{h,k})-f(\theta_{\mathbb{T}{V}^\star_{h+1}},\phi_{h,k})-{V}(s^k_{h+1})+{V}^\star_{h+1}(s^k_{h+1})$ and let $\norm{V(\cdot)}_\infty\leq H$ be any fixed function such that $\sup_{s_h^k,a_h^k,s_{h+1}^k}|\eta_k(V)|\leq \widetilde{O}(\kappa_1H^2\sqrt{\frac{d^2}{\kappa K}})$, \emph{i.e.} arbitrary fixed $V$ function in the neighborhood (measured by $\eta_k$) of $V^\star_{h+1}$. Then by definition of $\mathbb{T}$ it holds $\E[\eta_k(V,\theta)|s_h^k,a_h^k]=0$. Let the fixed $\theta\in\Theta$ be arbitrary and define $x_k(\theta)=\nabla_\theta f({\theta},\phi_{h,k})$. Next, define ${G}_h(\theta)=\sum_{k=1}^K \nabla f(\theta, \phi(s_h^k,a_h^k))\cdot \nabla f(\theta, \phi(s_h^k,a_h^k))^\top+\lambda I_d$, since $\norm{x_k}_2\leq\kappa_1$ and $|\eta_k|\leq \widetilde{O}(\kappa_1H^2\sqrt{\frac{d^2}{\kappa K}})$, by self-normalized Hoeffding's inequality (Lemma~\ref{lem:Hoeff_mart}), with probability $1-\delta$ (recall $t:=K$ in Lemma~\ref{lem:Hoeff_mart}),
	\[
	\left\|\sum_{k=1}^{K} {x}_{k} (\theta)\eta_{k}(V)\right\|_{G_h(\theta)^{-1}} \leq \widetilde{O}(\kappa_1H^2\sqrt{\frac{d^2}{\kappa K}})\sqrt{{d}\log \left(\frac{\lambda+K\kappa_1}{\lambda\delta}\right)}.
	\]
	
	\textbf{Step2.} Define $h(V,\theta):=\sum_{k=1}^{K} {x}_{k} (\theta)\eta_{k}(V)$ and $H(V,\theta):=\left\|\sum_{k=1}^{K} {x}_{k} (\theta)\eta_{k}(V)\right\|_{G_h(\theta)^{-1}}$, then note by definition $|\eta_k(V)|\leq 2H$, which implies $\norm{h(V,\theta)}_2\leq 2KH\kappa_1$ and 
	\[
	|\eta_k(V_1)-\eta_k(V_2)|\leq |\mathcal{P}_hV_1-\mathcal{P}_hV_2|+\norm{V_1-V_2}_\infty\leq 2\norm{V_1-V_2}_\infty
	\]
	and
	\begin{align*}
	\norm{h(V_1,\theta_1)-h(V_2,\theta_2)}_2\leq& K \max_k\left(2H\norm{x_k(\theta_1)-x_k(\theta_2)}_2+\kappa_1|\eta_k(V_1)-\eta_k(V_2)|\right)\\
	\leq&K(2H\kappa_2\norm{\theta_1-\theta_2}_2+2\kappa_1\norm{V_1-V_2}_\infty).
	\end{align*}
	Furthermore,
	{\small
	\begin{align*}
	&\norm{G_h(\theta_1)^{-1}-G_h(\theta_2)^{-1}}_2\leq \norm{G_h(\theta_1)^{-1}}_2\norm{G_h(\theta_1)-G_h(\theta_2)}_2\norm{G_h(\theta_2)^{-1}}_2\\
	\leq& \frac{1}{\lambda^2}K\sup_k\norm{\nabla f(\theta_1, \phi_{h,k})\cdot \nabla f(\theta_1, \phi_{h,k})^\top-\nabla f(\theta_2, \phi_{h,k})\cdot \nabla f(\theta_2, \phi_{h,k})^\top}_2\\
	\leq& \frac{1}{\lambda^2}K\sup_k\left[\norm{(\nabla f(\theta_1, \phi_{h,k})-\nabla f(\theta_2, \phi_{h,k}))\cdot \nabla f(\theta_1, \phi_{h,k})^\top}_2+\norm{\nabla f(\theta_2, \phi_{h,k})\cdot (\nabla f(\theta_1, \phi_{h,k})^\top-\nabla f(\theta_2, \phi_{h,k})^\top)}_2\right]\\
	\leq&\frac{2\kappa_1 K}{\lambda^2}\kappa_2\norm{\theta_1-\theta_2}_2=\frac{2\kappa_1\kappa_2K}{\lambda^2 }\norm{\theta_1-\theta_2}_2.
	\end{align*}
}All the above imply 
	\begin{align*}
	&|H(V_1,\theta_1)-H(V_2,\theta_2)|\leq \sqrt{\left|h(V_1,\theta_1)^\top G_h(\theta_1)^{-1}h(V_1,\theta_1)-h(V_2,\theta_2)^\top G_h(\theta_2)^{-1}h(V_2,\theta_2)\right|}\\
	\leq &\sqrt{\norm{h(V_1,\theta_1)-h(V_2,\theta_2)}_2\cdot \frac{1}{\lambda}\cdot 2KH\kappa_1}
	+\sqrt{2KH\kappa_1\cdot \norm{G_h(\theta_1)^{-1}-G_h(\theta_2)^{-1}}_2\cdot 2KH\kappa_1}\\
	+&\sqrt{ 2KH\kappa_1\cdot\frac{1}{\lambda}\cdot \norm{h(V_1,\theta_1)-h(V_2,\theta_2)}_2}\\
	\leq&2\sqrt{K(2H\kappa_2\norm{\theta_1-\theta_2}_2+2\kappa_1\norm{V_1-V_2}_\infty)\cdot \frac{1}{\lambda}\cdot 2KH\kappa_1}+\sqrt{2KH\kappa_1\cdot \frac{2\kappa_1\kappa_2K}{\lambda^2 }\norm{\theta_1-\theta_2}_2\cdot 2KH\kappa_1}\\
	\leq & \left(4\sqrt{K^3H^2\kappa_1\kappa_2\frac{1}{\lambda}}+\sqrt{8K^3H^2\kappa_1^3\kappa_2\frac{1}{\lambda^2}}\right)\sqrt{\norm{\theta_1-\theta_2}_2}+4\sqrt{K^3\kappa_1^2H\frac{1}{\lambda}\norm{V_1-V_2}_\infty}
	\end{align*}
	Then a $\epsilon$-covering net of $\{H(V,\theta)\}$ can be constructed by the union of $\frac{\epsilon^2}{4\left(4\sqrt{K^3H^2\kappa_1\kappa_2\frac{1}{\lambda}}+\sqrt{8K^3H^2\kappa_1^3\kappa_2\frac{1}{\lambda^2}}\right)^2}$-covering net of $\{\theta\in\Theta\}$ and $\frac{\epsilon^2}{4(4\sqrt{K^3\kappa_1^2H\frac{1}{\lambda}})^2}$-covering net of $\mathcal{V}$ in Lemma~\ref{lem:cover_V}. The covering number $\mathcal{N}_\epsilon$ satisfies 
	\begin{align*}
	\log \mathcal{N}_\epsilon\leq &d\log\left(1+\frac{8C_\Theta\left(4\sqrt{K^3H^2\kappa_1\kappa_2\frac{1}{\lambda}}+\sqrt{8K^3H^2\kappa_1^3\kappa_2\frac{1}{\lambda^2}}\right)^2 }{\epsilon^2}\right)\\
	+&d\log\left(1+\frac{8C_\Theta (\kappa_1\sqrt{C_\Theta}+2\sqrt{B\kappa_1\kappa_2})^2}{\frac{\epsilon^4}{16(4\sqrt{K^3\kappa_1^2H\frac{1}{\lambda}})^4}}\right)+d^2\log\left(1+\frac{8\sqrt{d}B\kappa_1^2}{\frac{\epsilon^4}{16(4\sqrt{K^3\kappa_1^2H\frac{1}{\lambda}})^4}}\right).
	\end{align*}
	
	\textbf{Step3.} First note by definition in Step2
	{\small
		\[
		\norm{\sum_{k=1}^K \left( f(\theta_{\mathbb{T}\widehat{V}_{h+1}},\phi_{h,k})-f(\theta_{\mathbb{T}{V}^\star_{h+1}},\phi_{h,k})-\widehat{V}_{h+1}(s^k_{h+1})+{V}^\star_{h+1}(s^k_{h+1})\right)\cdot \nabla^\top_\theta f(\widehat{\theta}_h,\phi_{h,k})}_{\Sigma_h^{-1}}=H(\widehat{V}_{h+1},\widehat{\theta}_h)
		\]}and with probability $1-\delta$
	\begin{equation}\label{eqn:union_ineq}
	\begin{aligned}
	|\eta_k(\widehat{V}_{h+1})|=&|f(\theta_{\mathbb{T}\widehat{V}_{h+1}},\phi_{h,k})-f(\theta_{\mathbb{T}{V}^\star_{h+1}},\phi_{h,k})-\widehat{V}_{h+1}(s^k_{h+1})+{V}^\star_{h+1}(s^k_{h+1})|\\
	\leq &\kappa_1\cdot \norm{\theta_{\mathbb{T}\widehat{V}_{h+1}}-\theta^\star_h}_2+\norm{\widehat{V}_{h+1}-{V}^\star_{h+1}}_\infty\\
	\leq&\kappa_1\sqrt{\frac{36H^2(\log(H/\delta)+C_{d,\log K})+2\lambda C_\Theta^2}{\kappa K}}+C\left(\kappa_1H^2\sqrt{\frac{d^2}{\kappa K}}\right)=\widetilde{O}\left(\kappa_1H^2\sqrt{\frac{d^2}{\kappa K}}\right)
	\end{aligned}
	\end{equation}
	where the second inequality uses $\theta_{\mathbb{T}{V}^\star_{h+1}}=\theta^\star_h$and the third inequality uses Theorem~\ref{thm:theta_diff} and Theorem~\ref{thm:theta_diff_star}. The last equal sign is due to $C_{d,\log K}\leq \widetilde{O}(d^2)$ (recall Lemma~\ref{lem:provably_efficient_lemma}).
	
	Now choosing $\epsilon=O(1/K)$ in Step2 and union bound over both \eqref{eqn:union_ineq} and covering number in Step2, we obtain with probability $1-\delta$,
	\begin{equation}\label{eqn:one}
	H(\widehat{V}_{h+1},\widehat{\theta}_h)=\left\|\sum_{k=1}^{K} {x}_{k} (\widehat{\theta}_h)\eta_{k}(\widehat{V}_{h+1})\right\|_{G_h(\widehat{\theta}_h)^{-1}} \leq \widetilde{O}(\kappa_1H^2\sqrt{\frac{d^2}{\kappa K}})\sqrt{{d}+d^2}=\widetilde{O}(\frac{\kappa_1H^2 d^2}{\sqrt{\kappa K}})
	\end{equation}
	where we absorb all the Polylog terms. Meanwhile, by Lemma~\ref{lem:sqrt_K_reduction} with probability $1-\delta$,
	\begin{equation}\label{eqn:two}
	\norm{\nabla f(\widehat{\theta}_{h},\phi_{s,a})}_{\Sigma_h^{-1}}\leq \frac{2\kappa_1}{\sqrt{\kappa K}}+O(\frac{1}{K}).
	\end{equation}
	Finally, by \eqref{eqn:one} and \eqref{eqn:two} and a union bound, we have with probability $1-\delta$,
	{\small
		\begin{align*}
		|I_2|:=&\left|{\nabla f(\widehat{\theta}_{h},\phi(s,a))\Sigma_h^{-1}[\sum_{k=1}^K \left( f(\theta_{\mathbb{T}\widehat{V}_{h+1}},\phi_{h,k})-f(\theta_{\mathbb{T}{V}^\star_{h+1}},\phi_{h,k})-\widehat{V}_{h+1}(s^k_{h+1})+{V}^\star_{h+1}(s^k_{h+1})\right)\cdot \nabla^\top_\theta f(\widehat{\theta}_h,\phi_{h,k})]}\right|\\
		\leq &\norm{\nabla f(\widehat{\theta}_{h},\phi_{s,a})}_{\Sigma_h^{-1}}\norm{\sum_{k=1}^K \left( f(\theta_{\mathbb{T}\widehat{V}_{h+1}},\phi_{h,k})-f(\theta_{\mathbb{T}{V}^\star_{h+1}},\phi_{h,k})-\widehat{V}_{h+1}(s^k_{h+1})+{V}^\star_{h+1}(s^k_{h+1})\right)\cdot \nabla^\top_\theta f(\widehat{\theta}_h,\phi_{h,k})}_{\Sigma_h^{-1}}\\
		=&\norm{\nabla f(\widehat{\theta}_{h},\phi_{s,a})}_{\Sigma_h^{-1}}\cdot H(\widehat{V}_{h+1},\widehat{\theta}_h)\leq \left(\frac{2\kappa_1}{\sqrt{\kappa K}}+O(\frac{1}{K})\right)\cdot \widetilde{O}(\frac{\kappa_1H^2 d^2}{\sqrt{\kappa K}})=\widetilde{O}(\frac{\kappa_1^2H^2d^2}{\kappa K})+\widetilde{O}(\frac{1}{K^{3/2}})
		\end{align*}
	}where the first inequality is Cauchy–Schwarz inequality.
\end{proof}

\subsection{Bounding the main term $I_1$}
In this section, we bound the dominate term 
\[
I_1:=\nabla f(\widehat{\theta}_{h},\phi(s,a))\Sigma_h^{-1}[\sum_{k=1}^K \left( f(\theta_{\mathbb{T}{V}^\star_{h+1}},\phi_{h,k})-r_{h,k}-{V}^\star_{h+1}(s^k_{h+1})\right)\cdot \nabla^\top_\theta f(\widehat{\theta}_h,\phi_{h,k})].
\]
First of all, by Cauchy–Schwarz inequality, we have 
\begin{equation}\label{eqn:dominate}
|I_1|\leq \norm{\nabla f(\widehat{\theta}_{h},\phi(s,a))}_{\Sigma_h^{-1}}\cdot \norm{\sum_{k=1}^K \left( f(\theta_{\mathbb{T}{V}^\star_{h+1}},\phi_{h,k})-r_{h,k}-{V}^\star_{h+1}(s^k_{h+1})\right)\cdot \nabla^\top_\theta f(\widehat{\theta}_h,\phi_{h,k})}_{\Sigma_h^{-1}}.
\end{equation}
Then we have the following Lemma to bound $I_1$.
\begin{lemma}\label{lem:bound_I_1}
	With probability $1-\delta$,
	\[
	|I_1|\leq 4Hd\norm{\nabla f(\widehat{\theta}_{h},\phi(s,a))}_{\Sigma_h^{-1}}\cdot C_{\delta,\log K}+\widetilde{O}(\frac{\kappa_1}{\sqrt{\kappa}K}),
	\]
	where $C_{\delta,\log K}$ only contains Polylog terms.
\end{lemma}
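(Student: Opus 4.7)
Starting from the Cauchy--Schwarz step \eqref{eqn:dominate}, the task reduces to bounding the self-normalized quantity
\[
F(\widehat{\theta}_h):=\norm{\sum_{k=1}^K \eta_k\,\nabla_\theta f(\widehat{\theta}_h,\phi_{h,k})}_{\Sigma_h^{-1}},\qquad \eta_k:=f(\theta^\star_h,\phi_{h,k})-r_{h,k}-V^\star_{h+1}(s^k_{h+1}),
\]
where we have used $\theta_{\mathbb{T}V^\star_{h+1}}=\theta^\star_h$ from Definition~\ref{def:pBo}. The key structural feature distinguishing $I_1$ from $I_2$ is that $V^\star_{h+1}$ is data-\emph{independent}, so $\{\eta_k\}$ is a genuine martingale difference sequence with respect to the filtration generated by $\{(s_h^k,a_h^k)\}$: indeed $\E[\eta_k\mid s_h^k,a_h^k]=Q^\star_h(s_h^k,a_h^k)-r_h(s_h^k,a_h^k)-\mathcal{P}_h V^\star_{h+1}(s_h^k,a_h^k)=0$ and $|\eta_k|\le 2H$.

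For each fixed $\theta\in\Theta$, let $G_h(\theta):=\sum_{k=1}^K \nabla_\theta f(\theta,\phi_{h,k})\nabla_\theta f(\theta,\phi_{h,k})^\top+\lambda I_d$, so that $\Sigma_h=G_h(\widehat{\theta}_h)$. The self-normalized Hoeffding inequality (Lemma~\ref{lem:Hoeff_mart}) applied to the vector martingale $\{\eta_k\,\nabla_\theta f(\theta,\phi_{h,k})\}$ yields, with probability at least $1-\delta$,
\[
\norm{\sum_{k=1}^K \eta_k\,\nabla_\theta f(\theta,\phi_{h,k})}_{G_h(\theta)^{-1}}\le 2H\sqrt{d\log\frac{\lambda+K\kappa_1^2}{\lambda\delta}}.
\]
Since $\widehat{\theta}_h$ is data-dependent we upgrade this estimate to $\widehat{\theta}_h$ by a covering argument on $\Theta$. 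Replicating the Lipschitz calculation from Step~2 of the proof of Lemma~\ref{lem:I_2}---in particular $\norm{\nabla_\theta f(\theta_1,\phi)-\nabla_\theta f(\theta_2,\phi)}_2\le \kappa_2\norm{\theta_1-\theta_2}_2$ and $\norm{G_h(\theta_1)^{-1}-G_h(\theta_2)^{-1}}_2\lesssim (\kappa_1\kappa_2 K/\lambda^2)\norm{\theta_1-\theta_2}_2$---the map $\theta\mapsto F(\theta)$ is Lipschitz with a constant that is polynomial in $K,H,\kappa_1,\kappa_2,1/\lambda$.

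Let $\mathcal{C}_\epsilon\subset\Theta$ be an $\epsilon$-net with $\log|\mathcal{C}_\epsilon|\le d\log(1+2C_\Theta/\epsilon)$ (Lemma~\ref{lem:cov}); a union bound of the fixed-$\theta$ estimate over $\mathcal{C}_\epsilon$ gives, with probability $1-\delta$,
\[
F(\widetilde\theta)\le 2H\sqrt{d\bigl(\log(1/\delta)+d\log(1+2C_\Theta/\epsilon)\bigr)}\qquad\forall\,\widetilde\theta\in\mathcal{C}_\epsilon.
\]
Taking $\epsilon=1/\mathrm{poly}(K)$ small enough that the Lipschitz approximation error satisfies $|F(\widehat{\theta}_h)-F(\widetilde\theta)|\le 1/\sqrt{K}$ (with $\widetilde\theta$ the cover point nearest $\widehat{\theta}_h$) produces $F(\widehat{\theta}_h)\le 4Hd\cdot C_{\delta,\log K}+O(1/\sqrt{K})$, with $C_{\delta,\log K}$ absorbing the $\sqrt{\log(1/\delta)+d\log K}$ factor. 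Substituting this into the Cauchy--Schwarz bound and invoking $\norm{\nabla_\theta f(\widehat{\theta}_h,\phi(s,a))}_{\Sigma_h^{-1}}\le 2\kappa_1/\sqrt{\kappa K}+O(1/K)$ from Lemma~\ref{lem:sqrt_K_reduction} turns the residual $O(1/\sqrt{K})$ into the claimed higher-order term $\widetilde O(\kappa_1/(\sqrt{\kappa}\,K))$. The main technical obstacle is the Lipschitz bookkeeping for $F(\theta)$, because both the vector inside the norm and the preconditioner $G_h(\theta)^{-1}$ depend on $\theta$ through the nonlinear gradient field; however, since here $|\eta_k|\le 2H$ is an absolute constant rather than a data-dependent quantity (as for $I_2$), no $V$-net is needed, which is precisely why the covering complexity stays at $d\log K$ instead of $d^2\log K$ and we recover the advertised $4Hd$ (rather than $4Hd^{3/2}$) prefactor.
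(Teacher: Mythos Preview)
Your proposal is correct and follows essentially the same route as the paper's proof: apply the self-normalized Hoeffding inequality (Lemma~\ref{lem:Hoeff_mart}) for each fixed $\theta$, then upgrade to $\widehat{\theta}_h$ via a Lipschitz-based $\epsilon$-net on $\Theta$ alone (no $V$-net, since $\eta_k$ depends only on the data-independent $V^\star_{h+1}$), yielding a $\log\mathcal{N}_\epsilon=\widetilde{O}(d)$ covering cost and hence the $4Hd$ prefactor after taking $\sqrt{d\cdot\widetilde{O}(d)}$. The only cosmetic differences are that the paper takes the covering accuracy $\epsilon=O(1/K)$ (giving an additive $O(1/K)$ on $F(\widehat{\theta}_h)$) rather than your $O(1/\sqrt{K})$, and uses $|\eta_k|\le H$; both lead to the same final bound once multiplied by $\norm{\nabla f(\widehat{\theta}_h,\phi)}_{\Sigma_h^{-1}}\lesssim \kappa_1/\sqrt{\kappa K}$.
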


\begin{proof}[Proof of Lemma~\ref{lem:bound_I_1}]
	\textbf{Step1.} Let the fixed $\theta\in\Theta$ be arbitrary and define $x_k(\theta)=\nabla_\theta f({\theta},\phi_{h,k})$. Next, define ${G}_h(\theta)=\sum_{k=1}^K \nabla f(\theta, \phi(s_h^k,a_h^k))\cdot \nabla f(\theta, \phi(s_h^k,a_h^k))^\top+\lambda I_d$, then $\norm{x_k}_2\leq\kappa_1$. Also denote $\eta_k :=f(\theta_{\mathbb{T}{V}^\star_{h+1}},\phi_{h,k})-r_{h,k}-{V}^\star_{h+1}(s^k_{h+1})$, then $\E[\eta_k|s_h^k,a_h^k]=0$ and $|\eta_k|\leq H$. Now by self-normalized Hoeffding's inequality (Lemma~\ref{lem:Hoeff_mart}), with probability $1-\delta$ (recall $t:=K$ in Lemma~\ref{lem:Hoeff_mart}),
	\[
	\left\|\sum_{k=1}^{K} {x}_{k} (\theta)\eta_{k}\right\|_{G_h(\theta)^{-1}} \leq 2H\sqrt{{d}\log \left(\frac{\lambda+K\kappa_1}{\lambda\delta}\right)}.
	\]
	\textbf{Step2.} Define $h(\theta):=\sum_{k=1}^{K} {x}_{k} (\theta)\eta_{k}$ and $H(\theta):=\left\|\sum_{k=1}^{K} {x}_{k} (\theta)\eta_{k}\right\|_{G_h(\theta)^{-1}}$, then note by definition $|\eta_k|\leq H$, which implies $\norm{h(\theta)}_2\leq KH\kappa_1$
	and by $x_k(\theta_1)-x_k(\theta_2)=\nabla^2_{\theta\theta}f(\xi,\phi)\cdot(\theta_1-\theta_2)$,
	\begin{align*}
	\norm{h(\theta_1)-h(\theta_2)}_2\leq& K \max_k\left(H\norm{x_k(\theta_1)-x_k(\theta_2)}_2\right)
	\leq HK\kappa_2\norm{\theta_1-\theta_2}_2.
	\end{align*}
	Furthermore,
	\begin{align*}
	&\norm{G_h(\theta_1)^{-1}-G_h(\theta_2)^{-1}}_2\leq \norm{G_h(\theta_1)^{-1}}_2\norm{G_h(\theta_1)-G_h(\theta_2)}_2\norm{G_h(\theta_2)^{-1}}_2\\
	\leq& \frac{1}{\lambda^2}K\sup_k\norm{\nabla f(\theta_1, \phi_{h,k})\cdot \nabla f(\theta_1, \phi_{h,k})^\top-\nabla f(\theta_2, \phi_{h,k})\cdot \nabla f(\theta_2, \phi_{h,k})^\top}_2\\
	\leq&\frac{2\kappa_1 K}{\lambda^2}\kappa_2\norm{\theta_1-\theta_2}_2=\frac{2\kappa_1\kappa_2K}{\lambda^2 }\norm{\theta_1-\theta_2}_2.
	\end{align*}
	All the above imply
	\begin{align*}
	&|H(\theta_1)-H(\theta_2)|\leq \sqrt{\left|h(\theta_1)^\top G_h(\theta_1)^{-1}h(\theta_1)-h(\theta_2)^\top G_h(\theta_2)^{-1}h(\theta_2)\right|}\\
	\leq &\sqrt{\norm{h(\theta_1)-h(\theta_2)}_2\cdot \frac{1}{\lambda}\cdot KH\kappa_1}
	+\sqrt{KH\kappa_1\cdot \norm{G_h(\theta_1)^{-1}-G_h(\theta_2)^{-1}}_2\cdot KH\kappa_1}\\
	+&\sqrt{ KH\kappa_1\cdot\frac{1}{\lambda}\cdot \norm{h(\theta_1)-h(\theta_2)}_2}\\
	\leq&2\sqrt{KH\kappa_2\norm{\theta_1-\theta_2}_2\cdot \frac{1}{\lambda}\cdot KH\kappa_1}+\sqrt{KH\kappa_1\cdot \frac{2\kappa_1\kappa_2K}{\lambda^2 }\norm{\theta_1-\theta_2}_2\cdot KH\kappa_1}\\
	\leq & \left(\sqrt{4K^2H^2\kappa_1\kappa_2/\lambda}+\sqrt{2K^3H^2\kappa_1^3\kappa_2/\lambda^2}\right)\sqrt{\norm{\theta_1-\theta_2}_2}
	\end{align*}
	Then a $\epsilon$-covering net of $\{H(\theta)\}$ can be constructed by the union of $\frac{\epsilon^2}{\left(\sqrt{4K^2H^2\kappa_1\kappa_2/\lambda}+\sqrt{2K^3H^2\kappa_1^3\kappa_2/\lambda^2}\right)^2}$-covering net of $\{\theta\in\Theta\}$. By Lemma~\ref{lem:cov}, the covering number $\mathcal{N}_\epsilon$ satisfies 
	\begin{align*}
	\log \mathcal{N}_\epsilon\leq &d\log\left(1+\frac{2C_\Theta\left(\sqrt{4K^2H^2\kappa_1\kappa_2/\lambda}+\sqrt{2K^3H^2\kappa_1^3\kappa_2/\lambda^2}\right)^2 }{\epsilon^2}\right)=\widetilde{O}(d)
	\end{align*}
	
	\textbf{Step3.} First note by definition in Step2
	\[
	\norm{\sum_{k=1}^K \left( f(\theta_{\mathbb{T}{V}^\star_{h+1}},\phi_{h,k})-r_{h,k}-{V}^\star_{h+1}(s^k_{h+1})\right)\cdot \nabla^\top_\theta f(\widehat{\theta}_h,\phi_{h,k})}_{\Sigma_h^{-1}}=H(\widehat{\theta}_h)
	\]
	Now choosing $\epsilon=O(1/K)$ in Step2 and union bound over the covering number in Step2, we obtain with probability $1-\delta$,
	\begin{equation}\label{eqn:three}
	H(\widehat{\theta}_h)=\left\|\sum_{k=1}^{K} {x}_{k} (\widehat{\theta}_h)\eta_{k}\right\|_{G_h(\widehat{\theta}_h)^{-1}} \leq 2H\sqrt{{d}\left[\log \left(\frac{\lambda+K\kappa_1}{\lambda\delta}\right)+\widetilde{O}(d)\right]}+O(\frac{1}{K}).
	\end{equation}
	where we absorb all the Polylog terms. Combing above with \eqref{eqn:dominate}, we obtain with probability $1-\delta$,
	\begin{align*}
	|I_1|\leq &\norm{\nabla f(\widehat{\theta}_{h},\phi(s,a))}_{\Sigma_h^{-1}}\cdot \norm{\sum_{k=1}^K \left( f(\theta_{\mathbb{T}{V}^\star_{h+1}},\phi_{h,k})-r_{h,k}-{V}^\star_{h+1}(s^k_{h+1})\right)\cdot \nabla^\top_\theta f(\widehat{\theta}_h,\phi_{h,k})}_{\Sigma_h^{-1}}\\
	\leq &\norm{\nabla f(\widehat{\theta}_{h},\phi(s,a))}_{\Sigma_h^{-1}}\cdot\left(2H\sqrt{{d}\left[\log \left(\frac{\lambda+K\kappa_1}{\lambda\delta}\right)+\widetilde{O}(d)\right]}+O(\frac{1}{K})\right)\\
	\leq&4Hd\norm{\nabla f(\widehat{\theta}_{h},\phi(s,a))}_{\Sigma_h^{-1}}\cdot C_{\delta,\log K}+\widetilde{O}(\frac{\kappa_1}{\sqrt{\kappa}K}),
	\end{align*}
	where $C_{\delta,\log K}$ only contains Polylog terms.
\end{proof}

\subsection{Analyzing $\mathrm{Hot}_2$ in \eqref{eqn:main_decomposition}}\label{sec:hot2}

\begin{lemma}\label{lem:Hot2}
	Recall $\mathrm{Hot}_2:= \nabla f(\widehat{\theta}_h,\phi(s,a))\Sigma^{-1}_h \left[\widetilde{R}_K(\theta_{\mathbb{T}\widehat{V}_{h+1}})+\lambda \theta_{\mathbb{T}\widehat{V}_{h+1}}\right]$. If the number of episode $K$ satisfies $
	K\geq\max\left\{512\frac{\kappa_1^4}{\kappa^2}\left(\log(\frac{2d}{\delta})+d\log(1+\frac{4\kappa_1^3\kappa_2C_\Theta K^3}{\kappa\lambda^2})\right),\frac{4\lambda}{\kappa}\right\}$,
	then with probability $1-\delta$,
	\[
	\left|\nabla f(\widehat{\theta}_h,\phi(s,a))\Sigma^{-1}_h \left[\widetilde{R}_K(\theta_{\mathbb{T}\widehat{V}_{h+1}})+\lambda \theta_{\mathbb{T}\widehat{V}_{h+1}}\right]\right|\leq \widetilde{O}\left(\frac{\kappa_2\max(\frac{\kappa_1}{\kappa},\frac{1}{\sqrt{\kappa}})d^2H^2+\frac{d^2H^3\kappa_3+\lambda\kappa_1C_\Theta}{\kappa}}{K}\right)
	\]
	where $\widetilde{O}$ absorbs all the constants and Polylog terms. 
\end{lemma}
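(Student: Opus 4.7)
The plan is to apply Cauchy--Schwarz in the $\Sigma_h^{-1}$ norm and separately control the two factors:
\begin{align*}
|\mathrm{Hot}_2| \;\leq\; \bigl\|\nabla f(\widehat{\theta}_h,\phi(s,a))\bigr\|_{\Sigma_h^{-1}} \cdot \bigl\|\widetilde{R}_K(\theta_{\mathbb{T}\widehat{V}_{h+1}}) + \lambda\,\theta_{\mathbb{T}\widehat{V}_{h+1}}\bigr\|_{\Sigma_h^{-1}}.
\end{align*}
The first factor is already shown to be $\widetilde{O}(\kappa_1/\sqrt{\kappa K})$ by Lemma~\ref{lem:sqrt_K_reduction} (same invocation used in Lemmas~\ref{lem:I_3_inter},~\ref{lem:I_2},~\ref{lem:bound_I_1}). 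For the second factor, I would use a matrix Bernstein / Chernoff argument together with Assumption~\ref{assum:cover}$(\star\star)$ to conclude, under the stated sample size condition, that $\Sigma_h \succeq (\kappa K/2)\,I$ with high probability, so that $\|v\|_{\Sigma_h^{-1}} \leq \sqrt{2/(\kappa K)}\,\|v\|_2$ for any $v$. It then suffices to bound $\|\widetilde{R}_K\|_2$ and $\|\lambda\,\theta_{\mathbb{T}\widehat{V}_{h+1}}\|_2$ in the Euclidean norm.

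For $\widetilde{R}_K = \Delta_{\Sigma^s_h}(\widehat{\theta}_h - \theta_{\mathbb{T}\widehat{V}_{h+1}}) + R_K(\theta_{\mathbb{T}\widehat{V}_{h+1}})$, the first summand is directly handled by Corollary~\ref{cor:Delta} (multiplied by $K$) giving $\widetilde{O}\!\bigl(\kappa_2\max(\kappa_1/\kappa,1/\sqrt{\kappa})\,d^2H^2\bigr)$. For $R_K$ I invoke the second-order vector Taylor remainder formula, so that
\begin{align*}
\|R_K(\theta_{\mathbb{T}\widehat{V}_{h+1}})\|_2 \;\leq\; \tfrac{1}{2}\,\sup_{\xi}\bigl\|D^2 Z_h(\xi)\bigr\|_{\mathrm{op}} \cdot \bigl\|\widehat{\theta}_h - \theta_{\mathbb{T}\widehat{V}_{h+1}}\bigr\|_2^{\,2}.
\end{align*}
Differentiating $Z_h(\theta)=\sum_k [f(\theta,\phi_{h,k}) - r_{h,k} - \widehat{V}_{h+1}(s_{h+1}^k)]\,\nabla f(\theta,\phi_{h,k}) + \lambda\theta$ twice produces three cross-terms of the form $\nabla^2 f \otimes \nabla f$ (each bounded by $\kappa_1\kappa_2$) and one term of the form $(f{-}r{-}V)\,\nabla^3 f$ (bounded by $H\kappa_3$), so $\|D^2 Z_h\|_{\mathrm{op}} \leq K(3\kappa_1\kappa_2 + H\kappa_3)$. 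Combining this with the high-probability bound $\|\widehat{\theta}_h - \theta_{\mathbb{T}\widehat{V}_{h+1}}\|_2^2 = \widetilde{O}(d^2 H^2/(\kappa K))$ from Theorem~\ref{thm:theta_diff} (using $C_{d,\log K}=\widetilde O(d^2)$) yields $\|R_K\|_2 \leq \widetilde{O}\bigl((d^2 H^2\kappa_1\kappa_2 + d^2 H^3\kappa_3)/\kappa\bigr)$. The regularization piece contributes $\|\lambda\theta_{\mathbb{T}\widehat{V}_{h+1}}\|_2 \leq \lambda C_\Theta$.

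Assembling everything, the $\Sigma_h^{-1}$-norm of the bracket is at most $\sqrt{2/(\kappa K)}$ times the sum of these three Euclidean bounds, and multiplying by $\widetilde{O}(\kappa_1/\sqrt{\kappa K})$ yields an extra $1/\sqrt{K}$ so that the total is $\widetilde{O}(1/K)$, with numerator matching the claimed quantity (the dominant $\kappa_1$ factors are absorbed into the $\max(\kappa_1/\kappa,1/\sqrt{\kappa})$ form and into $\lambda\kappa_1 C_\Theta/\kappa$); a final union bound over $h\in[H]$ completes the argument. The main obstacle is carrying out the vector Taylor expansion cleanly: because $Z_h$ is vector-valued, its second derivative is a three-tensor whose operator norm must be bounded by the $\kappa_3$ assumption together with the three symmetric cross terms, and one must ensure the high-probability events (the $\Sigma_h$ spectral lower bound, the $\|\widehat\theta_h - \theta_{\mathbb{T}\widehat{V}_{h+1}}\|_2$ bound, and Lemma~\ref{lem:sqrt_K_reduction}) all hold simultaneously via a careful union bound under the stated sample size.
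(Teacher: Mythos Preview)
Your proposal is correct and follows essentially the same approach as the paper: Cauchy--Schwarz in the $\Sigma_h^{-1}$ norm, Lemma~\ref{lem:sqrt_K_reduction} for the gradient factor, Corollary~\ref{cor:Delta} for the $\Delta_{\Sigma^s_h}$ piece, the second-order vector Taylor remainder with the same $\|D^2 Z_h\|_{\mathrm{op}}\le K(3\kappa_1\kappa_2+H\kappa_3)$ estimate combined with Theorem~\ref{thm:theta_diff} for $R_K$, and the trivial $\|\theta_{\mathbb{T}\widehat{V}_{h+1}}\|_2\le C_\Theta$ for the regularization term. The only cosmetic difference is that the paper bounds $\|\widetilde R_K\|_{\Sigma_h^{-1}}$ and $\|\lambda\theta_{\mathbb{T}\widehat{V}_{h+1}}\|_{\Sigma_h^{-1}}$ by invoking Lemma~\ref{lem:sqrt_K_reduction} a second time (which already contains the covering over $\theta$ needed because $\Sigma_h$ depends on the data-dependent $\widehat\theta_h$), whereas you propose to establish the spectral lower bound $\Sigma_h\succeq(\kappa K/2)I$ directly via matrix concentration; just be aware that a plain matrix Bernstein does not apply to $\Sigma_h$ without the same covering step, so citing Lemma~\ref{lem:sqrt_K_reduction} there as well is the cleanest route.
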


\begin{proof}[Proof of Lemma~\ref{lem:Hot2}]
	
	\textbf{Step1:} we first show with probability $1-\delta$
	\[
	\left|\nabla f(\widehat{\theta}_h,\phi(s,a))\Sigma^{-1}_h \widetilde{R}_K(\theta_{\mathbb{T}\widehat{V}_{h+1}})\right|\leq \widetilde{O}(\frac{1}{K}).
	\]
	
	Recall by plug in $\theta_{\mathbb{T}\widehat{V}_{h+1}}$ in \eqref{eqn:z_first_order}, we have 
	\begin{equation}
	Z_h(\theta_{\mathbb{T}\widehat{V}_{h+1}}|\widehat{V}_{h+1})-Z_h(\widehat{\theta}_h|\widehat{V}_{h+1})=\frac{\partial}{\partial \theta}Z_h(\widehat{\theta}_h|\widehat{V}_{h+1})(\theta_{\mathbb{T}\widehat{V}_{h+1}}-\widehat{\theta}_{h})+R_K(\theta_{\mathbb{T}\widehat{V}_{h+1}}),
	\end{equation}
	and by second-order Taylor's Theorem we have 
	\begin{equation}\label{eqn:R_k}
	\begin{aligned}
	\norm{R_K(\theta_{\mathbb{T}\widehat{V}_{h+1}})}_2=&\norm{Z_h(\theta_{\mathbb{T}\widehat{V}_{h+1}}|\widehat{V}_{h+1})-Z_h(\widehat{\theta}_h|\widehat{V}_{h+1})-\frac{\partial}{\partial \theta}Z_h(\widehat{\theta}_h|\widehat{V}_{h+1})(\theta_{\mathbb{T}\widehat{V}_{h+1}}-\widehat{\theta}_{h})}_2\\
	=&\frac{1}{2}\norm{(\theta_{\mathbb{T}\widehat{V}_{h+1}}-\widehat{\theta}_{h})^\top\frac{\partial^2}{\partial \theta\partial\theta}Z_h(\xi|\widehat{V}_{h+1})(\theta_{\mathbb{T}\widehat{V}_{h+1}}-\widehat{\theta}_{h})}_2\\
	\leq&\frac{1}{2}\kappa_{z_2}\norm{\theta_{\mathbb{T}\widehat{V}_{h+1}}-\widehat{\theta}_{h}}_2^2
	\end{aligned}
	\end{equation}
	Note
	\begin{equation}\label{eqn:covari}
	\begin{aligned}
	\left.\frac{\partial^2}{\partial\theta\theta}Z_h(\theta|\widehat{V}_{h+1})\right|_{\theta=\xi}=&\frac{\partial}{\partial \theta}\Sigma^s_h= 
\sum_{k=1}^K\frac{\partial}{\partial \theta}\left\{\left( f(\xi,\phi_{h,k})-r_{h,k}-\widehat{V}_{h+1}(s_{h+1}^k)\right)\cdot \nabla^2_{\theta\theta} f(\xi,\phi_{h,k})\right\}\\
	+&\sum_{k=1}^K\frac{\partial}{\partial \theta}\left(\nabla_{\theta} f(\xi,\phi_{h,k})\nabla_{\theta}^\top f(\xi,\phi_{h,k})+\lambda I_d\right)
	\end{aligned}
	\end{equation}
	Therefore, we can bound $\kappa_{z_2}$ with $\kappa_{z_2}\leq (H\kappa_3+3\kappa_1\kappa_2)K$ and this implies with probability $1-\delta/2$, 
	\begin{align*}
	\norm{R_K(\theta_{\mathbb{T}\widehat{V}_{h+1}})}_2\leq &\frac{1}{2}\kappa_{z_2}\norm{\theta_{\mathbb{T}\widehat{V}_{h+1}}-\widehat{\theta}_{h}}_2^2\leq \frac{1}{2}(H\kappa_3+3\kappa_1\kappa_2)K\cdot\norm{\theta_{\mathbb{T}\widehat{V}_{h+1}}-\widehat{\theta}_{h}}_2^2\\
	\leq&\frac{1}{2}(H\kappa_3+3\kappa_1\kappa_2)K\cdot\frac{36H^2(\log(H/\delta)+C_{d,\log K})+2\lambda C_\Theta^2}{\kappa K}\\
	\leq& \widetilde{O}((H\kappa_3+3\kappa_1\kappa_2)H^2d^2/\kappa).
	\end{align*}
	And by Corollary~\ref{cor:Delta} with probability $1-\delta/2$,
	\[
	\norm{\Delta_{\Sigma^s_h}(\widehat{\theta}_h-\theta_{\mathbb{T}\widehat{V}_{h+1}})}_2\leq \widetilde{O}(1),
	\]
	Therefore, by Lemma~\ref{lem:sqrt_K_reduction} and a union bound with probability $1-\delta$, 
	\begin{align*}
	&|\nabla f(\widehat{\theta}_h,\phi(s,a))^\top\Sigma^{-1}_h\widetilde{R}_K(\theta_{\mathbb{T}\widehat{V}_{h+1}})|=\left|\nabla f(\widehat{\theta}_h,\phi(s,a))^\top\Sigma^{-1}_h\left(\Delta_{\Sigma^s_h}(\widehat{\theta}_h-\theta_{\mathbb{T}\widehat{V}_{h+1}})+{R}_K(\theta_{\mathbb{T}\widehat{V}_{h+1}})\right)\right|\\
	&\leq \norm{\nabla f(\widehat{\theta}_h,\phi(s,a))}_{\Sigma^{-1}_h}\norm{\Delta_{\Sigma^s_h}(\widehat{\theta}_h-\theta_{\mathbb{T}\widehat{V}_{h+1}})+{R}_K(\theta_{\mathbb{T}\widehat{V}_{h+1}})}_{\Sigma^{-1}_h}\\
	&\leq \left(\frac{2\kappa_1}{\sqrt{\kappa K}}+O(\frac{1}{K})\right)\norm{\Delta_{\Sigma^s_h}(\widehat{\theta}_h-\theta_{\mathbb{T}\widehat{V}_{h+1}})+{R}_K(\theta_{\mathbb{T}\widehat{V}_{h+1}})}_{\Sigma^{-1}_h}\\
	&\leq \left(\frac{2\kappa_1}{\sqrt{\kappa K}}+O(\frac{1}{K})\right)\left(\frac{C}{\sqrt{ K}}+O(\frac{1}{K})\right)=\widetilde{O}\left(\frac{\kappa_2\max(\frac{\kappa_1}{\kappa},\frac{1}{\sqrt{\kappa}})d^2H^2+\frac{d^2H^3\kappa_3}{\kappa}}{K}\right)
	\end{align*}
	where $\widetilde{O}$ absorbs all the constants and Polylog terms. Here the last inequality uses bound for $\norm{R_K(\theta_{\mathbb{T}\widehat{V}_{h+1}})}_2$ and $\norm{\Delta_{\Sigma^s_h}(\widehat{\theta}_h-\theta_{\mathbb{T}\widehat{V}_{h+1}})}_2$.

	\textbf{Step2:} By Lemma~\ref{lem:sqrt_K_reduction}, with probability $1-\delta$,
	\begin{align*}
	&\left|\nabla f(\widehat{\theta}_h,\phi(s,a))\Sigma^{-1}_h \lambda \theta_{\mathbb{T}\widehat{V}_{h+1}}\right|
	\leq \lambda \norm{\nabla f(\widehat{\theta}_h,\phi(s,a))}_{\Sigma^{-1}_h}\norm{\theta_{\mathbb{T}\widehat{V}_{h+1}}}_{\Sigma^{-1}_h}\\
	&\leq\lambda  \left(\frac{2\kappa_1}{\sqrt{\kappa K}}+O(\frac{1}{K})\right)\cdot\left(\frac{2C_\Theta}{\sqrt{\kappa K}}+O(\frac{1}{K})\right)=\frac{4\lambda\kappa_1 C_\Theta }{\kappa K}+O(\frac{1}{K^{\frac{3}{2}}})
	\end{align*} 
\end{proof}

\section{Proof of Theorem~\ref{thm:PFQL}}\label{sec:pf_PFQL} 
Now we are ready to prove Theorem~\ref{thm:PFQL}. In particular, we prove the first part. Also, recall that we consider the exact Bellman completeness ($\epsilon_{\mathcal{F}}=0$).

\subsection{The first part}
\begin{proof}[Proof of Theorem~\ref{thm:PFQL} (first part)]
First of all, from the previous calculation (\ref{eqn:decomp_iota}), (\ref{eqn:main_decomposition}), we have 
\begin{align*}
\left|\mathcal{P}_h \widehat{V}_{h+1}(s,a)-\widehat{\mathcal{P}}_h\widehat{V}_{h+1}(s,a)\right|
\leq&\left|\nabla f(\widehat{\theta}_{h},\phi(s,a))\left(\theta_{\mathbb{T}\widehat{V}_{h+1}}-\widehat{\theta}_{h}\right)\right|+\left|\mathrm{Hot}_{h,1}\right|\\
\leq&|I_1|+|I_2|+|I_3|+|\mathrm{Hot}_{h,2}|+\left|\mathrm{Hot}_{h,1}\right|\\
\end{align*}

Now by Lemma~\ref{lem:I_3_final}, Lemma~\ref{lem:I_2}, Lemma~\ref{lem:bound_I_1}, Lemma~\ref{lem:Hot2} and Lemma~\ref{lem:hot1} (and a union bound), with probability $1-\delta$,
\begin{align*}
|I_3|\leq& \widetilde{O}(\sqrt{\frac{d^3H^2\kappa_2^2\kappa_1^2}{\kappa^3}}){\frac{1}{K}},\\
	|I_2|\leq& \widetilde{O}(\frac{\kappa_1^2H^2d^2}{\kappa K})+\widetilde{O}(\frac{1}{K^{3/2}}),\\
	|I_1|\leq& 4Hd\norm{\nabla f(\widehat{\theta}_{h},\phi(s,a))}_{\Sigma_h^{-1}}\cdot C_{\delta,\log K}+\widetilde{O}(\frac{\kappa_1}{\sqrt{\kappa}K}),\\
	|\mathrm{Hot}_{2,h}|\leq& \widetilde{O}\left(\frac{\kappa_2\max(\frac{\kappa_1}{\kappa},\frac{1}{\sqrt{\kappa}})d^2H^2+\frac{d^2H^3\kappa_3+\lambda\kappa_1C_\Theta}{\kappa}}{K}\right),\\
	|\mathrm{Hot}_{1,h}|\leq&\widetilde{O}(\frac{H^2\kappa_2d^2}{\kappa})\frac{1}{K}.
\end{align*}
Finally, Plug the above into Lemma~\ref{lem:bound_by_bouns}, by a union bound over all $h\in[H]$, we have with probability $1-\delta$, for any policy $\pi$,
	\begin{align*}
	v^\pi-v^{\widehat{\pi}}\leq& \sum_{h=1}^H 2\cdot \E_\pi\left[|I_1|+|I_2|+|I_3|+|\mathrm{Hot}_{h,2}|+\left|\mathrm{Hot}_{h,1}\right|\right]\\
	\leq& \sum_{h=1}^H 8 dH\E_\pi\left[\sqrt{\nabla^\top f(\widehat{\theta}_h,\phi(s_h,a_h))\Sigma^{-1}_h\nabla f(\widehat{\theta}_h,\phi(s_h,a_h))}\right]\cdot\iota+\widetilde{O}(\frac{C_{\mathrm{hot}}}{K}).
	\end{align*}
where $\iota = C_{\delta,\log K}$ only contains Polylog terms and 
\[
C_{\mathrm{hot}}=\frac{\kappa_1H}{\sqrt{\kappa}}+\frac{\kappa_1^2H^3d^2}{\kappa }+\sqrt{\frac{d^3H^4\kappa_2^2\kappa_1^2}{\kappa^3}}+\kappa_2\max(\frac{\kappa_1}{\kappa},\frac{1}{\sqrt{\kappa}})d^2H^3+\frac{d^2H^4\kappa_3+\lambda\kappa_1C_\Theta}{\kappa}+\frac{H^3\kappa_2d^2}{\kappa}
\]
\end{proof}

\subsection{The second part}\label{sec:sec_proof}

Next we prove the second part of Theorem~\ref{thm:PFQL}. 
\begin{proof}[Proof of Theorem~\ref{thm:PFQL} (second part)]
	\textbf{Step1.} Choose $\pi=\pi^\star$ in the first part, we have 
	\[
	0\leq v^{\pi^\star}-v^{\widehat{\pi}}\leq  \sum_{h=1}^H 8 dH\cdot \E_{\pi^\star}\left[\sqrt{\nabla^\top_\theta f(\widehat{\theta}_h,\phi(s_h,a_h))\Sigma^{-1}_h\nabla_\theta f(\widehat{\theta}_h,\phi(s_h,a_h))}\right]\cdot\iota+\widetilde{O}(\frac{C_{\mathrm{hot}}}{K}),
	\]
	Next, by the triangular inequality of the norm to obtain
	\begin{align*}
	&\left|\norm{\nabla_\theta f(\widehat{\theta}_h,\phi(s_h,a_h))}_{\Sigma_h^{-1}}-\norm{\nabla_\theta f({\theta}^\star_h,\phi(s_h,a_h))}_{\Sigma_h^{-1}}\right|\\
	\leq& \norm{\nabla_\theta f(\widehat{\theta}_h,\phi(s_h,a_h))-\nabla_\theta f({\theta}^\star_h,\phi(s_h,a_h))}_{\Sigma_h^{-1}}\\
	=&\norm{\nabla^2_{\theta\theta} f(\xi,\phi(s_h,a_h))\cdot \left(\widehat{\theta}_h-\theta^\star_h\right)}_{\Sigma_h^{-1}},\\
	\end{align*}
	since with probability $1-\delta$, \[\norm{\nabla^2_{\theta\theta} f(\xi,\phi(s_h,a_h))\cdot \left(\widehat{\theta}_h-\theta^\star_h\right)}_2\leq \kappa_2 \norm{\widehat{\theta}_h-\theta^\star_h}_2\leq \widetilde{O}\left(\frac{\kappa_1\kappa_2 H^2d}{\kappa}\sqrt{\frac{1}{K}}\right),\] where the last inequality uses part three of Theorem~\ref{thm:theta_diff_star}. Then by a union bound and Lemma~\ref{lem:sqrt_K_reduction},
	\[
	\norm{\nabla^2_{\theta\theta} f(\xi,\phi(s_h,a_h))\cdot \left(\widehat{\theta}_h-\theta^\star_h\right)}_{\Sigma_h^{-1}}\leq \widetilde{O}\left(\frac{\kappa_1\kappa_2 H^2d}{\kappa^{3/2}}\cdot{\frac{1}{K}}\right).
	\]
	\textbf{Step2.} Next, we show with probability $1-\delta$,
	\begin{align*}
	\norm{\nabla_\theta f({\theta}^\star_h,\phi(s_h,a_h))}_{\Sigma_h^{-1}}\leq 2\norm{\nabla_\theta f({\theta}^\star_h,\phi(s_h,a_h))}_{\Sigma_h^{\star-1}}.
	\end{align*}
	
	First of all,  
	\begin{align*}
	&\norm{\frac{1}{K}\Sigma_h-\frac{1}{K}\Sigma^\star_h}_2=\norm{\frac{1}{K}\left(\sum_{k=1}^K \nabla f(\widehat{\theta}_h,\phi(s,a))\nabla f(\widehat{\theta}_h,\phi(s,a))^\top-\nabla f({\theta}^\star_h,\phi(s,a))\nabla f({\theta}^\star_h,\phi(s,a))^\top\right)}_2\\
	\leq&\sup_{s,a}\left(\norm{\left(\nabla f(\widehat{\theta}_h,\phi(s,a))-\nabla f({\theta}^\star_h,\phi(s,a))\right)\nabla f(\widehat{\theta}_h,\phi(s,a))}_2\right.\\
	+&\left.\norm{\left(\nabla f(\widehat{\theta}_h,\phi(s,a))-\nabla f({\theta}^\star_h,\phi(s,a))\right)\nabla f(\widehat{\theta}_h,\phi(s,a))}_2\right)\\
	\leq& 2\kappa_2\kappa_1 \norm{\widehat{\theta}_h-\theta^\star_h}_2\leq \widetilde{O}\left(\frac{\kappa_2\kappa^2_1H^2d}{\kappa}\sqrt{\frac{1}{K}}\right)
	\end{align*}
	
	Second, by Lemma~\ref{lem:matrix_con1} with probability $1-\delta$
	\[
	\norm{\frac{\Sigma^{\star}_h}{K}-\E_\mu[\nabla_\theta f({\theta}^\star_h,\phi)\nabla_\theta f({\theta}^\star_h,\phi)^\top]-\frac{\lambda}{K}}\leq \frac{4 \sqrt{2} \kappa_1^{2}}{\sqrt{K}}\left(\log \frac{2 d}{\delta}\right)^{1 / 2}
	\]
	This implies 
	\begin{align*}
	\norm{\frac{\Sigma^{\star}_h}{K}}\leq &\norm{\E_\mu[\nabla_\theta f({\theta}^\star_h,\phi)\nabla_\theta f({\theta}^\star_h,\phi)^\top]}+\frac{\lambda}{K}+\frac{4 \sqrt{2} \kappa_1^{2}}{\sqrt{K}}\left(\log \frac{2 d}{\delta}\right)^{1 / 2}\\
	\leq & \kappa_1^2+\lambda +{4 \sqrt{2} \kappa_1^{2}}\left(\log \frac{2 d}{\delta}\right)^{1 / 2}\\
	\end{align*}
	and also by \emph{Weyl's spectrum theorem} and under the condition $K\geq \frac{128\kappa_1^4\log(2d/\delta)}{\kappa^2}$, with probability $1-\delta$
	\begin{align*}
	\lambda_{\min}(\frac{\Sigma^{\star}_h}{K})\geq &\lambda_{\min}\left(\E_\mu[\nabla_\theta f({\theta}^\star_h,\phi)\nabla_\theta f({\theta}^\star_h,\phi)^\top]\right)+\frac{\lambda}{K}-\frac{4 \sqrt{2} \kappa_1^{2}}{\sqrt{K}}\left(\log \frac{2 d}{\delta}\right)^{1 / 2}\\
	\geq &\kappa+\frac{\lambda}{K}-\frac{4 \sqrt{2} \kappa_1^{2}}{\sqrt{K}}\left(\log \frac{2 d}{\delta}\right)^{1 / 2}\geq \frac{\kappa}{2}\\
	\end{align*}
	then $\norm{(\frac{\Sigma^{\star}_h}{K})^{-1}}\leq \frac{2}{\kappa}$. Similarly, with probability $1-\delta$, $\norm{(\frac{\Sigma_h}{K})^{-1}}\leq \frac{2}{\kappa}$. Then by Lemma~\ref{lem:convert_var},
	{\small
	\begin{align*}
	\norm{\nabla_\theta f({\theta}^\star_h,\phi(s,a))}_{K\Sigma_h^{-1}}\leq& \left[1+\sqrt{\norm{K\Sigma_h^{\star-1}}\norm{\Sigma_h^\star/K}\cdot\norm{K\Sigma_h^{-1}}\cdot\norm{\Sigma_h/K-\Sigma_h^\star/K}}\right]\cdot \norm{\nabla_\theta f({\theta}^\star_h,\phi(s,a))}_{K\Sigma_h^{\star-1}}\\
	\leq&\left[1+\sqrt{\frac{4}{\kappa^2}O(\kappa_1^2+\lambda)\widetilde{O}\left(\frac{\kappa_2\kappa^2_1H^2d}{\kappa}\sqrt{\frac{1}{K}}\right)}\right]\cdot \norm{\nabla_\theta f({\theta}^\star_h,\phi(s,a))}_{K\Sigma_h^{\star-1}}\\
	\leq& 2\norm{\nabla_\theta f({\theta}^\star_h,\phi(s,a))}_{K\Sigma_h^{\star-1}}\\
	\end{align*}}as long as $K\geq \widetilde{O}(\frac{(\kappa_1^2+\lambda)^2\kappa_2^2\kappa_1^2H^4d^2}{\kappa^6})$. The above is equivalently to 
\[
\norm{\nabla_\theta f({\theta}^\star_h,\phi(s_h,a_h))}_{\Sigma_h^{-1}}\leq 2\norm{\nabla_\theta f({\theta}^\star_h,\phi(s_h,a_h))}_{\Sigma_h^{\star-1}}.
\]
Combining Step1, Step2 and a union bound, we have with probability $1-\delta$, 
{\small
\begin{align*}
0\leq &v^{\pi^\star}-v^{\widehat{\pi}}\leq \sum_{h=1}^H 8 dH\cdot \E_{\pi^\star}\left[\sqrt{\nabla^\top_\theta f(\widehat{\theta}_h,\phi(s_h,a_h))\Sigma^{-1}_h\nabla_\theta f(\widehat{\theta}_h,\phi(s_h,a_h))}\right]\cdot\iota+\widetilde{O}(\frac{C_{\mathrm{hot}}}{K})\\
\leq& \sum_{h=1}^H 8 dH\cdot \E_{\pi^\star}\left[\sqrt{\nabla^\top_\theta f({\theta}^\star_h,\phi(s_h,a_h))\Sigma^{-1}_h\nabla_\theta f({\theta}^\star_h,\phi(s_h,a_h))}\right]\cdot\iota+\widetilde{O}(\frac{C_{\mathrm{hot}}}{K})+\widetilde{O}\left(\frac{\kappa_1\kappa_2 H^4d^2}{\kappa^{3/2}}\cdot{\frac{1}{K}}\right)\\
\leq &\sum_{h=1}^H 16 dH\cdot \E_{\pi^\star}\left[\sqrt{\nabla^\top_\theta f({\theta}^\star_h,\phi(s_h,a_h))\Sigma^{\star-1}_h\nabla_\theta f({\theta}^\star_h,\phi(s_h,a_h))}\right]\cdot\iota+\widetilde{O}(\frac{C'_{\mathrm{hot}}}{K})
\end{align*}}where $C'_{\mathrm{hot}}=C_{\mathrm{hot}}+\frac{\kappa_1\kappa_2 H^4d^2}{\kappa^{3/2}}$.

\end{proof}

\section{Provable Efficiency by reduction to General Function Approximation}\label{sec:reduction_GFA}
In this section, we bound the accuracy of the parameter difference $\norm{\widehat{\theta}_h-\theta_{\mathbb{T}\widehat{V}_{h+1}}}_2$ via a reduction to General Function Approximation scheme in \cite{chen2019information}.

Recall the objective 
\begin{equation}\label{eqn:parameter_o}
\begin{aligned}
\ell_h(\theta):=\frac{1}{K} \sum_{k=1}^{K}\left[f\left(\theta, \phi(s_h^k,a_h^k)\right)-r(s_h^k,a^k_h)-\widehat{V}_{h+1}\left(s_{h+1}^k\right)\right]^{2}+\frac{\lambda}{K} \cdot\norm{\theta}_2^2
\end{aligned}
\end{equation}
Then by definition, $\widehat{\theta}_h:=\argmin_{\theta\in\Theta}\ell_h(\theta)$ and $\theta_{\mathbb{T}\widehat{V}_{h+1}}$ satisfies $f(\theta_{\mathbb{T}\widehat{V}_{h+1}},\phi)=\mathcal{P}_h \widehat{V}_{h+1}+\delta_{\widehat{V}_{h+1}}$. Therefore, in this case, we have the following lemma:

\begin{lemma}\label{lem:provably_efficient_lemma}

	Fix $h\in[H]$. With probability $1-\delta$,
	\[
	\E_\mu[\ell_h(\widehat{\theta}_{h})]-\E_\mu[\ell_h(\theta_{\mathbb{T}\widehat{V}_{h+1}})]\leq \frac{36H^2(\log(1/\delta)+C_{d,\log K})+\lambda C_\Theta^2}{K}+\sqrt{\frac{16H^3\epsilon_{\mathcal{F}}(\log(1/\delta)+C_{d,\log K})}{K}}+4H\epsilon_{\mathcal{F}}.
	\]
	where the expectation over $\mu$ is taken w.r.t. $(s_h^k,a_h^k,s_{h+1}^k)$ $k=1,...,K$ only (i.e., first compute $\E_\mu[\ell_h(\theta)]$ for a fixed $\theta$, then plug-in either $\widehat{\theta}_{h+1}$ or $\theta_{\mathbb{T}\widehat{V}_{h+1}}$). Here $C_{d,\log(K)}:=d\log(1+{24C_\Theta (H+1)\kappa_1}K)+ d\log\left(1+{288H^2C_\Theta (\kappa_1\sqrt{C_\Theta}+2\sqrt{\kappa_1\kappa_2/\lambda})^2}K^2\right)+d^2\log\left(1+{288H^2\sqrt{d}\kappa_1^2}K^2/\lambda\right)$.
\end{lemma}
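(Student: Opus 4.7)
My plan follows the standard excess-risk template for empirical risk minimization, specialised to the regularised squared-loss regression in \eqref{eqn:parameter_o}, with the Bellman-completeness mismatch $\epsilon_\mathcal{F}$ carried throughout. Write $\theta^\star := \theta_{\mathbb{T}\widehat{V}_{h+1}}$ and recall from Definition~\ref{def:pBo} that $\delta_{\widehat{V}_{h+1}} := f(\theta^\star,\phi)-\mathcal{P}_h\widehat{V}_{h+1}$ obeys $\|\delta_{\widehat{V}_{h+1}}\|_\infty\leq \epsilon_\mathcal{F}$. Define the per-sample excess loss $X_k(\theta):=[f(\theta,\phi_{h,k})-r_{h,k}-\widehat{V}_{h+1}(s^k_{h+1})]^2-[f(\theta^\star,\phi_{h,k})-r_{h,k}-\widehat{V}_{h+1}(s^k_{h+1})]^2$. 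From optimality of $\widehat{\theta}_h$ in \eqref{eqn:parameter_o} and $\|\theta^\star\|_2\leq C_\Theta$, I get $\tfrac{1}{K}\sum_k X_k(\widehat{\theta}_h) \leq \tfrac{\lambda}{K}(\|\theta^\star\|_2^2-\|\widehat{\theta}_h\|_2^2)\leq \lambda C_\Theta^2/K$.

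The identity $a^2-b^2=(a-b)(a+b)$ together with the tower property yields the orthogonal decomposition
\[
\E_\mu[X(\theta)] \;=\; \E_\mu\bigl[(f(\theta,\phi)-f(\theta^\star,\phi))^2\bigr] \;+\; 2\,\E_\mu\bigl[\delta_{\widehat{V}_{h+1}}(s,a)\cdot(f(\theta,\phi)-f(\theta^\star,\phi))\bigr],
\]
whose cross term is bounded in absolute value by $2\epsilon_\mathcal{F}\|f(\theta,\phi)-f(\theta^\star,\phi)\|_{2,\mu}$ via Cauchy--Schwarz. Writing $\Delta:=\|f(\widehat{\theta}_h,\phi)-f(\theta^\star,\phi)\|_{2,\mu}$, this yields both $\E_\mu[X(\widehat{\theta}_h)]\geq \Delta^2-2\epsilon_\mathcal{F}\Delta$ and $\E_\mu[X(\widehat{\theta}_h)]\leq \Delta^2+2\epsilon_\mathcal{F}\Delta$, which will be the two directions I plug into the chain.

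Since $|f|\leq H$, $|r|\leq 1$, $|\widehat{V}_{h+1}|\leq H$, each $X_k(\theta)$ satisfies $|X_k(\theta)|\lesssim H\|f(\theta,\phi)-f(\theta^\star,\phi)\|_\infty$ and $\E X_k(\theta)^2\lesssim H^2\|f(\theta,\phi)-f(\theta^\star,\phi)\|_{2,\mu}^2$. Bernstein's inequality then gives, for fixed $\theta$ and fixed $\widehat{V}_{h+1}$, with probability $1-\delta$,
\[
\biggl|\frac{1}{K}\sum_{k=1}^K X_k(\theta)-\E_\mu[X(\theta)]\biggr| \;\lesssim\; \sqrt{\frac{H^2\|f(\theta)-f(\theta^\star)\|_{2,\mu}^2\,\log(1/\delta)}{K}} + \frac{H^2\log(1/\delta)}{K}.
\]
I would make this uniform by a joint $\epsilon$-net argument: a covering of $\Theta$ obtained from the $\kappa_1$-Lipschitz gradient of $f(\cdot,\phi)$ (cf.\ Lemma~\ref{lem:cov}), and a covering of the induced value-function class $\mathcal{V}$ that parameterises $\widehat{V}_{h+1}$ through $(\widehat{\theta}_{h+1},\Sigma_{h+1}^{-1},\beta)$ as in Lemma~\ref{lem:cover_V}. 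This produces the log-covering factor $C_{d,\log K}$ with its $d$- and $d^2$-type contributions, replacing $\log(1/\delta)$ by $L:=\log(1/\delta)+C_{d,\log K}$.

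Combining the ERM inequality with the uniform deviation at $\theta=\widehat{\theta}_h$ and the lower orthogonal bound gives the quadratic inequality $\Delta^2 - 2\epsilon_\mathcal{F}\Delta - cH\Delta\sqrt{L/K}\leq \lambda C_\Theta^2/K + cH^2 L/K$. Solving yields $\Delta\lesssim \epsilon_\mathcal{F}+H\sqrt{L/K}+\sqrt{\lambda C_\Theta^2/K}$; substituting into the upper orthogonal bound $\E_\mu[X(\widehat{\theta}_h)]\leq \Delta^2+2\epsilon_\mathcal{F}\Delta$ and using $\epsilon_\mathcal{F}^2\leq H\epsilon_\mathcal{F}$ to absorb the pure $\epsilon_\mathcal{F}^2$ term reproduces exactly the three summands $\tfrac{H^2L+\lambda C_\Theta^2}{K}$, $\sqrt{H^3\epsilon_\mathcal{F} L/K}$ and $H\epsilon_\mathcal{F}$ in the stated bound. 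The main obstacle will be the explicit polynomial-in-$K$ covering of $\mathcal{V}$: because $\widehat{V}_{h+1}$ depends not only on $\widehat{\theta}_{h+1}$ but also on the data-dependent bonus $\beta\sqrt{\nabla f^\top\Sigma_{h+1}^{-1}\nabla f}$, one must quantify the Lipschitz dependence of $\Sigma_{h+1}^{-1}$ on its constituents via matrix-perturbation estimates — which is precisely what drives the $d^2$ contribution inside $C_{d,\log K}$.
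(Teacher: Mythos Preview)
Your proposal is correct and follows essentially the same route as the paper: define the per-sample excess loss $X_k$, bound its variance by $O(H^2)$ times the squared prediction gap, apply Bernstein, use optimality of $\widehat{\theta}_h$, uniformize via a joint covering of $(\theta,V)$ using Lemma~\ref{lem:cov} and Lemma~\ref{lem:cover_V}, and solve the resulting quadratic. The only cosmetic difference is that you parameterize the quadratic in $\Delta=\|f(\widehat{\theta}_h)-f(\theta^\star)\|_{2,\mu}$ and bound the cross term by $2\epsilon_{\mathcal{F}}\Delta$ via Cauchy--Schwarz, whereas the paper works directly with $\E_\mu[X]$ and bounds the cross term by $2H\epsilon_{\mathcal{F}}$; both lead to the stated bound with the same $C_{d,\log K}$.
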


\begin{proof}[Proof of Lemma~\ref{lem:provably_efficient_lemma}]
	\textbf{Step1:} we first prove the case where $\lambda=0$.		
	
	Indeed, fix $h\in[H]$ and any function $V(\cdot)\in\R^\mathcal{S}$. Similarly, define $f_V(s,a):=f(\theta_{\mathbb{T}V},\phi)=\mathcal{P}_h V+\delta_V$. For any fixed $\theta\in\Theta$, denote $g(s,a)=f(\theta,\phi(s,a))$. Then define\footnote{We abuse the notation here to use either $X(g,V,f_V)$ or $X(\theta,V,f_V)$. They mean the same quantity.} 
	\[
	X(g,V,f_V):=(g(s,a)-r-V(s'))^2-(f_V(s,a)-r-V(s'))^2.
	\]
	Since all episodes are independent of each other, $X_k(g,V,f_V):=X(g(s_h^k,a^k_h),V(s_{h+1}^k),f_V(s_h^k,a^k_h))$ are independent r.v.s and it holds 
	\begin{equation}\label{eqn:ell_diff}
	\frac{1}{K}\sum_{k=1}^K X_k(g,V,f_V)=\ell(g)-\ell(f_V).
	\end{equation}
	Next, the variance of $X$ is bounded by:
	\begin{align*}
	&\mathrm{Var}[X(g,V,f_V)]\leq \E_\mu[X(g,f,f_V)^2]\\
	=&\E_\mu\left[\left(\left(g(s_h,a_h)-r_h-V(s_{h+1})\right)^2-\left(f_V(s_h,a_h)-r_h-V(s_{h+1})\right)^2\right)^2\right]\\
	=&\E_\mu\left[(g(s_h,a_h)-f_V(s_h,a_h))^2(g(s_h,a_h)+f_V(s_h,a_h)-2r_h-2V(s_{h+1}))^2\right]\\
	\leq&4H^2\cdot\E_\mu[(g(s_h,a_h)-f_V(s_h,a_h))^2]\\
	\leq &4H^2\cdot \E_\mu\left[\left(g(s_h,a_h)-r_h-V(s_{h+1})\right)^2-\left(f_V(s_h,a_h)-r_h-V(s_{h+1})\right)^2\right]+8H^3 \epsilon_{\mathcal{F}}\;\;(*)\\
	=&4H^2\cdot \E_\mu[X(g,f,f_V)]+8H^3 \epsilon_{\mathcal{F}}
	\end{align*}
	where the step $(*)$ comes from 
	\begin{equation}\label{eqn:decomp}
	\begin{aligned}
	&\E_\mu\left[\left(g(s_h,a_h)-r_h-V(s_{h+1})\right)^2-\left(f_V(s_h,a_h)-r_h-V(s_{h+1})\right)^2\right]\\
	=&\E_\mu\left[\left(g(s_h,a_h)-f_V(s_h,a_h)\right)\cdot\left(g(s_h,a_h)+f_V(s_h,a_h)-2r_h-2V(s_{h+1})\right)\right]\\
	=&\E_\mu\left[\left(g(s_h,a_h)-f_V(s_h,a_h)\right)\cdot\left(g(s_h,a_h)-f_V(s_h,a_h)+2f_V(s_h,a_h)-2r_h-2V(s_{h+1})\right)\right]\\
	=&\E_\mu\left[\left(g(s_h,a_h)-f_V(s_h,a_h)\right)^2\right]+\E_\mu\left[2(g(s_h,a_h)-f_V(s_h,a_h))\E_{P_h}[f_V(s_h,a_h)-r_h-V(s_{h+1})\mid s_h,a_h]\right]\\
	\geq &\E_\mu\left[\left(g(s_h,a_h)-f_V(s_h,a_h)\right)^2\right]-2H \norm{\delta_V}_\infty \geq  \E_\mu\left[\left(g(s_h,a_h)-f_V(s_h,a_h)\right)^2\right]-2H \epsilon_{\mathcal{F}}
	\end{aligned}
	\end{equation}
	where the last step uses law of total expectation and the definition of $f_V$.
	
	Therefore, by Bernstein inequality, with probability $1-\delta$,
	\begin{align*}
	&\E_\mu[X(g,f,f_V)]-\frac{1}{K}\sum_{k=1}^K X_k(g,f,f_V)\\
	\leq & \sqrt{\frac{2\mathrm{Var}[X(g,f,f_V)]\log(1/\delta)}{K}}+\frac{4H^2\log(1/\delta)}{3K}\\
	\leq & \sqrt{\frac{8H^2\E_\mu[X(g,f,f_V)]\log(1/\delta)}{K}}+\sqrt{\frac{16H^3\epsilon_{\mathcal{F}}\log(1/\delta)}{K}}+\frac{4H^2\log(1/\delta)}{3K}.
	\end{align*}
	Now, if we choose $g(s,a):= f(\widehat{\theta}_h,\phi(s,a))$, then $\widehat{\theta}_h$ minimizes $\ell_h(\theta)$, therefore, it also minimizes $\frac{1}{K}\sum_{k=1}^K X_i(\theta,\widehat{V}_{h+1},f_{\widehat{V}_{h+1}})$ and this implies
	\[
	\frac{1}{K}\sum_{k=1}^K X_k(\widehat{\theta}_h,\widehat{V}_{h+1},f_{\widehat{V}_{h+1}})\leq \frac{1}{K}\sum_{k=1}^K X_k({\theta}_{\mathbb{T}\widehat{V}_{h+1}},\widehat{V}_{h+1},f_{\widehat{V}_{h+1}})=0.
	\]
	Therefore, we obtain
	\[
	\E_\mu[X(\widehat{\theta}_h,\widehat{V}_{h+1},f_{\widehat{V}_{h+1}})]\leq \sqrt{\frac{8H^2\cdot\E_\mu[X(\widehat{\theta}_h,\widehat{V}_{h+1},f_{\widehat{V}_{h+1}})]\log(1/\delta)}{K}}+\sqrt{\frac{16H^3\epsilon_{\mathcal{F}}\log(1/\delta)}{K}}+\frac{4H^2\log(1/\delta)}{3K}.
	\]
	However, the above does not hold with probability $1-\delta$ since $\widehat{\theta}_h$ and $\widehat{V}_{h+1}:=\min\{\max_a f(\widehat{\theta}_{h+1},\phi(\cdot,a))-\sqrt{\nabla f(\widehat{\theta}_{h+1},\phi(\cdot,a))^\top A\cdot\nabla f(\theta,\phi(\cdot,a))},H\}
	$ (where $A$ is certain symmetric matrix with bounded norm) depend on $\widehat{\theta}_h$ and $\widehat{\theta}_{h+1}$ which are data-dependent. Therefore, we need to further apply covering Lemma~\ref{lem:cov_X} and choose $\epsilon=O(1/K)$ and a union bound to obtain with probability $1-\delta$,
	\begin{align*}
	\E_\mu[X(\widehat{\theta}_h,\widehat{V}_{h+1},f_{\widehat{V}_{h+1}})]&\leq \sqrt{\frac{8H^2\cdot\E_\mu[X(\widehat{\theta}_h,\widehat{V}_{h+1},f_{\widehat{V}_{h+1}})](\log(1/\delta)+C_{d,\log K})}{K}}+\frac{7H^2(\log(1/\delta)+C_{d,\log K})}{3K}\\
	&+\sqrt{\frac{16H^3\epsilon_{\mathcal{F}}(\log(1/\delta)+C_{d,\log K})}{K}}+4H\epsilon_{\mathcal{F}}
	\end{align*}
	where $C_{d,\log(K)}:=\log(1+{24C_\Theta (H+1)\kappa_1}K)+ d\log\left(1+{288H^2C_\Theta (\kappa_1\sqrt{C_\Theta}+2\sqrt{\kappa_1\kappa_2/\lambda})^2}K^2\right)+d^2\log\left(1+{288H^2\sqrt{d}\kappa_1^2}K^2/\lambda\right)$.\footnote{Here in our realization of Lemma~\ref{lem:cover_V}, we set $B=1/\lambda$ (since $\norm{\Sigma_h^{-1}}_2\leq 1/\lambda$).} Solving this quadratic equation to obtain with probability $1-\delta$,
	\[
	\E_\mu[X(\widehat{\theta}_h,\widehat{V}_{h+1},f_{\widehat{V}_{h+1}})]\leq \frac{36H^2(\log(1/\delta)+C_{d,\log K})}{K}+\sqrt{\frac{16H^3\epsilon_{\mathcal{F}}(\log(1/\delta)+C_{d,\log K})}{K}}+4H\epsilon_{\mathcal{F}}
	\]
	Now according to \eqref{eqn:ell_diff}, by definition we finally have with probability $1-\delta$ (recall the expectation over $\mu$ is taken w.r.t. $(s_h^k,a_h^k,s_{h+1}^k)$ $k=1,...,K$ only)
	\begin{equation}\label{eqn:result_bound}
	\begin{aligned}
	&\E_\mu[\ell_h(\widehat{\theta}_{h+1})]-\E_\mu[\ell_h(\theta_{\mathbb{T}\widehat{V}_{h+1}})]=\E_\mu[X(\widehat{\theta}_h,\widehat{V}_{h+1},f_{\widehat{V}_{h+1}})]\\
	&\leq \frac{36H^2(\log(1/\delta)+C_{d,\log K})}{K}+\sqrt{\frac{16H^3\epsilon_{\mathcal{F}}(\log(1/\delta)+C_{d,\log K})}{K}}+4H\epsilon_{\mathcal{F}}.
	\end{aligned}
	\end{equation}

	\textbf{Step2.} If $\lambda>0$, there is only extra term $\frac{\lambda}{K}\left(\norm{\widehat{\theta}_h}_2-\norm{{\theta}_{\mathbb{T}\widehat{V}_{h+1}}}_2\right)\leq \frac{\lambda}{K}\norm{\widehat{\theta}_h}_2\leq \frac{\lambda C_\Theta^2}{K}$ in addition to above. This finishes the proof.
	
\end{proof}

\begin{theorem}[Provable efficiency (Part I)]\label{thm:theta_diff}
	Let $C_{d,\log K}$ be the same as Lemma~\ref{lem:provably_efficient_lemma}. Then denote $b_{d,K,\epsilon_{\mathcal{F}}}:= \sqrt{\frac{16H^3\epsilon_{\mathcal{F}}(\log(1/\delta)+C_{d,\log K})}{K}}+4H\epsilon_{\mathcal{F}}$, with probability $1-\delta$ 
	\[
	\norm{\widehat{\theta}_h-\theta_{\mathbb{T}\widehat{V}_{h+1}}}_2\leq \sqrt{\frac{36H^2(\log(H/\delta)+C_{d,\log K})+2\lambda C_\Theta^2}{\kappa K}}+\sqrt{\frac{b_{d,K,\epsilon_{\mathcal{F}}}}{\kappa}}+\sqrt{\frac{2H\epsilon_{\mathcal{F}}}{\kappa}},\;\forall h\in[H].
	\]
\end{theorem}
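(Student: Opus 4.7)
The plan is to convert the loss-gap bound of Lemma~\ref{lem:provably_efficient_lemma} into a parameter-distance bound via the orthogonal decomposition \cref{eqn:decomp} together with the uniform coverage condition $(\star)$ of Assumption~\ref{assum:cover}. Concretely, for each fixed $h$, observe that
\[
\E_\mu[\ell_h(\widehat{\theta}_h)]-\E_\mu[\ell_h(\theta_{\mathbb{T}\widehat{V}_{h+1}})]
= \E_\mu[X(\widehat{\theta}_h,\widehat{V}_{h+1},f_{\widehat{V}_{h+1}})] + \tfrac{\lambda}{K}\bigl(\|\widehat{\theta}_h\|_2^2-\|\theta_{\mathbb{T}\widehat{V}_{h+1}}\|_2^2\bigr),
\]
so the bound of Lemma~\ref{lem:provably_efficient_lemma} together with $\|\theta_{\mathbb{T}\widehat{V}_{h+1}}\|_2\leq C_\Theta$ yields an upper bound on $\E_\mu[X(\widehat{\theta}_h,\widehat{V}_{h+1},f_{\widehat{V}_{h+1}})]$ of the same order.

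Next, I would reuse the identity inside the proof of Lemma~\ref{lem:provably_efficient_lemma} (equation \cref{eqn:decomp}), which gives
\[
\E_\mu\bigl[(f(\widehat{\theta}_h,\phi)-f(\theta_{\mathbb{T}\widehat{V}_{h+1}},\phi))^2\bigr]
\;\leq\; \E_\mu[X(\widehat{\theta}_h,\widehat{V}_{h+1},f_{\widehat{V}_{h+1}})] + 2H\epsilon_{\mathcal{F}},
\]
where the $2H\epsilon_{\mathcal{F}}$ absorbs the cross term $2(g-f_V)\,\E_{P_h}[f_V-r-V(s')\mid s,a]$ using $\|\delta_V\|_\infty\leq\epsilon_{\mathcal{F}}$. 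Then the coverage assumption $(\star)$ in Assumption~\ref{assum:cover} (with $\theta_1=\widehat{\theta}_h$, $\theta_2=\theta_{\mathbb{T}\widehat{V}_{h+1}}$) converts the left-hand side into $\kappa\|\widehat{\theta}_h-\theta_{\mathbb{T}\widehat{V}_{h+1}}\|_2^2$, giving
\[
\kappa\,\|\widehat{\theta}_h-\theta_{\mathbb{T}\widehat{V}_{h+1}}\|_2^2 \;\leq\; \E_\mu[X(\widehat{\theta}_h,\widehat{V}_{h+1},f_{\widehat{V}_{h+1}})] + 2H\epsilon_{\mathcal{F}}.
\]

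Finally, I would plug in the Lemma~\ref{lem:provably_efficient_lemma} bound (applied with failure probability $\delta/H$ so that after a union bound over $h\in[H]$ the combined failure probability is $\delta$, yielding the $\log(H/\delta)$ inside the first term), divide by $\kappa$, and take square roots using the elementary inequality $\sqrt{a+b+c}\leq\sqrt{a}+\sqrt{b}+\sqrt{c}$ to split the three contributions---the main $\widetilde{O}(H^2 d/K)$ piece, the $b_{d,K,\epsilon_{\mathcal{F}}}$ piece coming from the $\sqrt{\cdot}+4H\epsilon_{\mathcal{F}}$ terms in Lemma~\ref{lem:provably_efficient_lemma}, and the $2H\epsilon_{\mathcal{F}}$ piece from the orthogonal decomposition. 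The only real subtlety is step two: the decomposition \cref{eqn:decomp} is only ``almost orthogonal'' when $\epsilon_{\mathcal{F}}>0$, and one has to be careful that the residual from the conditional mean of $f_V-r-V(s')$ is deterministically $O(\epsilon_{\mathcal{F}})$ rather than producing additional stochastic error; the rest is algebra, a single invocation of Assumption~\ref{assum:cover}$(\star)$, and a union bound over $h$.
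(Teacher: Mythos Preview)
Your proposal is correct and follows essentially the same route as the paper's proof: apply Lemma~\ref{lem:provably_efficient_lemma} with a union bound over $h\in[H]$, strip off the regularizer to isolate $\E_\mu[X(\widehat{\theta}_h,\widehat{V}_{h+1},f_{\widehat{V}_{h+1}})]$, use the almost-orthogonal decomposition \eqref{eqn:decomp} to lower-bound it by $\E_\mu[(f(\widehat{\theta}_h,\phi)-f(\theta_{\mathbb{T}\widehat{V}_{h+1}},\phi))^2]-2H\epsilon_{\mathcal{F}}$, and then invoke Assumption~\ref{assum:cover}$(\star)$ to pass to $\kappa\|\widehat{\theta}_h-\theta_{\mathbb{T}\widehat{V}_{h+1}}\|_2^2$ before taking square roots. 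The paper's proof (equations \eqref{eqn:inter_one} and \eqref{eqn:inter_two}) does exactly this, in the same order and with the same ingredients.
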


\begin{proof}[Proof of Theorem~\ref{thm:theta_diff}]
	Apply a union bound in Lemma~\ref{lem:provably_efficient_lemma}, we have with probability $1-\delta$,
	{\small
	\begin{equation}\label{eqn:inter_one}
	\begin{aligned}
	&\E_\mu[\ell_h(\widehat{\theta}_{h})]-\E_\mu[\ell_h(\theta_{\mathbb{T}\widehat{V}_{h+1}})]\leq \frac{36H^2(\log(H/\delta)+C_{d,\log K})+\lambda C_\Theta^2}{K}+b_{d,K,\epsilon_{\mathcal{F}}},\;\;\forall h\in[H]\\
	\Rightarrow&\E_\mu[\ell_h(\widehat{\theta}_{h})-\frac{\lambda}{K}\norm{\widehat{\theta}_{h}}_2^2]-\E_\mu[\ell_h(\theta_{\mathbb{T}\widehat{V}_{h+1}})-\frac{\lambda}{K}\norm{\theta_{\mathbb{T}\widehat{V}_{h+1}}}_2^2]\leq \frac{36H^2(\log(H/\delta)+C_{d,\log K})+2\lambda C_\Theta^2}{K}+b_{d,K,\epsilon_{\mathcal{F}}}
	\end{aligned}
	\end{equation}
	}Now we prove for all $h\in[H]$,
	{\small
	\begin{equation}\label{eqn:inter_two}
	\E_\mu\left[\left(f(\widehat{\theta}_{h},\phi(\cdot,\cdot))-f(\theta_{\mathbb{T}\widehat{V}_{h+1}},\phi(\cdot,\cdot))\right)^2\right]\leq\E_\mu\left[\ell_h(\widehat{\theta}_{h})-\frac{\lambda\norm{\widehat{\theta}_{h}}_2^2}{K}\right]-\E_\mu\left[\ell_h(\theta_{\mathbb{T}\widehat{V}_{h+1}})-\frac{\lambda\norm{\theta_{\mathbb{T}\widehat{V}_{h+1}}}_2^2}{K}\right]+2H\epsilon_{\mathcal{F}}.
	\end{equation}
	}Indeed, similar to \eqref{eqn:result_bound}, by definition we have 
	\begin{align*}
	&\E_\mu\left[\ell_h(\widehat{\theta}_{h})-\frac{\lambda\norm{\widehat{\theta}_{h}}_2^2}{K}\right]-\E_\mu\left[\ell_h(\theta_{\mathbb{T}\widehat{V}_{h+1}})-\frac{\lambda\norm{\theta_{\mathbb{T}\widehat{V}_{h+1}}}_2^2}{K}\right]=\E_\mu[X(\widehat{\theta}_h,\widehat{V}_{h+1},f_{\widehat{V}_{h+1}})]\\
	=&\E_\mu\left(\left[f\left(\widehat{\theta}_h, \phi(s_h,a_h)\right)-r_h-\widehat{V}_{h+1}(s_{h+1})\right]^{2}-\left[f\left(\theta_{\mathbb{T}\widehat{V}_{h+1}}, \phi(s_h,a_h)\right)-r_h-\widehat{V}_{h+1}(s_{h+1})\right]^{2}\right)\\
	=&\E_\mu\left[\left(f(\widehat{\theta}_{h},\phi(\cdot,\cdot))-f(\theta_{\mathbb{T}\widehat{V}_{h+1}},\phi(\cdot,\cdot))\right)^2\right]\\
	+&\E_\mu\left[\left(f(\widehat{\theta}_{h},\phi(s_h,a_h))-f(\theta_{\mathbb{T}\widehat{V}_{h+1}},\phi(s_h,a_h))\right)\cdot\left(f\left(\theta_{\mathbb{T}\widehat{V}_{h+1}}, \phi(s_h,a_h)\right)-r_h-\widehat{V}_{h+1}(s_{h+1})\right)\right]\\
	=&\E_\mu\left[\left(f(\widehat{\theta}_{h},\phi(\cdot,\cdot))-f(\theta_{\mathbb{T}\widehat{V}_{h+1}},\phi(\cdot,\cdot))\right)^2\right]\\
	+&\E_\mu\left[\left(f(\widehat{\theta}_{h},\phi(s_h,a_h))-f(\theta_{\mathbb{T}\widehat{V}_{h+1}},\phi(s_h,a_h))\right)\cdot\E\left(f\left(\theta_{\mathbb{T}\widehat{V}_{h+1}}, \phi(s_h,a_h)\right)-r_h-\widehat{V}_{h+1}(s_{h+1})\middle |s_h,a_h\right)\right]\\
	\geq &\E_\mu\left[\left(f(\widehat{\theta}_{h},\phi(\cdot,\cdot))-f(\theta_{\mathbb{T}\widehat{V}_{h+1}},\phi(\cdot,\cdot))\right)^2\right]-2H\epsilon_{\mathcal{F}}
	\end{align*}
	where the third identity uses $\mu$ is taken w.r.t. $s_h,a_h,s_{h+1}$ (recall Lemma~\ref{lem:provably_efficient_lemma}) and law of total expectation. The first inequality uses the definition of $\theta_{\mathbb{T}\widehat{V}_{h+1}}$.
	
	Now apply Assumption~\ref{assum:cover}, we have 
	\begin{align*}
	&\E_\mu\left[\left(f(\widehat{\theta}_{h},\phi(\cdot,\cdot))-f(\theta_{\mathbb{T}\widehat{V}_{h+1}},\phi(\cdot,\cdot))\right)^2\right]\geq \kappa\norm{\widehat{\theta}_{h}-\theta_{\mathbb{T}\widehat{V}_{h+1}}}^2_2,\\
	\end{align*}
	Combine the above with \eqref{eqn:inter_one} and \eqref{eqn:inter_two}, we obtain the stated result.
\end{proof}

\begin{theorem}[Provable efficiency (Part II)]\label{thm:theta_diff_star}
	Let $C_{d,\log K}$ be the same as Lemma~\ref{lem:provably_efficient_lemma} and suppose $\epsilon_{\mathcal{F}}=0$. Furthermore, suppose $\lambda\leq 1/2C_\Theta^2$ and $K\geq\max\left\{512\frac{\kappa_1^4}{\kappa^2}\left(\log(\frac{2d}{\delta})+d\log(1+\frac{4\kappa_1^3\kappa_2C_\Theta K^3}{\lambda^2})\right),\frac{4\lambda}{\kappa}\right\}$. Then, with probability $1-\delta$, $\forall h\in[H]$,
	{\small
		\[
		\sup_{s,a}\left|f(\widehat{\theta}_h,\phi(s,a))-f(\theta^\star_h,\phi(s,a))\right|\leq \left(\kappa_1H\sqrt{\frac{36H^2(\log(H^2/\delta)+C_{d,\log K})+2\lambda C_\Theta^2}{\kappa }}+\frac{2H^2d\kappa_1}{\sqrt{\kappa }}\right)\sqrt{\frac{1}{K}}+O(\frac{1}{K}).
		\]
	}Furthermore, we have with probability $1-\delta$,
	{\small
		\begin{align*}
		\sup_{h}\norm{\widehat{V}_{h}-V^\star_{h}}_\infty\leq &\left(\kappa_1H\sqrt{\frac{36H^2(\log(H^2/\delta)+C_{d,\log K})+2\lambda C_\Theta^2}{\kappa }}+\frac{2H^2d\kappa_1}{\sqrt{\kappa }}\right)\sqrt{\frac{1}{K}}+O(\frac{1}{K})\\
		=&\widetilde{O}\left(\kappa_1H^2\sqrt{\frac{d^2}{\kappa}}\sqrt{\frac{1}{K}}\right)
		\end{align*}
	}where $\widetilde{O}$ absorbs Polylog terms and higher order terms. Lastly, it also holds for all $h\in[H]$, w.p. $1-\delta$
	\begin{align*}
	\norm{\widehat{\theta}_h-\theta^\star_h}_2\leq& \left(\kappa_1H\frac{\sqrt{72H^2(\log(H^2/\delta)+C_{d,\log K})+4\lambda C_\Theta^2}}{\kappa }+\frac{4H^2d\kappa_1}{{\kappa }}\right)\sqrt{\frac{1}{K}}+O(\frac{1}{K})\\
	=&\widetilde{O}\left(\frac{\kappa_1H^2d}{\kappa}\sqrt{\frac{1}{K}}\right)
	\end{align*}
\end{theorem}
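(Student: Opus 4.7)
\textbf{Proof plan for Theorem~\ref{thm:theta_diff_star}.} The plan is a backward induction over $h$ that couples three quantities: the pointwise parameter-error gap $e_h := \sup_{s,a} |f(\widehat\theta_h,\phi(s,a))-f(\theta^\star_h,\phi(s,a))|$, the value-function sup-norm error $\|\widehat V_h-V^\star_h\|_\infty$, and finally the Euclidean parameter gap $\|\widehat\theta_h-\theta^\star_h\|_2$. The base case $\widehat V_{H+1}=V^\star_{H+1}\equiv 0$ is immediate, so it suffices to show a one-step recursion and then unroll it over the $H$ stages.

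For the one-step recursion, first recall $\theta^\star_h=\theta_{\mathbb T V^\star_{h+1}}$ by Definition~\ref{def:pBo}. Decompose
\[
f(\widehat\theta_h,\phi)-f(\theta^\star_h,\phi)
=\bigl[f(\widehat\theta_h,\phi)-f(\theta_{\mathbb T\widehat V_{h+1}},\phi)\bigr]
+\bigl[f(\theta_{\mathbb T\widehat V_{h+1}},\phi)-f(\theta_{\mathbb T V^\star_{h+1}},\phi)\bigr].
\]
For the first bracket I would apply the mean-value theorem together with the uniform gradient bound $\|\nabla_\theta f\|_2\le \kappa_1$ from Remark~\ref{remark:fc}, and then invoke Theorem~\ref{thm:theta_diff} (with $\epsilon_{\mathcal F}=0$) to control $\|\widehat\theta_h-\theta_{\mathbb T\widehat V_{h+1}}\|_2$ by $\sqrt{(36H^2(\log(H/\delta)+C_{d,\log K})+2\lambda C_\Theta^2)/(\kappa K)}$. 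For the second bracket, Bellman completeness gives $f(\theta_{\mathbb T V},\phi)=\mathcal P_h V$ exactly, so this bracket equals $\mathcal P_h(\widehat V_{h+1}-V^\star_{h+1})$ and is at most $\|\widehat V_{h+1}-V^\star_{h+1}\|_\infty$ by the contraction of $\mathcal P_h$ on $L^\infty$. This yields $e_h\le \kappa_1 A_K/\sqrt{K}+\|\widehat V_{h+1}-V^\star_{h+1}\|_\infty$ with the advertised $A_K$.

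Next I would translate $e_h$ back to a $V$-norm bound. Since $Q^\star_h\in[0,H-h+1]$, the clip-to-$[0,H-h+1]$ map is $1$-Lipschitz and non-expansive, so $|\widehat Q_h(s,a)-Q^\star_h(s,a)|\le |f(\widehat\theta_h,\phi)-f(\theta^\star_h,\phi)|+\Gamma_h(s,a)$, and taking $\max_a$ on each side gives $\|\widehat V_h-V^\star_h\|_\infty\le e_h+\sup_{s,a}\Gamma_h(s,a)$. The pessimism term is controlled uniformly by Lemma~\ref{lem:sqrt_K_reduction}: $\sup_{s,a}\|\nabla_\theta f(\widehat\theta_h,\phi(s,a))\|_{\Sigma_h^{-1}}\le 2\kappa_1/\sqrt{\kappa K}+O(1/K)$, which, combined with $\beta=8dH\iota$, yields $\sup_{s,a}\Gamma_h(s,a)\lesssim dH\kappa_1/\sqrt{\kappa K}$. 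Chaining the two inequalities gives $e_h\le e_{h+1}+\kappa_1 A_K/\sqrt{K}+O(dH\kappa_1/\sqrt{\kappa K})$, and unrolling from $h$ to $H$ (with a union bound over the $H$ stages, which inflates $\log(1/\delta)$ to $\log(H^2/\delta)$) produces exactly the claimed bound on $e_h$ and hence on $\|\widehat V_h-V^\star_h\|_\infty$.

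For the last conclusion, I would invoke the identifiability condition~$(\star)$ in Assumption~\ref{assum:cover} applied to the pair $(\widehat\theta_h,\theta^\star_h)$: $\kappa\|\widehat\theta_h-\theta^\star_h\|_2^2\le \E_{\mu,h}[(f(\widehat\theta_h,\phi)-f(\theta^\star_h,\phi))^2]\le e_h^2$, so $\|\widehat\theta_h-\theta^\star_h\|_2\le e_h/\sqrt{\kappa}$, which gives the stated $\widetilde O(\kappa_1 H^2 d/(\kappa\sqrt K))$ rate. The main technical obstacle I anticipate is not any single step but the bookkeeping of the backward induction: one must apply Theorem~\ref{thm:theta_diff}, Lemma~\ref{lem:sqrt_K_reduction} and the gradient/Hessian bounds of Remark~\ref{remark:fc} simultaneously for all $h\in[H]$ (hence the $\log(H^2/\delta)$ factor and the $K$-threshold on $K_0$), and one must verify that the clipping in Line~8 of Algorithm~\ref{alg:PFQL} really is non-expansive against $Q^\star_h$ rather than against $f(\theta^\star_h,\phi)$ itself, so that the recursion closes without additional leakage.
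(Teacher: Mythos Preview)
Your plan is correct and follows essentially the same route as the paper: backward induction with the two-bracket decomposition via $\theta_{\mathbb T\widehat V_{h+1}}$, controlling the first bracket by Theorem~\ref{thm:theta_diff} and the second by $\|\widehat V_{h+1}-V^\star_{h+1}\|_\infty$, then closing the recursion through the non-expansive clip and the uniform bound on $\Gamma_h$ from Lemma~\ref{lem:sqrt_K_reduction}, and finally deducing the $\ell_2$ parameter gap from Assumption~\ref{assum:cover}$(\star)$. The only cosmetic difference is that the paper writes $\Gamma_h$ with coefficient $dH$ (absorbing $\beta$'s constant and $\iota$ into the $\widetilde O$), whereas you track $\beta=8dH\iota$ explicitly; the resulting bounds coincide up to the polylog factor.
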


\begin{proof}[Proof of Theorem~\ref{thm:theta_diff_star}]
	\textbf{Step1:} we show the first result.
	
	We prove this by backward induction. When $h=H+1$, by convention $f(\widehat{\theta}_h,\phi(s,a))=f(\theta^\star_h,\phi(s,a))=0$ so the base case holds. Suppose for $h+1$, with probability $1-(H-h)\delta$, it holds true that $\sup_{s,a}\left|f(\widehat{\theta}_{h+1},\phi(s,a))-f(\theta^\star_{h+1},\phi(s,a))\right|\leq C_{h+1}\sqrt{\frac{1}{K}}+a(h+1)$, we next consider the case for $t=h$.
	
	On one hand, by Theorem~\ref{thm:theta_diff}, we have with probability $1-\delta/2$,
	\begin{align*}
	&\sup_{s,a}\left|f(\widehat{\theta}_h,\phi(s,a))-f({\theta}^\star_h,\phi(s,a))\right|\\
	\leq &\sup_{s,a}\left|f(\widehat{\theta}_h,\phi(s,a))-f(\theta_{\mathbb{T}\widehat{V}_{h+1}},\phi(s,a))\right|+\sup_{s,a}\left|f(\theta_{\mathbb{T}\widehat{V}_{h+1}},\phi(s,a))-f({\theta}^\star_h,\phi(s,a))\right|\\
	=&\sup_{s,a}\left|\nabla f(\xi,\phi(s,a))^\top(\widehat{\theta}_h-\theta_{\mathbb{T}\widehat{V}_{h+1}})\right|+\sup_{s,a}\left|f(\theta_{\mathbb{T}\widehat{V}_{h+1}},\phi(s,a))-f(\theta_{\mathbb{T}V^\star_{h+1}},\phi(s,a))\right|\\
	\leq & \kappa_1\cdot\norm{\widehat{\theta}_h-\theta_{\mathbb{T}\widehat{V}_{h+1}}}_2+\sup_{s,a}\left|\mathcal{P}_{h,s,a}\widehat{V}_{h+1}-\mathcal{P}_{h,s,a}{V}^\star_{h+1}\right|\\
	\leq&\kappa_1\sqrt{\frac{36H^2(\log(H/\delta)+C_{d,\log K})+2\lambda C_\Theta^2}{\kappa K}}+\norm{\widehat{V}_{h+1}-V^\star_{h+1}}_\infty,
	\end{align*}
	Recall $\widehat{V}_{h+1}(\cdot):=\min\{\max_a f(\widehat{\theta}_{h+1},\phi(\cdot,a))-\Gamma_{h}(\cdot,a),H\}$ and $V^\star_{h+1}(\cdot)=\max_a f(\theta^\star_{h+1},\phi(\cdot,a))=\min\{\max_a f(\theta^\star_{h+1},\phi(\cdot,a)),H\}$, we obtain
	\begin{equation}\label{eqn:V_to_f}
	\begin{aligned}
	\norm{\widehat{V}_{h+1}-V^\star_{h+1}}_\infty\leq \sup_{s,a}\left|f(\widehat{\theta}_{h+1},\phi(s,a))-f(\theta^\star_{h+1},\phi(s,a))\right|+\sup_{h,s,a}\Gamma_h(s,a)
	\end{aligned}
	\end{equation}
	Note the above holds true for any generic $\Gamma_h(s,a)$. In particular, according to Algorithm~\ref{alg:PFQL}, we specify 
	\[
	\Gamma_{h}(\cdot, \cdot)= dH\sqrt{\nabla_{\theta} f(\widehat{\theta}_h,\phi(\cdot,\cdot))^{\top} \Sigma_{h}^{-1} \nabla_{\theta} f(\widehat{\theta}_h,\phi(\cdot,\cdot))}\left(+\widetilde{O}(\frac{1}{K})\right)
	\]
	and by Lemma~\ref{lem:sqrt_K_reduction}, with probability $1-\delta$, 
	\[
	\Gamma_h\leq  \frac{2dH\kappa_1}{\sqrt{\kappa K}}+\widetilde{O}(\frac{1}{K})
	\]

	and by a union bound this implies with probability $1-(H-h+1)\delta$,
	\begin{align*}
	&\sup_{s,a}\left|f(\widehat{\theta}_h,\phi(s,a))-f({\theta}^\star_h,\phi(s,a))\right|\\
	\leq& C_{h+1}\sqrt{\frac{1}{K}}+\kappa_1\sqrt{\frac{36H^2(\log(H/\delta)+C_{d,\log K})+2\lambda C_\Theta^2}{\kappa K}}+\frac{2dH\kappa_1}{\sqrt{\kappa K}}+\widetilde{O}(\frac{1}{K})
	:=C_h\sqrt{\frac{1}{K}}+\widetilde{O}(\frac{1}{K})
	\end{align*}
	Solving for $C_h$, we obtain $C_h\leq \kappa_1H\sqrt{\frac{36H^2(\log(H/\delta)+C_{d,\log K})+2\lambda C_\Theta^2}{\kappa }}+H\frac{2dH\kappa_1}{\sqrt{\kappa }}$ for all $H$. 
	By a union bound (replacing $\delta$ by $\delta/H$), we obtain the stated result.

	\textbf{Step2:} Utilizing the intermediate result \eqref{eqn:V_to_f}, we directly have with probability $1-\delta$,
	\[
	\sup_{h}\norm{\widehat{V}_{h}-V^\star_{h}}_\infty\leq \sup_{s,a}\left|f(\widehat{\theta}_h,\phi(s,a))-f({\theta}^\star_h,\phi(s,a))\right|+\frac{2dH\kappa_1}{\sqrt{\kappa K}}+O(\frac{1}{K}),
	\]
	where $\sup_{s,a}\left|f(\widehat{\theta}_h,\phi(s,a))-f({\theta}^\star_h,\phi(s,a))\right|$ can be bounded using Step1.

	\textbf{Step3:} Denote $M:=\left(\kappa_1H\sqrt{\frac{36H^2(\log(H^2/\delta)+C_{d,\log K})+2\lambda C_\Theta^2}{\kappa }}+\frac{2H^2d\kappa_1}{\sqrt{\kappa }}\right)\sqrt{\frac{1}{K}}+O(\frac{1}{K})$, then by Step1 we have with probability $1-\delta$ (here $\xi$ is some point between $\widehat{\theta}_h$ and $\theta^\star_h$) for all $h\in[H]$
	\begin{align*}
	&M^2\geq \sup_{s,a}\left|f(\widehat{\theta}_h,\phi(s,a))-f({\theta}^\star_h,\phi(s,a))\right|^2\\
	\geq &\E_{\mu,h}[(f(\widehat{\theta}_h,\phi(s,a))-f({\theta}^\star_h,\phi(s,a)))^2]\geq \kappa \norm{\widehat{\theta}_h-\theta^\star_h}_2^2
	\end{align*}
	where the last inequality is by Assumption~\ref{assum:cover}. Solve this to obtain the stated result.
	
\end{proof}

\section{With positive Bellman completeness coefficient $\epsilon_{\mathcal{F}}>0$}\label{sec:epf}

In Theorem~\ref{thm:PFQL}, we consider the case where $\epsilon_{\mathcal{F}}=0$. If $\epsilon_{\mathcal{F}}>0$, similar guarantee can be achieved with the measurement of model misspecification. For instance, the additional error $\sqrt{\frac{16H^3\epsilon_{\mathcal{F}}(\log(1/\delta)+C_{d,\log K})}{K}}+4H\epsilon_{\mathcal{F}}$ will show up in Lemma~\ref{lem:provably_efficient_lemma} (as stated in the current version), $\sqrt{\frac{b_{d,K,\epsilon_{\mathcal{F}}}}{\kappa}}+\sqrt{\frac{2H\epsilon_{\mathcal{F}}}{\kappa}}$ will show up in Lemma~\ref{thm:theta_diff}. Then the decomposition in \eqref{eqn:decomp_iota} will incur the extra $\delta_{\widehat{V}_{h+1}}$ term with $\delta_{\widehat{V}_{h+1}}$ might not be $0$. The analysis with positive $\epsilon_{\mathcal{F}}>0$ will make the proofs more intricate but incurs no additional technical challenge. Since the inclusion of this quantity is not our major focus, as a result, we only provide the proof for the case where $\epsilon_{\mathcal{F}}=0$ so the readers can focus on the more critical components that characterize the hardness of differentiable function class.

\section{VFQL and its analysis}\label{sec:vfql}

We present the \emph{vanilla fitted Q-learning} (VFQL) Algorithm~\ref{alg:VFQL} as follows. For VFQL, no pessimism is used and we assume $\widehat{\theta}_h\in\Theta$ without loss of generality.

\begin{algorithm}[H]
	\caption{Vanilla Fitted Q-Learning (VFQL)}
	\label{alg:VFQL}
	\begin{algorithmic}[1]
		\STATE {\bfseries Input:}  Offline Dataset $\mathcal{D}=\left\{\left(s_{h}^{k}, a_{h}^{k}, r_{h}^{k},s_{h+1}^k\right)\right\}_{k, h=1}^{K, H}$. Denote $\phi_{h,k}:=\phi(s_h^k,a_h^k)$.
		\STATE {\bfseries Initialization:} Set $\widehat{V}_{H+1}(\cdot) \leftarrow 0$ and $\lambda>0$.
		\FOR{$h=H,H-1,\ldots,1$}
		\STATE Set $\widehat{\theta}_h\leftarrow \argmin_{\theta\in\Theta}\left\{\sum_{k=1}^{K}\left[f\left(\theta, \phi_{h,k}\right)-r_{h,k}-\widehat{V}_{h+1}(s_{h+1}^k)\right]^{2}+\lambda \cdot\norm{\theta}_2^2\right\}$
		\STATE Set  $\widehat{Q}_{h}(\cdot, \cdot) \leftarrow \min \left\{f(\widehat{\theta}_h,\phi(\cdot,\cdot)), H-h+1\right\}^{+}$
		\STATE  Set  $\widehat{\pi}_{h}(\cdot \mid \cdot) \leftarrow \arg \max _{\pi_{h}}\big\langle\widehat{Q}_{h}(\cdot, \cdot), \pi_{h}(\cdot \mid \cdot)\big\rangle_{\mathcal{A}}$, $\widehat{V}_{h}(\cdot) \leftarrow\max _{\pi_{h}}\big\langle\widehat{Q}_{h}(\cdot, \cdot), \pi_{h}(\cdot \mid \cdot)\big\rangle_{\mathcal{A}}$
		\ENDFOR
		\STATE {\bfseries Output:} $\left\{\widehat{\pi}_{h}\right\}_{h=1}^{H}$.
	\end{algorithmic}
\end{algorithm}

\subsection{Analysis for VFQL ({Theorem~\ref{thm:VFQL}})} 

Recall $\iota_h(s,a) := \mathcal{P}_h \widehat{V}_{h+1}(s,a)-\widehat{Q}_h(s,a)$ and the definition of Bellman operator~\ref{def:pBo}. Note $\min\{\cdot,H-h+1\}^+$ is a non-expansive operator, therefore we have 
\begin{align*}
|\iota_h(s,a)|=&|\mathcal{P}_h \widehat{V}_{h+1}(s,a)-\widehat{Q}_h(s,a)|=\left|\min \left\{ \mathcal{P}_h \widehat{V}_{h+1}(s,a), H-h+1\right\}^{+}-\min \left\{f(\widehat{\theta}_h,\phi(\cdot,\cdot)), H-h+1\right\}^{+}\right|\\
\leq &\left|\mathcal{P}_h \widehat{V}_{h+1}(s,a)-f(\widehat{\theta}_h,\phi(\cdot,\cdot))\right|\leq \left|f(\theta_{\mathbb{T}\widehat{V}_{h+1}})-f(\widehat{\theta}_h,\phi(\cdot,\cdot))\right|+\epsilon_{\mathcal{F}}.
\end{align*}
By Lemma~\ref{lem:sub_decomp}, we have for any $\pi$,{
\small
\begin{equation}\label{eqn:vfql_decom}
\begin{aligned}
v^{\pi}-v^{\widehat{\pi}}=&-\sum_{h=1}^H E_{\widehat{\pi}}[\iota_h(s_h,a_h)]+\sum_{h=1}^H E_{\pi}[\iota_h(s_h,a_h)]\leq \sum_{h=1}^H E_{\widehat{\pi}}[|\iota_h(s_h,a_h)|]+\sum_{h=1}^H E_{\pi}[|\iota_h(s_h,a_h)|]\\
\leq& \sum_{h=1}^H \E_{\widehat{\pi}}[|f(\theta_{\mathbb{T}\widehat{V}_{h+1}},\phi(\cdot,\cdot))-f(\widehat{\theta}_h,\phi(\cdot,\cdot))|]+\sum_{h=1}^H \E_{{\pi}}[|f(\theta_{\mathbb{T}\widehat{V}_{h+1}},\phi(\cdot,\cdot))-f(\widehat{\theta}_h,\phi(\cdot,\cdot))|]+2H\epsilon_{\mathcal{F}}\\
\leq&\sum_{h=1}^H \sqrt{\E_{\widehat{\pi}}[|f(\theta_{\mathbb{T}\widehat{V}_{h+1}},\phi(\cdot,\cdot))-f(\widehat{\theta}_h,\phi(\cdot,\cdot))|^2]}+\sum_{h=1}^H  \sqrt{\E_{{\pi}}[|f(\theta_{\mathbb{T}\widehat{V}_{h+1}},\phi(\cdot,\cdot))-f(\widehat{\theta}_h,\phi(\cdot,\cdot))|^2]}+2H\epsilon_{\mathcal{F}}\\
\leq& 2 \sqrt{C_{\mathrm{eff}}}\sum_{h=1}^H \sqrt{\E_{\mu,h}[|f(\theta_{\mathbb{T}\widehat{V}_{h+1}},\phi(\cdot,\cdot))-f(\widehat{\theta}_h,\phi(\cdot,\cdot))|^2]}+2H\epsilon_{\mathcal{F}}
\end{aligned}
\end{equation}
}where the second inequality uses Cauchy inequality and the third one uses the definition of concentrability coefficient \ref{assume:con_co}.

Next, for VFQL, there is no pessimism therefore the quantity $B$ in Lemma~\ref{lem:cov_X} is zero, hence the covering number applied in Lemma~\ref{lem:provably_efficient_lemma} is bounded by $C_{d,\log(K)}\leq \widetilde{O}(d)$ and 
	\[
	\E_\mu[\ell_h(\widehat{\theta}_{h})]-\E_\mu[\ell_h(\theta_{\mathbb{T}\widehat{V}_{h+1}})]\leq \frac{36H^2(\log(1/\delta)+C_{d,\log K})+\lambda C_\Theta^2}{K}+\sqrt{\frac{16H^3\epsilon_{\mathcal{F}}(\log(1/\delta)+C_{d,\log K})}{K}}+4H\epsilon_{\mathcal{F}}.
	\]
Now leveraging \eqref{eqn:inter_one} and \eqref{eqn:inter_two} in Theorem~\ref{thm:theta_diff} to obtain 
\begin{align*}
\E_\mu\left[\left(f(\widehat{\theta}_{h},\phi(\cdot,\cdot))-f(\theta_{\mathbb{T}\widehat{V}_{h+1}},\phi(\cdot,\cdot))\right)^2\right]\leq&\E_\mu\left[\ell_h(\widehat{\theta}_{h})-\frac{\lambda\norm{\widehat{\theta}_{h}}_2^2}{K}\right]-\E_\mu\left[\ell_h(\theta_{\mathbb{T}\widehat{V}_{h+1}})-\frac{\lambda\norm{\theta_{\mathbb{T}\widehat{V}_{h+1}}}_2^2}{K}\right]+2H\epsilon_{\mathcal{F}}\\
\leq& \frac{36H^2(\log(H/\delta)+C_{d,\log K})+2\lambda C_\Theta^2}{K}+b_{d,K,\epsilon_{\mathcal{F}}}+2H\epsilon_{\mathcal{F}}
\end{align*}

Plug the above into \eqref{eqn:vfql_decom}, we obtain with probability $1-\delta$, for all policy $\pi$,
\begin{align*}
&v^\pi-v^{\widehat{\pi}}\leq 2\sqrt{C_{\mathrm{eff}}}H \sqrt{\frac{36H^2(\log(H/\delta)+C_{d,\log K})+2\lambda C_\Theta^2}{K}+b_{d,K,\epsilon_{\mathcal{F}}}+2H\epsilon_{\mathcal{F}}}+2H\epsilon_{\mathcal{F}}\\
= &2\sqrt{C_{\mathrm{eff}}}H \sqrt{\frac{36H^2(\log(H/\delta)+C_{d,\log K})+2\lambda C_\Theta^2}{K}+\sqrt{\frac{16H^3\epsilon_{\mathcal{F}}(\log(1/\delta)+C_{d,\log K})}{K}}+6H\epsilon_{\mathcal{F}}}+2H\epsilon_{\mathcal{F}}\\
=&\sqrt{C_{\mathrm{eff}}}H\cdot\widetilde{O}\left(\sqrt{\frac{H^2d+\lambda C^2_\Theta}{K}}+\sqrt[\frac{1}{4}]{\frac{H^3d\epsilon_{\mathcal{F}}}{K}} \right)+O(\sqrt{C_\mathrm{eff} H^3 \epsilon_{\mathcal{F}}}+H\epsilon_{\mathcal{F}})
\end{align*}
This finishes the proof of {Theorem}~\ref{thm:VFQL}.

\section{Proofs for VAFQL}\label{sec:VAFQL}

In this section, we present the analysis for \emph{variance-aware fitted Q learning} (VAFQL). Throughout the whole section, we assume $\epsilon_{\mathcal{F}}=0$, \emph{i.e.} the exact Bellman-Completeness holds. The algorithm is presented in the following. Before giving the proofs of Theorem~\ref{alg:VAFQL}, we first prove some useful lemmas.

\begin{algorithm}[thb]
	{\small
		\caption{{V}ariance-{A}ware {F}itted Q {Learning} (VAFQL)}
		\label{alg:VAFQL}
		\begin{algorithmic}[1]
			\STATE {\bfseries Input:}  Split dataset $\mathcal{D}=\left\{\left(s_{h}^{k}, a_{h}^{k}, r_{h}^{k}\right)\right\}_{k, h=1}^{K, H}$ $\mathcal{D}'=\left\{\left(\bar{s}_{h}^{k}, \bar{a}_{h}^{k},\bar{r}_{h}^{k}\right)\right\}_{k, h=1}^{K, H}$. Require $\beta$.
			\STATE {\bfseries Initialization:} Set $\widehat{V}_{H+1}(\cdot) \leftarrow 0$. Denote $\phi_{h,k}:=\phi(s_h^k,a_h^k),\bar{\phi}_{h,k}:=\phi(\bar{s}_h^k,\bar{a}_h^k)$
			\FOR{$h=H,H-1,\ldots,1$}
			\STATE Set $\u_h\leftarrow \argmin_{\theta\in\Theta}\left\{\sum_{k=1}^{K}\left[f\left(\theta, \bar{\phi}_{h,k}\right)-\widehat{V}_{h+1}(\bar{s}_{h+1}^k)\right]^{2}+\lambda \cdot\norm{\theta}_2^2\right\}$
			\STATE Set $\v_h\leftarrow \argmin_{\theta\in\Theta}\left\{ \sum_{k=1}^{K}\left[f\left(\theta, \bar{\phi}_{h,k}\right)-\widehat{V}^2_{h+1}(\bar{s}_{h+1}^k)\right]^{2}+\lambda \cdot\norm{\theta}_2^2\right\}$
			\STATE Set $\big[\widehat{\Var}_{h} \widehat{V}_{h+1}\big](\cdot, \cdot)=f(\v_h,\phi(\cdot,\cdot))_{\left[0,(H-h+1)^{2}\right]}-\big[f(\u_h,\phi(\cdot,\cdot))_{[0, H-h+1]}\big]^{2}$
			\STATE Set $\widehat{\sigma}_h(\cdot,\cdot)^2\leftarrow\max\{1,\widehat{\Var}_{P_h}\widehat{V}_{h+1}(\cdot,\cdot)\} $
			\STATE Set $\widehat{\theta}_h\leftarrow \argmin_{\theta\in\Theta}\left\{ \sum_{k=1}^{K}\left[f\left(\theta, \phi_{h,k}\right)-r_{h,k}-\widehat{V}_{h+1}(s_{h+1}^k)\right]^{2}/\widehat{\sigma}^2_h(s_h^k,a_h^k)+\lambda \cdot\norm{\theta}_2^2\right\}$
			\STATE Set ${\Lambda}_{h} \leftarrow \sum_{k=1}^{K} \nabla f(\widehat{\theta}_h,\phi_{h,k}) \nabla f(\widehat{\theta}_h,\phi_{h,k})^{\top}/\widehat{\sigma}^2(s^k_h,a^k_h)+\lambda \cdot I$,
			\STATE Set $\Gamma_{h}(\cdot, \cdot) \leftarrow \beta\sqrt{\nabla_{\theta} f(\widehat{\theta}_h,\phi(\cdot,\cdot))^{\top} {\Lambda}_{h}^{-1} \nabla_{\theta} f(\widehat{\theta}_h,\phi(\cdot,\cdot))}\left(+\widetilde{O}(\frac{1}{K})\right)$
			\STATE Set $\bar{Q}_{h}(\cdot, \cdot) \leftarrow f(\widehat{\theta}_h,\phi(\cdot,\cdot))-\Gamma_{h}(\cdot, \cdot)$, $\widehat{Q}_{h}(\cdot, \cdot) \leftarrow \min \left\{\bar{Q}_{h}(\cdot, \cdot), H-h+1\right\}^{+}$
			\STATE  Set  $\widehat{\pi}_{h}(\cdot \mid \cdot) \leftarrow \arg \max _{\pi_{h}}\big\langle\widehat{Q}_{h}(\cdot, \cdot), \pi_{h}(\cdot \mid \cdot)\big\rangle_{\mathcal{A}}$, $\widehat{V}_{h}(\cdot) \leftarrow\max _{\pi_{h}}\big\langle\widehat{Q}_{h}(\cdot, \cdot), \pi_{h}(\cdot \mid \cdot)\big\rangle_{\mathcal{A}}$
			\ENDFOR
			\STATE {\bfseries Output:} $\left\{\widehat{\pi}_{h}\right\}_{h=1}^{H}$.
			
		\end{algorithmic}
	}
\end{algorithm}

\subsection{Provable Efficiency for Variance-Aware Fitted Q Learning}\label{sec:prov_VAFQL}
Recall the objective 
\begin{align*}
\ell_h(\theta):=\frac{1}{K} \sum_{k=1}^{K}\left[f\left(\theta, \phi(s_h^k,a_h^k)\right)-r(s_h^k,a^k_h)-\widehat{V}_{h+1}(s_{h+1}^k)\right]^{2}/\widehat{\sigma}^2_h(s_h^k,a_h^k)+\frac{\lambda}{K} \cdot\norm{\theta}_2^2
\end{align*}
Then by definition, $\widehat{\theta}_h:=\argmin_{\theta\in\Theta}\ell_h(\theta)$ and $\theta_{\mathbb{T}\widehat{V}_{h+1}}$ satisfies $f(\theta_{\mathbb{T}\widehat{V}_{h+1}},\phi)=\mathcal{P}_h \widehat{V}_{h+1}(s_{h+1}^k)$ (recall $\epsilon_{\mathcal{F}}=0$). Therefore, in this case, we have the following lemma:

\begin{lemma}\label{lem:VA_provably_efficient_lemma}
	Fix $h\in[H]$. With probability $1-\delta$,
	\[
	\E_\mu[\ell_h(\widehat{\theta}_{h})]-\E_\mu[\ell_h(\theta_{\mathbb{T}\widehat{V}_{h+1}})]\leq \frac{36H^2(\log(1/\delta)+C_{d,\log K})+\lambda C_\Theta^2}{K}
	\]
	where the expectation over $\mu$ is taken w.r.t. $(s_h^k,a_h^k,s_{h+1}^k)$ $k=1,...,K$ only (i.e., first compute $\E_\mu[\ell_h(\theta)]$ for a fixed $\theta$, then plug-in either $\widehat{\theta}_{h+1}$ or $\theta_{\mathbb{T}\widehat{V}_{h+1}}$). Here $C_{d,\log(K)}:=d\log(1+{24C_\Theta (H+1)\kappa_1}K)+ d\log\left(1+{288H^2C_\Theta (\kappa_1\sqrt{C_\Theta}+2\sqrt{\kappa_1\kappa_2/\lambda})^2}K^2\right)+d^2\log\left(1+{288H^2\sqrt{d}\kappa_1^2}K^2/\lambda\right)+d\log(1+{16C_\Theta H^2\kappa_1K})+d\log(1+{32C_\Theta H^3\kappa_1}K)$.
\end{lemma}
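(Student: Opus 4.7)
The plan is to closely mirror the proof of Lemma~\ref{lem:provably_efficient_lemma}, with the weighting $1/\widehat{\sigma}_h^2$ treated as a bounded multiplicative perturbation and an enlarged covering to absorb the data-dependence of $\widehat{\sigma}_h^2$. Concretely, fix $h\in[H]$, write $f_V(s,a):=f(\theta_{\mathbb{T}V},\phi(s,a))=\mathcal{P}_h V(s,a)$ (using $\epsilon_{\mathcal{F}}=0$), and for each $\theta\in\Theta$ with $g(s,a):=f(\theta,\phi(s,a))$ and any measurable $\sigma^2(\cdot,\cdot)\ge 1$ define
\[
X(g,V,f_V,\sigma):=\frac{(g(s,a)-r-V(s'))^2-(f_V(s,a)-r-V(s'))^2}{\sigma^2(s,a)}.
\]
Then $\tfrac1K\sum_k X_k=\ell_h(g)-\ell_h(f_V)-\tfrac{\lambda}{K}(\|\theta\|_2^2-\|\theta_{\mathbb{T}V}\|_2^2)$, and the clipping $\widehat\sigma_h^2\ge 1$ in Line~7 guarantees $|X|\le 4H^2$. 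Repeating the decomposition \eqref{eqn:decomp} (with $\epsilon_{\mathcal{F}}=0$) and using $\tfrac{1}{\sigma^2}\le 1$, one obtains $\mathrm{Var}[X]\le 4H^2\,\mathbb{E}_\mu[X]$.

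Next I would apply Bernstein's inequality to $X_1,\dots,X_K$, which are independent conditional on the fixed choice of $(\theta,V,\sigma^2)$, to get with probability $1-\delta$
\[
\mathbb{E}_\mu[X]-\tfrac1K\sum_k X_k\le \sqrt{\tfrac{8H^2\mathbb{E}_\mu[X]\log(1/\delta)}{K}}+\tfrac{4H^2\log(1/\delta)}{3K}.
\]
Plugging in the empirical minimizer $\widehat\theta_h$ (so $\sum_k X_k\le 0$), the regularization term contributes $\lambda C_\Theta^2/K$, and solving the resulting quadratic in $\mathbb{E}_\mu[X]$ yields, for a \emph{fixed} triple $(\theta,V,\sigma^2)$, a bound of the advertised form $O(H^2\log(1/\delta)/K+\lambda C_\Theta^2/K)$.

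The main obstacle, and the only genuinely new piece compared to Lemma~\ref{lem:provably_efficient_lemma}, is that the triple $(\widehat\theta_h,\widehat V_{h+1},\widehat\sigma_h^2)$ is data-dependent, so I would close the argument by a joint covering argument. The $\theta$-component is covered as before; $\widehat V_{h+1}$ is covered by Lemma~\ref{lem:cover_V} giving the first three terms of $C_{d,\log K}$; the new ingredient is the covering of $\widehat\sigma_h^2=\max\{1,f(\v_h,\phi)_{[0,(H-h+1)^2]}-[f(\u_h,\phi)_{[0,H-h+1]}]^2\}$, parametrized by $(\u_h,\v_h)\in\Theta\times\Theta$. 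Since on $[1,\infty)$ the map $\sigma^2\mapsto 1/\sigma^2$ is $1$-Lipschitz, and clipping plus squaring is non-expansive for bounded inputs, perturbations of $(\u,\v)$ translate into perturbations of $X$ that are controlled by $H^2\kappa_1\|\Delta\u\|_2+H^3\kappa_1\|\Delta\v\|_2$ (the extra $H$ factors come from $|\widehat V_{h+1}|\le H$ and $|\widehat V_{h+1}^2|\le H^2$). Applying Lemma~\ref{lem:cov} at scale $\epsilon=O(1/K)$ to each of $\u,\v$ contributes exactly the two additional terms $d\log(1+16C_\Theta H^2\kappa_1 K)$ and $d\log(1+32C_\Theta H^3\kappa_1 K)$ appearing in the definition of $C_{d,\log K}$. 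A union bound over all four covering nets plus the previous Bernstein step, followed by the quadratic-solve, produces the stated inequality.
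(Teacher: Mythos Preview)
Your proposal is correct and follows essentially the same route as the paper: define the weighted difference $X$, use the orthogonal decomposition (with $\epsilon_{\mathcal{F}}=0$) together with $\widehat\sigma_h^2\ge 1$ to get $\mathrm{Var}[X]\le 4H^2\,\mathbb{E}_\mu[X]$, apply Bernstein, use optimality of $\widehat\theta_h$ to drop the empirical sum, and close with a joint covering over $(\theta,V,u,v)$ via Lemma~\ref{lem:VA_cov_X}. One small slip: in the Lipschitz bound for $\sigma^2_{u,v}$ the $H$-powers on $\|\Delta\u\|$ and $\|\Delta\v\|$ are swapped (the extra $H$ attaches to the \emph{first}-moment parameter $u$ because of the squaring $[f(u,\phi)]^2$, giving $2H^2\kappa_1\|\Delta v\|+4H^3\kappa_1\|\Delta u\|$), but this does not affect the final covering terms you identify.
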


\begin{proof}[Proof of Lemma~\ref{lem:VA_provably_efficient_lemma}]
	
	\textbf{Step1:} Consider the case where $\lambda=0$. 
	Indeed, fix $h\in[H]$ and any function $V(\cdot)\in\R^\mathcal{S}$. Similarly, define $f_V(s,a):=f(\theta_{\mathbb{T}V},\phi)=\mathcal{P}_h V$. For any fixed $\theta\in\Theta$, denote $g(s,a)=f(\theta,\phi(s,a))$. Moreover, for any $u,v\in\Theta$, define 
	\[
	\sigma^2_{u,v}(\cdot,\cdot) := \max\{1,f(v,\phi(\cdot,\cdot))_{\left[0,(H-h+1)^{2}\right]}-\big[f(u,\phi(\cdot,\cdot))_{[0, H-h+1]}\big]^{2}\}
	\]

	Then define (we omit the subscript $u,v$ of $\sigma^2_{u,v}$ for the illustration purpose when there is no ambiguity)
	\[
	X(g,V,f_V,\sigma^2):=\frac{(g(s,a)-r-V(s'))^2-(f_V(s,a)-r-V(s'))^2}{\sigma^2_{u,v}(s,a)}.
	\]
	Since all episodes are independent of each other, $X_k(g,V,f_V):=X(g(s_h^k,a^k_h),V(s_{h+1}^k),f_V(s_h^k,a^k_h),\sigma^2(s^k_h,a^k_h))$ are independent r.v.s and it holds
	\begin{equation}\label{eqn:VA_ell_diff}
	\frac{1}{K}\sum_{k=1}^K X_k(g,V,f_V,\sigma^2)=\ell(g)-\ell(f_V).
	\end{equation}
	Next, the variance of $X$ is bounded by
	\begin{align*}
	&\mathrm{Var}[X(g,V,f_V,\sigma^2)]\leq \E_\mu[X(g,f,f_V,\sigma^2)^2]\\
	=&\E_\mu\left[\left(\left(g(s_h,a_h)-r_h-V(s_{h+1})\right)^2-\left(f_V(s_h,a_h)-r_h-V(s_{h+1})\right)^2\right)^2/\sigma^2(s_h,a_h)^2\right]\\
	=&\E_\mu\left[\frac{(g(s_h,a_h)-f_V(s_h,a_h))^2}{\sigma^2(s_h,a_h)}\cdot \frac{(g(s_h,a_h)+f_V(s_h,a_h)-2r_h-2V(s_{h+1}))^2}{\sigma^2(s_h,a_h)}\right]\\
	\leq&4H^2\cdot\E_\mu[\frac{(g(s_h,a_h)-f_V(s_h,a_h))^2}{\sigma^2(s_h,a_h)}]\\
	=&4H^2\cdot \E_\mu\left[\frac{\left(g(s_h,a_h)-r_h-V(s_{h+1})\right)^2-\left(f_V(s_h,a_h)-r_h-V(s_{h+1})\right)^2}{\sigma^2(s_h,a_h)}\right]\;\;(*)\\
	=&4H^2\cdot \E_\mu[X(g,f,f_V,\sigma^2)]
	\end{align*}
	$(*)$ follows from that \[
	\E_\mu\left[\frac{f(\widehat{\theta}_{h},\phi(s_h,a_h))-f(\theta_{\mathbb{T}\widehat{V}_{h+1}},\phi(s_h,a_h))}{\sigma^2(s_h,a_h)}\cdot\E\left(f\left(\theta_{\mathbb{T}\widehat{V}_{h+1}}, \phi(s_h,a_h)\right)-r_h-\widehat{V}_{h+1}(s_{h+1})\middle |s_h,a_h\right)\right]=0.
	\]
	
	Therefore, by Bernstein inequality, with probability $1-\delta$,
	\begin{align*}
	&\E_\mu[X(g,f,f_V,\sigma^2)]-\frac{1}{K}\sum_{k=1}^K X_k(g,f,f_V,\sigma^2)\\
	\leq & \sqrt{\frac{2\mathrm{Var}[X(g,f,f_V,\sigma^2)]\log(1/\delta)}{K}}+\frac{4H^2\log(1/\delta)}{3K}\\
	\leq & \sqrt{\frac{8H^2\E_\mu[X(g,f,f_V,\sigma^2)]\log(1/\delta)}{K}}+\frac{4H^2\log(1/\delta)}{3K}.
	\end{align*}
	Now, if we choose $g(s,a):= f(\widehat{\theta}_h,\phi(s,a))$ and $u=\u_h,v=\v_h$ from Algorithm~\ref{alg:VAFQL}, then $\widehat{\theta}_h$ minimizes $\ell_h(\theta)$, therefore, it also minimizes $\frac{1}{K}\sum_{k=1}^K X_i(\theta,\widehat{V}_{h+1},f_{\widehat{V}_{h+1}},\widehat{\sigma}^2_h)$ and this implies
	\[
	\frac{1}{K}\sum_{k=1}^K X_k(\widehat{\theta}_h,\widehat{V}_{h+1},f_{\widehat{V}_{h+1}},\widehat{\sigma}^2_h)\leq \frac{1}{K}\sum_{k=1}^K X_k({\theta}_{\mathbb{T}\widehat{V}_{h+1}},\widehat{V}_{h+1},f_{\widehat{V}_{h+1}},\widehat{\sigma}^2_h)=0.
	\]
	Thus, we obtain
	\[
	\E_\mu[X(\widehat{\theta}_h,\widehat{V}_{h+1},f_{\widehat{V}_{h+1}},\widehat{\sigma}^2_h)]\leq \sqrt{\frac{8H^2\cdot\E_\mu[X(\widehat{\theta}_h,\widehat{V}_{h+1},f_{\widehat{V}_{h+1}},\widehat{\sigma}^2_h)]\log(1/\delta)}{K}}+\frac{4H^2\log(1/\delta)}{3K}.
	\]
	However, the above does not hold with probability $1-\delta$ since $\widehat{\theta}_h$, $\widehat{\sigma}^2_h$ and $\widehat{V}_{h+1}:=\min\{\max_a f(\widehat{\theta}_{h+1},\phi(\cdot,a))-\sqrt{\nabla f(\widehat{\theta}_{h+1},\phi(\cdot,a))^\top A\cdot\nabla f(\theta,\phi(\cdot,a))},H\}
	$ (where $A$ is certain symmetric matrix with bounded norm) depend on $\widehat{\theta}_h$, $\widehat{\theta}_{h+1}$ which are data-dependent. Therefore, we need to further apply covering Lemma~\ref{lem:VA_cov_X} and choose $\epsilon=O(1/K)$ and a union bound to obtain with probability $1-\delta$,
	{\small
	\[
	\E_\mu[X(\widehat{\theta}_h,\widehat{V}_{h+1},f_{\widehat{V}_{h+1}},\widehat{\sigma}^2_h)]\leq \sqrt{\frac{8H^2\cdot\E_\mu[X(\widehat{\theta}_h,\widehat{V}_{h+1},f_{\widehat{V}_{h+1}},\widehat{\sigma}^2_h)](\log(1/\delta)+C_{d,\log K})}{K}}+\frac{4H^2(\log(1/\delta)+C_{d,\log K})}{3K}.
	\]
	}where $C_{d,\log(K)}:=d\log(1+{24C_\Theta (H+1)\kappa_1}K)+ d\log\left(1+{288H^2C_\Theta (\kappa_1\sqrt{C_\Theta}+2\sqrt{\kappa_1\kappa_2/\lambda})^2}K^2\right)+d^2\log\left(1+{288H^2\sqrt{d}\kappa_1^2}K^2/\lambda\right)+d\log(1+{16C_\Theta H^2\kappa_1K})+d\log(1+{32C_\Theta H^3\kappa_1}K)$ (where we let $B=1/\lambda$ since $\norm{\Lambda_h^{-1}}_2\leq 1/\lambda$). Solving this quadratic equation to obtain with probability $1-\delta$,
	\[
	\E_\mu[X(\widehat{\theta}_h,\widehat{V}_{h+1},f_{\widehat{V}_{h+1}})]\leq \frac{36H^2(\log(1/\delta)+C_{d,\log K})}{K}.
	\]
	Now according to \eqref{eqn:VA_ell_diff}, by definition we finally have with probability $1-\delta$ (recall the expectation over $\mu$ is taken w.r.t. $(s_h^k,a_h^k,s_{h+1}^k)$ $k=1,...,K$ only)
	\begin{equation}\label{eqn:VA_result_bound}
	\E_\mu[\ell_h(\widehat{\theta}_{h+1})]-\E_\mu[\ell_h(\theta_{\mathbb{T}\widehat{V}_{h+1}})]=\E_\mu[X(\widehat{\theta}_h,\widehat{V}_{h+1},f_{\widehat{V}_{h+1}})]\leq \frac{36H^2(\log(1/\delta)+C_{d,\log K})}{K}
	\end{equation}
	where we used $f(\theta_{\mathbb{T}\widehat{V}_{h+1}},\phi)=\mathcal{P}_h \widehat{V}_{h+1}=f_{\widehat{V}_{h+1}}$.
	
	\textbf{Step2.} If $\lambda>0$, there is only extra term $\frac{\lambda}{K}\left(\norm{\widehat{\theta}_h}_2-\norm{{\theta}_{\mathbb{T}\widehat{V}_{h+1}}}_2\right)\leq \frac{\lambda}{K}\norm{\widehat{\theta}_h}_2\leq \frac{\lambda C_\Theta^2}{K}$ in addition to above. This finishes the proof.
	
\end{proof}

\begin{theorem}[Provable efficiency for VAFQL]\label{thm:VA_theta_diff}
	Let $C_{d,\log K}$ be the same as Lemma~\ref{lem:VA_provably_efficient_lemma}. Then, with probability $1-\delta$
	\[
	\norm{\widehat{\theta}_h-\theta_{\mathbb{T}\widehat{V}_{h+1}}}_2\leq \sqrt{\frac{36H^4(\log(H/\delta)+C_{d,\log K})+2\lambda C_\Theta^2}{\kappa K}},\;\forall h\in[H].
	\]
\end{theorem}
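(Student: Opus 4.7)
The plan is to mimic the structure of the proof of Theorem~\ref{thm:theta_diff}, with the extra bookkeeping needed for the variance reweighting. First I would apply a union bound over $h\in[H]$ in Lemma~\ref{lem:VA_provably_efficient_lemma}, replacing $\delta$ by $\delta/H$, so that with probability at least $1-\delta$, for every $h\in[H]$,
\[
\E_\mu[\ell_h(\widehat{\theta}_h)]-\E_\mu[\ell_h(\theta_{\mathbb{T}\widehat{V}_{h+1}})]\leq \frac{36H^2(\log(H/\delta)+C_{d,\log K})+\lambda C_\Theta^2}{K}.
\]
Subtracting the $\lambda$-regularizer from both sides and bounding $\lambda(\norm{\widehat{\theta}_h}_2^2-\norm{\theta_{\mathbb{T}\widehat{V}_{h+1}}}_2^2)/K$ by $\lambda C_\Theta^2/K$ (exactly as in the unweighted case) converts the inequality into one on the pure weighted-squared-error part of $\ell_h$.

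Next I would perform the analogue of the algebraic decomposition in \eqref{eqn:decomp}/\eqref{eqn:inter_two}: expanding the difference of the two weighted squares gives a ``square'' term $\E_\mu\bigl[(f(\widehat{\theta}_h,\phi)-f(\theta_{\mathbb{T}\widehat{V}_{h+1}},\phi))^2/\widehat{\sigma}_h^2(s,a)\bigr]$ plus a ``cross'' term. The cross term vanishes: $\widehat{\sigma}_h^2$ and $f(\widehat{\theta}_h,\phi)-f(\theta_{\mathbb{T}\widehat{V}_{h+1}},\phi)$ are measurable with respect to $(s,a)$ together with the independent split $\mathcal{D}'$, so conditioning on $(s_h,a_h)$ and using $\E[f(\theta_{\mathbb{T}\widehat{V}_{h+1}},\phi(s,a))-r-\widehat{V}_{h+1}(s')\mid s,a]=0$ (exact Bellman completeness, since $\epsilon_{\mathcal{F}}=0$) kills it. This is the main place where having used data splitting to construct $\widehat{\sigma}_h^2$ from $\mathcal{D}'$ matters.

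Once the cross term is dismissed, I would use the deterministic bound $\widehat{\sigma}_h^2(s,a)\le H^2$ (since the truncated variance estimator is bounded by $(H-h+1)^2\le H^2$) to convert the weighted squared error into the unweighted one at the cost of an extra $H^2$ factor, giving
\[
\E_\mu\!\bigl[(f(\widehat{\theta}_h,\phi)-f(\theta_{\mathbb{T}\widehat{V}_{h+1}},\phi))^2\bigr]\le \frac{36H^4(\log(H/\delta)+C_{d,\log K})+2\lambda C_\Theta^2}{K}.
\]
Finally, invoking the uniform coverage Assumption~\ref{assum:cover}$(\star)$ lower-bounds the left-hand side by $\kappa\norm{\widehat{\theta}_h-\theta_{\mathbb{T}\widehat{V}_{h+1}}}_2^2$, and taking square roots yields the claimed bound.

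The main obstacle, as in Theorem~\ref{thm:theta_diff}, is the covering step hidden inside Lemma~\ref{lem:VA_provably_efficient_lemma}, because the variance estimator $\widehat{\sigma}_h^2$ depends on $(\u_h,\v_h)$ which in turn depend on $\widehat{V}_{h+1}$; this is already handled inside that lemma by including additional covering terms for $\u_h,\v_h$ in $C_{d,\log K}$. Given that, the rest of the argument is just the above four-step reduction, and the only genuinely new subtlety relative to Theorem~\ref{thm:theta_diff} is tracking the factor $H^2$ that the weighting introduces.
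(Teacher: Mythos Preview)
Your proposal is correct and matches the paper's proof essentially step for step: union bound Lemma~\ref{lem:VA_provably_efficient_lemma} over $h$, strip the regularizer, expand the weighted squared-loss difference and kill the cross term via conditioning (using that $\widehat{\sigma}_h^2$ is built from the independent split $\mathcal{D}'$), then use $\widehat{\sigma}_h^2\le H^2$ together with Assumption~\ref{assum:cover}$(\star)$. The only extra bookkeeping beyond Theorem~\ref{thm:theta_diff} is indeed the $H^2$ factor from the variance weights, exactly as you identified.
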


\begin{proof}[Proof of Theorem~\ref{thm:VA_theta_diff}]
	Apply a union bound in Lemma~\ref{lem:VA_provably_efficient_lemma}, we have with probability $1-\delta$,
	\begin{equation}\label{eqn:VA_inter_one}
	\begin{aligned}
	&\E_\mu[\ell_h(\widehat{\theta}_{h})]-\E_\mu[\ell_h(\theta_{\mathbb{T}\widehat{V}_{h+1}})]\leq \frac{36H^2(\log(H/\delta)+C_{d,\log K})+\lambda C_\Theta^2}{K},\;\;\forall h\in[H]\\
	\Rightarrow&\E_\mu[\ell_h(\widehat{\theta}_{h})-\frac{\lambda}{K}\norm{\widehat{\theta}_{h}}_2^2]-\E_\mu[\ell_h(\theta_{\mathbb{T}\widehat{V}_{h+1}})-\frac{\lambda}{K}\norm{\theta_{\mathbb{T}\widehat{V}_{h+1}}}_2^2]\leq \frac{36H^2(\log(H/\delta)+C_{d,\log K})+2\lambda C_\Theta^2}{K}
	\end{aligned}
	\end{equation}
	
	Now we prove for all $h\in[H]$,
	\begin{equation}\label{eqn:VA_inter_two}
	\E_\mu\left[\left(f(\widehat{\theta}_{h},\phi(\cdot,\cdot))-f(\theta_{\mathbb{T}\widehat{V}_{h+1}},\phi(\cdot,\cdot))\right)^2\right]=\E_\mu\left[\ell_h(\widehat{\theta}_{h})-\frac{\lambda\norm{\widehat{\theta}_{h}}_2^2}{K}\right]-\E_\mu\left[\ell_h(\theta_{\mathbb{T}\widehat{V}_{h+1}})-\frac{\lambda\norm{\theta_{\mathbb{T}\widehat{V}_{h+1}}}_2^2}{K}\right].
	\end{equation}
	Indeed, identical to \eqref{eqn:VA_result_bound}, 
	{\small
		\begin{align*}
		&\E_\mu\left[\ell_h(\widehat{\theta}_{h})-\frac{\lambda\norm{\widehat{\theta}_{h}}_2^2}{K}\right]-\E_\mu\left[\ell_h(\theta_{\mathbb{T}\widehat{V}_{h+1}})-\frac{\lambda\norm{\theta_{\mathbb{T}\widehat{V}_{h+1}}}_2^2}{K}\right]=\E_\mu[X(\widehat{\theta}_h,\widehat{V}_{h+1},f_{\widehat{V}_{h+1}})]\\
		=&\E_\mu\left(\left[f\left(\widehat{\theta}_h, \phi(s_h,a_h)\right)-r_h-\widehat{V}_{h+1}(s_{h+1})\right]^{2}/\widehat{\sigma}^2_h(s_h,a_h)-\left[f\left(\theta_{\mathbb{T}\widehat{V}_{h+1}}, \phi(s_h,a_h)\right)-r_h-\widehat{V}_{h+1}(s_{h+1})\right]^{2}/\widehat{\sigma}^2_h(s_h,a_h)\right)\\
		=&\E_\mu\left[\left(f(\widehat{\theta}_{h},\phi(\cdot,\cdot))-f(\theta_{\mathbb{T}\widehat{V}_{h+1}},\phi(\cdot,\cdot))\right)^2/\widehat{\sigma}^2_h(\cdot,\cdot)\right]\\
		+&\E_\mu\left[\left(f(\widehat{\theta}_{h},\phi(s_h,a_h))-f(\theta_{\mathbb{T}\widehat{V}_{h+1}},\phi(s_h,a_h))\right)\cdot\left(f\left(\theta_{\mathbb{T}\widehat{V}_{h+1}}, \phi(s_h,a_h)\right)-r_h-\widehat{V}_{h+1}(s_{h+1})\right)/\widehat{\sigma}^2_h(s_h,a_h)\right]\\
		=&\E_\mu\left[\left(f(\widehat{\theta}_{h},\phi(\cdot,\cdot))-f(\theta_{\mathbb{T}\widehat{V}_{h+1}},\phi(\cdot,\cdot))\right)^2/\widehat{\sigma}^2_h(\cdot,\cdot)\right]\\
		+&\E_\mu\left[\left(f(\widehat{\theta}_{h},\phi(s_h,a_h))-f(\theta_{\mathbb{T}\widehat{V}_{h+1}},\phi(s_h,a_h))\right)\cdot\E\left(f\left(\theta_{\mathbb{T}\widehat{V}_{h+1}}, \phi(s_h,a_h)\right)-r_h-\widehat{V}_{h+1}(s_{h+1})\middle |s_h,a_h\right)/\widehat{\sigma}^2_h(s_h,a_h)\right]\\
		=&\E_\mu\left[\left(f(\widehat{\theta}_{h},\phi(\cdot,\cdot))-f(\theta_{\mathbb{T}\widehat{V}_{h+1}},\phi(\cdot,\cdot))\right)^2/\widehat{\sigma}^2_h(\cdot,\cdot)\right]
		\end{align*}
	}where the third identity uses law of total expectation and that $\mu$ is taken w.r.t. $s_h,a_h,s_{h+1}$ only (recall Lemma~\ref{lem:VA_provably_efficient_lemma}) so the $\widehat{\sigma}^2_h$ can be move outside of the conditional expectation.\footnote{Recall $\widehat{\sigma}^2_h$ computed in Algorithm~\ref{alg:VAFQL} uses an independent copy $\mathcal{D}'$.} The fourth identity uses the definition of $\theta_{\mathbb{T}\widehat{V}_{h+1}}$ since $f(\theta_{\mathbb{T}\widehat{V}_{h+1}},\phi(s,a))=\mathcal{P}_{h,s,a}\widehat{V}_{h+1}$.
	
	Then we have 
	\begin{align*}
	&\E_\mu\left[\left(f(\widehat{\theta}_{h},\phi(\cdot,\cdot))-f(\theta_{\mathbb{T}\widehat{V}_{h+1}},\phi(\cdot,\cdot))\right)^2/\widehat{\sigma}^2_h(\cdot,\cdot)\right]\\
	\geq& \E_\mu\left[\left(f(\widehat{\theta}_{h},\phi(\cdot,\cdot))-f(\theta_{\mathbb{T}\widehat{V}_{h+1}},\phi(\cdot,\cdot))\right)^2\right]/H^2\geq \frac{\kappa}{H^2}\norm{\widehat{\theta}_{h}-\theta_{\mathbb{T}\widehat{V}_{h+1}}}^2_2,
	\end{align*}
	where the third identity uses $\mu$ is over $s_h,a_h$ only and the last one uses $\widehat{\sigma}^2_h(\cdot,\cdot)\leq H^2$. Combine the above with \eqref{eqn:VA_inter_one} and \eqref{eqn:VA_inter_two}, we obtain the stated result.
\end{proof}

\begin{theorem}[Provable efficiency of VAFQL (Part II)]\label{thm:VA_theta_diff_star}
	Let $C_{d,\log K}$ be the same as Lemma~\ref{lem:VA_provably_efficient_lemma}. Furthermore, suppose $\lambda\leq 1/2C_\Theta^2$ and $K\geq\max\left\{512\frac{\kappa_1^4}{\kappa^2}\left(\log(\frac{2d}{\delta})+d\log(1+\frac{4\kappa_1^3\kappa_2C_\Theta K^3}{\lambda^2})\right),\frac{4\lambda}{\kappa}\right\}$. Then, with probability $1-\delta$, $\forall h\in[H]$
	{\small
	\[
	\sup_{s,a}\left|f(\widehat{\theta}_h,\phi(s,a))-f(\theta^\star_h,\phi(s,a))\right|\leq \left(\kappa_1H\sqrt{\frac{36H^4(\log(H/\delta)+C_{d,\log K})+2\lambda C_\Theta^2}{\kappa }}+\frac{2dH^3\kappa_1}{\sqrt{\kappa }}\right)\sqrt{\frac{1}{K}}+O(\frac{1}{K}),
	\]
}Furthermore, we have with probability $1-\delta$,
	\begin{align*}
	\sup_{h}\norm{\widehat{V}_{h}-V^\star_{h}}_\infty\leq &\left(\kappa_1H\sqrt{\frac{36H^4(\log(H/\delta)+C_{d,\log K})+2\lambda C_\Theta^2}{\kappa }}+\frac{2dH^3\kappa_1}{\sqrt{\kappa }}\right)\sqrt{\frac{1}{K}}+O(\frac{1}{K})\\
	=&\widetilde{O}\left(\kappa_1H^3\sqrt{\frac{d^2}{\kappa}}\sqrt{\frac{1}{K}}\right)
	\end{align*}
	where $\widetilde{O}$ absorbs Polylog terms and higher order terms. Lastly, it also holds for all $h\in[H]$, w.p. $1-\delta$
	\begin{align*}
	\norm{\widehat{\theta}_h-\theta^\star_h}_2\leq& \left(\kappa_1H\frac{\sqrt{72H^4(\log(H^2/\delta)+C_{d,\log K})+4\lambda C_\Theta^2}}{\kappa }+\frac{4H^3d\kappa_1}{{\kappa }}\right)\sqrt{\frac{1}{K}}+O(\frac{1}{K})\\
	=&\widetilde{O}\left(\frac{\kappa_1H^3d}{\kappa}\sqrt{\frac{1}{K}}\right)
	\end{align*}
\end{theorem}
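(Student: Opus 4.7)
The proof will mirror the backward induction template of Theorem~\ref{thm:theta_diff_star} for PFQL, substituting the variance-aware ingredients at each step. The inductive hypothesis is $M_h := \sup_{s,a}|f(\widehat{\theta}_h,\phi(s,a))-f(\theta^\star_h,\phi(s,a))| \leq C_h/\sqrt{K} + O(1/K)$ with base case $C_{H+1}=0$. For the inductive step at level $h$, I insert $f(\theta_{\mathbb{T}\widehat{V}_{h+1}},\phi) = \mathcal{P}_h\widehat{V}_{h+1}$ and split by the triangle inequality: the term $|f(\widehat{\theta}_h,\phi)-f(\theta_{\mathbb{T}\widehat{V}_{h+1}},\phi)| \leq \kappa_1 \|\widehat{\theta}_h-\theta_{\mathbb{T}\widehat{V}_{h+1}}\|_2$ is controlled directly by Theorem~\ref{thm:VA_theta_diff}, contributing $\kappa_1\sqrt{(36H^4\cdot\text{polylog}+2\lambda C_\Theta^2)/(\kappa K)}$; the second term $|\mathcal{P}_h\widehat{V}_{h+1}-\mathcal{P}_h V^\star_{h+1}| \leq \|\widehat{V}_{h+1}-V^\star_{h+1}\|_\infty \leq M_{h+1} + \sup_h \Gamma_h$ follows from the non-expansiveness of $\min\{\cdot,H-h+1\}^+$ and $\max_a$, exactly as in the PFQL argument.

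The key new piece is controlling the variance-weighted bonus $\Gamma_h = \beta\sqrt{\nabla_\theta f(\widehat{\theta}_h,\phi)^\top \Lambda_h^{-1} \nabla_\theta f(\widehat{\theta}_h,\phi)}$ with $\beta=8d\iota$. Since $\widehat{\sigma}_h^2 \leq H^2$, one has $\Lambda_h \succeq \tfrac{1}{H^2}\sum_k \nabla f\,\nabla f^\top + \lambda I$, and a variance-weighted analogue of Lemma~\ref{lem:sqrt_K_reduction}---established by concentrating $\E_\mu[\nabla f\,\nabla f^\top/\widehat{\sigma}_h^2] \succeq (\kappa/H^2)I$ and running a covering argument over the data-dependent $\widehat{\theta}_h$---yields $\|\nabla_\theta f(\widehat{\theta}_h,\phi)\|_{\Lambda_h^{-1}} \leq 2\kappa_1 H/\sqrt{\kappa K} + O(1/K)$ under the stated $K\geq K_0$ condition. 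Thus $\Gamma_h \leq 16 dH\kappa_1\iota/\sqrt{\kappa K} + O(1/K)$. Plugging both bounds into the inductive step yields the recursion $C_h \leq C_{h+1} + \kappa_1\sqrt{(36H^4\cdot\text{polylog}+2\lambda C_\Theta^2)/\kappa} + 2dH\kappa_1/\sqrt{\kappa}$, and iterating $H$ times together with a union bound over $h\in[H]$ produces the first assertion.

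The bound on $\sup_h\|\widehat{V}_h-V^\star_h\|_\infty$ then follows immediately from $\|\widehat{V}_h-V^\star_h\|_\infty \leq M_h + \sup_h\Gamma_h$ combined with the $\Gamma_h$ estimate above. For the parameter bound, I invoke Assumption~\ref{assum:cover}$(\star)$ to convert the sup estimate into an $L^2$ estimate: $M_h^2 \geq \E_{\mu,h}[(f(\widehat{\theta}_h,\phi)-f(\theta^\star_h,\phi))^2] \geq \kappa\|\widehat{\theta}_h-\theta^\star_h\|_2^2$, then take the square root and absorb polylogs. The main obstacle will be establishing the variance-weighted matrix concentration cleanly, since both the data-dependent $\widehat{\theta}_h$ inside $\nabla_\theta f(\widehat{\theta}_h,\phi)$ and the random $\widehat{\sigma}_h^2$ complicate the analysis. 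The independent split $\mathcal{D}'$ ensures that $\widehat{\sigma}_h^2$ is independent of $\mathcal{D}$, so the self-normalized/matrix-Bernstein type arguments go through, and a covering over $\widehat{\theta}_h$ (analogous to the PFQL case) absorbs the remaining randomness into $C_{d,\log K}$ and $\iota$; these factors must be tracked carefully so that all $O(1/K)$ residuals are dominated by the leading $1/\sqrt{K}$ term and the final constants match those stated in the theorem.
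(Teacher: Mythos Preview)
Your proposal is correct and follows essentially the same route as the paper's proof: backward induction with the triangle-inequality split through $\theta_{\mathbb{T}\widehat{V}_{h+1}}$, invoking Theorem~\ref{thm:VA_theta_diff} for the first piece, bounding $\Gamma_h$ via a variance-weighted analogue of Lemma~\ref{lem:sqrt_K_reduction} (the paper simply notes $\|\Lambda_h^{-1}\|_2\leq H^2/\kappa$ and reuses the lemma), and then converting the sup-norm bound to a parameter bound via Assumption~\ref{assum:cover}$(\star)$. The only cosmetic discrepancy is the exponent on $H$ in your $\Gamma_h$ estimate versus the paper's $2dH^2\kappa_1/\sqrt{\kappa K}$; this does not affect the structure of the argument.
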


\begin{proof}[Proof of Theorem~\ref{thm:VA_theta_diff_star}]
	\textbf{Step1:} we show the first result.
	
	We prove this by backward induction. When $h=H+1$, by convention $f(\widehat{\theta}_h,\phi(s,a))=f(\theta^\star_h,\phi(s,a))=0$ so the base case holds. Suppose for $h+1$, with probability $1-(H-h)\delta$, $\sup_{s,a}\left|f(\widehat{\theta}_h,\phi(s,a))-f(\theta^\star_h,\phi(s,a))\right|\leq C_{h+1}\sqrt{\frac{1}{K}}$, we next consider the case for $t=h$.
	
	On one hand, by Theorem~\ref{thm:VA_theta_diff}, we have with probability $1-\delta/2$,
	\begin{align*}
	&\sup_{s,a}\left|f(\widehat{\theta}_h,\phi(s,a))-f({\theta}^\star_h,\phi(s,a))\right|\\
	\leq &\sup_{s,a}\left|f(\widehat{\theta}_h,\phi(s,a))-f(\theta_{\mathbb{T}\widehat{V}_{h+1}},\phi(s,a))\right|+\sup_{s,a}\left|f(\theta_{\mathbb{T}\widehat{V}_{h+1}},\phi(s,a))-f({\theta}^\star_h,\phi(s,a))\right|\\
	=&\sup_{s,a}\left|\nabla f(\xi,\phi(s,a))^\top(\widehat{\theta}_h-\theta_{\mathbb{T}\widehat{V}_{h+1}})\right|+\sup_{s,a}\left|f(\theta_{\mathbb{T}\widehat{V}_{h+1}},\phi(s,a))-f(\theta_{\mathbb{T}V^\star_{h+1}},\phi(s,a))\right|\\
	\leq & \kappa_1\cdot\norm{\widehat{\theta}_h-\theta_{\mathbb{T}\widehat{V}_{h+1}}}_2+\sup_{s,a}\left|\mathcal{P}_{h,s,a}\widehat{V}_{h+1}-\mathcal{P}_{h,s,a}{V}^\star_{h+1}\right|\\
	\leq&\kappa_1\sqrt{\frac{36H^4(\log(H/\delta)+C_{d,\log K})+2\lambda C_\Theta^2}{\kappa K}}+\norm{\widehat{V}_{h+1}-V^\star_{h+1}}_\infty,
	\end{align*}
	Recall we have the form $\widehat{V}_{h+1}(\cdot):=\min\{\max_a f(\widehat{\theta}_{h+1},\phi(\cdot,a))-\Gamma_{h}(\cdot,a),H\}$ and $V^\star_{h+1}(\cdot)=\max_a f(\theta^\star_{h+1},\phi(\cdot,a))=\min\{\max_a f(\theta^\star_{h+1},\phi(\cdot,a)),H\}$, we obtain
	\begin{equation}\label{eqn:VA_V_to_f}
	\begin{aligned}
	\norm{\widehat{V}_{h+1}-V^\star_{h+1}}_\infty\leq \sup_{s,a}\left|f(\widehat{\theta}_{h+1},\phi(s,a))-f(\theta^\star_{h+1},\phi(s,a))\right|+\sup_{h,s,a}\Gamma_h(s,a)
	\end{aligned}
	\end{equation}
	Note the above holds true for any generic $\Gamma_h(s,a)$. In particular, according to Algorithm~\ref{alg:VAFQL}, we specify 
	\[
	\Gamma_{h}(\cdot, \cdot)= d\sqrt{\nabla_{\theta} f(\widehat{\theta}_h,\phi(\cdot,\cdot))^{\top} \Lambda_{h}^{-1} \nabla_{\theta} f(\widehat{\theta}_h,\phi(\cdot,\cdot))}\left(+\widetilde{O}(\frac{1}{K})\right)
	\]
	and by Lemma~\ref{lem:sqrt_K_reduction}, with probability $1-\delta$ (note here $\Sigma^{-1}_h$ is replaced by $\Lambda^{-1}_h$ and $\norm{\Lambda^{-1}_h}_2\leq H^2/\kappa$),  
	\[
	\Gamma_h\leq  \frac{2dH^2\kappa_1}{\sqrt{\kappa K}}+O(\frac{1}{K})
	\]

	and by a union bound this implies with probability $1-(H-h+1)\delta$,
	\begin{align*}
	&\sup_{s,a}\left|f(\widehat{\theta}_h,\phi(s,a))-f({\theta}^\star_h,\phi(s,a))\right|\\
	\leq& C_{h+1}\sqrt{\frac{1}{K}}+\kappa_1\sqrt{\frac{36H^4(\log(H/\delta)+C_{d,\log K})+2\lambda C_\Theta^2}{\kappa K}}+\frac{2dH^2\kappa_1}{\sqrt{\kappa K}}+O(\frac{1}{K}):=C_h\sqrt{\frac{1}{K}}.
	\end{align*}
	Solving for $C_h$, we obtain $C_h\leq \kappa_1H\sqrt{\frac{36H^4(\log(H/\delta)+C_{d,\log K})+2\lambda C_\Theta^2}{\kappa }}+H\frac{2dH^2\kappa_1}{\sqrt{\kappa }}$ for all $H$. By a union bound (replacing $\delta$ by $\delta/H$), we obtain the stated result.

	\textbf{Step2:} Utilizing the intermediate result \eqref{eqn:VA_V_to_f}, we directly have with probability $1-\delta$,
	\[
	\sup_{h}\norm{\widehat{V}_{h}-V^\star_{h}}_\infty\leq \sup_{s,a}\left|f(\widehat{\theta}_h,\phi(s,a))-f({\theta}^\star_h,\phi(s,a))\right|+\frac{2dH^2\kappa_1}{\sqrt{\kappa K}}+O(\frac{1}{K}),
	\]
	where $\sup_{s,a}\left|f(\widehat{\theta}_h,\phi(s,a))-f({\theta}^\star_h,\phi(s,a))\right|$ can be bounded using Step1.

	\textbf{Step3:} Denote $M:=\left(\kappa_1H\sqrt{\frac{36H^4(\log(H^2/\delta)+C_{d,\log K})+2\lambda C_\Theta^2}{\kappa }}+\frac{2H^3d\kappa_1}{\sqrt{\kappa }}\right)\sqrt{\frac{1}{K}}+O(\frac{1}{K})$, then by Step1 we have with probability $1-\delta$ (here $\xi$ is some point between $\widehat{\theta}_h$ and $\theta^\star_h$) for all $h\in[H]$
	\begin{align*}
	&M^2\geq \sup_{s,a}\left|f(\widehat{\theta}_h,\phi(s,a))-f({\theta}^\star_h,\phi(s,a))\right|^2\\
	&\geq \E_\mu[\left( f(\widehat{\theta}_h,\phi(s,a))-f({\theta}^\star_h,\phi(s,a))\right)^2]\geq \kappa \norm{\widehat{\theta}_h-\theta^\star_h}_2^2
	\end{align*}
	where the last step is by Assumption~\ref{assum:cover}. Solving this to obtain the stated result.
	
\end{proof}

\subsection{Bounding $|\widehat{\sigma}^2_h-\sigma^{\star 2}_h|$}
Recall the definition $\sigma^{\star 2}_h(\cdot,\cdot)=\max\{1,[\Var_{P_h}V^\star_{h+1}](\cdot,\cdot)\}$. In this section, we bound the term $|\widehat{\sigma}^2_h-\sigma^{\star 2}_h|:=\norm{\widehat{\sigma}^2_h(\cdot,\cdot)-\sigma^{\star 2}_h(\cdot,\cdot)}_\infty$ and 
\begin{equation}\label{eqn:parameter}
\begin{aligned}
\u_h = &\argmin_{\theta\in\Theta}\left\{\frac{1}{K} \sum_{k=1}^{K}\left[f\left(\theta, \bar{\phi}_{h,k}\right)-\widehat{V}_{h+1}(\bar{s}_{h+1}^k)\right]^{2}+\frac{\lambda}{K} \cdot\norm{\theta}_2^2\right\}\\
\v_h = &\argmin_{\theta\in\Theta}\left\{\frac{1}{K} \sum_{k=1}^{K}\left[f\left(\theta, \bar{\phi}_{h,k}\right)-\widehat{V}^2_{h+1}(\bar{s}_{h+1}^k)\right]^{2}+\frac{\lambda}{K} \cdot\norm{\theta}_2^2\right\}
\end{aligned}
\end{equation}
where 
	\[
\widehat{\sigma}^2_h(\cdot,\cdot) := \max\{1,f(\v_h,\phi(\cdot,\cdot))_{\left[0,(H-h+1)^{2}\right]}-\big[f(\u_h,\phi(\cdot,\cdot))_{[0, H-h+1]}\big]^{2}\}
\]
and true parameters $\u^\star_h,\v^\star_h$ satisfy $f(\u^\star_h,\phi(\cdot,\cdot))=\E_{P(s'|\cdot,\cdot)}[V^\star_h(s')],f(\v^\star_h,\phi)=\E_{P(s'|\cdot,\cdot)}[V^{\star 2}_h(s')]$. Furthermore, we define 
\[
\sigma^2_{\widehat{V}_{h+1}}(\cdot,\cdot):=\max\{1,[\Var_{P_h}\widehat{V}_{h+1}](\cdot,\cdot)\}
\]
and the \emph{parameter Expectation operator} $\mathbb{J}: V\in\R^{\mathcal{S}}\rightarrow \theta_{\mathbb{J}V}\in\Theta$ such that:
\[
f(\theta_{\mathbb{J}V},\phi)=\E_{P_h}[V(s')],\;\;\forall \norm{V}_2\leq \mathcal{B}_{\mathcal{F}}.
\]
Note $\theta_{\mathbb{J}V}\in\Theta$ by Bellman completeness, reward $r$ is constant and differentiability (Definition~\ref{def:function_class}) is an additive closed property. By definition,
\begin{align*}
|\widehat{\sigma}^2_h-\sigma^2_{\widehat{V}_{h+1}}|\leq& |f(\v_h,\phi)-f(\theta_{\mathbb{J}\widehat{V}^2_{h+1}},\phi)|+|f(\u_h,\phi)^2-f(\theta_{\mathbb{J}\widehat{V}_{h+1}},\phi)^2|\\
\leq &|f(\v_h,\phi)-f(\theta_{\mathbb{J}\widehat{V}^2_{h+1}},\phi)|+2H\cdot |f(\u_h,\phi)-f(\theta_{\mathbb{J}\widehat{V}_{h+1}},\phi)|
\end{align*}
and 
\begin{align*}
|{\sigma}^{\star 2}_h-\widehat{\sigma}^2_h|\leq& |f(\v^\star_h,\phi)-f(\v_h,\phi)|+|f(\u^\star_h,\phi)^2-f(\v_h,\phi)^2|\\
\leq &|f(\v^\star_h,\phi)-f(\v_h,\phi)|+2H\cdot |f(\u^\star_h,\phi)-f(\v_h,\phi)|
\end{align*}

We first give the following result. 
\begin{lemma}\label{lem:bound_sigma}
	Suppose $\lambda\leq 1/2C_\Theta^2$ and $K\geq\max\left\{512\frac{\kappa_1^4}{\kappa^2}\left(\log(\frac{2d}{\delta})+d\log(1+\frac{4\kappa_1^3\kappa_2C_\Theta K^3}{\lambda^2})\right),\frac{4\lambda}{\kappa}\right\}$. Then, with probability $1-\delta$, $\forall h\in[H]$,
	\begin{align*}
	\norm{\u_h-\theta_{\mathbb{J}\widehat{V}_{h+1}}}_2\leq& \sqrt{\frac{36H^2(\log(H/\delta)+\widetilde{O}(d^2))+2\lambda C_\Theta^2}{\kappa K}},\;\forall h\in[H],\\
	\norm{\v_h-\theta_{\mathbb{J}\widehat{V}^2_{h+1}}}_2\leq& \sqrt{\frac{36H^4(\log(H/\delta)+\widetilde{O}(d^2))+2\lambda C_\Theta^2}{\kappa K}},\;\forall h\in[H].
	\end{align*}
	and
	{\small
		\begin{align*}
		\sup_{s,a}\left|f(\u_h,\phi(s,a))-f(\u^\star_h,\phi(s,a))\right|\leq \left(\kappa_1H\sqrt{\frac{36H^2(\log(H^2/\delta)+\widetilde{O}(d^2))+2\lambda C_\Theta^2}{\kappa }}+\frac{2H^2d\kappa_1}{\sqrt{\kappa }}\right)\sqrt{\frac{1}{K}}+O(\frac{1}{K}),\\
		\sup_{s,a}\left|f(\v_h,\phi(s,a))-f(\v^\star_h,\phi(s,a))\right|\leq \left(\kappa_1H\sqrt{\frac{36H^4(\log(H^2/\delta)+\widetilde{O}(d^2))+2\lambda C_\Theta^2}{\kappa }}+\frac{2H^3d\kappa_1}{\sqrt{\kappa }}\right)\sqrt{\frac{1}{K}}+O(\frac{1}{K}).
		\end{align*}
	}
	The above directly implies for all $h\in[H]$, with probability $1-\delta$,
	\begin{align*}
	|{\sigma}^{\star 2}_h-\widehat{\sigma}^2_h|\leq&\left(3\kappa_1H^2\sqrt{\frac{36H^4(\log(H^2/\delta)+\widetilde{O}(d^2))+2\lambda C_\Theta^2}{\kappa }}+\frac{6H^4d\kappa_1}{\sqrt{\kappa }}\right)\sqrt{\frac{1}{K}}+O(\frac{1}{K})\\
	|\widehat{\sigma}^2_h-\sigma^2_{\widehat{V}_{h+1}}|
	\leq&3H\kappa_1\sqrt{\frac{36H^4(\log(H/\delta)+\widetilde{O}(d^2))+2\lambda C_\Theta^2}{\kappa K}}.
	\end{align*}
\end{lemma}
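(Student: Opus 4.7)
\textbf{Proof plan for Lemma~\ref{lem:bound_sigma}.} The plan is to mirror, almost line-by-line, the two-step argument that produced Theorem~\ref{thm:theta_diff}/Theorem~\ref{thm:theta_diff_star} (the $\epsilon_{\mathcal{F}}=0$ case), with the parameter Bellman operator $\mathbb{T}$ replaced by the parameter Expectation operator $\mathbb{J}$. The point is that $\u_h$ and $\v_h$ in \eqref{eqn:parameter} are plain ridge regressions whose targets are $\widehat{V}_{h+1}(\bar s')\in[0,H]$ and $\widehat{V}_{h+1}^2(\bar s')\in[0,H^2]$ respectively, and Bellman completeness (Assumption~\ref{assum:R+BC}) combined with additive closure of $\mathcal{F}$ under Definition~\ref{def:function_class} guarantees $\theta_{\mathbb{J}\widehat{V}_{h+1}},\theta_{\mathbb{J}\widehat{V}^2_{h+1}}\in\Theta$, so the operator $\mathbb{J}$ is well-defined. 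Because $\mathcal{D}'$ is independent of $\widehat{V}_{h+1}$ (data splitting in Algorithm~\ref{alg:VAFQL}), the conditional expectation identity $\E[\widehat{V}_{h+1}(\bar s_{h+1}^k)\mid \bar s_h^k,\bar a_h^k]=f(\theta_{\mathbb{J}\widehat{V}_{h+1}},\bar\phi_{h,k})$ is valid (and likewise for $\widehat{V}_{h+1}^2$).

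\textbf{Step 1 (excess-loss and parameter bounds).} First I would replay the proof of Lemma~\ref{lem:provably_efficient_lemma}: defining $X_k^{(u)}:=(f(\u_h,\bar\phi_{h,k})-\widehat{V}_{h+1}(\bar s_{h+1}^k))^2-(f(\theta_{\mathbb{J}\widehat{V}_{h+1}},\bar\phi_{h,k})-\widehat{V}_{h+1}(\bar s_{h+1}^k))^2$, the variance bound $\Var[X^{(u)}]\le 4H^2\,\E[X^{(u)}]$ holds because there is no misspecification $\delta_V$ here (the $2H\epsilon_{\mathcal{F}}$ slack in the proof of Lemma~\ref{lem:provably_efficient_lemma} vanishes). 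A Bernstein inequality plus the same covering bound (Lemma~\ref{lem:cov_X}) over the data-dependent triple $(\u_h,\widehat{V}_{h+1})$ then yields
\[
\E_\mu\bigl[\bigl(f(\u_h,\phi)-f(\theta_{\mathbb{J}\widehat{V}_{h+1}},\phi)\bigr)^2\bigr]\le \frac{36H^2(\log(H/\delta)+\widetilde{O}(d^2))+2\lambda C_\Theta^2}{K},
\]
and Assumption~\ref{assum:cover}$(\star)$ converts this to the stated $\ell_2$ bound on $\u_h-\theta_{\mathbb{J}\widehat{V}_{h+1}}$. For $\v_h$ the only change is that targets live in $[0,H^2]$, so the variance inequality is $\Var[X^{(v)}]\le 4H^4\E[X^{(v)}]$, and every occurrence of $H^2$ in the numerator becomes $H^4$, giving the second bound.

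\textbf{Step 2 (sup-norm and induction).} Next I would reproduce the decomposition used in Theorem~\ref{thm:VA_theta_diff_star}:
\[
\bigl|f(\u_h,\phi(s,a))-f(\u^\star_h,\phi(s,a))\bigr|\le \kappa_1\,\norm{\u_h-\theta_{\mathbb{J}\widehat{V}_{h+1}}}_2+\norm{\widehat{V}_{h+1}-V^\star_{h+1}}_\infty,
\]
using $f(\theta_{\mathbb{J}V^\star_{h+1}},\phi)=f(\u^\star_h,\phi)$ and a first-order Taylor expansion. The second term is precisely what Theorem~\ref{thm:VA_theta_diff_star} (Step 2) bounds by $\widetilde O\bigl(\kappa_1 H^3\sqrt{d^2/(\kappa K)}\bigr)$. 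Combining with the Step 1 bound on $\norm{\u_h-\theta_{\mathbb{J}\widehat{V}_{h+1}}}_2$ yields the stated sup-norm bound on $f(\u_h,\phi)-f(\u^\star_h,\phi)$. The $\v_h$ version is analogous but picks up a factor of $2H$ from $|x^2-y^2|\le 2H|x-y|$ when comparing $V^{\star 2}_{h+1}$ to $\widehat V^{2}_{h+1}$, which is absorbed into the $H^4$ in the numerator.

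\textbf{Step 3 (variance approximation).} Finally, for $|\sigma^{\star 2}_h-\widehat\sigma^2_h|$ and $|\widehat\sigma^2_h-\sigma^2_{\widehat V_{h+1}}|$ I would apply the triangle decompositions already displayed above Lemma~\ref{lem:bound_sigma}: $|x^2-y^2|\le 2H|x-y|$ (using the truncations to $[0,H]$ and $[0,H^2]$ so $\max$ is $1$-Lipschitz and squaring is $2H$-Lipschitz) reduces both tasks to the Step 2 sup-norm bounds on $|f(\u_h,\phi)-f(\u^\star_h,\phi)|$ and $|f(\v_h,\phi)-f(\v^\star_h,\phi)|$ (resp.\ $|f(\u_h,\phi)-f(\theta_{\mathbb{J}\widehat V_{h+1}},\phi)|$ and $|f(\v_h,\phi)-f(\theta_{\mathbb{J}\widehat V^2_{h+1}},\phi)|$, which follow from Step 1 via $\kappa_1\|\cdot\|_2$). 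A union bound over $h\in[H]$ and the two estimators finishes the proof. The one technical subtlety I expect to watch for is the covering number for $\widehat{V}_{h+1}$: since $\widehat V_{h+1}$ in VAFQL has an extra variance-reweighting term in $\Gamma_h$ (so $\Lambda_h^{-1}$ replaces $\Sigma_h^{-1}$ in the pessimism bonus), the $B$ parameter in Lemma~\ref{lem:cover_V} must be taken as $1/\lambda$ exactly as in Lemma~\ref{lem:VA_provably_efficient_lemma}, which is precisely what yields the $\widetilde O(d^2)$ covering exponent in the final bound.
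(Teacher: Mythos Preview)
Your proposal is correct and mirrors the paper's own proof: the paper explicitly states that the argument is a reduction to the provable-efficiency machinery of Section~\ref{sec:reduction_GFA} with $\mathbb{T}$ replaced by $\mathbb{J}$ and the reward removed, then applies the same triangle decompositions you describe in Step~3. One small caveat: your claim that ``$\mathcal{D}'$ is independent of $\widehat{V}_{h+1}$'' is not literally true (since $\widehat{V}_{h+1}$ depends on $\u_{h+1},\v_{h+1}$ computed from $\mathcal{D}'$), but this is harmless because you correctly invoke the covering argument anyway, which is exactly how the paper handles the data dependence.
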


\begin{proof}[Proof of Lemma~\ref{lem:bound_sigma}]
	In fact, the proof follows a reduction from the provable efficiency procedure conducted in Section~\ref{sec:reduction_GFA}. This is due to the regression procedure in \eqref{eqn:parameter} is the same as the procedure \eqref{eqn:parameter_o} except the \emph{parameter Bellman operator} $\mathbb{T}$ is replaced by the \emph{parameter Expectation operator} $\mathbb{J}$ (recall here $\bar{\phi}_{h,k}$ uses the independent copy $\mathcal{D}'$ and $\widetilde{O}(d^2)$ comes from the covering argument.). Concretely, the $X(g,V,f_V)$ used in Lemma~\ref{lem:provably_efficient_lemma} will be modified to $X(g,V,f_V)=(g(s,a)-V(s'))^2-(f(\theta_{\mathbb{J}V},\phi(s,a))-V(s'))^2$ by removing reward information and the decomposition 
	\[
	\E\mu\left[(g(s_h,a_h)-V(s_{h+1}))^2-(f(\theta_{\mathbb{J}V},\phi(s_h,a_h))-V(s_{h+1}))^2\right]=\E_{\mu}\left[(g(s_h,a_h)-f(\theta_{\mathbb{J}V},\phi(s_h,a_h)))^2\right]
	\]
	holds true. Then with probability $1-\delta$,
	\begin{align*}
	|{\sigma}^{\star 2}_h-\widehat{\sigma}^2_h|\leq &|f(\v^\star_h,\phi)-f(\v_h,\phi)|+2H\cdot |f(\u^\star_h,\phi)-f(\v_h,\phi)|\\
	\leq&\left(3\kappa_1H^2\sqrt{\frac{36H^4(\log(H^2/\delta)+\widetilde{O}(d^2))+2\lambda C_\Theta^2}{\kappa }}+\frac{6H^4d\kappa_1}{\sqrt{\kappa }}\right)\sqrt{\frac{1}{K}}+O(\frac{1}{K}).
	\end{align*}
	and
	\begin{align*}
	|\widehat{\sigma}^2_h-\sigma^2_{\widehat{V}_{h+1}}|\leq &|f(\v_h,\phi)-f(\theta_{\mathbb{J}\widehat{V}^2_{h+1}},\phi)|+2H\cdot |f(\u_h,\phi)-f(\theta_{\mathbb{J}\widehat{V}_{h+1}},\phi)|\\
	\leq&\kappa_1\norm{\v_h-\theta_{\mathbb{J}\widehat{V}^2_{h+1}}}_2+2H\kappa_1\norm{\u_h-\theta_{\mathbb{J}\widehat{V}_{h+1}}}_2\\
	\leq&3H\kappa_1\sqrt{\frac{36H^4(\log(H/\delta)+\widetilde{O}(d^2))+2\lambda C_\Theta^2}{\kappa K}}.
	\end{align*}

\end{proof}

\subsection{Proof of Theorem~\ref{thm:VAFQL}}
In this section, we sketch the proof of Theorem~\ref{thm:VAFQL} since the most components are identical to Theorem~\ref{thm:PFQL}. We will focus on highlighting the difference for obtaining the tighter bound.

First of all, Recall in the first-order condition, we have 
\[
\left.\nabla_{\theta}\left\{ \sum_{k=1}^{K}\frac{\left[f\left(\theta, \phi_{h,k}\right)-r_{h,k}-\widehat{V}_{h+1}\left(s^k_{h+1}\right)\right]^{2}}{\widehat{\sigma}^2_h(s_h^k,a_h^k)}+\lambda \cdot\norm{\theta}_2^2\right\}\right|_{\theta=\widehat{\theta}_h}=0,\;\;\forall h\in[H].
\]
Therefore, if we define the quantity $Z_h(\cdot,\cdot)\in\R^d$ as
\[
Z_h(\theta|V,\sigma^2)=\sum_{k=1}^K \frac{\left[f\left(\theta, \phi_{h,k}\right)-r_{h,k}-{V}\left(s^k_{h+1}\right)\right]}{\sigma(s_h^k,a_h^k)}\frac{\nabla f(\theta,\phi_{h,k})}{\sigma(s_h^k,a_h^k)}+\lambda\cdot \theta,\;\;\forall \theta\in\Theta, \norm{V}_2\leq H,
\] 
then we have
\[
Z_h(\widehat{\theta}_h|\widehat{V}_{h+1},\widehat{\sigma}_h^2)=0.
\]

According to the regression oracle (Line 8 of Algorithm~\ref{alg:VAFQL}), the estimated Bellman operator $\widehat{\mathcal{P}}_h$ maps $\widehat{V}_{h+1}$ to $\widehat{\theta}_h$, \emph{i.e.} $\widehat{\mathcal{P}}_h\widehat{V}_{h+1}=f(\widehat{\theta}_h,\phi)$. Therefore (recall Definition~\ref{def:pBo})
\begin{equation}\label{eqn:VA_decomp_iota}
\begin{aligned}
&\mathcal{P}_h \widehat{V}_{h+1}(s,a)-\widehat{\mathcal{P}}_h\widehat{V}_{h+1}(s,a)=\mathcal{P}_h \widehat{V}_{h+1}(s,a)-f(\widehat{\theta}_{h},\phi(s,a))\\
=&f(\theta_{\mathbb{T}\widehat{V}_{h+1}},\phi(s,a))-f(\widehat{\theta}_{h},\phi(s,a))\\
=&\nabla f(\widehat{\theta}_{h},\phi(s,a))\left(\theta_{\mathbb{T}\widehat{V}_{h+1}}-\widehat{\theta}_{h}\right)+\mathrm{Hot}_{h,1},
\end{aligned}
\end{equation}
where we apply the first-order Taylor expansion for the differentiable function $f$ at point $\widehat{\theta}_h$ and $\mathrm{Hot}_{h,1}$ is a higher-order term. Indeed, the following Lemma~\ref{lem:hot1} bounds the $\mathrm{Hot}_{h,1}$ term with $\widetilde{O}(\frac{1}{K})$.

\begin{lemma}\label{lem:VA_hot1}
	Recall the definition (from the above decomposition) $\mathrm{Hot}_{h,1}:=f(\theta_{\mathbb{T}\widehat{V}_{h+1}},\phi(s,a))-f(\widehat{\theta}_{h},\phi(s,a))-\nabla f(\widehat{\theta}_{h},\phi(s,a))\left(\theta_{\mathbb{T}\widehat{V}_{h+1}}-\widehat{\theta}_{h}\right)$, then with probability $1-\delta$,
	\[
	\left|\mathrm{Hot}_{h,1}\right|\leq \widetilde{O}(\frac{1}{K}),\;\;\forall h\in[H].
	\]
\end{lemma}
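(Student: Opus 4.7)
The plan is to mirror the argument already used for Lemma~\ref{lem:hot1} in the PFQL analysis, but with the VAFQL-specific parameter-difference bound from Theorem~\ref{thm:VA_theta_diff} replacing the PFQL bound from Theorem~\ref{thm:theta_diff}. The statement is a deterministic second-order Taylor remainder estimate on the smooth function $f(\cdot,\phi(s,a))$, so no new probabilistic machinery is required beyond what has already been developed in Section~\ref{sec:prov_VAFQL}.

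First I would apply the second-order form of Taylor's theorem to $f(\cdot,\phi(s,a))$ expanded about $\widehat{\theta}_h$: there exists a point $\xi$ on the line segment between $\widehat{\theta}_h$ and $\theta_{\mathbb{T}\widehat{V}_{h+1}}$ such that
\[
f(\theta_{\mathbb{T}\widehat{V}_{h+1}},\phi(s,a)) - f(\widehat{\theta}_h,\phi(s,a)) - \nabla f(\widehat{\theta}_h,\phi(s,a))^\top(\theta_{\mathbb{T}\widehat{V}_{h+1}}-\widehat{\theta}_h) = \tfrac{1}{2}(\theta_{\mathbb{T}\widehat{V}_{h+1}}-\widehat{\theta}_h)^\top \nabla^2_{\theta\theta}f(\xi,\phi(s,a))(\theta_{\mathbb{T}\widehat{V}_{h+1}}-\widehat{\theta}_h).
\]
By the uniform bound $\|\nabla^2_{\theta\theta} f(\cdot,\phi(\cdot,\cdot))\|_2 \leq \kappa_2$ from Remark~\ref{remark:fc}, this quantity is at most $\tfrac{1}{2}\kappa_2 \|\theta_{\mathbb{T}\widehat{V}_{h+1}}-\widehat{\theta}_h\|_2^2$ in absolute value. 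Next I would invoke Theorem~\ref{thm:VA_theta_diff} to control $\|\theta_{\mathbb{T}\widehat{V}_{h+1}}-\widehat{\theta}_h\|_2^2$: with probability $1-\delta$, uniformly over $h\in[H]$,
\[
\|\theta_{\mathbb{T}\widehat{V}_{h+1}}-\widehat{\theta}_h\|_2^2 \leq \frac{36H^4(\log(H/\delta)+C_{d,\log K})+2\lambda C_\Theta^2}{\kappa K}.
\]
Substituting this back yields $|\mathrm{Hot}_{h,1}| \leq \tfrac{18 H^4 \kappa_2 (\log(H/\delta)+C_{d,\log K}) + \kappa_2 \lambda C_\Theta^2}{\kappa K}$, which is exactly of order $\widetilde{O}(1/K)$ once constants, $H$, $d$, $\kappa$, and Polylog factors are absorbed into $\widetilde{O}(\cdot)$.

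There is essentially no obstacle in this step — the only substantive work, namely the non-asymptotic control of $\|\widehat{\theta}_h - \theta_{\mathbb{T}\widehat{V}_{h+1}}\|_2$ under variance-reweighting, has already been done in Theorem~\ref{thm:VA_theta_diff} by the reduction-to-GFA argument of Section~\ref{sec:prov_VAFQL}. The only thing worth flagging is bookkeeping: the VAFQL rate carries an extra $H^2$ inside the square root compared to the PFQL version (because $\widehat{\sigma}_h^2$ is at most $H^2$ in the lower bound $\E_\mu[(f(\widehat{\theta}_h,\phi)-f(\theta_{\mathbb{T}\widehat{V}_{h+1}},\phi))^2/\widehat{\sigma}_h^2] \geq \kappa \|\widehat{\theta}_h-\theta_{\mathbb{T}\widehat{V}_{h+1}}\|_2^2 / H^2$), so the resulting constant in $\widetilde{O}(1/K)$ grows with $H$ but the $1/K$ scaling is preserved, which is all the lemma claims. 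A union bound over $h\in[H]$ (absorbed into the $\log(H/\delta)$ term) completes the argument.
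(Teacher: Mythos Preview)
Your proposal is correct and follows exactly the approach the paper indicates: the paper's proof of Lemma~\ref{lem:VA_hot1} simply states that it is identical to that of Lemma~\ref{lem:hot1} but invoking Theorem~\ref{thm:VA_theta_diff} in place of Theorem~\ref{thm:theta_diff}, which is precisely what you have written out. Your observation about the extra $H^2$ factor arising from $\widehat{\sigma}_h^2 \leq H^2$ is also consistent with the paper's derivation.
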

\begin{proof}
	The proof is identical to that of Lemma~\ref{lem:hot1} but with the help of Lemma~\ref{thm:VA_theta_diff}.
\end{proof}

Next, according to the expansion of $Z_h(\theta|\widehat{V}_{h+1},\widehat{\sigma}^2_h)$, we have 

\begin{equation}
\begin{aligned}
\nabla f(\widehat{\theta}_{h},\phi(s,a))\left(\theta_{\mathbb{T}\widehat{V}_{h+1}}-\widehat{\theta}_{h}\right)=I_1+I_2+I_3+\mathrm{Hot}_2,
\end{aligned}
\end{equation}
where
\begin{align*}
\mathrm{Hot}_2:= &\nabla f(\widehat{\theta}_h,\phi(s,a))\Lambda^{-1}_h \left[\widetilde{R}_K(\theta_{\mathbb{T}\widehat{V}_{h+1}})+\lambda \theta_{\mathbb{T}\widehat{V}_{h+1}}\right]\\
\Delta_{\Lambda^s_h}=&\sum_{k=1}^K\frac{\left( f(\widehat{\theta}_h,\phi_{h,k})-r_{h,k}-\widehat{V}_{h+1}(s_{h+1}^k)\right)\cdot \nabla^2_{\theta\theta} f(\widehat{\theta}_h,\phi_{h,k})}{\widehat{\sigma}^2(s_h^k,a_h^k)}\\
\Lambda_h=&\sum_{k=1}^K\frac{\nabla_{\theta} f(\widehat{\theta}_h,\phi_{h,k})\nabla_{\theta}^\top f(\widehat{\theta}_{h,k},\phi_{h,k})}{\widehat{\sigma}^2(s_h^k,a_h^k)}+\lambda I_d\\
\widetilde{R}_K(\theta_{\mathbb{T}\widehat{V}_{h+1}})=&\Delta_{\Lambda^s_h}(\widehat{\theta}_h-\theta_{\mathbb{T}\widehat{V}_{h+1}})+{R}_K(\theta_{\mathbb{T}\widehat{V}_{h+1}})
\end{align*}
where ${R}_K(\theta_{\mathbb{T}\widehat{V}_{h+1}})$ is the second order residual that is bounded by $\widetilde{O}(1/K)$ and 
{\small
	\begin{align*}
	I_1=&\nabla f(\widehat{\theta}_{h},\phi(s,a))\Lambda_h^{-1}\sum_{k=1}^K \frac{\left( f(\theta_{\mathbb{T}{V}^\star_{h+1}},\phi_{h,k})-r_{h,k}-{V}^\star_{h+1}(s^k_{h+1})\right)\cdot \nabla^\top_\theta f(\widehat{\theta}_h,\phi_{h,k})}{\widehat{\sigma}^2_h(s_h^k,a_h^k)}\\
	I_2=&\nabla f(\widehat{\theta}_{h},\phi(s,a))\Lambda_h^{-1}\sum_{k=1}^K \frac{\left( f(\theta_{\mathbb{T}\widehat{V}_{h+1}},\phi_{h,k})-f(\theta_{\mathbb{T}{V}^\star_{h+1}},\phi_{h,k})-\widehat{V}_{h+1}(s^k_{h+1})+{V}^\star_{h+1}(s^k_{h+1})\right)\cdot \nabla^\top_\theta f(\widehat{\theta}_h,\phi_{h,k})}{\widehat{\sigma}^2_h(s_h^k,a_h^k)}\\
	I_3=&\nabla f(\widehat{\theta}_{h},\phi(s,a))\Lambda_h^{-1}\sum_{k=1}^K \frac{\left( f(\theta_{\mathbb{T}\widehat{V}_{h+1}},\phi_{h,k})-r_{h,k}-\widehat{V}_{h+1}(s^k_{h+1})\right)\cdot \left(\nabla^\top_\theta f(\theta_{\mathbb{T}\widehat{V}_{h+1}},\phi_{h,k})-\nabla^\top_\theta f(\widehat{\theta}_h,\phi_{h,k})\right)}{\widehat{\sigma}^2_h(s_h^k,a_h^k)}\\
	\end{align*}
}Similar to the PFQL case, $I_2,I_3,\mathrm{Hot}_2$ can be bounded to have order $O(1/K)$ via provably efficiency theorems in Section~\ref{sec:prov_VAFQL} and in particular, the inclusion of $\sigma^2_{u,v}$ will not cause additional order in $d$.\footnote{Note in Lemma~\ref{lem:VA_cov_X}, we only have additive terms that has the same order has Lemma~\ref{lem:cov_X}.} Now we prove the result for the dominate term $I_1$.

\begin{lemma}\label{lem:VA_bound_I_1}
	With probability $1-\delta$,
	\[
	|I_1|\leq 4Hd\norm{\nabla f(\widehat{\theta}_{h},\phi(s,a))}_{\Sigma_h^{-1}}\cdot C_{\delta,\log K}+\widetilde{O}(\frac{\kappa_1}{\sqrt{\kappa}K}),
	\]
	where $C_{\delta,\log K}$ only contains Polylog terms.
\end{lemma}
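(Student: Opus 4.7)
\textbf{Proof Proposal for Lemma~\ref{lem:VA_bound_I_1}.} The plan is to mirror the three-step pipeline used for the PFQL analog Lemma~\ref{lem:bound_I_1}, but with variance-reweighted design. First, I would apply Cauchy--Schwarz to split
\[
|I_1|\;\leq\;\norm{\nabla f(\widehat{\theta}_h,\phi(s,a))}_{\Lambda_h^{-1}}\cdot\Big\|\sum_{k=1}^K x_k(\widehat{\theta}_h)\,\eta_k\Big\|_{\Lambda_h^{-1}},
\]
where $x_k(\theta):=\nabla_\theta f(\theta,\phi_{h,k})/\widehat{\sigma}_h(s_h^k,a_h^k)$ and $\eta_k:=(f(\theta_{\mathbb{T}V^\star_{h+1}},\phi_{h,k})-r_{h,k}-V^\star_{h+1}(s_{h+1}^k))/\widehat{\sigma}_h(s_h^k,a_h^k)$. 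The first factor is controlled by a reweighted version of Lemma~\ref{lem:sqrt_K_reduction} (using $\norm{\Lambda_h^{-1}}_2\leq 1/\lambda$ and $\lambda_{\min}(\mathbb{E}_\mu[\nabla f\nabla f^\top/\widehat{\sigma}_h^2])\geq \kappa/H^2$) yielding $\tilde{O}(\kappa_1/\sqrt{\kappa K})$ up to $O(1/K)$ higher order.

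Second, for a fixed $\theta\in\Theta$ and a fixed weighting function $\sigma^2(\cdot,\cdot)$ that is independent of the data indexed through $\{\eta_k\}$, I would invoke the self-normalized Hoeffding martingale bound (Lemma~\ref{lem:Hoeff_mart}) on $\{x_k(\theta)\eta_k\}$: since the numerator of $\eta_k$ is mean-zero conditional on $(s_h^k,a_h^k)$ by the Bellman identity $f(\theta_{\mathbb{T}V^\star_{h+1}},\phi)=\mathcal{P}_h V^\star_{h+1}$, is bounded by $H$, and is divided by $\widehat{\sigma}_h\geq 1$, we obtain $\norm{\sum_k x_k(\theta)\eta_k}_{G_h(\theta,\sigma^2)^{-1}}\leq 2H\sqrt{d\log((\lambda+K\kappa_1)/(\lambda\delta))}$ for an arbitrary fixed configuration.

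Third, the statistical dependence on $\widehat{\theta}_h$ and on $\widehat{\sigma}_h^2$ (which is determined by $\u_h,\v_h\in\Theta$) must be removed by a covering argument, analogous to Step~2 of Lemma~\ref{lem:bound_I_1}. The Lipschitz computations for $h(\theta)=\sum_k x_k(\theta)\eta_k$ and for the Gram-matrix inverse $G_h(\theta,\sigma^2)^{-1}$ carry over with an extra perturbation term controlling the sensitivity to $\sigma^2$: using the identity $\sigma_1^{-2}-\sigma_2^{-2}=(\sigma_2^2-\sigma_1^2)/(\sigma_1^2\sigma_2^2)$ together with $\widehat{\sigma}_h^2\geq 1$, the modulus of continuity in $(\u,\v)$ grows only polynomially in $K,H,\kappa_1,\kappa_2$, so the covering number contributes only an additional $\widetilde{O}(d)$ factor inside the logarithm. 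Setting the resolution $\epsilon=O(1/K)$ and taking a union bound gives the stated $\widetilde{O}(d^{1/2})$ bound on the norm of the random sum; multiplying by the first factor yields the lemma.

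The main obstacle I anticipate is handling the data-dependence of $\widehat{\sigma}_h^2$: unlike the PFQL case where one only covers $\theta$, here the weighting is itself a nonlinear function $\max\{1,f(\v_h,\phi)_{[0,(H-h+1)^2]}-f(\u_h,\phi)_{[0,H-h+1]}^2\}$ of two additional parameter vectors, so one must verify that the truncation/product structure remains Lipschitz in $(\u,\v)$ with constants that do not blow up the horizon dependence. Once this Lipschitz estimate is in place (an explicit but routine calculation paralleling the one in Step~2 of Lemma~\ref{lem:bound_I_1}), the rest of the argument proceeds by direct analogy, with the extra $1/\widehat{\sigma}_h$ factors avoiding the need to invoke Bernstein and still delivering a cleaner constant through Lemma~\ref{lem:bound_sigma}.
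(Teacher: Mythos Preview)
Your three-step scaffold (Cauchy--Schwarz, self-normalized concentration at fixed parameters, covering over $(\theta,u,v)$ to handle $\widehat\theta_h$ and $\widehat\sigma_h^2=\sigma^2_{\u_h,\v_h}$) is exactly the paper's, and your Lipschitz-in-$(u,v)$ observation for the covering is correct. The substantive divergence is in the concentration tool: you invoke Hoeffding (Lemma~\ref{lem:Hoeff_mart}), whereas the paper uses the self-normalized \emph{Bernstein} inequality (Lemma~\ref{lem:Bern_mart}). For fixed $(\theta,u,v)$ restricted to the slab $\sigma_{u,v}^2\geq\tfrac12\sigma_h^{\star 2}$ (a restriction Lemma~\ref{lem:bound_sigma} justifies for $(\u_h,\v_h)$ when $K$ is large), the reweighted noise
\[
\eta_k=\frac{f(\theta_{\mathbb T V^\star_{h+1}},\phi_{h,k})-r_{h,k}-V^\star_{h+1}(s^k_{h+1})}{\sigma_{u,v}(s_h^k,a_h^k)}
\]
has $\Var[\eta_k\mid s_h^k,a_h^k]\leq 2[\Var_{P_h}V^\star_{h+1}]/\sigma_h^{\star 2}\leq 2$, so Bernstein yields $\bigl\|\sum_k x_k\eta_k\bigr\|_{G^{-1}}\leq 16\sqrt{d\log(\cdot)\log(\cdot)}+4\zeta\log(\cdot)$, which is $\widetilde O(\sqrt d)$ under the standing hypothesis $\sqrt d\geq\widetilde O(\zeta)$. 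After the covering (your $\widetilde O(d)$ log-count is right) the paper obtains $\widetilde O(d)$ for the self-normalized sum and thus $|I_1|\leq \widetilde O(d)\cdot\|\nabla f(\widehat\theta_h,\phi)\|_{\Lambda_h^{-1}}+\widetilde O(\kappa_1/(\sqrt\kappa K))$.

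Your Hoeffding route only sees the almost-sure bound $|\eta_k|\leq H$ (since $\widehat\sigma_h\geq 1$), so it delivers $\widetilde O(Hd)$ rather than $\widetilde O(d)$; the variance reweighting is then inert. In particular, your last sentence---that the $1/\widehat\sigma_h$ factors ``avoid the need to invoke Bernstein''---is the reverse of the actual mechanism: Bernstein is precisely how the reweighting sheds the $H$, and that $H$-free constant is what Theorem~\ref{thm:VAFQL} (which sets $\beta=8d$, not $8dH$) needs. The $4Hd$ and $\Sigma_h^{-1}$ in the displayed statement are carried over verbatim from the PFQL Lemma~\ref{lem:bound_I_1}; the proof the paper actually gives targets and achieves the sharper $\Lambda_h^{-1}$-based $\widetilde O(d)$ bound.
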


\begin{proof}[Proof of Lemma~\ref{lem:VA_bound_I_1}]
	First of all, by Cauchy–Schwarz inequality, we have 
	\begin{equation}\label{eqn:VA_dominate}
	|I_1|\leq \norm{\nabla f(\widehat{\theta}_{h},\phi(s,a))}_{\Lambda_h^{-1}}\cdot \norm{\sum_{k=1}^K \frac{\left( f(\theta_{\mathbb{T}{V}^\star_{h+1}},\phi_{h,k})-r_{h,k}-{V}^\star_{h+1}(s^k_{h+1})\right)\cdot \nabla^\top_\theta f(\widehat{\theta}_h,\phi_{h,k})}{\widehat{\sigma}^2_h(s_h^k,a_h^k)}}_{\Lambda_h^{-1}}.
	\end{equation}
	Recall that $
	\sigma^2_{u,v}(\cdot,\cdot) := \max\{1,f(v,\phi(\cdot,\cdot))_{\left[0,(H-h+1)^{2}\right]}-\big[f(u,\phi(\cdot,\cdot))_{[0, H-h+1]}\big]^{2}\}.$
	
	\textbf{Step1.} Let the fixed $\theta\in\Theta$ be arbitrary and fixed $u,v$ such that $\sigma^2_{u,v}(\cdot,\cdot)\geq \frac{1}{2}\sigma^2_{\u^\star_h,\v^\star_h}(\cdot,\cdot)=\frac{1}{2}\sigma^{\star 2}_h(\cdot,\cdot)$ and define $x_k(\theta,u,v)=\nabla_\theta f({\theta},\phi_{h,k})/\sigma_{u,v}(s_h^k,a_h^k)$. Next, define ${G}_{u,v}(\theta)=\sum_{k=1}^K \nabla f(\theta, \phi(s_h^k,a_h^k))\cdot \nabla f(\theta, \phi(s_h^k,a_h^k))^\top/\sigma^2_{u,v}(s_h^k,a_h^k)+\lambda I_d$, then $\norm{x_k}_2\leq\kappa_1$. Also denote $\eta_k :=[f(\theta_{\mathbb{T}{V}^\star_{h+1}},\phi_{h,k})-r_{h,k}-{V}^\star_{h+1}(s^k_{h+1})]/\sigma_{u,v}(s_h^k,a_h^k)$, then $\E[\eta_k|s_h^k,a_h^k]=0$ and 
	\begin{align*}
	\Var[\eta_k|s_h^k,a_h^k]=&\frac{\Var[f(\theta_{\mathbb{T}{V}^\star_{h+1}},\phi_{h,k})-r_{h,k}-{V}^\star_{h+1}(s^k_{h+1})|s_h^k,a_h^k]}{\sigma^2_{u,v}(s_h^k,a_h^k)}\\
	\leq&\frac{2\Var[f(\theta_{\mathbb{T}{V}^\star_{h+1}},\phi_{h,k})-r_{h,k}-{V}^\star_{h+1}(s^k_{h+1})|s_h^k,a_h^k]}{\sigma^{\star 2}_h(s_h^k,a_h^k)}\\
	=&\frac{2[\Var_{P_h}V^\star_{h+1}](s_h^k,a_h^k)}{\sigma^{\star 2}_h(s_h^k,a_h^k)}\leq 2,
	\end{align*}
	then by Self-normalized Bernstein's inequality (Lemma~\ref{lem:Bern_mart}), with probability $1-\delta$,
	\[
	\left\|\sum_{k=1}^{K} {x}_{k}(\theta,u,v) \eta_{k}\right\|_{G(\theta,u,v)^{-1}} \leq 16 \sqrt{d \log \left(1+\frac{K \kappa_1^{2}}{\lambda d}\right) \cdot \log \left(\frac{4 K^{2}}{\delta}\right)}+4 \zeta\log \left(\frac{4 K^{2}}{\delta}\right)\leq \widetilde{O}(\sqrt{d})
	\]
	where $|\eta_k|\leq \zeta$ with $\zeta=2\max_{s,a,s'}\frac{|f(\theta_{\mathbb{T}{V}^\star_{h+1}},\phi(s,a))-r-{V}^\star_{h+1}(s')|}{\sigma^{\star}_h(s,a)}$ and the last inequality uses $\sqrt{d}\geq \widetilde{O}(\zeta)$.
	
	\textbf{Step2.} Define {\small$h(\theta,u,v):=\sum_{k=1}^{K} {x}_{k} (\theta,u,v)\eta_{k}(u,v)$} and $H(\theta,u,v):=\left\|h(\theta,u,v)\right\|_{G_{u,v}(\theta)^{-1}}$, 
	\begin{align*}
	&\norm{h(\theta_1,u_1,v_1)-h(\theta_2,u_2,v_2)}_2\leq K \max_k\norm{(x_k\cdot\eta_k )(\theta_1,u_1,v_1)-(x_k\cdot\eta_k) (\theta_2,u_2,v_2)}_2\\
	&\leq K \max_k\left\{H\left|\frac{\nabla f(\theta_1,\phi_{h,k})-\nabla f(\theta_2,\phi_{h,k})}{\sigma^2_{u_1,v_1}(s_h^k,a_h^k)}\right|+H\kappa_1\left|\frac{\sigma^2_{u_1,v_1}(s_h^k,a_h^k)-\sigma^2_{u_2,v_2}(s_h^k,a_h^k)}{\sigma^2_{u_1,v_1}(s_h^k,a_h^k)\sigma^2_{u_2,v_2}(s_h^k,a_h^k)}\right|\right\}\\
	&\leq KH\kappa_1\norm{\theta_1-\theta_2}_2+KH\kappa_1 \norm{\sigma^2_{u_1,v_1}-\sigma^2_{u_2,v_2}}_2
	\end{align*}
	Furthermore,
	{\small
	\begin{align*}
	&\norm{G_h(\theta_1,u_1,v_1)^{-1}-G_h(\theta_2,u_2,v_2)^{-1}}_2\leq \norm{G_h(\theta_1,u_1,v_1)^{-1}}_2\norm{G_h(\theta_1,u_1,v_1)-G_h(\theta_2,u_2,v_2)}_2\norm{G_h(\theta_2,u_2,v_2)^{-1}}_2\\
	\leq& \frac{1}{\lambda^2}K\sup_k\norm{\frac{\nabla f(\theta_1, \phi_{h,k})\cdot \nabla f(\theta_1, \phi_{h,k})^\top}{\sigma^2_{u_1,v_1}(s_h^k,a_h^k)}-\frac{\nabla f(\theta_2, \phi_{h,k})\cdot \nabla f(\theta_2, \phi_{h,k})^\top}{\sigma^2_{u_2,v_2}(s_h^k,a_h^k)}}_2\\
	\leq&\frac{1}{\lambda^2}\left( K\kappa_2\kappa_1\norm{\theta_1-\theta_2}_2+K\kappa_1^2 \norm{\sigma^2_{u_1,v_1}-\sigma^2_{u_2,v_2}}_2\right)
	\end{align*}
}All the above imply
	{\small
	\begin{align*}
	&|H(\theta_1,u_1,v_1)-H(\theta_2,u_2,v_2)|\leq \sqrt{\left|h(\theta_1,u_1,v_1)^\top G_{u_1,v_1}(\theta_1)^{-1}h(\theta_1,u_1,v_1)-h(\theta_2,u_2,v_2)^\top G_{u_2,v_2}(\theta_2)^{-1}h(\theta_2,u_2,v_2)\right|}\\
	\leq &\sqrt{\norm{h(\theta_1,u_1,v_1)-h(\theta_2,u_2,v_2)}_2\cdot \frac{1}{\lambda}\cdot KH\kappa_1}
	+\sqrt{KH\kappa_1\cdot \norm{G_{u_1,v_1}(\theta_1)^{-1}-G_{u_2,v_2}(\theta_2)^{-1}}_2\cdot KH\kappa_1}\\
	+&\sqrt{ (KH\kappa_1\cdot\frac{1}{\lambda})\cdot \norm{h(\theta_1,u_1,v_1)-h(\theta_2,u_2,v_2)}_2}\\
	\leq&2\sqrt{KH\kappa_1(\norm{\theta_1-\theta_2}_2+ \norm{\sigma^2_{u_1,v_1}-\sigma^2_{u_2,v_2}}_2)\cdot \frac{1}{\lambda}\cdot KH\kappa_1}+\sqrt{K^2H^2\kappa^2_1\cdot \frac{K\kappa_1}{\lambda^2}\left( \kappa_2\norm{\theta_1-\theta_2}_2+\kappa_1 \norm{\sigma^2_{u_1,v_1}-\sigma^2_{u_2,v_2}}_2\right)}\\
	\leq & \left(\sqrt{4K^2H^2\kappa_1^2/\lambda}+\sqrt{K^3H^2\kappa_1^3\kappa_2/\lambda^2}\right)\sqrt{\norm{\theta_1-\theta_2}_2}+\left(\sqrt{4K^2H^2\kappa_1^2/\lambda}+\sqrt{K^3H^2\kappa_1^4/\lambda^2}\right)\sqrt{\norm{\sigma^2_{u_1,v_1}-\sigma^2_{u_2,v_2}}_2}
	\end{align*}
}note
	\begin{align*}
	|\sigma^2_{u_1,v_1}(s,a) -\sigma^2_{u_2,v_2}(s,a) |\leq &\left|f(v_1,\phi(s,a))-f(v_2,\phi(s,a))\right|+2H \left|f(u_1,\phi(s,a))-f(u_2,\phi(s,a))\right|\\
	\leq&\kappa_1\norm{v_1-v_2}_2+2H\kappa_1\norm{u_1-u_2}_2,
	\end{align*}
Then a $\epsilon$-covering net of $\{H(\theta,u,v)\}$ can be constructed by the union of covering net for $\theta,u,v$ and by Lemma~\ref{lem:cov}, the covering number $\mathcal{N}_\epsilon$ satisfies (where $\widetilde{O}$ absorbs Polylog terms)
	\begin{align*}
	\log \mathcal{N}_\epsilon\leq &\widetilde{O}(d)
	\end{align*}
	
	\textbf{Step3.} First note by definition in Step2
	\[
	\norm{\sum_{k=1}^K \frac{\left( f(\theta_{\mathbb{T}{V}^\star_{h+1}},\phi_{h,k})-r_{h,k}-{V}^\star_{h+1}(s^k_{h+1})\right)\cdot \nabla^\top_\theta f(\widehat{\theta}_h,\phi_{h,k})}{\widehat{\sigma}^2_h(s_h^k,a_h^k)}}_{\Lambda_h^{-1}}=H(\widehat{\theta}_h,\u_h,\v_h)
	\]
	Now choosing $\epsilon=O(1/K)$ in Step2 and union bound over the covering number in Step2, we obtain with probability $1-\delta$ (recall $\sqrt{d}\geq \widetilde{O}(\zeta)$),
	\begin{align*}
	H(\widehat{\theta}_h,\u_h,\v_h)\leq &16 \sqrt{d \log \left(1+\frac{K \kappa_1^{2}}{\lambda d}\right) \cdot [\log \left(\frac{4 K^{2}}{\delta}\right)+\widetilde{O}(d)]}+4 \zeta[\log \left(\frac{4 K^{2}}{\delta}\right)+\widetilde{O}(d)]+O(\frac{1}{K})\\
	\leq&\widetilde{O}(d)+O(\frac{1}{K})
	\end{align*}
	where we absorb all the Polylog terms. Combing above with \eqref{eqn:VA_dominate}, we obtain with probability $1-\delta$,
	\begin{align*}
	|I_1|\leq &\norm{\nabla f(\widehat{\theta}_{h},\phi(s,a))}_{\Lambda_h^{-1}}\cdot H(\widehat{\theta}_h,\u_h,\v_h)\\
	\leq &\norm{\nabla f(\widehat{\theta}_{h},\phi(s,a))}_{\Lambda_h^{-1}}\cdot\left[\widetilde{O}(d)+O(\frac{1}{K})\right]\\
	\leq&\widetilde{O}\left(d\norm{\nabla f(\widehat{\theta}_{h},\phi(s,a))}_{\Lambda_h^{-1}}\right)+\widetilde{O}(\frac{\kappa_1}{\sqrt{\kappa}K}),
	\end{align*}
\end{proof}

Combing dominate term $I_1$ (via Lemma~\ref{lem:VA_bound_I_1}) and all other higher order terms we can obtain the first result together with Lemma~\ref{lem:bound_by_bouns}. 

The proof of the second result is also very similar to the proofs in Section~\ref{sec:sec_proof}. Concretely, when picking $\pi=\pi^\star$, we can convert the quantity 
\[
\sqrt{\nabla^\top_\theta f(\widehat{\theta}_h,\phi(s_h,a_h)){\Lambda}^{-1}_h\nabla_\theta f(\widehat{\theta}_h,\phi(s_h,a_h))}
\]
to 
\[
\sqrt{\nabla^\top_\theta f({\theta}^\star_h,\phi(s_h,a_h))\Lambda^{-1}_h\nabla_\theta f({\theta}^\star_h,\phi(s_h,a_h))}
\]
using Theorem~\ref{thm:VA_theta_diff_star}, and convert 
\[
\sqrt{\nabla^\top_\theta f({\theta}^\star_h,\phi(s_h,a_h))\Lambda^{-1}_h\nabla_\theta f({\theta}^\star_h,\phi(s_h,a_h))}
\]
to
\[
\sqrt{\nabla^\top_\theta f({\theta}^\star_h,\phi(s_h,a_h))\Lambda^{\star-1}_h\nabla_\theta f({\theta}^\star_h,\phi(s_h,a_h))}
\]
using Lemma~\ref{lem:bound_sigma}.

\section{The lower bound}\label{sec:lower}

\begin{theorem}[Restatement of Theorem~\ref{thm:lower}]
	Specifying the model to have linear representation $f=\langle \theta,\phi\rangle$. There exist a pair of universal constants $c, c' > 0$ such that given dimension $d$, horizon $H$ and sample size $K > c' d^3$, one can always find a family of MDP instances such that for any algorithm $\widehat{\pi}$
	{\small
		\begin{align}\label{eqn:lower} 
		\inf_{\widehat{\pi}} \sup_{M \in \mathcal{M}} \E_M \big[v^{\star} - v^{\widehat{\pi}} \big] \geq c \, \sqrt{d} \cdot \sum_{h=1}^H \E_{\pi^{\star}} \bigg[ \sqrt{\nabla^\top_\theta f({\theta}^\star_h,\phi(\cdot,\cdot))(\Lambda_h^{\star, p})^{-1} \nabla_\theta f({\theta}^\star_h,\phi(\cdot,\cdot))} \bigg],
		\end{align}
	}where {\small$\Lambda_h^{\star,p} = \E\Big[ \sum_{k=1}^K \frac{\nabla_\theta f({\theta}^\star_h,\phi(s_h^k,a_h^k)) \cdot \nabla_\theta f({\theta}^\star_h,\phi(s_h^k,a_h^k))^\top}{\Var_h(V_{h+1}^{\star})(s_h^k, a_h^k)} \Big]$}.
\end{theorem}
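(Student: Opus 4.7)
The plan is to reduce this statement to the known variance-aware minimax lower bound for linear MDPs by exploiting the linear specialization assumed in the theorem. Under $f(\theta,\phi)=\langle\theta,\phi\rangle$, one has $\nabla_\theta f(\theta^\star_h,\phi(s,a))=\phi(s,a)$, so the object on the right of \eqref{eqn:lower} collapses to
\[
\sqrt{d}\sum_{h=1}^H\E_{\pi^\star}\!\left[\sqrt{\phi(s_h,a_h)^\top(\Lambda_h^{\star,p})^{-1}\phi(s_h,a_h)}\right],\quad \Lambda_h^{\star,p}=\E\!\left[\sum_{k=1}^K\frac{\phi(s_h^k,a_h^k)\phi(s_h^k,a_h^k)^\top}{\mathrm{Var}_h(V^\star_{h+1})(s_h^k,a_h^k)}\right],
\]
which is precisely the variance-weighted instance-dependent quantity that appears in Theorem~4.1 of \cite{yin2022linear} (and is closely related to the construction in \cite{zanette2021provable}). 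So the task is to invoke those existing lower bounds as a black box after verifying the instance family used there satisfies our Assumptions~\ref{assum:R+BC} and \ref{assum:cover}.

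The first step is to take the hard instance class $\mathcal{M}$ from \cite{yin2022linear}: a family of linear MDPs parametrized by a sign vector $\nu\in\{\pm1\}^d$ whose feature maps $\phi$ are fixed, whose transition kernels encode the unknown sign pattern only through the next-state distribution, and whose rewards are bounded by $1$. For such MDPs the optimal $Q^\star_h$ is exactly linear in $\phi$, so realizability holds with $\epsilon_{\mathcal{F}}=0$ and Bellman completeness holds trivially (both $\mathcal{F}$ and $\mathcal{P}_h\mathcal{G}$ are linear in $\phi$). The behavior policy $\mu$ used there induces $\E_{\mu,h}[\phi\phi^\top]\succ\kappa I$, which for the linear specialization coincides with Assumption~\ref{assum:cover}. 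This verifies the instance class lies inside the differentiable-function regime the theorem claims to lower-bound.

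The second step is to apply the minimax argument from \cite{yin2022linear}: use a two-point / multiple-hypothesis Le Cam (or Assouad) reduction on the sign parameter $\nu$, bound the KL divergence between any two instances by the Fisher-type quantity $\sum_{h,k}\phi(s_h^k,a_h^k)^\top(\Lambda_h^{\star,p})\phi(s_h^k,a_h^k)^{-1}$, and convert a coordinate-wise testing error into an additive suboptimality lower bound of order $\sqrt{d}\cdot\sum_h \E_{\pi^\star}[\|\phi\|_{(\Lambda_h^{\star,p})^{-1}}]$. The factor $\sqrt{d}$ arises from the $d$ independent sign coordinates, and the variance reweighting appears because the information content of each sample scales inversely with $\mathrm{Var}_h(V^\star_{h+1})$. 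Rewriting $\phi=\nabla_\theta f(\theta^\star_h,\phi)$ in the linear case then produces \eqref{eqn:lower} verbatim. The sample-size requirement $K>c'd^3$ is inherited from the regime in which those information-theoretic bounds are sharp (so that $\Lambda_h^{\star,p}$ concentrates and the Gaussian-like local expansion used in the KL bound is valid).

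The main obstacle, and the only place where care is required beyond citing, is checking that the specific hard instance family of \cite{yin2022linear} is compatible with our definitions: namely that (i) its feature map is bounded and the induced parameter space $\Theta$ is compact with $\|\theta^\star_h\|_2\le C_\Theta$, so the constants $\kappa_1,\kappa_2,\kappa_3$ in Remark~\ref{remark:fc} are finite (trivial for $f=\langle\theta,\phi\rangle$, since $\kappa_2=\kappa_3=0$), and (ii) the population covariance $\E_{\mu,h}[\phi\phi^\top]$ has its smallest eigenvalue bounded away from zero by the same $\kappa$ that enters Assumption~\ref{assum:cover}. Both are straightforward for the construction in question. After this verification, the theorem follows by direct appeal to Theorem~4.1 of \cite{yin2022linear}, with the gradient rewriting $\nabla_\theta f=\phi$ making the statements literally identical.
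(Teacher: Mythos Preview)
Your approach is essentially the same as the paper's: both reduce to the linear-MDP lower bound of \cite{yin2022linear} (the paper cites Theorem~3.5 rather than 4.1) via the identification $\nabla_\theta f=\phi$ when $f=\langle\theta,\phi\rangle$. The only detail the paper adds that you gloss over is that one line of the cited proof must be swapped—replacing the Jensen-form quantity $\sqrt{\E_{\pi^\star}[\phi]^\top(\Lambda_h^{\star,p})^{-1}\E_{\pi^\star}[\phi]}$ by the expectation-outside form $\E_{\pi^\star}\big[\|\phi\|_{(\Lambda_h^{\star,p})^{-1}}\big]$, which in the two-action hard instance is an equality rather than an inequality.
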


\begin{remark}
	Note Theorem~\ref{thm:lower} is a valid lower bound for comparison. This is because the upper bound result holds true for all model $f$ such that the corresponding $\mathcal{F}$ satisfies Assumption~\ref{assum:R+BC}, \ref{assum:cover}. Therefore, for the lower bound construction it suffices to find one model $f$ such that the lower bound \eqref{eqn:lower} holds. Here we simply choose the linear function approximation.
\end{remark}

\subsection{Regarding the proof of lower bound}

The proof of Theorem~\ref{thm:lower} can be done via a reduction to linear function approximation lower bound. In fact, it can be directly obtained from Theorem~3.5 of \cite{yin2022linear}, and the original proof comes from Theorem~2 of \cite{zanette2021provable}.

Concretely, all the proofs in Theorem~3.5 of \cite{yin2022linear} follows and the only modification is to replace 
\[
\sqrt{\mathbb{E}_{\pi^{\star}}[\phi]^{\top}\left(\Lambda_{h}^{\star}\right)^{-1} \mathbb{E}_{\pi^{\star}}[\phi]} \leq \frac{1}{2}\left\|\phi\left(+1, u_{h}\right)\right\|_{\left(\Lambda_{h}^{\star, p}\right)^{-1}}+\frac{1}{2}\left\|\phi\left(-1, u_{h}\right)\right\|_{\left(\Lambda_{h}^{\star, p}\right)^{-1}}
\]
in Section E.5 by
\begin{align*}
\E_{\pi^{\star}} \Big[ \sqrt{ \phi(\cdot,\cdot)^{\top} (\Lambda_h^{\star,p})^{-1} \phi(\cdot, \cdot) } \Big]
= \frac{1}{2} \big\| \phi(+1, u_h) \big\|_{(\Lambda_h^{\star,p})^{-1}} + \frac{1}{2} \big\| \phi(-1, u_h) \big\|_{(\Lambda_h^{\star,p})^{-1}},
\end{align*}
and the final result holds with $\phi(\cdot,\cdot)=\nabla_\theta f(\theta^\star_h,\phi(\cdot,\cdot))$ by the reduction $f=\langle \theta,\phi\rangle$.

\section{Auxiliary lemmas}

\begin{lemma}[$k$-th Order Mean Value Form of Taylor's Expansion]\label{lem:Taylor}
	Let $k\geq 1$ be an integer and let function $f:\R^d\rightarrow \R$ be $k$ times differentiable and continuous over the compact domain $\Theta\subset \R^d$. Then for any $x,\theta\in\Theta$, there exists $\xi$ in the line segment of $x$ and $\theta$, such that
	\begin{align*}
	f(x)-f(\theta)=&\nabla f(\theta)^\top (x-\theta)+\frac{1}{2!}(x-\theta)^\top\nabla^2_{\theta\theta} f(\theta)(x-\theta)+\ldots+\frac{1}{(k-1)!}\nabla^{k-1}  f(\theta)\left(\bigotimes (x-\theta)\right)^{k-1}\\
	+&\frac{1}{k!}\nabla^{k}  f(\xi)\left(\bigotimes (x-\theta)\right)^{k}.
	\end{align*}
	Here $\nabla^{k}  f(\theta)$ denotes $k$-dimensional tensor and $\bigotimes$ denotes tensor product.
\end{lemma}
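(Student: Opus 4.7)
The plan is to reduce the multivariate statement to the familiar one-dimensional Taylor theorem with Lagrange remainder by restricting $f$ to the line segment connecting $\theta$ and $x$. Specifically, for fixed $x,\theta \in \Theta$, I would define $g:[0,1]\to\R$ by $g(t) := f(\theta + t(x-\theta))$. Since $\Theta$ is convex (or at least contains the segment, which we may assume WLOG for the statement) and $f$ is $k$-times continuously differentiable on an open neighborhood of $\Theta$, the composition $g$ is $k$-times differentiable on $[0,1]$ with continuous $k$-th derivative.

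Next, I would compute the derivatives $g^{(j)}(t)$ by repeated application of the chain rule. The key identity to establish by induction on $j \in \{1,\ldots,k\}$ is
\[
g^{(j)}(t) \;=\; \nabla^{j} f\bigl(\theta + t(x-\theta)\bigr)\Bigl(\bigotimes (x-\theta)\Bigr)^{j},
\]
where the right-hand side denotes the $j$-tensor $\nabla^{j}f$ contracted with $j$ copies of the vector $(x-\theta)$. The base case $j=1$ is the standard directional derivative formula, and the inductive step follows because differentiating a $j$-linear form in a fixed direction produces a $(j+1)$-linear form contracted with that direction once more.

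With $g^{(j)}$ in hand, I would apply the single-variable Taylor expansion with Lagrange remainder to $g$ between $0$ and $1$: there exists $\tau \in (0,1)$ such that
\[
g(1) \;=\; \sum_{j=0}^{k-1} \frac{1}{j!}\, g^{(j)}(0) \;+\; \frac{1}{k!}\, g^{(k)}(\tau).
\]
Substituting the tensor formula for each $g^{(j)}$ and noting that $g(1)=f(x)$, $g(0)=f(\theta)$, yields exactly the stated identity, with $\xi := \theta + \tau(x-\theta)$ lying on the open segment between $\theta$ and $x$.

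There is no real obstacle here; the lemma is the classical multivariate Taylor theorem. The only mild care required is the inductive verification of the tensor chain rule and the notational matching between $\nabla^{j}f(\cdot)\bigl(\bigotimes(x-\theta)\bigr)^{j}$ and the quadratic/higher-form expressions in the statement. Continuity of $\nabla^{k} f$ on the compact segment ensures the Lagrange form of the remainder applies verbatim via the one-dimensional mean value theorem for the $(k-1)$-th derivative.
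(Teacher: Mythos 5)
Your proof is correct: the reduction to the one-dimensional Taylor theorem with Lagrange remainder via $g(t)=f(\theta+t(x-\theta))$, together with the inductive tensor chain-rule identity $g^{(j)}(t)=\nabla^j f(\theta+t(x-\theta))\bigl(\bigotimes(x-\theta)\bigr)^j$, is the canonical argument for this classical fact. The paper itself states this lemma as a known auxiliary result without proof, so there is nothing to compare against; your only (harmless) extra care is assuming the segment lies in $\Theta$ and continuity of $\nabla^k f$, the latter of which is not strictly needed for the Lagrange form but holds anyway under Definition~\ref{def:function_class}.
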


\begin{lemma}[Vector Hoeffding's Inequality]\label{lem:Hoeffding_vector}
	Let $X = (X_1,\ldots,X_d)$ be $d$-dimensional vector Random Variable with $E[X]=0$ and $\norm{X}_2\leq R$. $X^{(1)},\ldots,X^{(n)}$'s are $n$ samples. Then with probability $1-\delta$,
	\[
	\norm{\frac{1}{n}\sum_{i=1}^n X^{(i)}}_2\leq \sqrt{\frac{4dR^2}{n}\log(\frac{d}{\delta})}.
	\]
\end{lemma}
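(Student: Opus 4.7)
The plan is to reduce the vector concentration inequality to the scalar Hoeffding inequality applied coordinatewise, and then combine the per-coordinate deviations via a union bound together with the elementary norm inequality $\|v\|_2 \leq \sqrt{d}\,\|v\|_\infty$. The key observation enabling this reduction is that $\|X^{(i)}\|_2 \leq R$ implies $|X_j^{(i)}| \leq R$ for every coordinate $j \in [d]$, and the mean-zero assumption $\mathbb{E}[X] = 0$ transfers coordinatewise to $\mathbb{E}[X_j^{(i)}] = 0$.

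First, I would fix a coordinate $j \in [d]$ and let $\bar{X}_j := \tfrac{1}{n}\sum_{i=1}^n X_j^{(i)}$. Since the $X^{(i)}$ are i.i.d.\ samples, the scalar random variables $X_j^{(1)}, \ldots, X_j^{(n)}$ are i.i.d., mean zero, and bounded in $[-R, R]$. Standard scalar Hoeffding then yields
\[
\Pr\!\left(\,|\bar{X}_j| \geq t\,\right) \;\leq\; 2\exp\!\left(-\frac{n t^2}{2R^2}\right)
\]
for every $t > 0$.

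Second, I would apply a union bound over the $d$ coordinates to get $\Pr(\max_{j \in [d]} |\bar{X}_j| \geq t) \leq 2d\exp(-nt^2/(2R^2))$. Setting the right-hand side equal to $\delta$ and solving for $t$ gives $t = R\sqrt{2\log(2d/\delta)/n}$, so that with probability at least $1-\delta$ we have $\max_j |\bar{X}_j| \leq R\sqrt{2\log(2d/\delta)/n}$.

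Third, I would pass from the $\ell_\infty$ bound on the mean vector $\bar{X} := \tfrac{1}{n}\sum_{i=1}^n X^{(i)}$ to an $\ell_2$ bound using $\|\bar{X}\|_2 \leq \sqrt{d}\,\|\bar{X}\|_\infty$, yielding
\[
\|\bar{X}\|_2 \;\leq\; \sqrt{d} \cdot R\sqrt{\frac{2\log(2d/\delta)}{n}} \;=\; \sqrt{\frac{2dR^2 \log(2d/\delta)}{n}}
\]
with probability $1-\delta$. Finally, loosening $2\log(2d/\delta)$ to $4\log(d/\delta)$ (valid for any $\delta \in (0,1]$ and $d \geq 1$) gives the stated bound $\|\bar{X}\|_2 \leq \sqrt{4dR^2\log(d/\delta)/n}$. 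There is no real obstacle here: the argument is a routine coordinatewise reduction, and the mild slack in constants between $2\log(2d/\delta)$ and $4\log(d/\delta)$ is exactly what makes the stated form convenient for later union-bounding elsewhere in the paper.
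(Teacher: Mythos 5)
Your proposal is correct and follows essentially the same route as the paper's own proof: coordinatewise scalar Hoeffding, a union bound over the $d$ coordinates, and the inequality $\norm{v}_2\leq\sqrt{d}\norm{v}_\infty$ to pass to the Euclidean norm. The only difference is bookkeeping of constants (your $2\log(2d/\delta)$ versus the paper's looser per-coordinate bound), and your final loosening to $4\log(d/\delta)$ is harmless in the regime where the lemma is nonvacuous.
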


\begin{proof}[Proof of Lemma~\ref{lem:Hoeffding_vector}]
	Since $\norm{X}_2\leq R$ implies $|X_j|\leq R$, by the univariate Hoeffding's inequality, for a fixed $j\in\{1,...,d\}$, denote $Y_j:=\frac{1}{n}\sum_{i=1}^n X^{(i)}_j$. Then with probability $1-\delta$ (note $|X^{(i)}_j|\leq R$),
	\[
	\P\left(|Y_j|\geq 2\sqrt{\frac{R^2}{n}\log(\frac{1}{\delta})}\right)\leq \delta.
	\]
	By a union bound, 
	\begin{align*}
	\P\left(\exists \;i\;s.t.\;|Y_j|\geq 2\sqrt{\frac{R^2}{n}\log(\frac{1}{\delta})}\right)\leq d\delta&\Leftrightarrow \P\left(\forall \;i\;\;|Y_j|\leq 2\sqrt{\frac{R^2}{n}\log(\frac{1}{\delta})}\right)\geq 1-d\delta\\
	\Leftrightarrow\P\left(\forall \;i\;\;Y_j^2\leq \frac{4R^2}{n}\log(\frac{1}{\delta})\right)\geq 1-d\delta&\Rightarrow \P\left(\norm{Y}_2\leq \sqrt{\frac{4dR^2}{n}\log(\frac{1}{\delta})}\right)\geq 1-d\delta\\
	\Leftrightarrow \P\left(\norm{Y}_2\leq \sqrt{\frac{4dR^2}{n}\log(\frac{d}{\delta})}\right)\geq 1-\delta.&
	\end{align*}
\end{proof}

\begin{lemma}[Hoeffding inequality for self-normalized martingales \citep{abbasi2011improved}]\label{lem:Hoeff_mart} Let $\{\eta_t\}_{t=1}^\infty$ be a real-valued stochastic process. Let $\{\mathcal{F}_t\}_{t=0}^\infty$ be a filtration, such that $\eta_t$ is $\mathcal{F}_t$-measurable. Assume $\eta_t$ also satisfies $\eta_t$ given $\mathcal{F}_{t-1}$ is zero-mean and $R$-subgaussian, \emph{i.e.}
	\[
	\forall \lambda \in \mathbb{R}, \quad \mathbb{E}\left[e^{\lambda \eta_{t} }\mid \mathcal{F}_{t-1}\right] \leq e^{\lambda^{2} R^{2} / 2}
	\]
	Let $\{x_t\}_{t=1}^\infty$ be an $\mathbb{R}^d$-valued stochastic process where $x_t$ is $\mathcal{F}_{t-1}$ measurable and $\norm{x_t}\leq L$. Let $\Lambda_t=\lambda I_d+\sum_{s=1}^tx_sx^\top_s$. Then for any $\delta>0$, with probability $1-\delta$, for all $t>0$,
	\[
	\left\|\sum_{s=1}^{t} {x}_{s} \eta_{s}\right\|_{{\Lambda}_{t}^{-1}}^2 \leq 8 R^{2}\cdot\frac{d}{2} \log \left(\frac{\lambda+tL}{\lambda\delta}\right).
	\]
\end{lemma}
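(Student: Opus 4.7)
The plan is to invoke the method of mixtures (Laplace's method applied to a family of exponential supermartingales) due to de la Pe\~na, Klass and Lai, in the packaging of Abbasi-Yadkori--P\'al--Szepesv\'ari. Without loss of generality rescale to $R=1$ by replacing $\eta_s$ with $\eta_s/R$, picking up an overall $R^2$ factor at the end. For each fixed $u \in \mathbb{R}^d$ I would define
\[
M_t(u) := \exp\!\Bigl(\sum_{s=1}^t \bigl[u^\top x_s\, \eta_s - \tfrac12 (u^\top x_s)^2\bigr]\Bigr).
\]
Since $u^\top x_s$ is $\mathcal{F}_{s-1}$-measurable, the conditional sub-Gaussian bound $\mathbb{E}[\exp(\alpha \eta_s)\mid \mathcal{F}_{s-1}] \leq \exp(\alpha^2/2)$ applied to $\alpha = u^\top x_s$ shows that $\{M_t(u)\}_{t\geq 0}$ is a nonnegative $\mathcal{F}_t$-supermartingale with $\mathbb{E}[M_t(u)] \leq 1$.

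The key trick is to mix $M_t(u)$ against a Gaussian prior $h(u) \propto \exp(-\tfrac{\lambda}{2}\|u\|_2^2)$, whose precision matches the regularizer in the Lemma. By Tonelli, $\bar M_t := \int M_t(u)\, h(u)\,du$ is again a nonnegative supermartingale with $\mathbb{E}[\bar M_t] \leq 1$. Completing the square in the quadratic exponent and invoking the Gaussian identity $\int \exp(-\tfrac12 u^\top A u + b^\top u)\,du = (2\pi)^{d/2}\det(A)^{-1/2}\exp(\tfrac12 b^\top A^{-1} b)$ with $A = \Lambda_t$ and $b = \sum_{s=1}^t x_s \eta_s$ collapses the integral to the closed form
\[
\bar M_t = \Bigl(\tfrac{\lambda^d}{\det(\Lambda_t)}\Bigr)^{1/2} \exp\!\Bigl(\tfrac{1}{2}\bigl\|\textstyle\sum_{s=1}^t x_s \eta_s\bigr\|_{\Lambda_t^{-1}}^2\Bigr).
\]

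A Markov inequality (or Ville's maximal inequality for a uniform-in-$t$ statement) applied to $\bar M_t$ then gives $\mathbb{P}(\bar M_t \geq 1/\delta)\leq \delta$. Taking logs and undoing the $R=1$ reduction yields $\|\sum_s x_s \eta_s\|_{\Lambda_t^{-1}}^2 \leq 2R^2[\log(1/\delta) + \tfrac12 \log(\det(\Lambda_t)/\lambda^d)]$. Finally, controlling $\log\det(\Lambda_t/\lambda)$ by AM--GM on the eigenvalues together with the trace bound $\mathrm{tr}(\Lambda_t) \leq d\lambda + tL^2$ produces $\log\det(\Lambda_t/\lambda) \leq d\log(1 + tL^2/(d\lambda)) \leq d\log((\lambda+tL)/\lambda)$, which after absorbing absolute constants reproduces the claimed $\lesssim R^2 d\log((\lambda+tL)/(\lambda\delta))$ bound.

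The main obstacle is bookkeeping the Gaussian-integration step: the completion of squares must be executed carefully so that the prefactor simplifies to exactly $\sqrt{\lambda^d/\det(\Lambda_t)}$ and the inner norm attaches to the correct matrix $\Lambda_t^{-1}$. A secondary subtlety is upgrading the deterministic-$t$ Markov bound to a statement that holds simultaneously for all $t \geq 0$; this requires Ville's inequality for nonnegative supermartingales, but the replacement is immediate since $\bar M_0 = 1$. Everything else --- the supermartingale property under mixing via Tonelli, the trace and AM--GM bounds on $\det(\Lambda_t)$, and the rescaling that recovers the $R^2$ factor --- is routine.
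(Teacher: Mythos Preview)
The paper does not prove this lemma; it is listed among the auxiliary lemmas with a citation to Abbasi-Yadkori, P\'al, and Szepesv\'ari (2011) and no argument is given. Your proposal is exactly the method-of-mixtures proof that appears in that reference, and the sketch is correct in all essentials: the exponential supermartingale $M_t(u)$, the Gaussian mixing, the closed-form integral producing $\sqrt{\lambda^d/\det(\Lambda_t)}\exp(\tfrac12\|S_t\|_{\Lambda_t^{-1}}^2)$, and Ville's inequality for the uniform-in-$t$ statement are the standard pieces.

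One small bookkeeping caveat: your final chain $d\log(1+tL^2/(d\lambda)) \leq d\log((\lambda+tL)/\lambda)$ is not valid in general (it would require $L \leq d$). The $tL$ in the stated bound is almost certainly a transcription slip for $tL^2$ (the original theorem in the cited paper has $tL^2/d$ inside the logarithm), so rather than hiding this in a $\lesssim$ you should just carry $d\log(1+tL^2/(d\lambda))$ and note that it matches the intended form after correcting that typo.
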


\begin{lemma}[Bernstein inequality for self-normalized martingales \citep{zhou2021nearly}]\label{lem:Bern_mart} Let $\{\eta_t\}_{t=1}^\infty$ be a real-valued stochastic process. Let $\{\mathcal{F}_t\}_{t=0}^\infty$ be a filtration, such that $\eta_t$ is $\mathcal{F}_t$-measurable. Assume $\eta_t$ also satisfies 
	\[
	\left|\eta_{t}\right| \leq R, \mathbb{E}\left[\eta_{t} \mid \mathcal{F}_{t-1}\right]=0, \mathbb{E}\left[\eta_{t}^{2} \mid \mathcal{F}_{t-1}\right] \leq \sigma^{2}.
	\]
	
	Let $\{x_t\}_{t=1}^\infty$ be an $\mathbb{R}^d$-valued stochastic process where $x_t$ is $\mathcal{F}_{t-1}$ measurable and $\norm{x_t}\leq L$. Let $\Lambda_t=\lambda I_d+\sum_{s=1}^tx_sx^\top_s$. Then for any $\delta>0$, with probability $1-\delta$, for all $t>0$,
	\[
	\left\|\sum_{s=1}^{t} \mathbf{x}_{s} \eta_{s}\right\|_{\boldsymbol{\Lambda}_{t}^{-1}} \leq 8 \sigma \sqrt{d \log \left(1+\frac{t L^{2}}{\lambda d}\right) \cdot \log \left(\frac{4 t^{2}}{\delta}\right)}+4 R \log \left(\frac{4 t^{2}}{\delta}\right)
	\]
\end{lemma}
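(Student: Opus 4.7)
The plan is to prove this self-normalized Bernstein inequality via the classical method of mixtures of Peña--Lai--Shao, adapted to handle the Bernstein-type conditions (bounded increments plus conditional variance bound) rather than sub-Gaussian increments as in Lemma~\ref{lem:Hoeff_mart}. The key difference from the Hoeffding version is that I need to keep track of both the variance scale $\sigma$ and the bound $R$ separately, and the $\log(4t^2/\delta)$ factor that appears twice in the stated bound is the tell-tale sign of a peeling step over the Lagrange parameter.

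First I would construct, for each fixed $\alpha\in\mathbb{R}^d$ with $|\alpha^\top x_s \eta_s|\le 1/2$ almost surely, the scalar exponential supermartingale $M_t(\alpha)=\exp\bigl(\alpha^\top\!\sum_{s\le t}x_s\eta_s - \psi(\alpha)\bigr)$, where $\psi$ is the Freedman-style cumulant bound coming from the elementary inequality $e^z\le 1+z+z^2$ for $|z|\le 1/2$ combined with the assumption $\mathbb{E}[\eta_s^2\mid\mathcal{F}_{s-1}]\le\sigma^2$. This yields an additive correction of the form $\sum_s (\alpha^\top x_s)^2\sigma^2$ which, in the vector mixture step, produces exactly the $\Lambda_t$ matrix in the denominator.

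Next I would apply the Gaussian mixture trick: integrate $M_t(\alpha)$ against a centered Gaussian prior on $\alpha$ with covariance proportional to $\Lambda^{-1}$ where $\Lambda=\lambda I_d$. Completing the square inside the exponential converts the linear term $\alpha^\top \sum_s x_s\eta_s$ into the quadratic self-normalized form $\bigl\|\sum_s x_s\eta_s\bigr\|_{\Lambda_t^{-1}}^2$, and the Gaussian normalizing constant produces the determinant factor $\det(\Lambda_t/\lambda)^{1/2}\le (1+tL^2/(\lambda d))^{d/2}$ by the standard log-determinant bound. The mixed process is a nonnegative supermartingale with mean $\le 1$, so Ville's inequality gives a uniform-in-$t$ tail bound.

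The delicate step, and the one I expect to be the main obstacle, is handling the constraint $|\alpha^\top x_s\eta_s|\le 1/2$ inherent to the Bernstein (rather than sub-Gaussian) cumulant bound. This forces a restriction like $\|\alpha\|\le 1/(2RL)$ on the prior, and consequently on which Gaussian scale one may use; the bound above becomes sharp only near a particular data-dependent scale. I would resolve this via a geometric peeling argument over $\log(2^j)$ scales of the prior variance, taking a union bound over $O(\log(t^2/\delta))$ levels and choosing at the end the level that balances the two regimes. When the quadratic/variance term dominates one recovers the first term $8\sigma\sqrt{d\log(1+tL^2/(\lambda d))\log(4t^2/\delta)}$; when the boundedness dominates the peeling yields the additive $4R\log(4t^2/\delta)$ term. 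A covering of $t\in\{1,2,\dots\}$ via the stopping-time trick (using that the mixture is a supermartingale indexed by any stopping time) removes the explicit union bound over $t$, leaving only the $\log(4t^2/\delta)$ factor already present from the peeling.
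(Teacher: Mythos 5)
First, a framing point: the paper does not prove this lemma at all --- it is imported verbatim from \cite{zhou2021nearly} (their Theorem 4.1), so there is no in-paper argument to compare against; your sketch must stand on its own. Judged that way, it has a genuine gap exactly at the step you flag as delicate. The Freedman-type cumulant bound $\mathbb{E}[e^{z\eta_s}\mid\mathcal{F}_{s-1}]\le\exp(c\,z^2\sigma^2)$ is available only for $|z|\le 1/(2R)$, so your exponential process $M_t(\alpha)$ is a supermartingale only for $\|\alpha\|\le 1/(2RL)$. Integrating against a Gaussian prior therefore forces you to truncate the prior to that ball, and once truncated, completing the square no longer yields $\exp\bigl(\tfrac{1}{2}\|\sum_s x_s\eta_s\|^2_{\Lambda_t^{-1}}\bigr)$ times a determinant: you lose precisely the Gaussian mass near the ``posterior mean'' $\Lambda_t^{-1}\sum_s x_s\eta_s$ whenever it exits the admissible ball, which is exactly the regime the additive $R$-term is supposed to cover. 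Peeling over geometric scales of the prior variance does not repair this, because every admissible scale is supported in the same ball $\|\alpha\|\le 1/(2RL)$; shrinking the prior only weakens the quadratic term and never certifies the large-deviation regime. This obstruction is the reason the cited source abandons the method of mixtures for the Bernstein setting: their proof instead expands $\|\sum_{s\le t}x_s\eta_s\|^2_{\Lambda_t^{-1}}$ recursively via Sherman--Morrison, applies scalar Freedman's inequality to the cross-term martingale $\sum_{t}\eta_t\, x_t^\top\Lambda_{t-1}^{-1}\sum_{s<t}x_s\eta_s$, controls the diagonal term with the elliptical potential lemma (the origin of $d\log(1+tL^2/(\lambda d))$), and peels over the magnitude of the partial self-normalized norms.

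A secondary misdiagnosis: the factor $\log(4t^2/\delta)$ is not the signature of a peeling over the mixing parameter; it is the signature of a union bound over $t$ with confidence $\delta/(4t^2)$ allocated to each time step (the conclusion holds for all $t$ simultaneously yet the bound depends on $t$ inside the logarithm, which a stopping-time uniformity argument would not produce). If you insist on a mixture-style route you would need a Bernstein self-normalized inequality with a compactly supported mixing distribution and an explicit accounting of the truncated mass --- substantially more work than the sketch suggests. The cleaner path is to reproduce the Freedman-plus-elliptical-potential argument of \cite{zhou2021nearly}.
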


\begin{lemma}\label{lem:sqrt_K_reduction}
	Let $\nabla f(\theta,\phi(\cdot,\cdot)):\mathcal{S}\times\mathcal{A}\rightarrow \mathbb{R}^d$ be a bounded function s.t. $\sup_{\theta\in\Theta}\norm{\nabla f(\theta,\phi(\cdot,\cdot))}_2\leq \kappa_1$. If $K$ satisfies 
	\[
	K\geq\max\left\{512\frac{\kappa_1^4}{\kappa^2}\left(\log(\frac{2d}{\delta})+d\log(1+\frac{4\kappa_1 B^2\kappa_2C_\Theta K^3}{\lambda^2})\right),\frac{4\lambda}{\kappa}\right\}
	\]
	Then with probability at least $1-\delta$, for all $\norm{u}_2\leq B$ simultaneously, it holds that 
	\[
	\norm{u}_{\Sigma_h^{-1}}\leq \frac{2B}{\sqrt{\kappa K}}+O(\frac{1}{K})
	\]
	where $\Sigma_h=\sum_{k=1}^K \nabla f(\widehat{\theta}_h, \phi(s_h^k,a_h^k))\cdot \nabla f(\widehat{\theta}_h, \phi(s_h^k,a_h^k))^\top+\lambda I_d$.
\end{lemma}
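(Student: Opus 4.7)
The plan is to lift Assumption~\ref{assum:cover}$(\star\star)$, which gives a population lower bound $\Sigma^{p}_h(\theta) \succeq \kappa I$ on the Fisher information at every $\theta \in \Theta$, to a matching empirical lower bound $\Sigma_h/K \succeq (\kappa/2) I$ that holds simultaneously at the data-dependent point $\widehat\theta_h$. Once that is in place, the advertised inequality follows from the elementary operator-norm bound
\[
\|u\|_{\Sigma_h^{-1}} \;\leq\; \|\Sigma_h^{-1}\|_{op}^{1/2}\, \|u\|_2 \;\leq\; \sqrt{\tfrac{2}{\kappa K}}\, B,
\]
up to absorbing a lower-order slack into the $O(1/K)$ remainder.

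Concretely I would proceed in three steps. First, \emph{fixed-$\theta$ concentration}: for a fixed $\theta\in\Theta$ the summands $\nabla f(\theta,\phi_{h,k})\nabla f(\theta,\phi_{h,k})^\top$ are i.i.d.\ symmetric matrices of operator norm $\leq \kappa_1^2$, so a matrix Hoeffding inequality (as in Lemma~\ref{lem:matrix_con1}) yields, with probability $1-\delta$,
\[
\Bigl\|\tfrac{1}{K}\sum_{k=1}^K \nabla f(\theta,\phi_{h,k})\nabla f(\theta,\phi_{h,k})^\top - \Sigma^p_h(\theta)\Bigr\|_{op} \;\leq\; \frac{4\sqrt 2\,\kappa_1^2}{\sqrt K}\sqrt{\log(2d/\delta)}.
\]
Second, \emph{uniform-in-$\theta$ extension}: by Remark~\ref{remark:fc}, $\theta\mapsto \nabla f(\theta,\phi)$ is $\kappa_2$-Lipschitz, so the map $\theta\mapsto \tfrac{1}{K}\sum_k \nabla f(\theta,\phi_{h,k})\nabla f(\theta,\phi_{h,k})^\top$ is $2\kappa_1\kappa_2$-Lipschitz in operator norm. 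Lemma~\ref{lem:cov} provides an $\varepsilon$-net of $\Theta$ of size $\log\mathcal{N}_\varepsilon \leq d\log(1+2C_\Theta/\varepsilon)$; choosing $\varepsilon$ polynomially small in $K$ and union-bounding produces the concentration statement at every $\theta\in\Theta$ (in particular at the random $\widehat\theta_h$), with the $d\log\bigl(1+\tfrac{4\kappa_1 B^2\kappa_2C_\Theta K^3}{\lambda^2}\bigr)$ term in the sample-size hypothesis absorbing the covering cost. Third, \emph{inversion}: combining Assumption~\ref{assum:cover}$(\star\star)$ with Weyl's inequality, under $K\geq 512\kappa_1^4/\kappa^2\cdot(\log(2d/\delta)+d\log(\cdots))$ the deviation is at most $\kappa/4$, and $K\geq 4\lambda/\kappa$ gives $\lambda/K\leq \kappa/4$, so
\[
\lambda_{\min}(\Sigma_h/K) \;\geq\; \kappa - \tfrac{\kappa}{4} + \tfrac{\lambda}{K} \;\geq\; \tfrac{\kappa}{2},
\]
and the displayed bound on $\|u\|_{\Sigma_h^{-1}}$ follows by Cauchy--Schwarz applied to $\Sigma_h^{-1/2}u$ for any $\|u\|_2\leq B$.

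The main obstacle is step two: the matrix concentration is routine for each fixed $\theta$, but $\widehat\theta_h$ is a complicated data-dependent quantity, so one cannot directly invoke a per-instance bound. The covering route is clean and matches exactly what the stated sample-size condition allows (the triple $K^3$ inside the logarithm comes from choosing the covering radius $\varepsilon \sim K^{-3/2}$ so that the discretization error is absorbed into the $O(1/K)$ slack after the $1/\sqrt K$ conversion to the Mahalanobis norm). A minor subtlety worth checking is that the $O(1/K)$ remainder is indeed high-order and does not inflate $B$; it arises purely from the net mismatch between the reference $\theta$ on the net and $\widehat\theta_h$, propagated through the Lipschitz constants of the Gram matrix and its inverse, both of which are controlled under the stated regime.
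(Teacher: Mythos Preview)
Your proposal is correct and follows essentially the same strategy as the paper: prove matrix concentration of the empirical Gram matrix around its population counterpart $\Sigma^p_h(\theta)\succeq \kappa I$ for each fixed $\theta$, then extend uniformly to the data-dependent $\widehat\theta_h$ via an $\varepsilon$-net over $\Theta$ using the $\kappa_2$-Lipschitz property of $\nabla f$. The only packaging difference is that the paper invokes a cited norm-comparison lemma (Lemma~H.5 of \cite{min2021variance}) at a fixed $\theta$ and then does the covering on the scalar function $h_u(\theta)=\|u\|_{\bar G(\theta)^{-1}}$ (whose Lipschitz constant involves $B$, which is why $B^2$ appears in the log term of the sample-size condition), whereas you cover the Gram matrix itself and apply Weyl directly; your route is slightly more elementary and would in fact not need $B$ in the covering cost, so the stated hypothesis is comfortably sufficient.
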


\begin{proof}[Proof of Lemma~\ref{lem:sqrt_K_reduction}]
	For a fixed $\theta$, define $\bar{G}=\sum_{k=1}^K \nabla f(\theta, \phi(s_h^k,a_h^k))\cdot \nabla f(\theta, \phi(s_h^k,a_h^k))^\top+\lambda I_d$, and $G=\E_\mu[\nabla f(\theta, \phi(s_h,a_h))\cdot \nabla f(\theta, \phi(s_h,a_h))^\top]$, then by Lemma~H.5. of \cite{min2021variance}, as long as 
	\begin{equation}\label{eqn:condition}
	K\geq \max\{512 \kappa_1^4 \norm{G^{-1}}^2\log(\frac{2d}{\delta}), 4\lambda\norm{G^{-1}}_2\},
	\end{equation}
	then with probability $1-\delta$, for all $u\in \R^d$ simultaneously, $\norm{u}_{\bar{G}^{-1}}\leq \frac{2}{\sqrt{K}}\norm{u}_{{G}^{-1}}$. As a corollary, if we constraint $u$ to the subspace $\norm{u}_2\leq B$, then we have: with probability $1-\delta$, for all $\{u\in \R^d:\norm{u}_2\leq B\}$ simultaneously,
	\begin{equation}\label{eqn:inter}
	\norm{u}_{\bar{G}^{-1}}\leq \frac{2}{\sqrt{K}}\norm{u}_{{G}^{-1}}=\frac{2}{\sqrt{K}}\sqrt{u^\top G^{-1}u}\leq \frac{2B\sqrt{\norm{G^{-1}}_2}}{\sqrt{K}}.
	\end{equation}
	
	Next, for any $\theta$, define
	\begin{align*}
	h_u(\theta):=&\norm{u}_{\bar{G}^{-1}}=\sqrt{u^\top\bar{G}^{-1}u}
	=\sqrt{u^\top\left(\sum_{k=1}^K \nabla f(\theta, \phi(s_h^k,a_h^k))\cdot \nabla f(\theta, \phi(s_h^k,a_h^k))^\top+\lambda I_d\right)^{-1}u}
	\end{align*}
	and $\bar{G}(\theta)=\sum_{k=1}^K \nabla f(\theta, \phi(s_h^k,a_h^k))\cdot \nabla f(\theta, \phi(s_h^k,a_h^k))^\top+\lambda I_d$, we have for any $\theta_1$, $\theta_2$
	\begin{align*}
	\norm{\bar{G}(\theta_1)-\bar{G}(\theta_2)}_2\leq& \norm{\sum_{k=1}^K \left(\nabla f(\theta_1, \phi(s_h^k,a_h^k))-\nabla f(\theta_2, \phi(s_h^k,a_h^k))\right)\cdot \nabla f(\theta_1, \phi(s_h^k,a_h^k))^\top}\\
	+&\norm{\sum_{k=1}^K  \nabla f(\theta_2, \phi(s_h^k,a_h^k))\left(\nabla f(\theta_1, \phi(s_h^k,a_h^k))-\nabla f(\theta_2, \phi(s_h^k,a_h^k))\right)^\top}\\
	\leq & K \kappa_2\kappa_1\norm{\theta_1-\theta_2}_2+K \kappa_2\kappa_1\norm{\theta_1-\theta_2}_2\leq 2K \kappa_2\kappa_1\norm{\theta_1-\theta_2}_2.
	\end{align*}
	Use the basic inequality for $a,b>0\Rightarrow |\sqrt{a}-\sqrt{b}|\leq \sqrt{|a-b|}$, 
	\begin{align*}
	\sup_u|h_u(\theta_1)-h_u(\theta_2)|\leq &\sup_u\sqrt{\left|u^\top \left(\bar{G}(\theta_1)^{-1}-\bar{G}(\theta_2)^{-1}\right)u \right|}\leq \sqrt{B^2\cdot\norm{\bar{G}(\theta_1)^{-1}-\bar{G}(\theta_2)^{-1}}_2}\\
	\leq &\sqrt{B^2\cdot\norm{\bar{G}(\theta_1)^{-1}}_2\norm{\bar{G}(\theta_1)-\bar{G}(\theta_2)}_2\norm{\bar{G}(\theta_2)^{-1}}_2}\\
	\leq&\sqrt{B^2\frac{1}{\lambda}2K \kappa_2\kappa_1\norm{\theta_1-\theta_2}_2\frac{1}{\lambda}}=\sqrt{\frac{2B^2K\kappa_1\kappa_2\norm{\theta_1-\theta_2}_2}{\lambda^2}}
	\end{align*}
	Therefore, the $\epsilon$-covering net of $\{h(\theta):\theta\in\Theta\}$ is implies by the $\frac{\lambda^2\epsilon^2}{2KB^2\kappa_1\kappa_2}$-covering net of $\{\theta:\theta\in\Theta\}$, so by Lemma~\ref{lem:cov}, the covering number $\mathcal{N}_\epsilon$ satisfies 
	\[
	\log \mathcal{N}_\epsilon\leq d\log(1+\frac{4B^2K\kappa_1\kappa_2C_\Theta}{\lambda^2\epsilon^2}).
	\] 
	Select $\theta=\widehat{\theta}_h$. Choose $\epsilon=O(1/K)$ and by a union bound over \eqref{eqn:inter} to get with probability $1-\delta$, for all $\norm{u}_2\leq B$ (note By Assumption~\ref{assum:cover} $\norm{G^{-1}}_2\leq 1/\kappa$), 
	\[
	\norm{u}_{\Sigma_h^{-1}}\leq \frac{2B}{\sqrt{\kappa K}}+O(\frac{1}{K})
	\]
	if (union bound over the condition \eqref{eqn:condition})
	\[
	K\geq\max\left\{512\frac{\kappa_1^4}{\kappa^2}\left(\log(\frac{2d}{\delta})+d\log(1+\frac{4\kappa_1 B^2\kappa_2C_\Theta K^3}{\lambda^2})\right),\frac{4\lambda}{\kappa}\right\}
	\]
	where this condition is satisfied by the Lemma statement.
	
\end{proof}

	\begin{lemma}\label{lem:matrix_con1}
		let $\phi: \mathcal{S}\times\mathcal{A}\rightarrow \R^d$ satisfies $\norm{\phi(s,a)}\leq C$ for all $s,a\in\mathcal{S}\times\mathcal{A}$. For any $K>0,\lambda>0$, define $\bar{G}_K=\sum_{k=1}^K \phi(s_k,a_k)\phi(s_k,a_k)^\top+\lambda I_d$ where $(s_k,a_k)$'s are i.i.d samples from some distribution $\nu$. Then with probability $1-\delta$,
		\[
		\norm{\frac{\bar{G}_K}{K}-\E_\nu\left[\frac{\bar{G}_K}{K}\right]}\leq \frac{4 \sqrt{2} C^{2}}{\sqrt{K}}\left(\log \frac{2 d}{\delta}\right)^{1 / 2}.
		\]
	\end{lemma}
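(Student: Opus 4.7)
The plan is to reduce the claim to a standard matrix concentration inequality (Matrix Hoeffding) applied to the centered summands, since the $\lambda I_d$ shift cancels upon centering and the summands are i.i.d.\ bounded symmetric matrices.

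First, I would write
\[
\frac{\bar{G}_K}{K}-\E_\nu\!\left[\frac{\bar{G}_K}{K}\right]=\frac{1}{K}\sum_{k=1}^{K}X_k,\qquad X_k:=\phi(s_k,a_k)\phi(s_k,a_k)^\top-\E_\nu\!\left[\phi(s,a)\phi(s,a)^\top\right],
\]
so the deterministic $\lambda I_d$ contribution disappears. The $X_k$ are i.i.d.\ symmetric $d\times d$ matrices with $\E[X_k]=0$, and since $\lVert\phi(s,a)\phi(s,a)^\top\rVert_2=\lVert\phi(s,a)\rVert_2^2\le C^2$, the triangle inequality together with Jensen's inequality yields the almost-sure bound $\lVert X_k\rVert_2\le 2C^2$.

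Next I would invoke the Matrix Hoeffding inequality: for independent, zero-mean, symmetric $d\times d$ matrices $X_1,\dots,X_K$ with $\lVert X_k\rVert_2\le R$ almost surely,
\[
\Pr\!\left(\Big\lVert\sum_{k=1}^{K}X_k\Big\rVert_2\ge t\right)\le 2d\exp\!\left(-\frac{t^2}{8KR^2}\right).
\]
Applying this with $R=2C^2$ and renormalizing by $1/K$,
\[
\Pr\!\left(\Big\lVert\frac{1}{K}\sum_{k=1}^{K}X_k\Big\rVert_2\ge u\right)\le 2d\exp\!\left(-\frac{Ku^2}{32\,C^4}\right).
\]
Setting the right-hand side equal to $\delta$ and solving gives $u=\tfrac{4\sqrt{2}\,C^2}{\sqrt{K}}\,(\log(2d/\delta))^{1/2}$, which is exactly the claimed bound.

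There is no genuine obstacle here; the only subtlety is the constant factor, which follows from using $R=2C^2$ (the factor of $2$ coming from centering) and the factor $8$ in Matrix Hoeffding. If a sharper constant were needed one could instead apply Matrix Bernstein with $\sigma^2\le K\cdot C^2\cdot\lVert\E[\phi\phi^\top]\rVert_2\le KC^4$, which would sharpen the large-$K$ regime but is unnecessary for the stated bound.
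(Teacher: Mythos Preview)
Your proposal is correct and follows the standard Matrix Hoeffding route; this is exactly the argument the paper defers to by citing Lemma~H.5 of \cite{yin2022linear} and Lemma~H.4 of \cite{min2021variance}, so there is nothing to add. The constant computation checks out: with $R=2C^2$ in the Hoeffding tail $2d\exp(-t^2/(8KR^2))$, solving for $t/K$ indeed recovers $4\sqrt{2}\,C^2/\sqrt{K}\cdot(\log(2d/\delta))^{1/2}$.
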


	\begin{proof}[Proof of Lemma~\ref{lem:matrix_con1}]
	See Lemma~H.5 of \cite{yin2022linear} or Lemma~H.4 of Lemma~\cite{min2021variance} for details.
	\end{proof}

\begin{lemma}[Lemma~H.4 in \cite{yin2022linear}]\label{lem:convert_var}
	Let $\Lambda_1$ and $\Lambda_2\in\mathbb{R}^{d\times d}$ are two positive semi-definite matrices. Then:
	\[
	\norm{\Lambda_1^{-1}}\leq \norm{\Lambda_2^{-1}}+\norm{\Lambda_1^{-1}}\cdot \norm{\Lambda_2^{-1}}\cdot\norm{\Lambda_1-\Lambda_2}
	\]
	and 
	\[
	\norm{\phi}_{\Lambda_1^{-1}}\leq \left[1+\sqrt{\norm{\Lambda_2^{-1}}\norm{\Lambda_2}\cdot\norm{\Lambda_1^{-1}}\cdot\norm{\Lambda_1-\Lambda_2}}\right]\cdot \norm{\phi}_{\Lambda_2^{-1}}.
	\]
	for all $\phi\in\mathbb{R}^d$.
\end{lemma}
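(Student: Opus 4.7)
The proof cleanly splits into two pieces, one for each displayed inequality, and the common engine is the resolvent-style identity
\[
\Lambda_1^{-1} - \Lambda_2^{-1} \;=\; \Lambda_1^{-1}(\Lambda_2 - \Lambda_1)\Lambda_2^{-1},
\]
which follows from left-multiplying $\Lambda_2 - \Lambda_1$ by $\Lambda_1^{-1}$ and right-multiplying by $\Lambda_2^{-1}$ (or equivalently from $\Lambda_1\Lambda_1^{-1} = \Lambda_2\Lambda_2^{-1} = I$). This identity is the only nontrivial input; everything else is submultiplicativity of the operator norm and a small one-variable inequality.

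\textbf{First inequality.} Starting from $\Lambda_1^{-1} = \Lambda_2^{-1} + (\Lambda_1^{-1} - \Lambda_2^{-1})$ and substituting the identity above, the triangle inequality and submultiplicativity of the spectral norm give
\[
\|\Lambda_1^{-1}\| \;\le\; \|\Lambda_2^{-1}\| + \|\Lambda_1^{-1}(\Lambda_2-\Lambda_1)\Lambda_2^{-1}\| \;\le\; \|\Lambda_2^{-1}\| + \|\Lambda_1^{-1}\|\cdot\|\Lambda_2^{-1}\|\cdot\|\Lambda_1-\Lambda_2\|,
\]
which is exactly the first claim. This part is routine and involves no additional trick.

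\textbf{Second inequality.} I will expand the quadratic form $\phi^\top\Lambda_1^{-1}\phi$ as
\[
\|\phi\|_{\Lambda_1^{-1}}^2 \;=\; \|\phi\|_{\Lambda_2^{-1}}^2 \;+\; \phi^\top \Lambda_1^{-1}(\Lambda_2-\Lambda_1)\Lambda_2^{-1}\phi,
\]
and then bound the cross term by factorizing it as $(\Lambda_1^{-1/2}\phi)^\top M (\Lambda_2^{-1/2}\phi)$ with a suitable matrix $M$ built from $\Lambda_1^{-1/2}(\Lambda_2-\Lambda_1)\Lambda_2^{-1/2}$, adjusted by an extra $\Lambda_2^{1/2}\Lambda_2^{-1/2}$ insertion so that the resulting operator norm produces the $\|\Lambda_2\|\cdot\|\Lambda_2^{-1}\|$ pair in the statement. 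Cauchy--Schwarz then gives
\[
\phi^\top \Lambda_1^{-1}(\Lambda_2-\Lambda_1)\Lambda_2^{-1}\phi \;\le\; \|\phi\|_{\Lambda_1^{-1}}\cdot \|\phi\|_{\Lambda_2^{-1}} \cdot \sqrt{\|\Lambda_2\|\,\|\Lambda_2^{-1}\|\,\|\Lambda_1^{-1}\|\,\|\Lambda_1-\Lambda_2\|}.
\]
Writing $x := \|\phi\|_{\Lambda_1^{-1}}$, $y := \|\phi\|_{\Lambda_2^{-1}}$, and $k$ for the square-root factor above, the inequality becomes $x^2 \le y^2 + kxy$. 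Solving this quadratic in $x$ (using $\sqrt{k^2+4}\le k+2$) yields $x \le (1+k)y$, which is precisely the stated bound.

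\textbf{Where care is needed.} The only delicate step is arranging the Cauchy--Schwarz application so the operator norm produces exactly the four-factor product $\|\Lambda_2^{-1}\|\|\Lambda_2\|\|\Lambda_1^{-1}\|\|\Lambda_1-\Lambda_2\|$ stated in the lemma rather than the tighter $\|\Lambda_1^{-1}\|\|\Lambda_2^{-1}\|\|\Lambda_1-\Lambda_2\|^2$ that a naive split would give. I will do this by inserting $\Lambda_2^{1/2}\Lambda_2^{-1/2} = I$ into the middle of the factored form before taking operator norms; this introduces the condition-number factor $\sqrt{\|\Lambda_2\|\,\|\Lambda_2^{-1}\|}$ in the way the statement anticipates. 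Once $k$ is identified, closing off the quadratic inequality with the elementary step $\tfrac{1}{2}(k+\sqrt{k^2+4}) \le 1+k$ finishes the proof.
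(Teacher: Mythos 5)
The paper itself gives no proof of this lemma -- it is imported verbatim as Lemma~H.4 of \cite{yin2022linear} -- so the comparison below is against the standard argument behind that cited result.

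Your first inequality is correct and is exactly the standard argument: the resolvent identity $\Lambda_1^{-1}-\Lambda_2^{-1}=\Lambda_1^{-1}(\Lambda_2-\Lambda_1)\Lambda_2^{-1}$ plus the triangle inequality and submultiplicativity. Your reduction of the second inequality to $x^2\leq y^2+kxy\Rightarrow x\leq(1+k)y$ is also fine. The gap is in how you propose to obtain the cross-term bound with the \emph{stated} constant $k=\sqrt{\norm{\Lambda_2^{-1}}\norm{\Lambda_2}\norm{\Lambda_1^{-1}}\norm{\Lambda_1-\Lambda_2}}$. Any bound of the form $\bigl|(\Lambda_1^{-1/2}\phi)^\top M(\Lambda_2^{-1/2}\phi)\bigr|\leq \norm{M}\,x\,y$ with $M=\Lambda_1^{-1/2}(\Lambda_2-\Lambda_1)\Lambda_2^{-1/2}$, no matter where you insert $\Lambda_2^{1/2}\Lambda_2^{-1/2}=I$, produces $\norm{\Lambda_1-\Lambda_2}$ to the \emph{first} power via submultiplicativity (e.g.\ $\norm{M}\leq\sqrt{\norm{\Lambda_1^{-1}}\norm{\Lambda_2^{-1}}}\,\norm{\Lambda_1-\Lambda_2}$, or with your insertion $\sqrt{\norm{\Lambda_1^{-1}}\norm{\Lambda_2}}\,\norm{\Lambda_2^{-1}}\,\norm{\Lambda_1-\Lambda_2}$), whereas the target $k$ contains $\sqrt{\norm{\Lambda_1-\Lambda_2}}$. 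These quantities are incomparable in general: your $\norm{M}\leq k$ would require $\norm{\Lambda_1-\Lambda_2}\leq\norm{\Lambda_2}$ (or $\norm{\Lambda_2^{-1}}\norm{\Lambda_1-\Lambda_2}\leq 1$ for the inserted version), which is not assumed. So your mechanism proves $x\leq(1+k_{\mathrm{naive}})y$ with $k_{\mathrm{naive}}=\sqrt{\norm{\Lambda_1^{-1}}\norm{\Lambda_2^{-1}}}\norm{\Lambda_1-\Lambda_2}$ -- a valid but \emph{different} inequality, tighter only when $\Lambda_1-\Lambda_2$ is small -- not the lemma as stated.

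The fix is to bound the cross term quadratically in $y$ rather than as $kxy$: write
\[
\phi^\top\Lambda_1^{-1}(\Lambda_2-\Lambda_1)\Lambda_2^{-1}\phi\;\leq\;\norm{\phi}_2\cdot\norm{\Lambda_1^{-1}}\cdot\norm{\Lambda_1-\Lambda_2}\cdot\norm{\Lambda_2^{-1}\phi}_2\;\leq\;\sqrt{\norm{\Lambda_2}}\,\norm{\phi}_{\Lambda_2^{-1}}\cdot\norm{\Lambda_1^{-1}}\,\norm{\Lambda_1-\Lambda_2}\cdot\sqrt{\norm{\Lambda_2^{-1}}}\,\norm{\phi}_{\Lambda_2^{-1}}=k^2y^2,
\]
using $\norm{\phi}_2\leq\sqrt{\norm{\Lambda_2}}\norm{\phi}_{\Lambda_2^{-1}}$ and $\norm{\Lambda_2^{-1}\phi}_2\leq\sqrt{\norm{\Lambda_2^{-1}}}\norm{\phi}_{\Lambda_2^{-1}}$. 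This keeps $\norm{\Lambda_1-\Lambda_2}$ to the first power inside $k^2$, explains where the $\norm{\Lambda_2}$ factor genuinely comes from, and gives $x^2\leq(1+k^2)y^2$, hence $x\leq\sqrt{1+k^2}\,y\leq(1+k)y$. With this substitution your proof is complete; without it, the second inequality as stated is not established.
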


\subsection{Covering Arguments}

\begin{lemma}(Covering Number of Euclidean Ball)\label{lem:cov}
	For any $\epsilon>0$, the $\epsilon$-covering number of the Euclidean ball in $\mathbb{R}^d$ with radius $R>0$ is upper bounded by $(1+2R/\epsilon)^d$.
\end{lemma}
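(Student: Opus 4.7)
The plan is to prove this by the standard volume-comparison (packing) argument. Let $B_R \subset \mathbb{R}^d$ denote the closed Euclidean ball of radius $R$ centered at the origin. I will first pass from covering to packing: let $\{x_1,\ldots,x_N\} \subset B_R$ be a maximal $\epsilon$-packing, meaning a maximal collection of points with pairwise distances strictly greater than $\epsilon$. Maximality immediately implies that every point in $B_R$ lies within distance $\epsilon$ of some $x_i$ (otherwise one could adjoin it to the packing), so $\{x_1,\ldots,x_N\}$ is an $\epsilon$-cover of $B_R$. Hence it suffices to upper bound $N$.

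Next I would carry out the volume comparison. Since the $x_i$ are pairwise at distance greater than $\epsilon$, the open balls $B(x_i,\epsilon/2)$ of radius $\epsilon/2$ are pairwise disjoint. Moreover, each $x_i \in B_R$, so $B(x_i,\epsilon/2) \subset B_{R+\epsilon/2}$ by the triangle inequality. Taking $d$-dimensional Lebesgue volume and using disjointness,
\begin{equation*}
N \cdot \mathrm{Vol}(B(0,\epsilon/2)) \;=\; \sum_{i=1}^N \mathrm{Vol}(B(x_i,\epsilon/2)) \;\leq\; \mathrm{Vol}(B_{R+\epsilon/2}).
\end{equation*}
Since $\mathrm{Vol}(B_r) = c_d r^d$ for a dimension-dependent constant $c_d$, the ratio simplifies cleanly to
\begin{equation*}
N \;\leq\; \frac{(R+\epsilon/2)^d}{(\epsilon/2)^d} \;=\; \left(1 + \frac{2R}{\epsilon}\right)^d,
\end{equation*}
which is the claimed bound.

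There is essentially no obstacle here: the argument is entirely classical and relies only on (i) the maximal packing $\Rightarrow$ covering observation, (ii) disjointness of half-radius balls around a packing, (iii) containment in the inflated ball $B_{R+\epsilon/2}$, and (iv) the scaling $\mathrm{Vol}(B_r) \propto r^d$ which makes the unknown constant $c_d$ cancel. The only mild subtlety worth flagging is the strict-versus-nonstrict distance convention in defining an $\epsilon$-packing, but either convention yields the same bound up to the direction of inequalities in a harmless way, so no technical difficulty arises.
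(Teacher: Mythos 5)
The paper states Lemma~\ref{lem:cov} without proof, treating it as a classical fact, so there is nothing to compare against; your volume-comparison argument (maximal packing gives a cover, half-radius balls are disjoint and sit inside the inflated ball $B_{R+\epsilon/2}$, and the constant $c_d$ cancels) is the standard and correct proof of exactly this bound. No gaps.
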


\begin{lemma}\label{lem:cover_V}
	Define $\mathcal{V}$ to be the class mapping $\mathcal{S}$ to $\mathbb{R}$ with the parametric form 
	\[
	V(\cdot):=\min\{\max_a f(\theta,\phi(\cdot,a))-\sqrt{\nabla f(\theta,\phi(\cdot,a))^\top A\cdot\nabla f(\theta,\phi(\cdot,a))},H\}
	\]
	where the parameter spaces are $\{\theta:\norm{\theta}_2\leq C_\Theta\}$ and $\{A:\norm{A}_2\leq B\}$. Let $\mathcal{N}^\mathcal{V}_\epsilon$ be the covering number of $\epsilon$-net with respect to $l_\infty$ distance, then we have 
	\[
	\log \mathcal{N}^\mathcal{V}_\epsilon\leq d\log\left(1+\frac{8C_\Theta (\kappa_1\sqrt{C_\Theta}+2\sqrt{B\kappa_1\kappa_2})^2}{\epsilon^2}\right)+d^2\log\left(1+\frac{8\sqrt{d}B\kappa_1^2}{\epsilon^2}\right).
	\]
\end{lemma}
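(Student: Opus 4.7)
The plan is to bound $|V_1(\cdot)-V_2(\cdot)|$ pointwise as a function of $\|\theta_1-\theta_2\|_2$ and $\|A_1-A_2\|_2$, and then invoke Lemma~\ref{lem:cov} on the two parameter balls. Since $\min\{\cdot,H\}$ and $\max_a$ are both $1$-Lipschitz (non-expansive) with respect to the $\ell_\infty$-norm, it suffices to control
\[
\sup_{s,a}\Bigl|\bigl[f(\theta_1,\phi)-\sqrt{\nabla f(\theta_1,\phi)^\top A_1 \nabla f(\theta_1,\phi)}\bigr]-\bigl[f(\theta_2,\phi)-\sqrt{\nabla f(\theta_2,\phi)^\top A_2 \nabla f(\theta_2,\phi)}\bigr]\Bigr|,
\]
which by the triangle inequality splits into an $f$-difference and a $\sqrt{\cdot}$-difference.

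For the $f$-difference, the mean-value theorem together with $\|\nabla_\theta f\|_2\le\kappa_1$ gives $|f(\theta_1,\phi)-f(\theta_2,\phi)|\le\kappa_1\|\theta_1-\theta_2\|_2$. For the square-root term, write $g_i:=\nabla f(\theta_i,\phi)^\top A_i \nabla f(\theta_i,\phi)\ge 0$ and use $|\sqrt{g_1}-\sqrt{g_2}|\le\sqrt{|g_1-g_2|}$. I will split $g_1-g_2$ through the auxiliary point $(\theta_2,A_1)$: using the identity $\nabla f_1^\top A_1\nabla f_1-\nabla f_2^\top A_1\nabla f_2=\nabla f_1^\top A_1(\nabla f_1-\nabla f_2)+(\nabla f_1-\nabla f_2)^\top A_1\nabla f_2$ together with $\|A_1\|_2\le B$, $\|\nabla f_i\|_2\le\kappa_1$, and the $\kappa_2$-Lipschitz property of $\nabla_\theta f$ (from $\|\nabla^2_{\theta\theta}f\|_2\le\kappa_2$), one gets $|g_1-g_3|\le 2B\kappa_1\kappa_2\|\theta_1-\theta_2\|_2$, while the second piece obeys $|\nabla f_2^\top(A_1-A_2)\nabla f_2|\le\kappa_1^2\|A_1-A_2\|_2$.

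Putting these estimates together and squaring yields
\begin{equation*}
|V_1-V_2|^2\le 2\kappa_1^2\|\theta_1-\theta_2\|_2^2+2\bigl(2B\kappa_1\kappa_2\|\theta_1-\theta_2\|_2+\kappa_1^2\|A_1-A_2\|_2\bigr).
\end{equation*}
Because $\|\theta_1-\theta_2\|_2\le 2C_\Theta$, I can further upgrade $\kappa_1^2\|\theta_1-\theta_2\|_2^2\le \kappa_1^2 C_\Theta\|\theta_1-\theta_2\|_2$ (absorbing a constant $2$ into the bookkeeping below), and bound $\kappa_1^2 C_\Theta+2B\kappa_1\kappa_2\le(\kappa_1\sqrt{C_\Theta}+2\sqrt{B\kappa_1\kappa_2})^2$ by discarding the cross term. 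Requiring each of the two resulting summands to be at most $\epsilon^2/2$ thus reduces to enforcing
\[
\|\theta_1-\theta_2\|_2\le\frac{\epsilon^2}{4(\kappa_1\sqrt{C_\Theta}+2\sqrt{B\kappa_1\kappa_2})^2}=:\epsilon_\theta,\qquad \|A_1-A_2\|_2\le\frac{\epsilon^2}{4\kappa_1^2}=:\epsilon_A.
\]

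Finally, I cover the two parameter sets independently. Lemma~\ref{lem:cov} yields a $\theta$-net of size $(1+2C_\Theta/\epsilon_\theta)^d$. For the matrix $A\in\mathbb R^{d\times d}$, I use $\|A_1-A_2\|_2\le\|A_1-A_2\|_F$ and note that $\{A:\|A\|_2\le B\}\subset\{A:\|A\|_F\le \sqrt{d}B\}$ is contained in a Euclidean ball of radius $\sqrt{d}B$ in $\mathbb R^{d^2}$, giving a net of size $(1+2\sqrt{d}B/\epsilon_A)^{d^2}$. The product of the two covers is an $\epsilon$-cover of $\mathcal V$, and taking logs produces exactly the stated bound. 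The main obstacle is purely the bookkeeping that matches the specific constants in the lemma: the square-root and additive mixing of $\theta$- and $A$-perturbations must be combined in the ``right'' way so that the crossed Lipschitz term $2B\kappa_1\kappa_2\|\theta_1-\theta_2\|_2$ gets absorbed into $(\kappa_1\sqrt{C_\Theta}+2\sqrt{B\kappa_1\kappa_2})^2$, rather than appearing as a separate factor.
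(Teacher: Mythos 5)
Your argument is correct and follows essentially the same route as the paper's proof: a Lipschitz-type bound on $\sup_s|V_1(s)-V_2(s)|$ via the non-expansiveness of $\min/\max$, the inequality $|\sqrt{a}-\sqrt{b}|\le\sqrt{|a-b|}$ applied to the quadratic form, and then a product of Euclidean nets (radius $\epsilon^2/4(\kappa_1\sqrt{C_\Theta}+2\sqrt{B\kappa_1\kappa_2})^2$ for $\theta$ and $\epsilon^2/4\kappa_1^2$ for $A$ in Frobenius norm) from Lemma~\ref{lem:cov}. The only cosmetic differences are that you square the bound and split the quadratic form through one auxiliary point instead of the paper's three-term split; the resulting net radii and the final count are identical.
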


\begin{proof}[Proof of Lemma~\ref{lem:cover_V}]
	{\small
		\begin{align*}
		&\sup_s|V_1(s)-V_2(s)|\\
		\leq&\sup_{s,a}\left| f(\theta_1,\phi(s,a))-\sqrt{\nabla f(\theta_1,\phi(s,a))^\top A_1\cdot\nabla f(\theta_1,\phi(s,a))} -f(\theta_2,\phi(s,a))+\sqrt{\nabla f(\theta_2,\phi(s,a))^\top A_2\cdot\nabla f(\theta_2,\phi(s,a))}\right|\\
		=&\sup_{s,a}\left| \nabla f(\xi,\phi(s,a))\cdot(\theta_1-\theta_2)-\sqrt{\nabla f(\theta_1,\phi(s,a))^\top A_1\cdot\nabla f(\theta_1,\phi(s,a))} +\sqrt{\nabla f(\theta_2,\phi(s,a))^\top A_2\cdot\nabla f(\theta_2,\phi(s,a))}\right|\\
		\leq & \kappa_1\cdot\norm{\theta_1-\theta_2}_2+\sup_{s,a}\left| \sqrt{\nabla f(\theta_1,\phi(s,a))^\top A_1\cdot\nabla f(\theta_1,\phi(s,a))} -\sqrt{\nabla f(\theta_2,\phi(s,a))^\top A_2\cdot\nabla f(\theta_2,\phi(s,a))}\right|\\
		\leq & \kappa_1\cdot\norm{\theta_1-\theta_2}_2+\sup_{s,a}\sqrt{\left|[\nabla f(\theta_1,\phi(s,a))-\nabla f(\theta_2,\phi(s,a))]^\top A_1\cdot\nabla f(\theta_1,\phi(s,a))\right|}\\
		+&\sup_{s,a}\sqrt{\left|\nabla f(\theta_2,\phi(s,a))^\top (A_1-A_2)\cdot\nabla f(\theta_1,\phi(s,a))\right|}+\sup_{s,a}\sqrt{\left|\nabla f(\theta_2,\phi(s,a))^\top A_2\cdot[\nabla f(\theta_1,\phi(s,a))-\nabla f(\theta_2,\phi(s,a))]\right|}\\
		\leq& \kappa_1\cdot\norm{\theta_1-\theta_2}_2+ 2\sup_{s,a}\sqrt{\norm{\nabla f(\theta_1,\phi(s,a))-\nabla f(\theta_2,\phi(s,a))}_2\cdot B\cdot \kappa_1}
		+\sqrt{\kappa_1^2 \norm{A_1-A_2}_2}\\
		\leq&\kappa_1\cdot\norm{\theta_1-\theta_2}_2+ 2\sup_{s,a}\sqrt{\norm{\nabla f(\theta_1,\phi(s,a))-\nabla f(\theta_2,\phi(s,a))}_2\cdot B\cdot \kappa_1}
		+\sqrt{\kappa_1^2 \norm{A_1-A_2}_2}\\
		\leq&\kappa_1\cdot\norm{\theta_1-\theta_2}_2+2 \sup_{s,a}\sqrt{\norm{\nabla f(\theta_1,\phi(s,a))}_2\cdot\norm{\theta_1-\theta_2}_2\cdot B\cdot \kappa_1}
		+\sqrt{\kappa_1^2 \norm{A_1-A_2}_2}\\
		\leq&\kappa_1\cdot\norm{\theta_1-\theta_2}_2+ 2\sqrt{\kappa_2\cdot\norm{\theta_1-\theta_2}_2\cdot B\cdot \kappa_1}
		+\sqrt{\kappa_1^2 \norm{A_1-A_2}_2}\\
		\leq&\left(\kappa_1\sqrt{C_\Theta}+2\sqrt{B\kappa_1\kappa_2}\right)\sqrt{\norm{\theta_1-\theta_2}_2}+\kappa_1\sqrt{\norm{A_1-A_2}_2}\leq \left(\kappa_1\sqrt{C_\Theta}+2\sqrt{B\kappa_1\kappa_2}\right)\sqrt{\norm{\theta_1-\theta_2}_2}+\kappa_1\sqrt{\norm{A_1-A_2}_F}
		\end{align*}}
	Here $\norm{\cdot}_F$ is Frobenius norm. Let $\mathcal{C}_\theta$ be the $\frac{\epsilon^2}{4(\kappa_1\sqrt{C_\Theta}+2\sqrt{B\kappa_1\kappa_2})^2}$-net of space $\{\theta:\norm{\theta}_2\leq C_\Theta\}$ and $\mathcal{C}_w$ be the $\frac{\epsilon^2}{4\kappa_1^2}$-net of the space $\{A:\norm{A}_F\leq\sqrt{d}B\}$, then by Lemma~\ref{lem:cov},
	\[
	|\mathcal{C}_w|\leq \left(1+\frac{8C_\Theta (\kappa_1\sqrt{C_\Theta}+2\sqrt{B\kappa_1\kappa_2})^2}{\epsilon^2}\right)^d,\;\; |\mathcal{C}_A|\leq \left(1+\frac{8\sqrt{d}B\kappa_1^2}{\epsilon^2}\right)^{d^2}
	\]
	Therefore, the covering number of space $\mathcal{V}$ satisfies 
	\[
	\log \mathcal{N}^\mathcal{V}_\epsilon\leq \log(|\mathcal{C}_w|\cdot |\mathcal{C}_A|)\leq d\log\left(1+\frac{8C_\Theta (\kappa_1\sqrt{C_\Theta}+2\sqrt{B\kappa_1\kappa_2})^2}{\epsilon^2}\right)+d^2\log\left(1+\frac{8\sqrt{d}B\kappa_1^2}{\epsilon^2}\right)
	\]
\end{proof}

\begin{lemma}[Covering of $\E_{\mu}(X(g,V,f))$]\label{lem:cov_X}
	Define 
	\[
	X(\theta,\theta'):=(f(\theta,\phi(s,a))-r-V_{\theta'}(s'))^2-(f_{V_{\theta'}}(s,a)-r-V_{\theta'}(s'))^2,
	\]
	where $f_V:=\mathcal{P}_hV+\delta_V$ and $V(s)$ has form $V_\theta(s)$ that belongs to $\mathcal{V}$ (as defined in Lemma~\ref{lem:cover_V}). Here $X(\theta,\theta')$ is a function of $s,a,r,s'$ as well, and we suppress the notation for conciseness only. Then the function class $\mathcal{H}=\{h(\theta,\theta'):=\E_\mu[X(\theta,\theta')]|\norm{\theta}_2\leq C_\Theta, V_\theta\in\mathcal{V}\}$ has the covering number of $(\epsilon+4H\epsilon_{\mathcal{F}})$-net bounded by
	\[
	d\log(1+\frac{24C_\Theta (H+1)\kappa_1}{\epsilon})+ d\log\left(1+\frac{288H^2C_\Theta (\kappa_1\sqrt{C_\Theta}+2\sqrt{B\kappa_1\kappa_2})^2}{\epsilon^2}\right)+d^2\log\left(1+\frac{288H^2\sqrt{d}B\kappa_1^2}{\epsilon^2}\right).
	\]
\end{lemma}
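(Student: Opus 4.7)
The plan is to establish a Lipschitz-type continuity estimate for $h(\theta, V) := \E_\mu[X(\theta, V)]$ in its arguments $(\theta, V)$ where $\theta \in \{\|\theta\|_2 \leq C_\Theta\}$ and $V \in \mathcal{V}$, and then form a cover of $\mathcal{H}$ by taking the product of an $\epsilon_1$-net of the $\theta$-ball with an $\epsilon_2$-net of $\mathcal{V}$ supplied by Lemma~\ref{lem:cover_V}. The three terms in the stated log-covering bound correspond respectively to the $\theta$-covering contribution (first term) and to the two components of the $\mathcal{V}$-covering contribution inherited from Lemma~\ref{lem:cover_V} (second and third terms).

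For the Lipschitz step, I would first factor
\[
X(\theta, V) \;=\; \bigl(f(\theta,\phi(s,a)) - f_V(s,a)\bigr) \cdot \bigl(f(\theta,\phi(s,a)) + f_V(s,a) - 2r - 2V(s')\bigr)
\]
using the identity $a^2 - b^2 = (a-b)(a+b)$. Each factor is bounded by $O(H)$; the first is $\kappa_1$-Lipschitz in $\theta$ and, modulo the Bellman-completeness slack $\|\delta_V\|_\infty \leq \epsilon_{\mathcal{F}}$, is $1$-Lipschitz in $V$ under the sup norm (since $\mathcal{P}_h$ is non-expansive on $\|\cdot\|_\infty$), and the second factor behaves similarly with an extra factor of $2$ on the $V$-Lipschitz constant coming from the $-2V(s')$ summand. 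Applying the product-rule identity $A_1 B_1 - A_2 B_2 = (A_1-A_2)B_1 + A_2(B_1-B_2)$ then yields the pointwise bound
\[
|X(\theta_1, V_1) - X(\theta_2, V_2)| \;\leq\; C\,H\bigl(\kappa_1 \|\theta_1-\theta_2\|_2 + \|V_1 - V_2\|_\infty + \epsilon_{\mathcal{F}}\bigr)
\]
for an absolute constant $C$. Taking $\E_\mu$ transfers the same estimate to $|h(\theta_1, V_1) - h(\theta_2, V_2)|$. Choosing the $\theta$-net at radius $\epsilon/(2CH\kappa_1)$ via Lemma~\ref{lem:cov} and the $\mathcal{V}$-net at radius $\epsilon/(2CH)$ via Lemma~\ref{lem:cover_V}, and substituting these radii into each covering-number expression, produces exactly the three $d\log(\cdot)$ and $d^2\log(\cdot)$ terms in the lemma, with the $H^2$ inside the logarithms in the second and third terms arising because Lemma~\ref{lem:cover_V} features $\epsilon'^{\,2}$ in the denominator.

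The only conceptually delicate point is the handling of $\delta_V$: because $V \mapsto f_V = \mathcal{P}_h V + \delta_V$ is Lipschitz in $V$ only up to an additive $O(\epsilon_{\mathcal{F}})$ slack in the sup norm, refining the covering radii cannot eliminate an irreducible $O(H\epsilon_{\mathcal{F}})$ offset in the approximation quality of the product net. This is precisely what forces the statement to quantify over $(\epsilon + 4H\epsilon_{\mathcal{F}})$-nets rather than $\epsilon$-nets, and the main ``obstacle'' in the proof reduces to careful bookkeeping of the constants in the product-rule expansion so that the $\epsilon_{\mathcal{F}}$-slack matches the $4H\epsilon_{\mathcal{F}}$ coefficient claimed. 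All other steps are direct applications of the two covering lemmas already established in the paper.
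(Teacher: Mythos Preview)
Your proposal is correct and follows essentially the same approach as the paper: establish a Lipschitz bound for $X$ in $(\theta,V)$ up to an additive $O(H\epsilon_{\mathcal{F}})$ slack, then take the product of a Euclidean ball cover (Lemma~\ref{lem:cov}) with the $\mathcal{V}$-cover (Lemma~\ref{lem:cover_V}). The only cosmetic difference is that the paper expands the squares directly and bounds each resulting term, whereas you first factor via $a^2-b^2=(a-b)(a+b)$ and then apply the product rule; both routes yield the same Lipschitz estimate and the same covering-number arithmetic.
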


\begin{proof}[Proof of Lemma~\ref{lem:cov_X}]
	First of all, 
	\begin{align*}
	X(\theta,\theta')=&f(\theta,\phi(s,a))^2-f_{V_{\theta'}}(s,a)^2
	-2f(\theta,\phi(s,a))\cdot (r+V_{\theta'}(s'))+2f_{V_{\theta'}}(s,a)\cdot (r+V_{\theta'}(s')),
	\end{align*}
	For any $(\theta_1,\theta'_1)$, $(\theta_2,\theta'_2)$, 
	\begin{align*}
	&|X(\theta_1,\theta'_1)-X(\theta_2,\theta'_2)|\leq	|f(\theta_1,\phi(s,a))^2-f(\theta_2,\phi(s,a))^2|\\
	+&|f_{V_{\theta'_1}}(s,a)^2-f_{V_{\theta'_2}}(s,a)^2|+2|f_{V_{\theta'_1}}(s,a)-f_{V_{\theta'_2}}(s,a)|\cdot (r+V_{\theta'_1}(s'))\\
	+&2f_{V_{\theta'_2}}(s,a)\cdot |V_{\theta'_1}(s')-V_{\theta'_2}(s')|+2|f(\theta_1,\phi(s,a))-f(\theta_2,\phi(s,a))|\cdot (r+V_{\theta'_1}(s'))\\
	+&2|f(\theta_2,\phi(s,a))|\cdot |V_{\theta'_1}(s')-V_{\theta'_2}(s')|\\
	\leq&2H\cdot |f(\theta_1,\phi(s,a))-f(\theta_2,\phi(s,a))|+2H\cdot |f_{V_{\theta'_1}}(s,a)-f_{V_{\theta'_2}}(s,a)|\\
	+&4H\cdot |V_{\theta'_1}(s')-V_{\theta'_2}(s')|+4(H+1)\cdot |f(\theta_1,\phi(s,a))-f(\theta_2,\phi(s,a))|\\
	\leq& (6H+1)\cdot|f(\theta_1,\phi(s,a))-f(\theta_2,\phi(s,a))|+2H\max_{s'}|V_{\theta'_1}(s')-V_{\theta'_2}(s')|+4H\epsilon_{\mathcal{F}}\\
	+&4H\cdot |V_{\theta'_1}(s')-V_{\theta'_2}(s')|\\
	\leq &(6H+1)\norm{\nabla f(\xi,\phi(s,a))}_2\cdot\norm{\theta_1-\theta_2}_2+6H\norm{V_{\theta'_1}-V_{\theta'_2}}_\infty+4H\epsilon_{\mathcal{F}}\\
	\leq & (6H+1)\kappa_1\cdot\norm{\theta_1-\theta_2}_2+6H\norm{V_{\theta'_1}-V_{\theta'_2}}_\infty+4H\epsilon_{\mathcal{F}}
	\end{align*}
	where the second inequality comes from $f_V=\mathcal{P}_hV+\delta_V$. Note the above holds true for all $s,a,r,s'$, therefore it implies 
	\begin{align*}
	|\E_\mu[X(\theta_1,\theta'_1)]-\E_\mu[X(\theta_2,\theta'_2)]|\leq &\sup_{s,a,s'}|X(\theta_1,\theta'_1)-X(\theta_2,\theta'_2)|\\
	\leq&(6H+1)\kappa_1\cdot\norm{\theta_1-\theta_2}_2+6H\norm{V_{\theta'_1}-V_{\theta'_2}}_\infty+4H\epsilon_{\mathcal{F}}\\
	\end{align*}
	Now let $\mathcal{C}_1$ be the $\frac{\epsilon}{12(H+1)\kappa_1}$-net of $\{\theta:\norm{\theta}_2\leq C_\Theta\}$ and $\mathcal{C}_2$ be the $\epsilon/6H$-net of $\mathcal{V}$, applying Lemma~\ref{lem:cov} and Lemma~\ref{lem:cover_V} to obtain
	{\small
		\[
		\log|\mathcal{C}_1|\leq d\log(1+\frac{24C_\Theta (H+1)\kappa_1}{\epsilon}),\quad \log|\mathcal{C}_2|\leq d\log\left(1+\frac{288H^2C_\Theta (\kappa_1\sqrt{C_\Theta}+2\sqrt{B\kappa_1\kappa_2})^2}{\epsilon^2}\right)+d^2\log\left(1+\frac{288H^2\sqrt{d}B\kappa_1^2}{\epsilon^2}\right)
		\]}which implies the covering number of $\mathcal{H}$ to be bounded by 
	{\small
		\[
		\log|\mathcal{C}_1|\cdot|\mathcal{C}_2|\leq d\log(1+\frac{24C_\Theta (H+1)\kappa_1}{\epsilon})+ d\log\left(1+\frac{288H^2C_\Theta (\kappa_1\sqrt{C_\Theta}+2\sqrt{B\kappa_1\kappa_2})^2}{\epsilon^2}\right)+d^2\log\left(1+\frac{288H^2\sqrt{d}B\kappa_1^2}{\epsilon^2}\right).
		\]}
	
\end{proof}

\begin{lemma}\label{lem:VA_cov_X}
	Denote $
	\sigma^2_{u,v}(\cdot,\cdot) := \max\{1,f(v,\phi(\cdot,\cdot))_{\left[0,(H-h+1)^{2}\right]}-\big[f(u,\phi(\cdot,\cdot))_{[0, H-h+1]}\big]^{2}\}
	$ and define 
	\[
	\bar{X}(\theta,\theta',u,v):=\frac{(f(\theta,\phi(s,a))-r-V_{\theta'}(s'))^2-(f_{V_{\theta'}}(s,a)-r-V_{\theta'}(s'))^2}{\sigma^2_{u,v}(s,a)},
	\]
	where $f_V:=\mathcal{P}_hV$ and $V(s)$ has form $V_\theta(s)$ that belongs to $\mathcal{V}$ (as defined in Lemma~\ref{lem:cover_V}). Here $\bar{X}(\theta,\theta',u,v)$ is a function of $s,a,r,s'$ as well, and we suppress the notation for conciseness only. Then the function class $\mathcal{H}=\{h(\theta,\theta',u,v):=\E_\mu[\bar{X}(\theta,\theta',u,v)]|\norm{\theta}_2\leq C_\Theta, V_\theta\in\mathcal{V}\}$ has the covering number of $\epsilon$-net bounded by
	{\small
		\begin{align*}
		&d\log(1+\frac{24C_\Theta (H+1)\kappa_1}{\epsilon})+ d\log\left(1+\frac{288H^2C_\Theta (\kappa_1\sqrt{C_\Theta}+2\sqrt{B\kappa_1\kappa_2})^2}{\epsilon^2}\right)+d^2\log\left(1+\frac{288H^2\sqrt{d}B\kappa_1^2}{\epsilon^2}\right)\\
		&+d\log(1+\frac{16C_\Theta H^2\kappa_1}{\epsilon})+d\log(1+\frac{32C_\Theta H^3\kappa_1}{\epsilon})
		\end{align*}}
\end{lemma}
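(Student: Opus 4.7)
\textbf{Proof plan for Lemma~\ref{lem:VA_cov_X}.} The plan is to reduce to Lemma~\ref{lem:cov_X} by factoring $\bar X$ as a product of a bounded $X$-like quantity with the reciprocal weight $1/\sigma^2_{u,v}$, and then union-bound the covering nets over the four parameters $\theta,\theta',u,v$. Concretely, I will write
\[
\bar X(\theta,\theta',u,v) \;=\; \frac{X(\theta,\theta')}{\sigma^2_{u,v}(s,a)},
\]
where $X(\theta,\theta')$ is exactly the quantity appearing in Lemma~\ref{lem:cov_X} (with $\epsilon_{\mathcal F}=0$ here, so $f_{V_{\theta'}}=\mathcal P_h V_{\theta'}$). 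Since $|X|\le 4H^2$ and $\sigma^2_{u,v}\ge 1$, a standard telescoping split gives, uniformly in $s,a,r,s'$,
\[
\bigl|\bar X(\theta_1,\theta_1',u_1,v_1)-\bar X(\theta_2,\theta_2',u_2,v_2)\bigr|
\le |X(\theta_1,\theta_1')-X(\theta_2,\theta_2')| + 4H^2\,|\sigma^2_{u_1,v_1}-\sigma^2_{u_2,v_2}|.
\]

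For the first term I will reuse the Lipschitz bound already derived inside the proof of Lemma~\ref{lem:cov_X}, namely
$|X(\theta_1,\theta_1')-X(\theta_2,\theta_2')|\le (6H+1)\kappa_1\|\theta_1-\theta_2\|_2+6H\|V_{\theta_1'}-V_{\theta_2'}\|_\infty$, so that covering $\{\theta\}$ in the $\tfrac{\epsilon}{3(6H+1)\kappa_1}$-sense and $\{V_{\theta'}\}\subset\mathcal V$ in the $\tfrac{\epsilon}{18H}$-sense (via Lemma~\ref{lem:cover_V}) already contributes the first three terms of the stated bound (the constants $24,288,288$ match after inflating by universal factors from the fourfold split of $\epsilon$). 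For the second term, using $f(u,\phi)\in[0,H]$ and bounding each clipped expression by its unclipped Lipschitz derivative yields
\[
|\sigma^2_{u_1,v_1}-\sigma^2_{u_2,v_2}|\le \kappa_1\|v_1-v_2\|_2 + 2H\kappa_1\|u_1-u_2\|_2,
\]
so that $4H^2\,|\sigma^2_{u_1,v_1}-\sigma^2_{u_2,v_2}|\le 4H^2\kappa_1\|v_1-v_2\|_2+8H^3\kappa_1\|u_1-u_2\|_2$.

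Finally I will choose Euclidean $\epsilon/(16H^2\kappa_1)$-nets for $v$ and $\epsilon/(32H^3\kappa_1)$-nets for $u$ on $\{\|\cdot\|_2\le C_\Theta\}$, apply Lemma~\ref{lem:cov} to get $d\log(1+16C_\Theta H^2\kappa_1/\epsilon)$ and $d\log(1+32C_\Theta H^3\kappa_1/\epsilon)$ respectively, and take the Cartesian product of the four nets. Since $|\E_\mu[\bar X_1]-\E_\mu[\bar X_2]|\le\sup_{s,a,r,s'}|\bar X_1-\bar X_2|$, the resulting product net is an $\epsilon$-net for $\mathcal H$ in $\|\cdot\|_\infty$, and its $\log$-cardinality equals the sum of the four displayed terms, matching the lemma statement verbatim. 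There is no real obstacle here: the whole argument is a bookkeeping exercise. The one tiny subtlety to handle carefully is that because $\sigma^2_{u,v}$ uses the truncated quantities $f(v,\phi)_{[0,(H-h+1)^2]}$ and $f(u,\phi)_{[0,H-h+1]}$, I need to invoke that these truncations are $1$-Lipschitz (non-expansive) in the unclipped value before passing to the $\|\cdot\|_2$-Lipschitzness of $f(\cdot,\phi)$, which is why the factor $2H$ rather than $H$ appears in the $u$-sensitivity above.
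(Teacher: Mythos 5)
Your proposal is correct and follows essentially the same route as the paper's own proof: write $\bar X = X/\sigma^2_{u,v}$, telescope into the $X$-difference plus the $\sigma^2$-difference weighted by a uniform bound on $|X|$, invoke non-expansiveness of the truncations to get Lipschitzness in $u,v$, and take a product of four covering nets via Lemma~\ref{lem:cov} and Lemma~\ref{lem:cover_V}. The only discrepancy is a factor-of-two in the $u,v$ terms (you bound $|X|\le 4H^2$ where the paper uses $2H^2$, so your net radii produce $32$ and $64$ rather than the stated $16$ and $32$ inside the logs), which is immaterial given that these constants are not tight anywhere in the argument.
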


\begin{proof}[Proof of Lemma~\ref{lem:VA_cov_X}]
	Recall $
	\sigma^2_{u,v}(\cdot,\cdot) := \max\{1,f(v,\phi(\cdot,\cdot))_{\left[0,(H-h+1)^{2}\right]}-\big[f(u,\phi(\cdot,\cdot))_{[0, H-h+1]}\big]^{2}\},
	$ and since $\max$, truncation are non-expansive operations, then we can achieve for any $s,a$
	\begin{align*}
	|\sigma^2_{u_1,v_1}(s,a) -\sigma^2_{u_2,v_2}(s,a) |\leq &\left|f(v_1,\phi(s,a))-f(v_2,\phi(s,a))\right|+2H \left|f(u_1,\phi(s,a))-f(u_2,\phi(s,a))\right|\\
	\leq&\kappa_1\norm{v_1-v_2}_2+2H\kappa_1\norm{u_1-u_2}_2,
	\end{align*}
	Hence 
	\begin{align*}
	&\left|\bar{X}(\theta_1,\theta'_1,u_1,v_1)-\bar{X}(\theta_2,\theta'_2,u_2,v_2)\right|=\left|\frac{X(\theta_1,\theta'_1)}{\sigma^2_{u_1,v_1}}-\frac{X(\theta_2,\theta'_2)}{\sigma^2_{u_2,v_2}}\right|\\
	&\leq  \left|\frac{X(\theta_1,\theta'_1)-X(\theta_2,\theta'_2)}{\sigma^2_{u_1,v_1}}\right|+\left|\frac{X(\theta_2,\theta'_2)}{\sigma^2_{u_1,v_1}\sigma^2_{u_2,v_2}}\left(\sigma^2_{u_1,v_1}-\sigma^2_{u_2,v_2}\right)\right|\\
	&\leq \left|X(\theta_1,\theta'_1)-X(\theta_2,\theta'_2)\right|+2H^2\left|\sigma^2_{u_1,v_1}-\sigma^2_{u_2,v_2}\right|\\
	&\leq \left|X(\theta_1,\theta'_1)-X(\theta_2,\theta'_2)\right|+2H^2  \kappa_1\norm{v_1-v_2}_2+4H^3\kappa_1\norm{u_1-u_2}_2\\
	&\leq (6H+1)\kappa_1\cdot\norm{\theta_1-\theta_2}_2+6H\norm{V_{\theta'_1}-V_{\theta'_2}}_\infty+2H^2  \kappa_1\norm{v_1-v_2}_2+4H^3\kappa_1\norm{u_1-u_2}_2
	\end{align*}
	Note the above holds true for all $s,a,r,s'$, therefore it implies 
	\begin{align*}
	&|\E_\mu[\bar{X}(\theta_1,\theta'_1,u_1,v_1)]-\E_\mu[\bar{X}(\theta_2,\theta'_2,u_2,v_2)]|\\
	\leq& (6H+1)\kappa_1\cdot\norm{\theta_1-\theta_2}_2+6H\norm{V_{\theta'_1}-V_{\theta'_2}}_\infty+2H^2  \kappa_1\norm{v_1-v_2}_2+4H^3\kappa_1\norm{u_1-u_2}_2
	\end{align*}
	and similar to Lemma~\ref{lem:cov_X}, the covering number of $\epsilon$-net will be bounded by
	{\small
		\begin{align*}
		&d\log(1+\frac{24C_\Theta (H+1)\kappa_1}{\epsilon})+ d\log\left(1+\frac{288H^2C_\Theta (\kappa_1\sqrt{C_\Theta}+2\sqrt{B\kappa_1\kappa_2})^2}{\epsilon^2}\right)+d^2\log\left(1+\frac{288H^2\sqrt{d}B\kappa_1^2}{\epsilon^2}\right)\\
		&+d\log(1+\frac{16C_\Theta H^2\kappa_1}{\epsilon})+d\log(1+\frac{32C_\Theta H^3\kappa_1}{\epsilon})
		\end{align*}}
	Comparing to Lemma~\ref{lem:cov_X}, the last two terms are incurred by covering $u,v$ arguments.

\end{proof}


\end{document}